\documentclass[11pt,letterpaper]{article}

\usepackage[utf8]{inputenc} 
\usepackage[T1]{fontenc}    
\usepackage{hyperref}       
\usepackage{url}            
\usepackage{booktabs}       
\usepackage{xcolor}
\usepackage{amsfonts}       
\usepackage{nicefrac}       
\usepackage{microtype}      
\usepackage{appendix}
\usepackage{amsthm}
\usepackage{amsmath}
\usepackage{amssymb}
\usepackage{amsfonts}
\usepackage{enumerate}
\usepackage{algorithm}
\usepackage{algorithmic}
\usepackage{geometry}
\usepackage{setspace}
\usepackage{xspace}
\usepackage{mathabx}
\usepackage{graphicx}
\usepackage{times}
\usepackage{tabularx, makecell, booktabs, array, longtable}
\usepackage{natbib}
\usepackage{verbatim}

\usepackage{amsthm, amsmath, amssymb, mathabx}
\allowdisplaybreaks 
\usepackage{graphicx}
\usepackage{enumerate}
\usepackage{pifont}
\usepackage{booktabs}
\usepackage{multirow}
\usepackage{multicol}
\usepackage{stfloats}
\usepackage{xspace}
\usepackage{thmtools}
\usepackage{thm-restate}
\usepackage{hyperref}
\usepackage{cleveref}
\crefname{appendix}{appendix}{appendices}
\Crefname{appendix}{Appendix}{Appendices}
\crefname{subappendix}{appendix}{appendices}
\Crefname{subappendix}{Appendix}{Appendices}
\crefname{subsubappendix}{appendix}{appendices}
\Crefname{subsubappendix}{Appendix}{Appendices}
\makeatletter
\let\hf@oldappendix\appendix
\renewcommand{\appendix}{%
  \hf@oldappendix
  \crefalias{section}{appendix}%
  \crefalias{subsection}{subappendix}%
  \crefalias{subsubsection}{subsubappendix}%
}
\makeatother
\crefname{ALC@unique}{line}{lines}
\Crefname{ALC@unique}{Line}{Lines}
\makeatletter
\renewcommand{\theALC@unique}{\arabic{ALC@line}}
\renewcommand{\theHALC@unique}{\arabic{ALC@unique}}
\makeatother
\usepackage{anyfontsize} 
\usepackage{makecell}
\usepackage{hhline}
\usepackage{colortbl}
\usepackage{xpatch}
\usepackage{scalerel,stackengine}
\usepackage{sidecap}

\makeatletter
\def\@fnsymbol#1{\ensuremath{\ifcase#1\or *\or \dagger\or \ddagger\or
  \mathsection\or \mathparagraph\or \|\or \diamond \or **\or \dagger\dagger
  \or \ddagger\ddagger \else\@ctrerr\fi}}
\makeatother

\makeatletter
\newcommand{\printfnsymbol}[1]{%
  \textsuperscript{\@fnsymbol{#1}}%
}
\makeatother

\newtheorem{theorem}{Theorem}
\newtheorem{lemma}{Lemma}
\newtheorem{corollary}[theorem]{Corollary}

\newtheorem{proposition}[theorem]{Proposition}
\newtheorem{definition}{Definition}

\crefname{condition}{Condition}{Conditions}

\crefname{assumption}{Assumption}{Assumptions}

\theoremstyle{definition}
\newtheorem{remark}{Remark}

\newcommand{\floor}[1]{\left\lfloor #1 \right\rfloor}

\newcommand{\abs}[1]{\left| #1 \right|}
\newcommand{\norm}[1]{\left\| #1 \right\|}

\DeclareMathOperator*{\argmax}{arg\,max}

\newcommand{\Proj}{\mathsf{Proj}}
\newcommand{\Cut}{\mathsf{Cut}}
\newcommand{\CutProj}{\mathsf{CutProj}}

\newcommand{\res}{\mathsf{res}}
\newcommand{\sta}{\mathsf{sta}}
\newcommand{\cut}{\mathsf{cut}}
\newcommand{\dev}{\mathsf{dev}}

\newcommand{\smax}{\mathsf{max}}
\newcommand{\mvptag}{\mathsf{MVP}}
\newcommand{\stoplabel}{\mathsf{stop}}
\newcommand{\known}{\mathsf{known}}
\newcommand{\markflag}{\mathsf{mark}}

\newcommand{\wt}[1]{\widetilde{#1}}
\newcommand{\wh}[1]{\widehat{#1}}

\newcommand{\ov}[1]{\overline{#1}}
\newcommand{\ud}[1]{\underline{#1}}

\newcommand{\cA}{\mathcal{A}}

\newcommand{\cC}{\mathcal{C}}
\newcommand{\cD}{\mathcal{D}}
\newcommand{\cE}{\mathcal{E}}
\newcommand{\cF}{\mathcal{F}}
\newcommand{\cG}{\mathcal{G}}

\newcommand{\cI}{\mathcal{I}}

\newcommand{\cK}{\mathcal{K}}

\newcommand{\cM}{\mathcal{M}}
\newcommand{\cN}{\mathcal{N}}
\newcommand{\cO}{\mathcal{O}}

\newcommand{\cR}{\mathcal{R}}
\newcommand{\cS}{\mathcal{S}}
\newcommand{\cT}{\mathcal{T}}
\newcommand{\cU}{\mathcal{U}}
\newcommand{\cV}{\mathcal{V}}
\newcommand{\cW}{\mathcal{W}}
\newcommand{\cX}{\mathcal{X}}

\newcommand{\boldc}{{\boldsymbol c}}

\newcommand{\bbE}{\mathbb{E}}

\newcommand{\bbI}{\mathbb{I}}

\newcommand{\bbN}{\mathbb{N}}

\newcommand{\bbP}{\mathbb{P}}

\newcommand{\bbR}{\mathbb{R}}

\newcommand{\bbV}{\mathbb{V}}

\newcommand{\II}{\mathbf{1}} 
\newcommand{\ee}{\textup{e}}

\newcommand{\N}{\mathsf{Null}}

\newcommand{\poly}{\mathsf{poly}}

\newcommand{\Regret}{\mathsf{Regret}}

\newcommand{\Roma}[1]{\uppercase\expandafter{\romannumeral#1}}

\newcommand{\SA}{\cS \times \cA}

\newcommand{\tref}{{\mathsf{ref}}}

\newcommand{\Ntotal}{N_{\mathsf{total}}}
\newcommand{\Nhtotal}{N_{h,\mathsf{total}}}

\newcommand{\trigger}{\ensuremath{\mathsf{trigger}}}
\newcommand{\true}{\ensuremath{\mathrm{True}}}
\newcommand{\false}{\ensuremath{\mathrm{False}}}
\newcommand{\aux}{\ensuremath{\mathsf{aux}}}

\newcommand{\red}[1]{\textcolor{red!80}{#1}}

\usepackage[colorinlistoftodos, textwidth=18mm]{todonotes}

\def\shownotes{1}
\ifnum\shownotes=0
\newcommand{\todorz}[1]{}
\newcommand{\todorzout}[1]{}
\newcommand{\todossdout}[1]{}
\newcommand{\todossd}[1]{}
\else
\newcommand{\todorz}[1]{\todo[color=blue!10, inline]{\small RZ: #1}}
\newcommand{\todorzout}[1]{\todo[color=blue!10]{\scriptsize RZ: #1}}
\newcommand{\todossdout}[1]{\todo[color=red!10]{\scriptsize SSD: #1}}
\newcommand{\todossd}[1]{\todo[color=red!10, inline]{\small SSD: #1}}
\fi

\makeatletter
\newif\ifhf@appendixtoc
\newcommand{\hf@appendixtocline}[2]{%
  \addtocontents{atoc}{\protect\contentsline{#1}{#2}{\thepage}{\@currentHref}\protected@file@percent}%
}
\newcommand{\hf@maybeappendixtocline}[2]{%
  \def\hf@entrytype{#1}%
  \def\hf@sectiontype{section}%
  \def\hf@subsectiontype{subsection}%
  \def\hf@subsubsectiontype{subsubsection}%
  \ifx\hf@entrytype\hf@sectiontype
    \hf@appendixtocline{#1}{#2}%
  \else\ifx\hf@entrytype\hf@subsectiontype
    \hf@appendixtocline{#1}{#2}%
  \else\ifx\hf@entrytype\hf@subsubsectiontype
    \hf@appendixtocline{#1}{#2}%
  \fi\fi\fi
}
\let\hf@oldaddcontentsline\addcontentsline
\renewcommand{\addcontentsline}[3]{%
  \hf@oldaddcontentsline{#1}{#2}{#3}%
  \ifhf@appendixtoc
    \def\hf@contentsfile{#1}%
    \def\hf@tocfile{toc}%
    \ifx\hf@contentsfile\hf@tocfile
      \hf@maybeappendixtocline{#2}{#3}%
    \fi
  \fi
}
\newcommand{\startappendixcontents}{\hf@appendixtoctrue}
\newcommand{\appendixtableofcontents}{%
  \section*{\contentsname}%
  \@starttoc{atoc}%
}
\makeatother

\title{Asymptotically Optimal Regret for Reinforcement Learning\\without Horizon Dependence}
\author{
    Runlong Zhou\thanks{Equal contribution.}~~\thanks{University of Washington. Email: \texttt{zhourunlongvector@gmail.com}}
    \and
    Zihan Zhang\printfnsymbol{1}\thanks{Hong Kong University of Science and Technology. Email: \texttt{zihanz@cse.ust.hk}}
    \and
    Maryam Fazel\thanks{University of Washington. Email: \texttt{mfazel@uw.edu}}
    \and
    Simon S. Du\thanks{University of Washington. Email: \texttt{ssdu@cs.washington.edu}}
}
\date{}

\begin{document}
\maketitle
\thispagestyle{empty}

\begin{abstract}
We study horizon-free regret minimization for finite-horizon time-homogeneous tabular Markov decision processes with $S$ states, $A$ actions, horizon $H$, and per-trajectory total reward bounded by $1$.
We propose a new algorithm and prove a regret upper bound
\begin{align*}
    \wt{O}\!(\sqrt{SAK}+S^8A^3)
\end{align*}
with failure probability $\delta$, where $K$ is the number of episodes and $\wt{O}(\cdot)$ hides $\poly\log\left(S,A,K,\frac{1}{\delta}\right)$.
Thus, the regret is $H$-free and asymptotically optimal in the sense of matching the contextual-bandit lower bound $\Omega(\sqrt{SAK})$ up to $\poly\log\left(S,A,K,\frac{1}{\delta}\right)$.
This totally removes the $\log H$ dependence from the previous $\wt{O}(\sqrt{SAK\log H} + S^2 A \log H)$ guarantee \cite{zhang2020reinforcement} and drastically improves the prior best horizon-free regret $\wt{O}(\sqrt{S^9A^3K})$ \cite{zhang2022horizon} asymptotically.

The main technical difficulty is that the optimal value functions $\{V_h^*\}_{h=1}^H$ are time-inhomogeneous even though the transition kernel is time-homogeneous.
A direct union bound over all value functions typically incurs an additional $\min\{\log H, S\}$ factor.
We avoid this factor by \ding{172} exploiting the monotonicity of $V_h^*$ in $h$ and \ding{173} non-trivially projecting the value functions onto a $S$-dimensional grid.

Our analysis relies on three additional ingredients.
First, we introduce a horizon-truncation argument that enables reward-based exploration and removes the cost of a separate reward-free exploration phase.
Second, we design a cutting bonus that preserves both optimism and the monotonicity needed for planning.
Third, we prove a new bound of the total deviation for time-homogeneous MDPs, which allows us to control the clipped variance terms in the cutting bonus with adjustable polynomial dependence on state space size and without paying any dependence on $H$.
Together, these tools yield an asymptotically optimal horizon-free regret guarantee.
\end{abstract}

\clearpage
\setcounter{page}{1}

\section{Introduction}\label{sec:intro}

Tabular Markov decision processes (MDPs) are one of the most fundamental models in reinforcement learning (RL).
In the finite-horizon episodic setting, an agent interacts with an unknown transition kernel for $K$ episodes, each of length $H$, and aims to compete with the optimal policy in hindsight.
When the MDP has $S$ states and $A$ actions, and the total reward in each episode is bounded by one, the minimax regret lower bound for the time-homogeneous setting\footnote{In time-homogeneous setting, the transition kernel is the same across layers $1,2,\ldots,H$.} is $\Omega(\sqrt{SAK})$, which is the same order as the lower bound for contextual bandits.
This raises a basic and conceptually important question:
is tabular RL, in the minimax sense, statistically harder than contextual bandits once the transition dynamics are time-homogeneous and the total reward is normalized?

A central difficulty is the dependence on the planning horizon $H$.
Classical finite-horizon analyses often pay polynomial or logarithmic factors in $H$.
Such factors are undesirable in the \emph{\textbf{horizon-free}} regime, where the episode length may be much larger than the number of episodes, or even exponentially large in the problem parameters.
A line of work has \emph{\textbf{totally removed the dependence on $H$}} while improving the dependence on other parameters progressively.
In particular, \cite{li2021settling} first showed that sample complexity can be made completely independent of $H$, though with \emph{an exponential dependence on the number of states}.
Later, \cite{zhang2022horizon} gave the first \emph{polynomial-time algorithm} for tabular MDPs whose regret is completely independent of $H$.
Their result established that horizon-free learning is possible in polynomial time, but the leading term in the bound is $\wt{O}(\sqrt{S^9A^3K})$, where $\wt{O}(\cdot)$ hides $\poly \log \left(S,A,K,\frac{1}{\delta}\right)$.
This result leaves a substantial polynomial gap from the contextual-bandit regret of $\wt{O}(\sqrt{SAK})$.
It is therefore natural to ask:
\begin{center}
\emph{\textbf{Is an $H$-free $\wt{O}(\sqrt{SAK})$ regret bound possible?}}
\end{center}

\paragraph{Our contribution.}

In this work, we close this gap asymptotically.
We design a horizon-free algorithm whose regret is bounded by $\wt{O}\!(\sqrt{SAK}+S^8A^3)$ as follows.

\begin{theorem}[Informal version of \Cref{thm:reg}]\label{thm:main}
Consider a time-homogeneous tabular MDP with $S$ states, $A$ actions, horizon $H$, and nonnegative, totally-bounded rewards.
There exists an algorithm such that, for any $K\ge1$ and $\delta\in(0,1)$, with probability at least $1-\delta$, its regret after $K$ episodes satisfies
\begin{align*}
   \Regret_{H}(K) = O\left(\sqrt{SAK} \log \frac{SAK}{\delta} + S^8A^3\log^2 \frac{SAK}{\delta} \right) .
\end{align*}
\end{theorem}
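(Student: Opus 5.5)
The proof will follow the standard optimism-plus-variance template of \cite{zhang2020reinforcement}, modified so that no step pays a $\log H$ or $\poly(H)$ factor. The algorithm maintains empirical transitions $\hat P$, updated on a doubling schedule so that there are $O(SA\log K)$ distinct models. For each model it computes optimistic values $\ov V_h$ by backward induction with a Bernstein-type bonus. Regret is decomposed as
\begin{align*}
\sum_k \bigl(V_1^*-V_1^{\pi_k}\bigr)(s_1^k)\le \sum_k\bigl(\ov V_1^k-V_1^{\pi_k}\bigr)(s_1^k),
\end{align*}
and the right-hand side is expanded along trajectories into three parts: bonus terms, martingale differences $(P_{s,a}-\mathbf 1_{s'})\ov V_{h+1}$, and estimation errors $(\hat P-P)\ov V_{h+1}$. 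Each part will be bounded by $\sqrt{SA\cdot(\text{total variance})}$ plus lower-order terms. The total variance $\sum_{k,h}\bbV(P_{s,a},\ov V_{h+1})$ is bounded by $O(K)$ plus regret-like terms via the law of total variance, which uses that per-trajectory rewards sum to at most one. The higher-moment recursion of \cite{zhang2020reinforcement} then closes the argument.

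The crux is the estimation error $(\hat P_{s,a}-P_{s,a})V$ for data-dependent $V$. A union bound over the $H$ time-varying functions is exactly what costs $\log H$ (or $S$ via covering). I will handle this in two steps.

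\begin{enumerate}[(i)]
\item Replace $\ov V_{h+1}$ by $V^*_{h+1}$ plus a correction term. The correction is controlled by an $S$-dimensional term $\sum_{s'}|\hat P-P|(s')\,(\ov V-V^*)(s')$, which is lower order.
\item Use that $V_h^*$ is monotone nonincreasing in $h$ and takes values in $[0,1]^S$. Project each $V_h^*$ onto a grid of mesh $1/\poly(SAK)$ that is rounded coordinatewise monotonically. The projected sequence is then a monotone chain in a finite lattice, so it has only $O(S\cdot\poly(SAK))$ distinct members rather than $H$. The union bound over this set costs only $\log(SAK/\delta)$ factors, absorbed into the $\log\frac{SAK}{\delta}$ and $S^8A^3$ terms.
\end{enumerate}

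The rounding error of the projection is of order $1/\poly(K)$ per step. Summed over $H$ steps it could reintroduce $H$, so I will need the horizon-truncation argument: beyond an effective horizon $\poly(S,A,K)$ the remaining value is tiny, or the chain is essentially stationary.

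For the planning step I will use a cutting bonus, clipping $\ov V_h$ to $[V_{h+1}\text{-related lower bound},1]$. This preserves optimism and keeps $\ov V_h$ monotone in $h$, so the same grid argument applies to the optimistic iterates. The clipped-variance terms the cutting bonus produces are bounded by a total-deviation lemma for time-homogeneous MDPs: $\sum_h\bbE|V_{h}-V_{h+1}|$-type quantities are $\le \poly(S)$ independent of $H$, by telescoping the monotone sequence in each coordinate. Adjusting the clipping threshold trades the $\poly(S)$ factor into the additive $S^8A^3$ term.

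I expect the main obstacle to be step (ii) combined with truncation: ensuring that projection and truncation errors, summed over all $h\le H$, stay $H$-free. I would first try to prove that the number of layers where $\ov V_h$ changes by more than the grid mesh is at most $S\cdot\poly(SAK)$ by monotonicity. On all other layers the optimistic values are effectively stationary, and an infinite-horizon-style analysis applies without $H$.
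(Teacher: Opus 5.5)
Your use of the monotone chain $\{V_h^*\}$ to shrink the union bound matches the paper's starting point. However, two essential ingredients are missing, and the steps you put in their place would fail.

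\textbf{The inverse-count sum still carries $\log H$.} A doubling schedule does not give $O(SA\log K)$ distinct models. One episode can visit a pair $\Theta(H)$ times (a self-loop suffices), so counts reach $KH$ and each pair passes through $\log_2(KH)$ dyadic levels. This enters the inverse-count sum $\sum_{k,h}1/N^k(s_h^k,a_h^k)\approx SA\log(KH)$, which multiplies the total variance in your Cauchy--Schwarz step. It also enters the number of within-episode doublings you must charge. So even with a perfect $H$-free union bound, your template returns $\sqrt{SAK\log H}$, the bound of \cite{zhang2020reinforcement}.

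The paper fixes this with an unlearned set $\cO$ (on which $Q\equiv1$) and a reserved suffix of length $H/m$. In that suffix it runs a reward-free explicit-exploration routine whenever the optimistic policy reaches $\cO$. A pair stays in $\cO$ until enough effective exploration calls guarantee $N^{k_0}(s,a)\ge\frac{\upsilon\log(1/\delta)}{504S^2\log S}U(s,a)$, where $U(s,a)$ is the maximal expected per-episode visit count. This makes $N^{K+1}(s,a)/N^{k_0}(s,a)\le\poly(S,K,1/\delta)$. The cost of these exploration episodes is what produces the $S^8A^3$ burn-in; it does not come from union-bound logarithms or the clipping threshold.

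\textbf{Your fix for the projection residual does not work.} You rightly note that an $\epsilon$-sized projection residual, summed over $KH$ steps, reintroduces $H$. But there is no effective horizon $\poly(S,A,K)$ beyond which the remaining value is tiny. Suppose a state moves with probability $1/H$ per step to a state paying reward $1$ once. Then the first $T\ll H$ steps collect about $T/H$, while $V_1^*\approx1-1/e$. The paper's truncation only drops a $1/m$ fraction of the horizon, via $f(t\ell)\le(1+\frac{S}{t-S})f((t-1)\ell)$, to create the exploration suffix.

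What makes the residual summable is a different decomposition, the cut projection $\CutProj_\epsilon(\hat p,V_h^*)$. It zeroes coordinates whose projected value lies within $2\epsilon$ of the discretized mean $\ell_\epsilon(\hat p,V_h^*)$. As a result, the residual has variance at most the clipped variance $\bbV_{5\epsilon}(\hat p,V_h^*)$, while the family of cut projections still has at most $4S/\epsilon^2$ members. A plain floor projection cannot do this. For uniform $p$ and $v(i)=\epsilon+(-1)^i\epsilon_1$ with $\epsilon_1\ll\epsilon$, we have $\bbV(p,v)\le\epsilon_1^2$, yet both $\Proj_\epsilon(v)$ and $v-\Proj_\epsilon(v)$ have variance of order $\epsilon^2$.

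Since $\bbV_x(p,v)\le x\sum_{s'}p(s')|v(s')-pv|$, one then needs the total-deviation bound $\bbE_\pi\sum_h|V_{h+1}^*(s_{h+1})-P_{s_h,a_h}V_{h+1}^*|\le2S$. The paper proves it with the potential $\Phi_h(s)=2\sum_u\min\{V_h^*(s),V_h^*(u)\}$. Your telescoping bound on $\sum_h|V_h-V_{h+1}|$ controls the change of the value over time, not this one-step spread around the conditional mean, so it does not bound the clipped variance.

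\textbf{Monotonicity of the iterates and of the bonus.} Keeping $\ov V_h$ monotone in $h$ does not allow a cheap union bound over the optimistic iterates. They form a data-dependent chain, and the set of possible chains is exponential in $S$. Your step (i) makes such a union bound unnecessary. You then need the bonus to be monotone in $v$ despite the discontinuity of the cut. The paper secures this through the relaxed inequality $pv+100\,b(p,v,n)\ge pu+b(p,u,n)$ for $u\le v$.
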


Thus, after a polynomial burn-in term independent of $H$, the regret matches the contextual-bandit lower bound up to $\poly \log \left(S,A,K,\frac{1}{\delta}\right)$.
To our knowledge, this is \emph{\textbf{the first horizon-free regret guarantee for time-homogeneous tabular MDPs whose leading term is asymptotically optimal in $K$, $S$, and $A$.}}

The main technical obstacle is that, although the transition kernel is time-homogeneous, the optimal value functions $V_1^*,V_2^*,\ldots,V_H^*$ are time-inhomogeneous.
A direct concentration argument over all possible $V_h^*$'s incurs an additional $\min\{\log H, S\}$ factor.
This issue is particularly severe in variance-aware regret analysis, where the bonus must be sharp enough to recover the $\sqrt{SAK}$ leading term.
The key structural observation is that the optimal value functions are monotone in the remaining horizon.
This monotonicity allows us to \emph{discretize} the value functions by projecting $\{V_h^*\}_{h=1}^H$ onto a carefully chosen $S$-dimensional grid.
The resulting function class has complexity independent of $H$, while the discretization error can be controlled at the scale required for near-optimal regret.

Building on this idea, we introduce three key technical innovations.

$\bullet$ \textbf{Horizon truncation and reward-based exploration.}
    We introduce a horizon-truncation framework that reduces the original $H$-step problem to a slightly shorter horizon.
    The remaining suffix is used for exploration only when the optimistic planning policy reaches an unlearned state-action pair.
    This enables reward-based exploration and avoids the cost of a separate reward-free exploration phase.

$\bullet$ \textbf{A cutting bonus for horizon-free optimism.}
    We develop a cutting bonus based on an $S$-dimensional discretization of the monotone optimal value sequence.
    The bonus preserves optimism and the monotonicity needed for dynamic programming, while avoiding the $\min\{ \log H, S\} $ factor caused by naive union bounds.

$\bullet$ \textbf{A new total-deviation bound for clipped variance.}
    We prove a new total-deviation bound for time-homogeneous MDPs, which controls the cumulative clipped variance of $\{V_h^*\}_{h=1}^H$.
    This bound is the key ingredient that closes the variance-aware regret analysis at the $\sqrt{SAK}$ scale.

We refer readers to \Cref{sec:tec} for more details.

\subsection{Related Works}\label{sec:rel}

\paragraph{Tabular RL.}
There is a long line of work on sample-complexity and regret guarantees for tabular RL, dating back to classical PAC-style algorithms such as $E^3$, R-MAX, and related model-based methods \citep{kearns1998near,brafman2002r,strehl2006pac,strehl2008analysis}.
For episodic tabular MDPs, a large body of work has studied near-optimal minimax regret bounds and sample-complexity guarantees \citep{kakade2003sample,kolter2009near,bartlett2009regal,jaksch2010near,szita2010model,lattimore2012pac,osband2013more,dann2015sample,azar2017minimax,dann2017unifying,osband2017posterior,agrawal2017optimistic,jin2018q,fruit2018near,talebi2018variance,dann2019policy,dong2019q,simchowitz2019non,russo2019worst,cai2019provably,zhang2020almost,yang2020q,pacchiano2020optimism,neu2020unifying,zhang2020reinforcement,li2021settling,menard2021ucb,xiong2021randomized,li2021breaking,pacchiano2020optimism}.
Many of these algorithms are based on optimism, posterior sampling, or model-free upper-confidence updates.

A key distinction is between time-inhomogeneous and time-homogeneous MDPs.
In the time-inhomogeneous setting, the transition kernel or reward function may vary with the layer $h$, so the number of degrees of freedom grows linearly with the horizon $H$.
As a result, a $\sqrt H$-type dependence is unavoidable in general.
In contrast, this paper studies time-homogeneous MDPs, where the transition kernel is shared across all time steps.
The main challenge is to exploit this time-homogeneous structure sharply enough to remove the dependence on the planning horizon.

We work under the totally-bounded reward assumption:
the total reward in each episode is at most one almost surely.
This assumption is more general than the commonly used uniformly bounded assumption $r_h\le \frac{1}{H}$ as it allows \emph{spiky rewards}.
Under this normalization, the contextual-bandit lower bound $\Omega(\sqrt{SAK})$ becomes the natural benchmark for time-homogeneous tabular MDPs.

\paragraph{Dependence on the horizon.}
Understanding the dependence on the planning horizon $H$ has been a central question in RL.
\cite{jiang2018open} asked whether polynomial dependence on $H$ is unavoidable.
Subsequent work showed that this is not true.
In particular, \cite{wang2020long} gave a computationally inefficient algorithm whose sample complexity depends only logarithmically on $H$, and \cite{zhang2020reinforcement} developed a computationally efficient algorithm with regret $\wt{O}\!(\sqrt{SAK \log H}+S^2A \log H)$.
Related ideas have also been used to reduce  horizon dependence in offline RL \cite{ren2021nearly}, stochastic shortest path \cite{tarbouriech2021stochastic,chen2021implicit}, and linear mixture MDPs \citep{zhang2021variance}.

Although these works substantially reduce the dependence on $H$, \emph{all their bounds still contain logarithmic dependence on $H$.}
The horizon-free regime is more demanding:
the regret and sample complexity should be completely independent of $H$, even when $H$ is exponential in $S,A,K$.

\paragraph{Horizon-free RL.}
The first result showing that \emph{sample complexity} can be \emph{\textbf{completely independent of $H$}} is given by \cite{li2021settling}.
Their result settles the information-theoretic horizon dependence, but the sample complexity has \emph{exponential} dependence on $S$.
A key idea in their analysis is to reason about entire trajectories rather than relying only on dynamic-programming-style local arguments.
They also use stationary policies to approximate the behavior of non-stationary policies.

The breakthrough work of \cite{zhang2022horizon} gave the first \emph{polynomial-time} horizon-free algorithm for tabular MDPs.
Their \emph{regret} bound is of order $\wt{O}(\sqrt{S^9A^3K})$, which is completely independent of $H$ and polynomial in $S$ and $A$.
Their algorithm relies on approximation properties of stationary policies, clipped MDPs, and an explicit exploration algorithm.
This established that polynomial-time horizon-free learning is possible, but left open whether the leading term can match the contextual-bandit regret of $\wt{O}(\sqrt{SAK})$.
Our work answers this question affirmatively up to a horizon-free polynomial burn-in term.

We summarize the most directly comparable horizon-dependence results in \Cref{tab:related-work-comparison}.
The distinction between regret and PAC/sample-complexity guarantees is important.
A high-probability regret bound implies a PAC guarantee by converting small average regret into the existence of an approximately optimal policy, for example by outputting a uniformly chosen episode policy;
see \citep[Section~3.1]{jin2018q} for a related discussion.
The reverse implication is not true:
a sample-complexity guarantee only controls suboptimality of the output policy, instead of cumulative suboptimality across all episodes.

\paragraph{Roadmap.} The remainder of the paper is organized as follows.
In \Cref{sec:p_setting}, we introduce the problem setting and notation.
In \Cref{sec:tec}, we discuss the main technical obstacles and summarize the key ideas used to overcome them.
We present the main algorithm in \Cref{sec:alg}, followed by additional discussion in \Cref{sec:discussion}.
The proof of \Cref{thm:main} is deferred to the appendix.

\begin{table}[t]
\centering
\small
\caption{Comparison with prior results for time-homogeneous tabular MDPs under the bounded-total-reward normalization.
Here $\wt{O}(\cdot)$ hides $\poly\log\left(S,A,K,\frac{1}{\varepsilon},\frac{1}{\delta}\right)$.
The final column marks whether the leading regret term matches the $\wt{O}(\sqrt{SAK})$ contextual-bandit regret up to $\poly\log\left(S,A,K,\frac{1}{\varepsilon},\frac{1}{\delta}\right)$.}
\label{tab:related-work-comparison}
\renewcommand{\arraystretch}{1.18}
\setlength{\tabcolsep}{3pt}
\begin{tabularx}{\linewidth}{@{}>{\raggedright\arraybackslash}p{0.15\linewidth}>{\raggedright\arraybackslash}p{0.30\linewidth}>{\raggedright\arraybackslash}p{0.25\linewidth}>{\centering\arraybackslash}p{0.08\linewidth}>{\centering\arraybackslash}p{0.15\linewidth}@{}}
\hline
\textbf{Paper} & \textbf{Regret bound} & \textbf{PAC/sample complexity} & \makecell{\textbf{$H$-}\\\textbf{free?}} & \makecell{\textbf{Optimal regret}\\\textbf{up to $\log$?}} \\
\hline
\cite{zhang2020reinforcement} & $\wt{O}\!(\sqrt{SAK\red{\log H}}+S^2A\red{\log H})$ & $\wt{O}\!\left(\frac{SA\red{\log H}}{\varepsilon^2}+\frac{S^2A\red{\log H}}{\varepsilon}\right)$ & No & \textbf{Yes} \\
\hline
\cite{li2021settling} & -- & $\frac{(SA)^{O(S)}}{\varepsilon^5}$ & \textbf{Yes} & -- \\
\hline
\cite{zhang2022horizon} & $\wt{O}\!(\sqrt{S^{\red{9}}A^{\red{3}}K})$ & $\wt{O}\!\left(\frac{S^{\red{9}}A^{\red{3}}}{\varepsilon^2}\right)$ & \textbf{Yes} & No \\
\hline
\cite{pmlr-v202-li23ak} & -- & $\wt{O} \left( \frac{S^2 A}{\varepsilon^2} + \frac{S^9 A^3}{\varepsilon} \right)$ & \textbf{Yes} & -- \\
\hline
\textbf{This paper} & $\wt{O}\!(\sqrt{SAK}+S^8A^3)$ & $\wt{O}\!\left(\frac{SA}{\varepsilon^2}+\frac{S^8A^3}{\varepsilon}\right)$ & \textbf{Yes} & \textbf{Yes} \\
\hline
\textbf{Lower bound} & $\Omega(\sqrt{SAK})$ & $\Omega\!\left(\frac{SA}{\varepsilon^2}\right)$ & -- & -- \\
\hline
\end{tabularx}
\end{table}

\section{Problem Setting}\label{sec:p_setting}

\paragraph{Basic notation.}
For any positive integer $N$, let $[N]:=\{1,2,\ldots,N\}$.
For an event $\cE$, let $\bbI [\cE]$ denote its indicator.
We use $\Delta(S)$ to denote the $S$-dimensional simplex.
For a probability vector $p\in\Delta(S)$ and a vector $v\in\bbR^{\cS}$, we write $ pv := \sum_{s\in\cS} p(s)v(s)$ as a shorthand of $p^{\top}v$, and define the variance $ \bbV(p,v) := p v^2-(pv)^2 = \sum_{s\in\cS}p(s)(v(s)-pv)^2$.
For two vectors $u,v\in\bbR^{\cS}$, we write $u\le v$ if $u(s)\le v(s)$ for every $s\in\cS$.
We use $\II_s$ to denote the one-hot vector representing state $s$.

\paragraph{Episodic time-homogeneous MDPs.}
We consider a finite-horizon time-homogeneous tabular Markov decision process $\cM=(\cS,\cA,P,r,H,\mu_1)$, where $\cS$ is a finite state space with cardinality $S$, $\cA$ is a finite action space with cardinality $A$, $P:\SA\to\Delta(S)$ is the unknown transition kernel, $r:\SA\to[0,1]$ is the \emph{\textbf{known}}\footnote{Learning reward is not a main obstacle in RL.} reward function, $H\in\bbN$ is the planning horizon, and $\mu_1\in\Delta(S)$ is the initial-state distribution.
We write $P_{s,a}:=P(\cdot\mid s,a)$ and $P_{s,a,s'}:=P(s'\mid s,a)$.

We assume that rewards are nonnegative and \emph{totally-bounded} in the following sense:
$\sum_{h=1}^H r(s_h,a_h)\le 1$ for any realizable trajectory $(s_1, a_1, \ldots, s_H, a_H)$.

\paragraph{Policies.}
A history-independent deterministic Markov policy is denoted by $\pi=\{\pi_h\}_{h=1}^H$, with $\pi_h:\cS\to\cA$.
Given policy $\pi$, a trajectory is generated as follows:
$s_1\sim\mu_1$, $a_h=\pi_h(s_h)$, and $s_{h+1}\sim P(\cdot\mid s_h,a_h)$ for $h\in[H]$.
Let $\Pi$ denote the class of all such policies;
a policy is stationary if $\pi_1=\pi_2=\cdots=\pi_H$, and we denote the class of stationary policies by $\Pi_{\sta}$.

\paragraph{Value functions and $Q$-functions.}
For $\pi \in \Pi$, define the value function $V_h^\pi$ and the $Q$-function $Q_h^\pi$ by
\begin{align*}
    V_h^\pi(s):= \bbE_{\pi}\left[\left. \sum_{t=h}^H r(s_t,a_t) \right| s_h=s\right],
    \qquad
    Q_h^\pi(s,a):= \bbE_{\pi}\left[ \left. \sum_{t=h}^H r(s_t,a_t) \right| s_h=s,\ a_h=a\right].
\end{align*}
We set $V_{H+1}^\pi(s)=0$ and $Q_{H+1}^\pi(s,a)=0$, then the Bellman equations are $Q_h^\pi(s,a) = r(s,a)+P_{s,a}V_{h+1}^\pi$ and $V_h^\pi(s) = Q_h^\pi(s,\pi_h(s))$.
The optimal value and $Q$-functions are defined by $V_h^*(s):=\sup_{\pi\in\Pi}V_h^\pi(s)$, $Q_h^*(s,a):=\sup_{\pi\in\Pi}Q_h^\pi(s,a)$, which satisfy $Q_h^*(s,a) = r(s,a)+P_{s,a}V_{h+1}^*$ and $V_h^*(s) = \max_{a\in\cA}Q_h^*(s,a)$.

Since the MDP is time-homogeneous and rewards are nonnegative, the optimal value sequence is monotone in the remaining horizon:
$V_h^*(s)\ge V_{h+1}^*(s)$ for all $h\in[H]$ and $s\in\cS$.
This monotonicity is one of the key structural properties used in our horizon-free analysis.

\paragraph{Regret.}
The agent interacts with the MDP for $K$ episodes.
At the beginning of episode $k$, it receives the initial state $s_1^k$, and chooses a policy $\pi^k$ based on the data collected in previous episodes.
The regret is
\begin{align*}
    \Regret_H(K) := \sum_{k=1}^K (V_1^*(s_1^k)-V_1^{\pi^k}(s_1^k)).
\end{align*}

\section{Technique Summary}\label{sec:tec}

In this section, we first explain the main obstacle to achieving an asymptotically optimal horizon-free regret bound, and then describe the technical innovations that allow us to overcome this obstacle.

\subsection{Hardness}
We first review the mechanism used in prior horizon-free algorithms \cite{li2021settling,zhang2022horizon}. 
A standard step in nearly all UCB-based analyses is to control the inverse-count term $ \sum_{k=1}^K\sum_{h=1}^H \frac{1}{\max\{N^k(s_h^k,a_h^k),1\}}$, where $N^k(s,a)$ is roughly the number of visits to $(s,a)$ before episode $k$, and $(s_h^k,a_h^k)$ is the state-action pair visited at step $h$ of episode $k$.
The usual doubling-count argument reduces this term to $\Theta \left(\sum_{s,a}\log \frac{N^{K+1}(s,a)}{N^1(s,a)}\right)$, which could be $\Omega (\log H)$ because some $(s, a)$ can be visited $\Omega (H)$ times.

The main insight of \cite{zhang2022horizon} is to collect enough initial samples during the first $K_0 \le O(\sqrt K)$ episodes and control $O\left(\sum_{s,a}\log \frac{N^{K+1}(s,a)}{N^{K_0}(s,a)}\right)$.
They show that, with high probability, the ratio $\frac{N^{K+1}(s,a)}{N^{K_0}(s,a)}$ is bounded by $\poly\left(S,A,K,\frac{1}{\delta}\right)$ uniformly over all $(s,a)$.
This removes the horizon dependence from the inverse-count term and is sufficient for obtaining a polynomial horizon-free regret bound.

However, this counting argument alone is not enough to obtain the optimal leading term $\widetilde O(\sqrt{SAK})$.
To see the remaining difficulty, consider an easier oracle problem in which the learner already has enough initial samples so that $\log \frac{N^{K+1}(s,a)}{N^{K_0}(s,a)}$ is independent of $H$, and moreover knows the entire optimal value sequence $V_1^*,V_2^*,\ldots,V_H^*$.
Even in this oracle setting, a simple bandit-style algorithm cannot be applied directly.
For a fixed state-action pair $(s,a)$, the learner must control the empirical errors $\left(P_{s,a} - \wh P_{s,a}\right)V_{h+1}^*$ simultaneously over all layers $h$.

A typical concentration bound for this error has the form $\sqrt{\frac{\bbV\cdot  B}{N}}$, where $\bbV$ is the relevant variance term, $N$ is the number of samples used to estimate $\wh P$, and $B$ is the extra factor coming from the failure probability and the size of the function class.
After summing over steps and episodes,  and applying Cauchy's inequality, the corresponding regret bound is of order $O\left(\sqrt{\sum \bbV}\cdot \sqrt{\sum \frac{1}{N}} \cdot \sqrt B\right)$.
The first term $\sum\bbV$ can be handled by standard total-variance arguments, and the second term $\sum \frac{1}{N}$ is controlled by the initial-sample and doubling-count arguments described above.
The real difficulty is the factor $B$.

There are two naive ways to bound $B$, and both are too crude.
If one treats $\{V_h^*\}_{h=1}^H$ as an arbitrary collection of $H$ test functions, then a direct Bernstein bound or an application of the Hoeffding--Azuma inequality with a union bound gives $B=\log \frac{H}{\delta}$, which violates the goal of a horizon-free regret bound.
On the other hand, if one avoids the union bound over $h$ by using a uniform coordinate-wise confidence set for the whole transition vector $P_{s,a}$, then $B$ becomes of order $S+\log \frac{1}{\delta}$.
This removes the $\log H$ term, but introduces an extra $S$-dependent factor that is too large for recovering the $\sqrt{SAK}$ leading term.

Thus, the core obstacle is that a single transition row must be accurate against a long, time-inhomogeneous sequence of value functions.
A naive bandit learning approach ignores the sequential structure and thus inevitably incurs a $\min\{ \log H, S\}$ factor.

\subsection{Solutions}
\label{subsec:solutions}

We now describe the main technical ideas behind our algorithm and analysis.

\vspace{-2ex}

\paragraph{Discretization.}
The key to a horizon-free analysis is to exploit the structure of the optimal value sequence.
Although the transition kernel is time-homogeneous, the optimal value functions $V_1^*,\ldots,V_H^*$ are time-inhomogeneous.
Nevertheless, because rewards are nonnegative, the optimal values are monotone:
$V_h^*(s)\ge V_{h+1}^*(s)$ for all $h\in[H-1],\ s\in\cS$.
Thus, $\{V_h^*\}_{h=1}^H$ has much smaller complexity than an arbitrary set of $H$ vectors in $[0,1]^S$.
We first recall a simple projection operator.

\begin{definition}[Projection operator]
\label{def:proj}
Fix $\epsilon\in(0,1)$.
For $x\in[0,1]$, define $    \Proj_{\epsilon}(x) := \floor{\frac{x}{\epsilon}}\epsilon .$
For $v\in[0,1]^S$, define
$
    \Proj_{\epsilon}(v) := [
        \Proj_{\epsilon}(v(1)),
        \ldots,
        \Proj_{\epsilon}(v(S))
    ]^\top .
$
\end{definition}

By monotonicity, the sequence $
    \Proj_{\epsilon}(V_1^*),
    \Proj_{\epsilon}(V_2^*),
    \ldots,
    \Proj_{\epsilon}(V_H^*)$
is coordinate-wise non-increasing.
Hence the number of distinct projected vectors is at most $1+\frac{S}{\epsilon}$.
Indeed, the integer-valued potential $
    \sum_{s\in\cS}
    \frac{\Proj_{\epsilon}(V_h^*(s))}{\epsilon}$
can decrease at most $\floor{\frac{S}{\epsilon}}$ times as $h$ increases.
Therefore, one may hope to decompose $
    V_h^* = \Proj_{\epsilon}(V_h^*) +
    (
        V_h^*-\Proj_{\epsilon}(V_h^*)
    ),$
take a union bound only over the projected value functions, and control the residual term by coordinate-wise concentration.

This direct projection argument, however, is \emph{not} sufficient for a sharp analysis.
It has two problems.
First, the projection can artificially increase variance.
For example, choose $v(i)=\epsilon+(-1)^i \epsilon_1$ and $p(i)=\frac{1}{S}$, where $0<\epsilon_1\ll\epsilon$.
Then $\bbV(p,v) \le O(\epsilon_1^2)$, while $\bbV(p, \Proj_{\epsilon} (v)) \ge \Omega (\epsilon^2)$.
Thus a naive projection may create variance that was not present in the original value function.
Second, even if the projection is accurate in $\ell_\infty$-distance, it is not clear that the residual variance $
    \bbV(P,V_h^*-\Proj_{\epsilon}(V_h^*)) $
is small enough after summed over all time steps to make orders of $S$ still tight.

The counterexample above hints the right modification:
coordinates close to the mean should not be discretized in the same way as coordinates far from the mean.
The variance contribution of near-mean coordinates is already small and should be controlled by a clipped variance term, while the far coordinates can be safely discretized.

\begin{definition}[Cutting operator]
\label{def:cut}
Fix $\epsilon\in(0,1)$.
Given $w\in[0,1]^S$ and $\ell\in[0,1]$, define $
    \Cut_{\epsilon}(w,\ell)=w'$ as
\begin{align*}
    w'(s)&=0,
    &&\text{if } |w(s)-\ell|\le 2\epsilon, \notag\\
    w'(s)&= (
        \lfloor(w(s)-\ell)/\epsilon\rfloor-2
    )\epsilon,
    &&\text{if } w(s)-\ell>2\epsilon, \notag\\
    w'(s)&= (
        \lfloor(w(s)-\ell)/\epsilon\rfloor+3
    )\epsilon,
    &&\text{if } w(s)-\ell<-2\epsilon .
\end{align*}
For $p\in\Delta(\cS)$ and $v\in[0,1]^S$, define $
    \ell_{\epsilon}(p,v) := \Proj_{\epsilon}
    (
        p^\top \Proj_{\epsilon}(v)
    )$,
and define the cut projection $
    \CutProj_{\epsilon}(p,v) := \Cut_{\epsilon} (
        \Proj_{\epsilon}(v),
        \ell_{\epsilon}(p,v)
    ).$
\end{definition}

The key difference between the ordinary projection and the cut projection is that the latter depends on the probability vector $p$.
It first estimates the mean $p^\top v$ at resolution $\epsilon$, and then separates the coordinates into two groups.
Coordinates far from the mean are kept in the cut component $\CutProj_{\epsilon}(p,v)$, while coordinates close to the mean are removed from that component and remain in the residual $ v-\CutProj_{\epsilon}(p,v)$.

Intuitively, the cut component contains the large deviations that dominate the ordinary variance, whereas the residual has only local fluctuations around the mean and can be controlled by a clipped variance.
We can show that the set $\{ \CutProj_{\epsilon}(p, V_{h}^*) \}_{p\in \Delta(S), h\in [H]}$ has size at most $\frac{4S}{\epsilon^2}$.
As a result, we use union bound to get uniform concentration bound for the error term $\left(P_{s, a} - \wh{P}_{s, a}\right) \CutProj_{\epsilon}\left(\wh{P}_{s, a}, V_{h}^*\right)$.
Then this part of error can be bounded by the classical total variance arguments.



\paragraph{Clipped variance and total deviation.}
We next show that the residual part has a variance that scales linearly with $\epsilon$.
For $p\in\Delta(\cS)$, $v\in\bbR^S$, and $x>0$, define the \emph{\textbf{$x$-clipped variance}}
\begin{align*}
    \bbV_x(p,v) := \sum_{s\in\cS}
    p(s)\min\{(v(s)-pv)^2,x^2\}.
\end{align*}
This quantity keeps the contribution of coordinates near the mean and clips the far tails.
The cutting construction ensures that the residual term produced by the bonus is controlled by the clipped variance. That is, $\bbV(p,v-\CutProj_{\epsilon}(p,v))\le \bbV_{5\epsilon}(p,v)$ for any $p\in\Delta(S)$ and $v\in[0,1]^S$ (see \Cref{lemma:lbb}).

Then we show the following crucial bound for expected single-episode \emph{\textbf{total clipped variance}}:

\begin{restatable}{corollary}{restatableExpectedEpsVar}\label{cor:expected_eps_var_from_total_deviation}
For every deterministic Markov policy $\pi\in\Pi$, initial state $s\in\cS$, and cutoff $x>0$,
\begin{align*}
    \bbE_\pi\!\left[
        \sum_{h=1}^{H}\bbV_x(P_{s_h,a_h},V_{h+1}^*)
        \,\middle|\, s_1=s
    \right]\le 2xS .
\end{align*}
\end{restatable}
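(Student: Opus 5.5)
The plan is to reduce the clipped-variance sum to a \emph{total deviation} quantity, which is linear rather than quadratic in the fluctuations, and then bound that quantity by an $H$-free potential argument based on the monotonicity $V_h^*\ge V_{h+1}^*$.

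\textbf{Step 1 (reduction to total deviation).} For any real $y$ and $x>0$ we have $\min\{y^2,x^2\}\le x|y|$. Hence, writing $p=P_{s_h,a_h}$ and $v=V_{h+1}^*$,
\begin{align*}
    \bbV_x(P_{s_h,a_h},V_{h+1}^*)\le x\sum_{s'}P_{s_h,a_h,s'}\,\bigl|V_{h+1}^*(s')-P_{s_h,a_h}V_{h+1}^*\bigr| = x\,\bbE\Bigl[\,\bigl|V_{h+1}^*(s_{h+1})-P_{s_h,a_h}V_{h+1}^*\bigr|\,\Big|\, s_h,a_h\Bigr].
\end{align*}
By the tower rule it then suffices to prove the total-deviation bound
\begin{align*}
    \bbE_\pi\Bigl[\sum_{h=1}^H \bigl|D_h\bigr| \,\Big|\, s_1=s\Bigr]\le 2S, \qquad D_h:=V_{h+1}^*(s_{h+1})-P_{s_h,a_h}V_{h+1}^*.
\end{align*}
Since $D_h$ has conditional mean zero, $\bbE[|D_h|\mid s_h,a_h]=2\,\bbE[(D_h)_-\mid s_h,a_h]$. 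It is therefore enough to show that the expected total \emph{downward} deviation $\sum_h\bbE(P_{s_h,a_h}V_{h+1}^*-V_{h+1}^*(s_{h+1}))_+$ is at most $S$.

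\textbf{Step 2 (a per-state potential).} I would not compare to the trajectory values $f_h=V_h^*(s_h)$ directly. Along a suboptimal policy $f$ can oscillate $\Theta(H)$ times, so a pathwise bound on its decreases fails. Instead I would fix each next state $s'$ and use that $h\mapsto V_{h+1}^*(s')$ is nonincreasing in $[0,1]$. Concretely, I would use the layer-cake identity
\begin{align*}
    (c-y)_+=\int_0^1 \bbI[y<t\le c]\,dt
\end{align*}
and write the downward deviation as an integral over thresholds $t$. For each state $s'$ and threshold $t$, the set of layers $\{h: V_{h+1}^*(s')\ge t\}$ is an initial segment $h\le \tau_{s'}(t)$, by monotonicity. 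The event that $s_{h+1}$ falls below threshold $t$ while the conditional mean sits above $t$ must then be charged to a ``crossing'' of the indicator $\bbI[V^*_{h+1}(\cdot)\ge t]$. I would try to show that, for each $t$, the expected number of such charged crossings is at most $S$ (one per state, since each state's indicator switches from $1$ to $0$ at most once in $h$). Integrating over $t\in[0,1]$ would then give the bound $S$.

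\textbf{Main obstacle.} The hard step is this crossing charge. The mean $P_{s_h,a_h}V_{h+1}^*$ can exceed $t$ while some successor is below $t$ at many layers, without any state's indicator switching. The argument must exploit the Bellman inequality $P_{s_h,a_h}V_{h+1}^*\le V_h^*(s_h)-r(s_h,a_h)\le V_h^*(s_h)$ together with a martingale/optional-stopping argument on the bounded process $\bbI[V^*_{h}(s_h)\ge t]$. A $0$-$1$ sequence whose downward jumps are compensated in expectation can only incur an expected total downward compensator bounded by the initial value plus its expected upward jumps. Each upward jump must then be paired with an irreversible switch of some state's threshold indicator. Making this pairing rigorous, so that each state is charged once per threshold rather than once per visit, is where I expect the real work to lie. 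If it fails, I would fall back to summing the per-state telescoping quantity $\sum_h (V_h^*-V_{h+1}^*)(s)\le 1$ over the $S$ states, and use it to absorb the gap between $V_h^*(s_h)$ and $V_{h+1}^*(s_h)$.
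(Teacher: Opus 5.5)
Your Step 1 matches the paper's reduction: $\min\{y^2,x^2\}\le x|y|$, followed by an $H$-free bound of $2S$ on the expected total deviation. The halving via $\bbE|D_h|=2\bbE(D_h)_-$ is also fine. The gap is in Step 2. The threshold-by-threshold charge is not merely hard to make rigorous; it is false.

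Take states $s,g,\ell,z$, with $z$ absorbing and reward-free. At $g$, one action pays $1$ and moves to $z$, and another pays $0$ and moves to $s$. From $s$ (reward $0$) the chain moves to $g$ or $\ell$ with probability $\frac12$ each. From $\ell$ (reward $0$) it moves to $s$ with probability $\rho$ and to $z$ otherwise. Every trajectory collects reward at most once. As the remaining horizon grows, $V_h^*(g)=1$, while $V_h^*(s)=P_{s,a}V_{h+1}^*\to v:=\frac{1}{2-\rho}$ and $V_h^*(\ell)\to\rho v$.

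Use the policy that returns from $g$ to $s$, and take any $t\in(\rho v,v)$ and $H\gg(1-\rho)^{-1}$. Then each transition $s\to\ell$ is a downward crossing of $t$ whose conditional mean lies above $t$. There are about $(1-\rho)^{-1}$ such crossings in expectation, and about as many upward jumps $\ell\to s$ of $\bbI[V_h^*(s_h)\ge t]$. Meanwhile no state's threshold indicator switches in this window.

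With $S=4$ and $\rho\to1$, no per-threshold bound by $S$, or by anything $H$-free, can hold. The corollary survives only because the set of such $t$ has length $(1-\rho)v$. Your fallback does not help either: here $V_h^*-V_{h+1}^*\approx0$ in the bulk, so the per-state telescoping sums carry no information and the whole difficulty remains.

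The missing idea is a potential that couples all thresholds. The paper uses $\Phi_h(s)=2\sum_{u\in\cS}\min\{V_h^*(s),V_h^*(u)\}$. For each reference state $u$, the term $2\min\{V_h^*(s_h),V_h^*(u)\}=2\int_0^1\bbI[V_h^*(s_h)\ge t]\,\bbI[V_h^*(u)\ge t]\,dt$ is essentially your ``charge to $u$'s indicator,'' but integrated over $t$.

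Using $2\min\{\alpha,\beta\}=\alpha+\beta-|\alpha-\beta|$, the expected drop from $2\min\{V_h^*(s),V_h^*(u)\}$ to $2\min\{V_{h+1}^*(s_{h+1}),V_h^*(u)\}$ equals $V_h^*(s)-P_{s,a}V_{h+1}^*+\sum_{s'}P_{s,a,s'}|V_{h+1}^*(s')-V_h^*(u)|-|V_h^*(s)-V_h^*(u)|$. This is nonnegative by the Bellman inequality $V_h^*(s)\ge P_{s,a}V_{h+1}^*$ together with Jensen for $y\mapsto|y-V_h^*(u)|$. For $u=s$ it is at least $\sum_{s'}P_{s,a,s'}|V_{h+1}^*(s')-P_{s,a}V_{h+1}^*|$. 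Monotonicity $V_{h+1}^*(u)\le V_h^*(u)$ then passes the bound to $\Phi_{h+1}$. Backward induction shows that the expected total deviation from $(h,s)$ onward is at most $\Phi_h(s)\le 2S$.

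The Jensen step is exactly what fails for a single threshold. That is why the decomposition you propose cannot be completed threshold by threshold.
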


Taking $x=5\epsilon$ gives an expected contribution of order $\epsilon S$ per episode, independent of $H$.
This linear dependence on $\epsilon$ is crucial:
we can choose $\epsilon$ inverse to some polynomial of $S$, $A$, and $K$ to make polynomial orders tight, while paying only logarithmic factors in the concentration event. This property is related to \emph{first-order deviation bound} of a time-homogeneous MDP, which may be of independent interest.

We now sketch the proof of \Cref{cor:expected_eps_var_from_total_deviation}.
First, the elementary inequality $\min\{a^2,x^2\}\le x|a|$ implies
total $x$-clipped variance is bounded by $x$ times \emph{total deviation}: $\bbE_{\pi} \left[\sum_{h=1}^H \left|   V_{h+1}^*(s_{h+1}) - P_{s_h,a_h} V_{h+1}^*\right|  \right]$.
For a policy $\pi$, define $D_h^\pi (s)$ (\Cref{eq:def_dev}) as the expected total deviation in the future starting from state $s$ at step $h$ while executing $\pi$.
The upper bound is then $x \cdot D_1^\pi(s)$.
The main novelty is the potential function
\begin{align*}
    \Phi_h(s) := 2\sum_{u\in\cS}\min\{V_h^*(s),V_h^*(u)\}.
\end{align*}
Using the monotonicity $V_h^*\ge V_{h+1}^*$, one proves that one-step deviation $|V_{h+1}^* (s') - P_{s,a} V_{h+1}^*|$ is no larger than the potential drop $\Phi_h (s) - \Phi_{h+1} (s')$ \emph{in expectation}.
Therefore, by induction, $D_h^\pi(s)\le \Phi_h(s)\le 2S$.
This gives a horizon-free bound on the expected total deviation per episode, and hence \Cref{cor:expected_eps_var_from_total_deviation}.


\paragraph{Horizon truncation and reward-aware exploration.}
The previous horizon-free algorithm~\cite{li2021settling,zhang2022horizon} relies on a sample-collection phase that is essentially reward-free.
This incurs a large exploration cost.
Our algorithm instead uses reward-aware exploration:
it first follows an optimistic policy for the original reward function guided by properly designed bonus function, and only invokes reward-free exploration if this policy reaches an unlearned state-action pair.
A technical obstacle is that, without additional structure, the optimistic policy might reach an unlearned pair only very late in the episode, leaving too few steps for  sample collection.
To address this, we truncate the planning horizon by a fraction of $\frac{1}{m}$.
The algorithm plans only for $
    H_1:=H-\frac{H}{m}$ steps,
and reserves the last $\frac{H}{m}$ steps as a suffix for reward-free exploration when needed.
The following lemma shows that this truncation costs only a small amount of value.

\begin{restatable}{lemma}{restatableCutNew}
\label{lemma:cutnew}
Fix a state $\tilde{s}\in\cS$.
Let $
    f(t) := \max_{\pi}
    \bbE_\pi \left[
        \sum_{h=1}^{t} r(s_h,a_h) | s_1 = \tilde{s}
    \right]$
be the optimal value with horizon $t$ and initial state $\tilde{s}$.
If $t>S$, then 
\begin{align*}
    f(t) \le \left(
        1+\frac{S}{t-S}
    \right)f(t-1).
\end{align*}
Consequently, by viewing $\ell$ consecutive steps as one big step, for any integer $\ell>0$ and any $t>S$, $
    f(t\ell) \le \left(
        1+\frac{S}{t-S}
    \right)f((t-1)\ell).$
\end{restatable}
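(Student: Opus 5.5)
The plan is to reduce the claim to a bound on the marginal gain of one extra step. The inequality $f(t)\le\bigl(1+\frac{S}{t-S}\bigr)f(t-1)$ is equivalent to
\begin{align*}
    f(t)-f(t-1)\le \frac{S}{t}\,f(t),
\end{align*}
so it suffices to show that adding a step raises the optimal value by at most an $\frac{S}{t}$ fraction of $f(t)$.

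I will work with the optimal $j$-step value vectors $W_j(u):=\max_\pi\bbE_\pi[\sum_{h=1}^j r(s_h,a_h)\mid s_1=u]$, so that $f(j)=W_j(\tilde s)$. Time-homogeneity and $r\ge 0$ give $W_0=0\le W_1\le\cdots\le W_t\le 1$ coordinatewise. Let $\pi^\star$ be an optimal deterministic Markov policy for horizon $t$ from $\tilde s$, and set $R(u):=\max_a r(u,a)$.

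\textbf{Step 1 (one-step gain is an end-of-horizon reward).} I will show
\begin{align*}
    f(t)-f(t-1)\le \bbE_{\pi^\star}[R(s_t)].
\end{align*}
Truncating $\pi^\star$ to its first $t-1$ steps gives a feasible $(t-1)$-step policy. The reward lost by this truncation is exactly $\bbE_{\pi^\star}[r(s_t,a_t)]\le\bbE_{\pi^\star}[R(s_t)]$.

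\textbf{Step 2 (many comparable time-shifted policies).} I aim for a family of roughly $t-S$ time-shifted copies of the "collect $R(u)$ at the end" behaviour. The tool is a simple-path argument: in the support of a Markov chain on $S$ states, any state reachable from $\tilde s$ is reachable within at most $S-1$ steps. For each $j\in\{S,\dots,t\}$ I want to show $\bbE_{\pi^\star}[R(s_t)]$ is at most the contribution of step $j$ under some $t$-step policy. That policy would follow $\pi^\star$ but be "re-timed" by deleting a segment between two visits of the same state, i.e.\ removing a cycle. Because $W_j$ is nondecreasing in $j$, skipping a cycle loses no future optimal value. The intended conclusion is that the per-step rewards of these shifted copies at distinct times each dominate $\bbE_{\pi^\star}[R(s_t)]$.

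\textbf{Step 3 (sum and use total boundedness).} Summing these inequalities over the admissible positions $j$, which number at least $t-S$, and using that the total reward of any trajectory is at most its value (and hence at most $f(t)$ in expectation), gives
\begin{align*}
    (t-S)\,\bigl(f(t)-f(t-1)\bigr)\le S\,f(t-1).
\end{align*}
Here the factor $S$ pays for the at most $S$ distinct states at which the end-reward can be collected. This inequality rearranges to the claim. An alternative route to the same inequality is a potential argument on the increments $\delta_j:=W_j-W_{j-1}$: these satisfy $\delta_j(u)\le\max_a P_{u,a}\delta_{j-1}$, so their sup norm is nonincreasing in $j$. I would combine this with a per-state counting of the $S$ coordinates to show $\delta_t(\tilde s)\le\frac{1}{t-S}\sum_{j\le t}(\text{one coordinate share})$.

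\textbf{Step 4 (the consequence).} For the statement about $\ell$ consecutive steps, consider the MDP whose transition is the $\ell$-step kernel under an arbitrary choice of $\ell$ actions. Its reward is the accumulated $\ell$-step reward, and its state space is still $\cS$. Non-Markov within-block choices can be encoded by enlarging the action set, which does not change $S$. Applying the first inequality to this MDP gives the second statement immediately.

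\textbf{Main obstacle.} The hard step is Step 2. The difficulty is turning "reward is collected late" into a multiplicative loss $\frac{S}{t}$ measured against $f(t)$ rather than against $\max_u W_t(u)$, and doing so with only $S$ (not $t$) lost steps. I would first try the cycle-deletion construction on the support graph of $\pi^\star$, charging each state $u$ once. If that fails, I would fall back on the increment recursion for $\delta_j$ together with an averaging over $j\in[S,t]$.
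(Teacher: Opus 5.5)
Your reduction to $t\,(f(t)-f(t-1))\le S f(t)$ is correct, and your Step 4 (the macro-MDP) matches the paper. The core of the plan, however, cannot work, because Step 1 discards exactly what the lemma needs. The inequality $f(t)-f(t-1)\le \bbE_{\pi^\star}[R(s_t)]$ is true, but its right-hand side need not be small. The $(t-1)$-step optimum is not the truncation of $\pi^\star$, and it can collect the same terminal reward one step earlier.

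Concretely, take states $\tilde s,w,z$. At $\tilde s$, action ``stop'' pays $\frac12$ and moves to an absorbing zero-reward state $z$. Action ``go'' pays $0$ and moves to $w$ with probability $p$, otherwise staying at $\tilde s$. At $w$ every action pays $1$ and moves to $z$. The $t$-step optimal policy is unique: go whenever at least two steps remain, and stop at the last step. Hence $\bbE_{\pi^\star}[R(s_t)]\ge\frac12(1-p)^{t-1}$, whereas $f(t)-f(t-1)=\frac p2(1-p)^{t-2}$. For $t\approx 1/p$, the former is about $1/(2\ee)$, while $\frac{S}{t-S}f(t-1)=O(p)$. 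So the inequality $(t-S)\,\bbE_{\pi^\star}[R(s_t)]\lesssim S f(t-1)$, which Steps 2--3 are designed to produce, is false.

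Steps 2--3 also fail on their own terms. Each re-timed policy only certifies one position $j$, and total boundedness controls the sum of rewards along a \emph{single} policy, not a sum of step-$j$ rewards drawn from $t-S$ different policies. In addition, ``deleting a cycle'' is a path-dependent operation on a stochastic trajectory. The $(S-1)$-step reachability fact is only qualitative, since the probability of that path can be arbitrarily small.

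Your fallback has the same defect in another form. The inequality $\delta_j(u)\le\max_a P_{u,a}\delta_{j-1}$ is correct, but the maximizing action changes with $j$, so it does not propagate along one trajectory. It only gives $t\,\delta_t(\tilde s)\le\sum_{j\le t}\norm{\delta_j}_\infty\le\sum_u W_t(u)$. That is the normalization by $\max_u W_t(u)$ that you yourself flag as insufficient.

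The missing idea is to use the action $a_h(u)$ that is optimal for the $h$-step problem. Subtracting $W_{h-1}(u)\ge r(u,a_h(u))+P_{u,a_h(u)}W_{h-2}$ from $W_h(u)=r(u,a_h(u))+P_{u,a_h(u)}W_{h-1}$ gives $\delta_h(u)\le P_{u,a_h(u)}\delta_{h-1}$. The $t$-step optimal policy plays exactly $a_{t+1-k}(s_k)$ at time $k$, so this iterates along its single trajectory to $\delta_t(\tilde s)\le\bbE_{\pi^\star}[\delta_{t+1-k}(s_k)]$ for every $k\in[t]$.

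Sum over all $k\in[t]$ (no positions are discarded) and group by state. The increments collected at visits to $u$ have distinct remaining horizons, all at most $t+1-\tau_u$, where $\tau_u$ is the first visit time. By nonnegativity and telescoping they sum to at most $W_{t+1-\tau_u}(u)$. Optional stopping for the martingale $\sum_{k'<k}r(s_{k'},a_{k'})+W_{t+1-k}(s_k)$ at $\tau_u\wedge(t+1)$ then bounds $\bbE_{\pi^\star}[\bbI[\tau_u\le t]\,W_{t+1-\tau_u}(u)]$ by $f(t)$. This per-state charge is the rigorous version of your ``charge each state $u$ once''. It yields $t\,(f(t)-f(t-1))\le S f(t)$ directly.
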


This lemma provides a sharper comparison between optimal values at nearby
horizons than the horizon-comparison results used in prior work, such as
Lemma~2 of~\cite{zhang2022horizon} and Lemma~4.6 of~\cite{li2021settling}.
It allows us to remove a short suffix from each episode while incurring only a
controlled loss in value.
We defer the proof to \Cref{app:trunc_lemma_proof}.
We apply the lemma with $H=m\ell$.
Pick $0<\upsilon\le \sqrt{\frac{SA}{K}}$ and $m\ge \frac{4S}{\upsilon}$ such that $m$ divides $H$.
Using the fact that $m\geq 2S$ and  $f(\cdot)\le 1$, we have
$
    f(H) \le f\left(H-\frac{H}{m}\right) +
    \frac{2S}{m}
    f\left(H-\frac{H}{m}\right) \le f\left(H-\frac{H}{m}\right) +
    \frac{\upsilon}{2}$,.
Thus, if we learn the truncated $H_1$-step MDP successfully, then we also compete with the original $H$-step benchmark up to an additional loss $K\upsilon \le O(\sqrt{SAK})$.



\paragraph{Monotone bonus design.}
The discussion above treated the optimal value functions as if they were known.
In the actual learning process, the algorithm only has optimistic estimates $V_h^k$.
Thus the bonus must serve two purposes simultaneously:
it must give a concentration guarantee for $V_h^*$, and it must be monotone enough to propagate optimism from $V_{h+1}^k$ to $V_h^k$.

A standard MVP-style \cite{zhang2020reinforcement} approach designs a bonus $b_{\mvptag}$ such that, for a true transition row $p$ and an empirical row $\wh{p}$ obtained from $n$ samples, with high probability, $\wh{p}V_h^*+b_{\mvptag}\left(\wh{p},V_h^*,n\right)\ge pV_h^*$, and $\wh{p}v+b_{\mvptag}\left(\wh{p},v,n\right) \ge \wh{p}v'+b_{\mvptag}\left(\wh{p},v',n\right)$ for all $v'\le v$.
The second property makes the Bellman update monotone and allows optimism to be proved by induction.

In our setting, the concentration bound is based on the cutting decomposition.
For $V_h^*$, the concentration event has the schematic form
\begin{align*}
    \wh{p}V_h^* +
    b_{\cut}\left(\wh{p},V_h^*,n\right) +
    b_{\res}\left(\wh{p},V_h^*, n\right) \ge pV_h^* ,
\end{align*}
where $ b_{\cut}\left(\wh{p},V_h^*,n\right)$ denotes the confidence width of $\left(p - \wh{p}\right) \CutProj_{\epsilon}\left(\wh{p},V_h^*\right)$ and $b_{\res}\left(\wh{p},V_h^*,n\right)$ denotes the confidence width of $\left(p - \wh{p}\right) \left(V_h^* - \CutProj_{\epsilon}\left(\wh{p},V_h^*\right)\right)$.

It is natural to seek a single bonus $\ov{b}\left(\wh{p},v,n\right)$ that upper-bounds the two terms above, has a summable variance structure, and is monotone in $v$.
However, strict monotonicity is difficult because the cut projection $\CutProj_{\epsilon}\left(\wh{p},v\right)$ is discontinuous in $v$.
A small perturbation of one coordinate can move that coordinate from the ``near-mean'' region to the ``far-tail'' region by where $|v(s) -\ell_{\epsilon}(p,v) |$ is larger than $2\epsilon$ or not, changing the cut component abruptly.
Therefore, we do not require $\ov{b}$ to be monotone in the usual sense.

We overcome this by relaxing the monotonicity requirement by a constant factor.
We design a bonus $b$ (\Cref{eq:bonus}) such that, with probability at least $1-\poly(S,A,K)\delta$, the following two properties hold:
\begin{align*}
    \wh{p}V_h^*+b\left(\wh{p},V_h^*,n\right)\ge pV_h^*, \,\, \textup{and $\forall v',v\in[0,1]^S$ with $v'\le v$, } \wh{p}v+100\,b\left(\wh{p},v,n\right) \ge \wh{p}v'+b\left(\wh{p},v',n\right).
\end{align*}
The factor $100$ is only a universal constant.
It gives enough slack to handle the discontinuities caused by the cutting operation, while preserving the summability of the bonus.

With this relaxed monotonicity, optimism follows by the usual backward induction:
\begin{align*}
    Q_h(s,a) = r(s,a)+\wh{P}_{s,a}V_{h+1}+100\,b\left(\wh{P}_{s,a},V_{h+1},N(s,a)\right).
\end{align*}
Therefore, optimism is propagated: if $V_{h+1}\ge V_{h+1}^*$, 
\begin{align*}
    Q_h(s,a) \ge r(s,a)+\wh{P}_{s,a}V_{h+1}^*+b\left(\wh{P}_{s,a},V_{h+1}^*,N(s,a) \right)
    \ge r(s,a)+P_{s,a}V_{h+1}^* = Q_h^*(s,a).
\end{align*}  We defer the detailed proof of the monotonicity property to
\Cref{app:mono_pf}.

\section{Algorithm Description}
\label{sec:alg}

In this section, we present the main learning algorithm, \Cref{alg:new}.
It maintains a set $\cO\subseteq\cS\times\cA$ of \emph{unlearned} state-action pairs and a set of \emph{known} state-action-state triples $\cK$.
At the beginning of episode $k$, the learner computes an optimistic policy using \Cref{alg:plann} on the truncated horizon $ H_1=\frac{m-1}{m}H.$
The remaining suffix of length $\frac{H}{m}$ is reserved for reward-free exploration.
During the first $H_1$ steps, the learner follows the optimistic policy.
If the trajectory does not visit $\cO$, the learner plays a uniformly random policy for the remaining $\frac{H}{m}$ steps.
If instead the trajectory first reaches some unlearned pair $    \left(\widetilde s,\widetilde a\right)\in\cO,$ then the planning phase is stopped and \Cref{alg:explore} is invoked for the remaining steps of the episode.

 \vspace{-1ex}
\paragraph{The subroutines.} \Cref{alg:new} relies on two key subroutines.
\Cref{alg:plann} performs optimistic planning using our newly designed bonus function, whereas \Cref{alg:explore} conducts reward-free exploration, following the prior work \cite{zhang2022horizon} with minor modifications to adapt to our main algorithm.
Due to space constraints, we defer the detailed descriptions of these two subroutines to \Cref{app:missing_alg}.

 \vspace{-1ex}
\paragraph{Total count.} The algorithm maintains the cumulative count $ \Ntotal(s,a,s')$, which records the total number of observed transitions $(s,a,s')$ before episode $k$.
This count is updated after every step, including transitions generated during optimistic planning, random suffix play, and reward-free exploration.

 \vspace{-1ex}
\paragraph{The \emph{unlearned} state-action set $\cO$.}
The set $\cO$ consists of state-action pairs which are not sufficiently visited yet. It has two important usages in the algorithm: \ding{172}
For $(s,a)\in\cO$, the planning algorithm directly sets $Q_h(s,a)=1$ to keep optimism;
\ding{173}
Whenever an unlearned pair is reached, the exploration subroutine (\Cref{alg:explore}) is called.

A state-action pair is removed from $\cO$ before episode $k$ once it has been effectively explored enough so that the term $\log \frac{N^{K+1}(s,a)}{N^k(s,a)}$ no longer depends on $H$.
Here an effective exploration is certified by the event that \Cref{alg:explore} returns $\trigger=\true$.

 \vspace{-1ex}
\paragraph{The \emph{known} triple set $\cK$.} At each episode, $\cK$ is defined as $\{  (s,a,s') \mid \Ntotal(s,a,s') \geq N_{\tref}\}$ with $N_{\tref} = 1025S^2\log \frac{1}{\delta}$.
This set records the state-action-state triples for which the transition probability $P_{s,a,s'}$ can be estimated accurately.

 \vspace{-1ex}

\paragraph{Doubling update rule.}
We also introduce the conception of frozen count $ N(s,a,s')$, which is used only by the planning subroutine to form the empirical transition model $\widehat P$.
It is defined at the beginning of each episode as follows.
If $(s,a)\in\cO$, then the empirical model for $(s,a)$ is not used in planning.
In this case, we set $ N(s,a,s'):=0$ for all $ s'\in\cS$ and use the dummy denominator convention $ N(s,a):=1.$
If $(s,a)\notin\cO$, then $N(s,a,s')$ is defined by a doubling rule:
\begin{align*}
    \Ntotal(s,a) := \sum_{s' \in\cS}\Ntotal(s,a,s'),
    \quad m(s,a) := \lfloor\log_2 \Ntotal(s,a)\rfloor,
    \quad N(s,a):=2^{m(s,a)}. \notag
\end{align*}
Let $Y_1,Y_2,\ldots,Y_{\Ntotal(s,a)}$ be the ordered list of successor states observed in the first $\Ntotal(s,a)$ visits to $(s,a)$.
Then the frozen count defined as 
\begin{align}
N(s,a,s') := \sum_{i=1}^{N(s,a)} \bbI\{Y_i(s,a)=s'\}\label{eq:doubling_rule}
\end{align}
for all $s'\in \cS$.
Accordingly, $ \widehat P_{s,a,s'} := \frac{N(s,a,s')}{N(s,a)}.$
Thus, although the raw count $\Ntotal$ is updated after every episode, the frozen count used in planning changes only when $\Ntotal(s,a)$ crosses a power of two.

 \vspace{-1ex}

\paragraph{Reference model $P^{\tref}$.}
We define $P^{\tref}$ over the extended state space $\cS\cup \{z,z'\}$, where $z$ and $z'$ are two dummy states with $P^{\tref}_{z,a,z'} = 1, P^{\tref}_{z',a,z'} = 1$ for any action $a$.
For all $(s,a)$, $P^{\tref}_{s,a}$ is initialized as $\II_{z}$.
At episode $k$, assume the newly \emph{known} triples $\cR^k\neq \emptyset$.
Define $\cT^k= \{ (s,a) \mid \exists s', (s,a,s')\in \cR^k\}$.
At the end of the $k$-th episode (after the update $\cK\gets \cK\cup \cR^k$), we update the reference model $P^{\tref}$ as follows.
For $(s,a)\notin \cT^k$, keep $P^{\tref}$ invariant.
For $(s,a) \in \cT^k$, define $\cR^k(s,a) = \{ s'\mid (s,a,s')\in \cR^k\}$.
For each $s'\in \cR^k(s,a)$, there exists some $h_{s,a,s'}$ such that the total count $\Ntotal(s,a,s')$ reaches $N_{\tref}$ at the $h_{s,a,s'}$-th step.
Find the largest $h_{\smax} = \max_{s'\in \cR^k(s,a)} h_{s,a,s'}$ satisfying this condition for some $s'\in \cR^k(s,a)$.
Let $\widetilde{N}(s,a,y)$ be the total count $\Ntotal(s,a,y)$ after the $h_{\smax}$-th step  for all $y\in \cS$.
Then we update $P^{\tref}_{s,a}$ by $\widetilde{N}$:
\begin{align}
    P^{\tref}_{s,a,s'}
    :=
    \begin{cases}
    \displaystyle
    \frac{\widetilde{N}(s,a,s')}
    {\sum_{y\in\cK(s,a)}\widetilde{N}(s,a,y)},
    & (s,a,s')\in \cK,\\[1em]
    0, & (s,a,s')\notin \cK,
    \end{cases} \label{eq:ref_model}
\end{align}
and $P^{\tref}_{s,a,z} = 0$.
As a result, for each $(s,a)$, $P^{\tref}_{s,a}$ is only updated when there exists $s'$ such that $\Ntotal(s,a,s')$ reaches $N_{\tref}$. 
Intuitively, $P^{\tref}$ is the empirical transition kernel restricted to $\cK$.
It is used inside the exploration routine (\Cref{alg:explore}) to determine whether the remaining suffix can be used for effective exploration.

\begin{remark}
The use of doubling frozen counts for $\widehat P$ for $(s,a)\notin \cO$, together with threshold-based updates of $P^{\tref}$, is crucial for removing the dependence on $H$.
 These models are updated only at prescribed count levels, such as doubling times for $\widehat P$ and threshold-crossing times for $P^{\tref}$.
Thus, the number of possible empirical models is independent of the horizon $H$.
\end{remark}

\begin{algorithm}[t]
\caption{\texttt{Horizon-free Learning}}
\label{alg:new}
\begin{algorithmic}[1]
\STATE{\textbf{Input:} state space $\cS$, action space $\cA$, reward $r$, horizon $H$, number of episodes $K$, confidence level $\delta$.}
\STATE{\textbf{Initialize:} $N(s,a,s')\gets0$, $\Ntotal(s,a,s')\gets0$, $M(s,a)\gets0$, $\cO\gets\SA$, and $\cK\gets\emptyset$.}
\STATE{Set $\upsilon = \min\left\{\sqrt{\frac{SA}{1000K}}, \frac{1}{20S\log S} \right\}$, $m=\frac{4S}{\upsilon}$, $H_1=\frac{(m-1)H}{m}$,  $\epsilon  = \frac{1}{S^2}$, $N_{\tref}=1025S^2\log \frac{1}{\delta}$, $N_{\known}=10000\log \frac{1}{\delta}$, and set $P^{\tref}_{s,a} = \II_{z}$ for all $(s,a)$.}

\FOR{$k=1,2,\ldots,K$}
    \STATE{Receive $s_1^k$, compute $\pi^k\gets\texttt{Planning}(\cO,\{N(s,a,s')\}_{s,a,s'},\epsilon)$ (\Cref{alg:plann}).}
    \STATE{Set $\trigger\gets\false$, $\markflag\gets\false$, and initialize the trajectory record $\cD_k\gets\emptyset$.}

    \FOR{$h=1,2,\ldots,H_1$}
        \STATE{Set $a_h^k\gets\pi_h^k(s_h^k)$.}
        \IF{$(s_h^k,a_h^k)\in\cO$}
            \STATE{Set $\left(\wt{s},\wt{a}\right)\gets(s_h^k,a_h^k)$ and $\markflag\gets\true$.}
            \STATE{Run $\texttt{Exploration}\left(\left(\wt{s},\wt{a}\right),P^{\tref},\{\Ntotal(s,a,s')\}_{s,a,s'},\cK\right)$ (\Cref{alg:explore}) for the remaining steps, append the generated transitions to $\cD_k$, and set $\trigger$ to be its return value.}
            \STATE{Update $M\left(\wt{s},\wt{a}\right)\gets M\left(\wt{s},\wt{a}\right)+\bbI[\trigger=\true]$.} \label{line:M_upd}
            \STATE{\textbf{break}}
        \ELSE
            \STATE{Execute $a_h^k$, observe $s_{h+1}^k$, and append $(s_h^k,a_h^k,s_{h+1}^k)$ to $\cD_k$.}
        \ENDIF
    \ENDFOR

    \IF{$\markflag=\false$}
        \STATE{Run a random policy for steps $H_1+1,\ldots,H$, and append all generated transitions to $\cD_k$.}
    \ENDIF

    \STATE{For every $(s,a,s')\in\cD_k$, update $\Ntotal(s,a,s')\gets \Ntotal(s,a,s')+1$.}
    \STATE{Let $\cR^k\gets\{(s,a,s')\notin\cK:\Ntotal(s,a,s')\ge N_{\tref}\}$.}
    \STATE{Update $\cK\gets\cK\cup\cR^k$ and $\cO\gets\cO\setminus\{(s,a)\in\cO:M(s,a)\ge N_{\known}\}$.} \label{line:known_removal}
    \STATE{For all $(s,a)\notin\cO$, update $N(s,a,s')$ by the \emph{doubling update rule} (\Cref{eq:doubling_rule}).}
    \STATE{If $\cK$ is updated (i.e., $\cR^k\neq \emptyset)$, reconstruct $P^{\tref}$ following the \emph{reference model rule} (\Cref{eq:ref_model}).}
\ENDFOR
\end{algorithmic}
\end{algorithm}

\section{Discussion}\label{sec:discussion}

In this paper, we improve the leading term of horizon-free regret for tabular time-homogeneous MDPs to the asymptotically optimal rate $\widetilde O(\sqrt{SAK})$.
This matches the contextual-bandit lower bound up to $\poly \log \left(S, A, K, \frac{1}{\delta}\right)$, after a horizon-free polynomial burn-in term.
Thus, our result shows that the long-horizon and state-transition difficulties of MDPs do not affect the leading dependence on the number of episodes $K$.
The main open question is whether the burn-in term $\widetilde O(S^8A^3)$ can be reduced or even fully eliminated.
A positive answer would give a regret bound of order $\widetilde O(\sqrt{SAK})$ and would suggest that, tabular RL for MDPs is statistically no harder than for contextual bandits in the minimax sense.
Another important direction is to refute this possibility.
For example, one may try to prove a nontrivial lower bound on the burn-in term, such as $\Omega(S^{1+\alpha}A)$ for some constant $\alpha>0$.
Such a result would establish a  finite-sample separation between MDPs and contextual bandits.

\bibliography{ref}
\bibliographystyle{plain}

\newpage
\appendix
\startappendixcontents

\section*{Appendix}

In this appendix, we provide the proof of the formal version of \Cref{thm:main} as follows.

\begin{theorem}[Formal statement of \Cref{thm:main}]
\label{thm:reg}
Fix $K\geq1$.
Consider a time-homogeneous tabular MDP with $S\geq200$ states, $A\geq8$ actions, horizon $H$, and nonnegative totally-bounded rewards.
Let $\upsilon$ and $m=\frac{4S}{\upsilon}$ be the parameters in \Cref{alg:new}, assume that $m$ is a positive integer dividing $H$, and put $d:=\frac{H}{m}$.
Assume $\frac{d}{20S\log S}\geq22$.
For any $\delta_0\in(0,1)$, run \Cref{alg:new} with
\begin{align*}
    \delta:=\frac{\delta_0^2}{210S^7A^2K}.
\end{align*}
Then, with probability at least $1-\delta_0$,
\begin{align*}
   \Regret_H(K)
   =O\left(
       \sqrt{SAK}\log\frac{SAK}{\delta_0}
       +S^8A^3\log^2\frac{SAK}{\delta_0}
   \right).
\end{align*}
\end{theorem}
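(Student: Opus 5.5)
We would prove \Cref{thm:reg} by splitting the regret into a truncation loss, a loss from episodes that hit the unlearned set $\cO$, and the optimistic-planning regret on the truncated horizon $H_1$. First, apply \Cref{lemma:cutnew} with $H=m d$ and $t=m$ (valid since $m\ge 4S/\upsilon>S$). This gives $V_1^*(s)\le f(H_1)+\frac{2S}{m}f(H_1)\le f(H_1)+\upsilon/2$. Summed over $K$ episodes, the cost is $K\upsilon=O(\sqrt{SAK})$. It then suffices to compare $\pi^k$ with the $H_1$-step optimal value $V^{*,H_1}_1$, which is again monotone in $h$ because the MDP is time-homogeneous.

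Next we fix a single good event $\cG$ of probability at least $1-\delta_0$, obtained by a union bound. It contains three pieces.
\begin{enumerate}[(i)]
\item Bernstein concentration of $(P_{s,a}-\wh P_{s,a})\CutProj_\epsilon(\wh P_{s,a},V^*_{h})$. By the doubling rule there are only $O(\log K H)$ frozen models per pair. The key point is that the class $\{\CutProj_\epsilon(p,V_h^*)\}$ has size at most $4S/\epsilon^2=4S^5$, independent of $H$. We still need to avoid $\log H$ from the number of frozen models; we would do this by indexing models by the count $2^{m}$ and a fixed sample stream, so the union is over the sample sizes actually realized, which are at most $\poly(SAK)$ after leaving $\cO$.
\item Coordinatewise concentration of $\wh P_{s,a,s'}$ on the residual $V_h^*-\CutProj_\epsilon$, whose variance is bounded by $\bbV_{5\epsilon}$ via \Cref{lemma:lbb}.
\item The exploration guarantees of \Cref{alg:explore} together with the reference model, imported from \cite{zhang2022horizon}: once $(s,a)$ leaves $\cO$, $N^{K+1}(s,a)/N^{k}(s,a)\le\poly(SAK/\delta)$. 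We also need that each call with $\trigger=\true$ happens with constant probability, so the total number of episodes in which $\cO$ is hit is $O(SA N_{\known}\cdot\poly(S,A))$, which is charged to the burn-in $S^8A^3\log^2$.
\end{enumerate}
The choice $\delta=\delta_0^2/(210S^7A^2K)$ is calibrated so that this union costs only $\log\frac{SAK}{\delta_0}$.

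On $\cG$, optimism $V_h^k\ge V_h^{*}$ holds by the backward induction described in \Cref{subsec:solutions}. The base case uses $Q_h=1$ on $\cO$, and the inductive step uses the relaxed monotonicity with factor $100$ (\Cref{app:mono_pf}). The standard optimistic decomposition then bounds the regret of non-$\cO$ episodes by
\[
\sum_{k,h}\bbE\bigl[100\,b(\wh P,V^k_{h+1},N)+(\wh P-P)(V^k_{h+1}-V^*_{h+1})\bigr]
\]
plus martingale terms. We bound the bonus by
\[
\sqrt{\frac{\bbV(\wh P,V^k_{h+1})\log\frac1\delta}{N}}+\sqrt{\frac{S\,\bbV_{5\epsilon}\log\frac1\delta}{N}}+\frac{S\log\frac1\delta}{N}.
\]
Applying Cauchy--Schwarz to the first term, the total variance is $O(K)$ by the usual law-of-total-variance argument with totally bounded rewards, and $\sum 1/N=O(SA\log(SAK/\delta))$ without $H$ by the count-ratio bound. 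This yields the $\sqrt{SAK}\log$ leading term. For the clipped part, \Cref{cor:expected_eps_var_from_total_deviation} gives $\sum\bbV_{5\epsilon}\le 10\epsilon SK=10K/S$, so the term is $\sqrt{S\cdot K/S\cdot SA}\log=O(\sqrt{SAK}\log)$; this is exactly why we take $\epsilon=S^{-2}$. The lower-order and cross terms are handled MVP-style by recursion on the variance of $V^k-V^*$, producing $\poly(S,A)\log^2$.

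We expect the main obstacle to be replacing $V^*$ by the data-dependent $V^k_{h+1}$ inside the variance and the cut terms while keeping both $H$-freeness and the $\sqrt{SAK}$ order. One subproblem is that \Cref{cor:expected_eps_var_from_total_deviation} is stated for $V^*$, so we need $\bbV_{5\epsilon}(P,V^k)\lesssim\bbV_{5\epsilon}(P,V^*)+\bbV(P,V^k-V^*)$. The other is the recursive MVP bound on $\sum\bbV(P,V^k-V^*)$ over horizon $H_1$ without $\log H$. We would attempt this recursion with the doubling-count potential restricted to $N\ge N^{k}$ at the time of leaving $\cO$, and absorb all resulting $\poly(S,A)$ factors into the $S^8A^3$ burn-in.
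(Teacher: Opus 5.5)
Your architecture matches the paper's: truncation, cut-projection concentration with the factor-$100$ relaxed monotonicity, the split of $\bbV_{5\epsilon}(p,V^k)$ into clipped $V^*$-variance plus $\bbV(p,V^k-V^*)$, the choice $\epsilon=S^{-2}$, and exploration charged to burn-in. But the step ``$\sum 1/N=O(SA\log\frac{SAK}{\delta})$ by the count-ratio bound'' is false for this algorithm.

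Frozen counts are refreshed only between episodes. When $(s,a)$ leaves $\cO$, its frozen count is only guaranteed to be of order $\frac{\upsilon}{S^2\log S}U(s,a)\log\frac1\delta$. A single episode can visit $(s,a)$ up to about $U(s,a)$ times in expectation, and up to $H_1$ times pathwise. So one episode can add order $U(s,a)/N^{k_0(s,a)}(s,a)\approx S^2/\upsilon$ to the inverse-count sum, which grows like $\sqrt K$, and after Cauchy--Schwarz the $\sqrt{SAK}$ rate is lost. The ratio $N^{K+1}/N^{k_0}$ controls how many dyadic levels occur, not how many visits each level receives.

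The device you are missing is the paper's stopping time $\tau_k$. The analysis of episode $k$ halts when an unlearned pair is reached \emph{or} when some learned pair's raw count reaches twice its frozen count within the episode. Each stopped episode is charged at most $1$ to $L_{\stoplabel}$, and there are at most $SA\log_2$ (count ratio) doubling stops. On non-stopped steps each dyadic level $n$ gets at most $n$ visits, which gives $SAL_N$ (\Cref{lem:inverse-count-section2}). Consequently, the regret decomposition and the closures for $\Sigma_V,\Sigma_{V^*},\Sigma_D$ must be done on the stopped realized trajectory. Each telescoping sum then carries one stopping charge per episode, plus the martingale $\sum_h r_h^k-V_1^{\pi^k}(s_1^k)$. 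Bounding ``regret of non-$\cO$ episodes'' is not meaningful, because $V_1^{\pi^k}$ is an expectation while hitting $\cO$ is a realized event.

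Second, a union ``over the sample sizes actually realized'' is not a valid union bound. Realized frozen counts are random and can be of order $KH$, not $\poly(SAK)$. A union over all dyadic levels up to $KH$ inserts a $\log\log(KH)$ factor into $\log\frac1\delta$. What is needed is a deterministic window whose endpoints scale with $H$ but whose ratio does not. The paper anchors it at the maximal per-episode expected visit count $U(s,a)$. \Cref{lemma:exp1}, built on effective exploration calls and the stationary-policy comparison \Cref{lemma:al1}, gives $N^{k_0(s,a)}(s,a)\ge\frac{\upsilon\log\frac1\delta}{504S^2\log S}U(s,a)$. Markov's inequality gives $\Ntotal^{K+1}(s,a)\le KU(s,a)/\delta$. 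The union is then taken over the deterministic dyadic set $\cN(s,a)$ between these endpoints, whose size $L_N$ is $H$-free.

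A similar issue affects the clipped variance. \Cref{cor:expected_eps_var_from_total_deviation} is an in-expectation bound for a fixed policy, and the realized per-episode sum is only bounded by $H_1$, so Azuma over episodes is not $H$-free. The paper instead runs the bounded potential $\Phi_h\le 2S$ as a supermartingale with resets at episode boundaries (\Cref{lem:hp_total_deviation}). This yields $\sum_{k,h}\bbV_x(P_h^k,V_{h+1}^*)\le x(3SK+4S\log\frac1\delta)$ along the adaptively generated trajectories.
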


\paragraph{Organization.} 
In \Cref{app:tec_lemma}, we collect several technical tools used throughout the analysis, including concentration inequalities and the horizon truncation lemma.
\Cref{app:missing_alg} contains the algorithmic details omitted from the main text.
\Cref{app:notation} summarizes the notation used in the regret analysis, and In \Cref{sec:reg_analysis}, we provide the full regret analysis of \Cref{alg:new}, which proves \Cref{thm:reg}.
We leave two key lemmas about the optimism of the $Q$-function and the novel bound of clipped total variance to \Cref{sec:p_opt} and \Cref{sec:p_var}, respectively.
The analysis specific to the explicit exploration subroutine (\Cref{alg:explore}) is deferred to \Cref{app:expexp}.
This part closely follows the original analysis of~\cite{zhang2022horizon};
we include it here for completeness and to make the notation consistent with our algorithm.

\newpage

\appendixtableofcontents
\newpage




\newpage

\section{Technical Lemmas}\label{app:tec_lemma}

In this section, we collect several technical lemmas used throughout the
analysis.  These include standard counting tools and concentration inequalities,
as well as a few variants tailored to the stopping-time and frozen-count
arguments used in this paper.

\subsection{Variance and Concentration Bounds}

\begin{lemma}[Lemma\,30 in \cite{chen2021implicit}] \label{lem:var_xy}
For any two random variables $X, Y$ with finite second moments, with $Y$ essentially bounded, we have
\begin{align*}
    \bbV (X Y) \le 2 \bbV (X) (\sup |Y|)^2 + 2 (\bbE [X])^2 \bbV (Y).
\end{align*}
Consequently, $\sup |X| \le C<\infty$ implies $\bbV (X^2) \le 4 C^2 \bbV (X)$.
\end{lemma}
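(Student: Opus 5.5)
We prove the first inequality by expanding $XY$ around the mean of $X$, and then obtain the consequence by substituting $Y=X$. Write $\mu:=\bbE[X]$, $M:=\sup|Y|$ (the essential supremum), and decompose
\begin{align*}
    XY = (X-\mu)Y + \mu Y .
\end{align*}
For any two square-integrable random variables $U,W$, the elementary bound $\bbV(U+W)\le 2\bbV(U)+2\bbV(W)$ holds. It follows from $\bbV(U+W)=\bbV(U)+\bbV(W)+2\,\mathrm{Cov}(U,W)$ together with $2|\mathrm{Cov}(U,W)|\le 2\sqrt{\bbV(U)\bbV(W)}\le \bbV(U)+\bbV(W)$. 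Applying it with $U=(X-\mu)Y$ and $W=\mu Y$ gives
\begin{align*}
    \bbV(XY)\le 2\bbV\big((X-\mu)Y\big)+2\mu^2\bbV(Y).
\end{align*}
The second term is already $2(\bbE[X])^2\bbV(Y)$, as required.

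For the first term, we use that variance is at most the second moment, together with $|Y|\le M$ almost surely:
\begin{align*}
    \bbV\big((X-\mu)Y\big)\le \bbE\big[(X-\mu)^2Y^2\big]\le M^2\,\bbE\big[(X-\mu)^2\big]=M^2\,\bbV(X).
\end{align*}
Here $(X-\mu)Y$ is square-integrable because $X$ has a finite second moment and $Y$ is bounded, so all quantities are finite. Combining the two bounds yields $\bbV(XY)\le 2\bbV(X)M^2+2(\bbE[X])^2\bbV(Y)$.

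For the consequence, suppose $\sup|X|\le C$ and take $Y=X$, which is essentially bounded with $\sup|Y|\le C$. The main inequality gives
\begin{align*}
    \bbV(X^2)\le 2C^2\bbV(X)+2(\bbE[X])^2\bbV(X).
\end{align*}
Since $(\bbE[X])^2\le C^2$, the right-hand side is at most $4C^2\bbV(X)$.

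The only step requiring any care is the choice to center $X$ (but not $Y$) before splitting. Without this centering, the cross term would produce a dependence on $\bbE[X^2]$ rather than on $\bbV(X)$. With the centering in place, the remaining calculation is routine.
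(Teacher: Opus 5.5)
Your proof is correct. Splitting $XY=(X-\bbE[X])Y+\bbE[X]\,Y$, applying $\bbV(U+W)\le 2\bbV(U)+2\bbV(W)$, and bounding $\bbV\big((X-\bbE[X])Y\big)\le(\sup|Y|)^2\,\bbV(X)$ gives the first inequality, and taking $Y=X$ with $(\bbE[X])^2\le C^2$ gives the consequence. The paper does not prove this lemma itself but imports it from \cite{chen2021implicit}, and your argument is the standard proof of that result: equivalently, one bounds $\bbV(XY)\le\bbE[(XY-\bbE[X]\bbE[Y])^2]$ and applies $(a+b)^2\le 2a^2+2b^2$ to the same decomposition.
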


\begin{lemma}[Bennett's Inequality, Theorem\,3 in \cite{maurer2009empirical}]\label{lemma:bennet}
Let $Z,Z_1,...,Z_n$ be i.i.d. random variables with values in $[0,1]$ and let $\delta\in(0,1)$.
Define $\bbV Z = \bbE[(Z-\bbE Z)^2 ]$.
Then we have
\begin{align*}
\bbP\left[ \bbE[Z]-\frac{1}{n}\sum_{i=1}^n Z_i > \sqrt{\frac{  2\bbV Z \log \frac{1}{\delta}}{n}} +\frac{\log \frac{1}{\delta}}{3n} \right]\leq \delta,
\end{align*}
and
\begin{align*}
\bbP\left[ \frac{1}{n}\sum_{i=1}^n Z_i - \bbE[Z] > \sqrt{\frac{  2\bbV Z \log \frac{1}{\delta}}{n}} +\frac{\log \frac{1}{\delta}}{3n} \right]\leq \delta.
\end{align*}
\end{lemma}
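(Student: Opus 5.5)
This is Bennett's inequality in the empirical-Bernstein form of Maurer and Pontil, and the paper cites it rather than proving it. I would still recover it by the standard Cramér--Chernoff argument with Bennett's moment-generating-function bound.

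By symmetry (replace $Z$ by $1-Z$, which preserves the range $[0,1]$ and the variance), it suffices to prove the second inequality. Set $X_i = Z_i - \bbE Z$, so that $X_i \le 1$, $\bbE X_i=0$ and $\bbE X_i^2 = \sigma^2 := \bbV Z$.

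\textbf{Step 1: the mgf bound.} Let $\phi(u)=e^u-u-1$. The map $u\mapsto \phi(u)/u^2$ is nondecreasing, so for $\lambda>0$ and $X\le 1$ we have $\phi(\lambda X)\le X^2\phi(\lambda)$. Taking expectations gives
\begin{align*}
\bbE e^{\lambda X}\le 1+\sigma^2\phi(\lambda)\le \exp(\sigma^2\phi(\lambda)).
\end{align*}
By independence and Markov's inequality,
\begin{align*}
\bbP\Big[\sum_i X_i > nt\Big]\le \exp\big(-\lambda n t + n\sigma^2\phi(\lambda)\big).
\end{align*}

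\textbf{Step 2: optimize and simplify.} The optimal choice is $\lambda=\log(1+t/\sigma^2)$, which gives the bound $\exp(-n\sigma^2 h(t/\sigma^2))$ with $h(u)=(1+u)\log(1+u)-u$. Next use the elementary inequality
\begin{align*}
h(u)\ge \frac{u^2}{2(1+u/3)},
\end{align*}
which yields the Bernstein form
\begin{align*}
\bbP\Big[\bar X > t\Big]\le \exp\Big(-\frac{n t^2}{2\sigma^2+2t/3}\Big).
\end{align*}

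\textbf{Step 3: invert.} Set the right-hand side equal to $\delta$ and write $L=\log\frac1\delta$. Solving the quadratic $nt^2 = L(2\sigma^2+2t/3)$ for $t$ gives
\begin{align*}
t = \frac{L}{3n}+\sqrt{\frac{L^2}{9n^2}+\frac{2\sigma^2L}{n}}\le \sqrt{\frac{2\sigma^2L}{n}}+\frac{2L}{3n}.
\end{align*}

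\textbf{Main obstacle.} This last bound has the constant $\frac{2}{3}$ in front of $\frac{L}{n}$, whereas the stated inequality has $\frac{1}{3}$. To obtain the sharper constant I would skip the Bernstein relaxation of Step 2 and invert Bennett's function $h$ directly. Concretely, I would use $h(u)\ge \frac{u^2}{2(1+u/3)}$ more carefully, or the known inverse bound
\begin{align*}
h^{-1}(y)\le \sqrt{2y}+\frac{y}{3},
\end{align*}
which follows by checking that $h(\sqrt{2y}+y/3)\ge y$ via derivative comparison in $y$. Applying this with $y = L/(n\sigma^2)$ gives deviation
\begin{align*}
\sigma^2 h^{-1}\!\big(L/(n\sigma^2)\big)\le \sqrt{\frac{2\sigma^2 L}{n}}+\frac{L}{3n},
\end{align*}
exactly as stated. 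The one remaining calculus check is verifying this inverse bound on $h$.
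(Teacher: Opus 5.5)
Your argument is correct. It differs from the paper mainly in that the paper gives no proof: it imports the bound verbatim as Theorem~3 of Maurer--Pontil. You instead rebuild it from the Cram\'er--Chernoff method, using Bennett's estimate $\bbE e^{\lambda X}\le\exp(\sigma^2(e^{\lambda}-\lambda-1))$, the optimized exponent $n\sigma^2 h(t/\sigma^2)$, and the inverse bound $h^{-1}(y)\le\sqrt{2y}+y/3$. The citation is economical, while your derivation shows where the constant $\frac13$ comes from. Two points need tightening.

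First, the alternative you mention, ``use $h(u)\ge\frac{u^2}{2(1+u/3)}$ more carefully'', cannot work. At $u=\sqrt{2y}+y/3$ one computes $u^2-2y(1+u/3)=-y^2/9<0$, so the Bernstein relaxation is strictly too lossy to give the constant $\frac13$. You must keep $h$ itself, or equivalently the intermediate sub-gamma bound $e^{\lambda}-\lambda-1\le\frac{\lambda^2}{2(1-\lambda/3)}$, whose Legendre transform inverts exactly to $\sqrt{2y}+y/3$.

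Second, the calculus check you left open does close. Write $s=\sqrt{2y}$. For $y>0$, the $y$-derivative of $h(\sqrt{2y}+y/3)-y$ is nonnegative iff $\log(1+s+s^2/6)\ge\frac{3s}{3+s}$. Both sides vanish at $s=0$, and comparing their $s$-derivatives reduces this to $2(3+s)^3\ge9(6+6s+s^2)$, i.e.\ $9s^2+2s^3\ge0$.

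The degenerate case $\bbV Z=0$ is trivial, since then $\frac1n\sum_i Z_i=\bbE Z$ almost surely.
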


\begin{lemma} [Theorem\,4 in \cite{maurer2009empirical}]\label{lem:bennett_empirical}
Let $Z,Z_1,\ldots,Z_n\ (n\ge 2)$ be i.i.d. random variables with values in $[0,b]$ and let $\delta\in(0,1)$.
Define $\bar{Z}=\frac{1}{n}\sum_{i=1}^n Z_i$ and $\hat V_n=\frac{1}{n}\sum_{i=1}^n \left(Z_i-\bar Z\right)^2$.
Then we have
\begin{align*}
    \bbP\left[\abs{\bbE[Z]-\frac{1}{n}\sum_{i=1}^n Z_i}>\sqrt{\frac{2\hat V_n \log \frac{2}{\delta}}{n-1}}+\frac{7 b \log \frac{2}{\delta}}{3(n-1)}\right]\le \delta.
\end{align*}
\end{lemma}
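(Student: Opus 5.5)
By rescaling $Z\mapsto Z/b$ it suffices to treat $b=1$: both sides of the deviation inequality are homogeneous of degree one in $b$, since $\hat V_n$ scales as $b^2$. Write $L:=\log\frac{2}{\delta}$ and $V:=\bbV Z$. The plan is to combine two ingredients. The first is the Bennett bound of \Cref{lemma:bennet}, which controls $|\bbE Z-\bar Z|$ in terms of the \emph{true} variance $V$. The second is a high-probability upper bound on $\sqrt V$ in terms of the empirical variance $\hat V_n$. A union bound then yields the stated empirical-variance bound.

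\textbf{Step 1 (Bennett with true variance).} Apply \Cref{lemma:bennet} at a suitable fraction of $\delta$, on each side as needed. This gives, with the corresponding probability,
\begin{align*}
    |\bbE Z-\bar Z|\le \sqrt{\frac{2VL}{n}}+\frac{L}{3n}.
\end{align*}

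\textbf{Step 2 (concentration of the sample standard deviation).} This is the main obstacle. Let $\tilde V_n:=\frac{n}{n-1}\hat V_n=\frac{1}{2n(n-1)}\sum_{i\neq j}(Z_i-Z_j)^2$ be the unbiased U-statistic variance estimator. The target is
\begin{align*}
    \sqrt{V}\le \sqrt{\tilde V_n}+\sqrt{\frac{2L}{n-1}}
\end{align*}
with the complementary probability. I would follow the entropy method, as in Maurer--Pontil. Set $F:=\frac{n-1}{2}\tilde V_n$, which is normalized so that $\bbE F=\frac{n-1}{2}V$. Check that, for $Z_i\in[0,1]$, $F$ is \emph{self-bounding} in the weak sense: replacing one coordinate changes $F$ by at most $1$ in the downward direction, and $\sum_k (F-\inf_{z} F^{(k\leftarrow z)})^2\le 2F$. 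The lower-tail inequality for such functions gives
\begin{align*}
    \bbP\left[\bbE F-F>t\right]\le \exp\left(-\frac{t^2}{4\bbE F}\right).
\end{align*}
This translates into $V-\tilde V_n\le 2\sqrt{\frac{2VL}{n-1}}$. Viewing this as a quadratic inequality in $\sqrt V$ and solving it gives the displayed bound on $\sqrt V$. The delicate part is verifying the self-bounding property of the pairwise sum. This uses that the minimizing replacement value for $Z_k$ is the mean of the other samples, together with a careful computation of the resulting sum of squared decrements.

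\textbf{Step 3 (combine).} On the intersection of the two events, substitute the bound from Step 2 into Step 1. Since $\frac{n}{n-1}\cdot\frac{1}{n}=\frac{1}{n-1}$, the main term becomes
\begin{align*}
    \sqrt{\frac{2VL}{n}}\le\sqrt{\frac{2\hat V_nL}{n-1}}+\sqrt{\frac{2L}{n}}\sqrt{\frac{2L}{n-1}}\le \sqrt{\frac{2\hat V_nL}{n-1}}+\frac{2L}{n-1}.
\end{align*}
Adding $\frac{L}{3n}\le\frac{L}{3(n-1)}$ gives the lower-order term $\frac{7L}{3(n-1)}$, exactly as stated. Undoing the rescaling restores the factor $b$.

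\textbf{Confidence bookkeeping.} For the one-sided version, splitting $\delta$ equally between Step 1 and Step 2 yields exactly $\log\frac{2}{\delta}$. For the two-sided statement, Step 2 is needed only once, while Step 1 is needed on both sides. A naive union bound over all three events gives a slightly worse constant inside the logarithm. I would either accept $\log\frac{3}{\delta}$, which is harmless for the paper's purposes, or argue as in the cited theorem with the allocation chosen there.
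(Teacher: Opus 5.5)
Your route is the Maurer--Pontil argument itself; the paper gives no proof of its own beyond citing Theorem~4 of \cite{maurer2009empirical}. However, Step~2 does not deliver the bound you claim. Solving $V-\tilde V_n\le 2\sqrt{2VL/(n-1)}$ as a quadratic in $\sqrt V$ gives only $\sqrt V\le\sqrt{\tilde V_n}+2\sqrt{2L/(n-1)}$. Taking $\tilde V_n=0$ and $\sqrt V=2\sqrt{2L/(n-1)}$ shows that this factor $2$ cannot be removed. Carried through Step~3, it produces $\frac{13}{3}$ instead of $\frac{7}{3}$.

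The loss comes from the self-bounding constant, which you took a factor $4$ too large. Write $\bar Z_{-k}$ for the mean of the other samples. For your $F=\frac12\sum_i(Z_i-\bar Z)^2$, the $k$-th decrement is $d_k=\frac{n-1}{2n}(Z_k-\bar Z_{-k})^2\le\frac{n-1}{2n}$, and $\sum_k d_k=\frac{n}{n-1}F$. Hence $\sum_k d_k^2\le\frac F2$, not merely $2F$.

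The cleanest fix is to work with $G:=2F=\sum_i(Z_i-\bar Z)^2=(n-1)\tilde V_n$. Its decrements $2d_k$ are at most $1$ and satisfy $\sum_k(2d_k)^2\le G$, which is the self-bounding condition with $a=1$. The lower tail $\bbP[\bbE G-G>t]\le\exp\bigl(-t^2/(2\bbE G)\bigr)$ with $\bbE G=(n-1)V$ then gives $V-\tilde V_n\le\sqrt{2VL/(n-1)}$. This does solve to $\sqrt V\le\sqrt{\tilde V_n}+\sqrt{2L/(n-1)}$, and with it your Step~3 gives exactly $\frac73$. This is Maurer--Pontil's normalization $nV_n=\frac{n}{n-1}G$, up to that deterministic factor.

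The two-sided bookkeeping also cannot be repaired by ``the allocation chosen there'', because Theorem~4 of \cite{maurer2009empirical} is one-sided. In your scheme, the two Bennett tails at level $\log\frac2\delta$ already use the whole budget $\delta$. That leaves nothing for the variance event, which both tails need. What the argument actually proves is the one-sided bound. From it you get the two-sided bound in one of two forms:
\begin{itemize}
\item with failure probability $2\delta$, by applying the one-sided bound to $Z$ and to $b-Z$ (this leaves $\hat V_n$ unchanged);
\item with $\log\frac3\delta$ in place of $\log\frac2\delta$ and failure probability $\delta$.
\end{itemize}
So the lemma as written is slightly stronger than its cited source. The $\log\frac3\delta$ version is the one to adopt. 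For the tiny $\delta$ used in the paper, the constants $2$ and $10$ in \Cref{lemma:one_step_conc} absorb it without changing the failure-probability ledger. The $2\delta$ version would instead double the corresponding union-bound terms there.
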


\begin{lemma} [Hoeffding-Azuma inequality]
\label{lemma:hoeffding}
Let $(\cF_i)_{i=0}^n$ be a filtration.  Let
$Z_1,\ldots,Z_n$ be random variables such that $Z_i$ is
$\cF_i$-measurable and $Z_i\in[0,1]$ almost surely for every
$i\in[n]$.
For any $\delta\in(0,1)$,
\begin{align*}
\bbP\left[
    \left|
        \sum_{i=1}^n
        \left(
            Z_i-\bbE[Z_i\mid\cF_{i-1}]
        \right)
    \right|
    >
    \sqrt{
        \frac{n\log\frac{2}{\delta}}{2}
    }
\right]
\le \delta .
\end{align*}
Equivalently, with probability at least $1-\delta$,
\begin{align*}
    \left|
        \frac1n\sum_{i=1}^n Z_i
        -
        \frac1n\sum_{i=1}^n
        \bbE[Z_i\mid\cF_{i-1}]
    \right|
    \le
    \sqrt{
        \frac{\log\frac{2}{\delta}}{2n}
    } .
\end{align*}
\end{lemma}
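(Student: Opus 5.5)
The statement is the classical Hoeffding--Azuma inequality for bounded martingale differences. I plan to prove it by the Chernoff method applied to the centered increments. Put $X_i := Z_i-\bbE[Z_i\mid\cF_{i-1}]$. Then $X_i$ is $\cF_i$-measurable and $\bbE[X_i\mid\cF_{i-1}]=0$. Conditionally on $\cF_{i-1}$, $X_i$ lies in the interval $[-c_i,1-c_i]$, where $c_i:=\bbE[Z_i\mid\cF_{i-1}]$ is $\cF_{i-1}$-measurable; this interval has length $1$.

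\textbf{Step 1 (conditional Hoeffding lemma).} For every $\lambda\in\bbR$ I will show $\bbE[e^{\lambda X_i}\mid\cF_{i-1}]\le e^{\lambda^2/8}$. This is Hoeffding's lemma applied conditionally, which is legitimate because the endpoints of the range are $\cF_{i-1}$-measurable. The proof is standard. By convexity,
\[
e^{\lambda x}\le \frac{(1-c_i)-x}{1}e^{-\lambda c_i}+\frac{x+c_i}{1}e^{\lambda(1-c_i)} \quad\text{for } x\in[-c_i,1-c_i].
\]
Take conditional expectations and use the zero conditional mean. Then write the logarithm of the result as $\varphi(\lambda)$; one has $\varphi(0)=\varphi'(0)=0$ and $\varphi''\le 1/4$, so Taylor's theorem gives $\varphi(\lambda)\le\lambda^2/8$. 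This is the only step with real content, and it is textbook material.

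\textbf{Step 2 (iterate the tower property).} Let $M_n=\sum_{i=1}^n X_i$. Conditioning successively on $\cF_{n-1},\cF_{n-2},\dots$ and pulling out the $\cF_{i-1}$-measurable factor $e^{\lambda M_{i-1}}$ at each stage gives
\[
\bbE[e^{\lambda M_n}]\le e^{n\lambda^2/8}.
\]
Markov's inequality then yields $\bbP[M_n>t]\le \exp(-\lambda t+n\lambda^2/8)$. Optimizing at $\lambda=4t/n$ gives $\bbP[M_n>t]\le e^{-2t^2/n}$. Applying the same argument to $-X_i$, whose conditional range also has length $1$, bounds the lower tail in the same way.

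\textbf{Step 3 (union and inversion).} Combining the two tails gives $\bbP[|M_n|>t]\le 2e^{-2t^2/n}$. Setting the right-hand side equal to $\delta$ gives $t=\sqrt{n\log(2/\delta)/2}$, which is exactly the stated threshold. Dividing the event by $n$ produces the equivalent normalized form, with threshold $\sqrt{\log(2/\delta)/(2n)}$.
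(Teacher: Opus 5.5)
Your proof is correct. The conditional Hoeffding lemma $\bbE[e^{\lambda X_i}\mid\cF_{i-1}]\le e^{\lambda^2/8}$ is legitimate because the range endpoints $-c_i,1-c_i$ are $\cF_{i-1}$-measurable; combined with the tower-property iteration, the choice $\lambda=4t/n$, and the two-sided union bound, it yields exactly the threshold $\sqrt{n\log(2/\delta)/2}$. The paper gives no proof of this lemma and simply quotes the classical Hoeffding--Azuma inequality, so your argument is the standard textbook derivation the paper implicitly relies on.
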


\begin{lemma} \label{lem:martingale_conc_mean}
Let $(\cF_k)_{k\geq0}$ be a filtration and let $X_1,X_2,\ldots$ be an adapted sequence taking values in $[0,l]$ for some $l>0$.
Set $Y_k:=\bbE[X_k\mid\cF_{k-1}]$.
For any $\delta\in(0,1)$, we have
\begin{align*}
    &\bbP \left[ \exists n, \sum_{k = 1}^n X_k \ge 3 \sum_{k = 1}^n Y_k + l \log \frac{1}{\delta} \right] \le \delta, \\
    &\bbP \left[ \exists n, \sum_{k = 1}^n Y_k \ge 3 \sum_{k = 1}^n X_k + 2l \log \frac{1}{\delta} \right] \le \delta.
\end{align*}
For any predictable mask $M_k\in[0,1]$, the same conclusions hold with $X_k$ and $Y_k$ replaced by $M_kX_k$ and $M_kY_k$, respectively.
\end{lemma}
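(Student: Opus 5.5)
The plan is an exponential-supermartingale argument combined with Ville's maximal inequality, which gives the bound uniformly over $n$ (the ``$\exists n$'' in the statement) at no extra cost. By rescaling $X_k\mapsto X_k/l$ and $Y_k\mapsto Y_k/l$, it suffices to treat $l=1$, so $X_k\in[0,1]$. The single key fact is the convexity bound: for $x\in[0,1]$ and any real $\lambda$,
\begin{align*}
    e^{\lambda x}\le 1+(e^{\lambda}-1)x .
\end{align*}
Taking conditional expectations and using $1+u\le e^u$ yields
\begin{align*}
    \bbE\left[e^{\lambda X_k}\mid\cF_{k-1}\right]\le \exp\left((e^\lambda-1)Y_k\right).
\end{align*}

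\textbf{Step 1 (the supermartingale).} For fixed $\lambda$, define
\begin{align*}
    Z_n:=\exp\Big(\lambda\sum_{k\le n}X_k-(e^\lambda-1)\sum_{k\le n}Y_k\Big).
\end{align*}
This is nonnegative, and $Y_k$ is $\cF_{k-1}$-measurable. Hence, by the bound above, $\bbE[Z_n\mid\cF_{n-1}]\le Z_{n-1}$ with $Z_0=1$. Ville's maximal inequality for nonnegative supermartingales then gives $\bbP[\exists n: Z_n\ge 1/\delta]\le\delta$.

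\textbf{Step 2 (first inequality).} Take $\lambda=1$. Off the bad event, for all $n$,
\begin{align*}
    \sum X_k\le (e-1)\sum Y_k+\log\tfrac1\delta\le 3\sum Y_k+\log\tfrac1\delta .
\end{align*}
Undoing the scaling gives the factor $l$ on the logarithmic term.

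\textbf{Step 3 (second inequality).} Take $\lambda=-\mu$ with $\mu=\log 2$, so that $e^{\lambda}-1=-\tfrac12$. The event $Z_n<1/\delta$ reads
\begin{align*}
    \tfrac12\sum Y_k\le \log 2\sum X_k+\log\tfrac1\delta ,
\end{align*}
that is,
\begin{align*}
    \sum Y_k\le 2\log2\sum X_k+2\log\tfrac1\delta\le 3\sum X_k+2\log\tfrac1\delta .
\end{align*}
Rescaling again gives $2l\log\frac1\delta$.

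\textbf{Step 4 (predictable masks).} If $M_k\in[0,1]$ is $\cF_{k-1}$-measurable, then $M_kX_k\in[0,l]$ is adapted and $\bbE[M_kX_k\mid\cF_{k-1}]=M_kY_k$. So the masked sequence satisfies the same hypotheses, and Steps 1--3 apply verbatim.

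There is no real obstacle here. The only points needing care are (i) using Ville's inequality rather than a fixed-$n$ Markov bound, to get uniformity over $n$, and (ii) choosing $\lambda$ so that the constants fit: $e-1\le3$ for the first inequality, and $2\log 2\le 3$ with prefactor $2$ for the second.
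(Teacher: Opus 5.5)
Your proof is correct and follows the same route as the paper: normalize by $l$, build an exponential supermartingale from the chord bound $e^{\lambda x}\le 1+(e^{\lambda}-1)x$ on $[0,1]$, apply Ville's inequality, and handle masks via $\mathbb{E}[M_kX_k\mid\mathcal{F}_{k-1}]=M_kY_k$. The differences are minor. For the second inequality you take $\lambda=-\log 2$, whereas the paper uses the exponent $\frac12\sum\bar Y_k-\frac32\sum\bar X_k$. Also, off the Ville event you have the strict bound $Z_n<1/\delta$ for all $n$; your displays write $\le$, but it is the strict version that excludes the ``$\ge$'' events in the statement.
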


\begin{proof}
[Proof of \Cref{lem:martingale_conc_mean}]
Write $\bar X_k=\frac{X_k}{l}$ and $\bar Y_k=\frac{Y_k}{l}$.
For the first inequality, use $\ee^x\le 1+(\ee-1)x$ for $x\in[0,1]$ to get
\begin{align*}
    \bbE\left[\exp(\bar X_k-(\ee-1)\bar Y_k)\mid\cF_{k-1}\right]
    &\le \ee^{-(\ee-1)\bar Y_k}\left(1+(\ee-1)\bar Y_k\right)\le 1 .
\end{align*}
Thus $
    S_n^+:=\exp\left(\sum_{k=1}^n \bar X_k-(\ee-1)\sum_{k=1}^n \bar Y_k\right)$ 
is a non-negative supermartingale with respect to $(\cF_n)_{n\ge0}$ and $S_0^+=1$.
On the event
$\sum_{k=1}^n X_k\ge 3\sum_{k=1}^n Y_k+l\log\frac{1}{\delta}$,
we have
\begin{align*}
    \sum_{k=1}^n \bar X_k-(\ee-1)\sum_{k=1}^n \bar Y_k
    \ge \log\frac{1}{\delta}+(4-\ee)\sum_{k=1}^n\bar Y_k
    \ge \log\frac{1}{\delta} .
\end{align*}
Ville's inequality gives the first bound.

For the second inequality, $1-\ee^{-3/2}>\frac{1}{2}$ and
$\ee^{-3x/2}\le 1-(1-\ee^{-3/2})x$ for $x\in[0,1]$, so
\begin{align*}
    \bbE\left[\exp\left(\frac12\bar Y_k-\frac32\bar X_k\right)\mid\cF_{k-1}\right]
    \le \exp\left(\frac12\bar Y_k\right)
    \left(1-(1-\ee^{-3/2})\bar Y_k\right)
    \le 1 .
\end{align*}
Consequently $
    S_n^-:=\exp\left(\frac12\sum_{k=1}^n \bar Y_k-\frac32\sum_{k=1}^n \bar X_k\right)$ 
is a non-negative supermartingale with $S_0^-=1$.  If
$\sum_{k=1}^n Y_k\ge 3\sum_{k=1}^n X_k
+2l\log\frac{1}{\delta}$, then $S_n^-\ge \frac{1}{\delta}$.
Another application of Ville's inequality proves the second bound.
For a predictable mask, $\bbE[M_kX_k\mid\cF_{k-1}]=M_kY_k$, so the final assertion follows by applying the same argument to the masked sequence.
\end{proof}


\begin{lemma}\label{lemma:conn}
Fix $\delta\in(0,1)$ and an integer $N_0 \geq 10000\log \frac{1}{\delta}$.
Consider a group of i.i.d. random variables $X_1, X_2,...$ with Bernoulli distribution of parameter $p\in (0,1)$.
Let $T$ be the stopping time that $\sum_{t=1}^T X_t = N_0$, and define $\hat{p} = \frac{N_0}{T}$.
With probability at least $1-2\delta$, it holds that
\begin{align*}
\left| \hat{p} - p\right| \leq \min\left\{ 2 \sqrt{\frac{p(1-p)\log \frac{1}{\delta}}{T}} + \frac{\log \frac{1}{\delta}}{T}, 2\sqrt{  \frac{\hat{p}(1-p)\log \frac{1}{\delta}}{T}}  +\frac{\log \frac{1}{\delta}}{T}    \right\}.\notag
\end{align*}
\end{lemma}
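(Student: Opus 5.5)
The plan is to reduce the stopping-time estimate to fixed-sample-size concentration, using a union bound over the possible values of $T$, or better a time-uniform martingale argument. Since $\hat p=N_0/T$, we have $\hat p-p=\frac{1}{T}\sum_{t=1}^T(X_t-p)$. So it suffices to control the martingale $M_n:=\sum_{t=1}^n (X_t-p)$ at the random time $n=T$, and to show
\begin{align*}
|M_T|\le 2\sqrt{p(1-p)T\log\tfrac1\delta}+\log\tfrac1\delta .
\end{align*}
Dividing by $T$ gives the first term of the minimum. For the second term, I will show that on the same event $p\le c\,\hat p$ with $c$ close to $1$, so that $p(1-p)\le \hat p(1-p)\cdot(\text{const})$. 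The factor $2$ in front of the square root is meant to absorb that constant.

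\textbf{Step 1 (time-uniform Bernstein/Freedman).} The increments $X_t-p$ lie in $[-p,1-p]$ and have conditional variance $p(1-p)$. The naive route is to apply the Bernstein form of \Cref{lemma:bennet} at each fixed $n$ and union bound over $n$. That costs a $\log n$ factor, which is unavailable here because the statement has only $\log\frac1\delta$. Instead I will use an exponential supermartingale $\exp(\lambda M_n-\psi(\lambda)p(1-p)n)$ with Ville's inequality, exactly as in the proof of \Cref{lem:martingale_conc_mean}. The issue is that the optimal $\lambda$ depends on $T$. I therefore expect to exploit that $T$ is concentrated: it is a negative binomial variable with mean $N_0/p$. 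Since $N_0\ge 10000\log\frac1\delta$ is huge relative to $\log\frac1\delta$, I will first show (Step 2) that $T\in[\frac{N_0}{2p},\frac{2N_0}{p}]$ with probability $1-\delta$. On this window I then fix the single choice $\lambda^*=\sqrt{\log(1/\delta)/(p(1-p)N_0/p)}$. Because $n$ varies only by a factor $4$ in the window, this choice loses only a constant inside the square root, and the constant $2$ should cover it (the sharp constant would be $\sqrt2$).

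\textbf{Step 2 (localizing $T$).} I will use $\{T>n\}=\{\sum_{t\le n}X_t<N_0\}$ and $\{T\le n\}=\{\sum_{t\le n}X_t\ge N_0\}$ at the two fixed values $n=\frac{2N_0}{p}$ and $n=\frac{N_0}{2p}$. Applying \Cref{lemma:bennet} (or the multiplicative bounds of \Cref{lem:martingale_conc_mean}) at these two points gives failure probability at most $\delta$, since the means $2N_0$ and $N_0/2$ differ from $N_0$ by $\Theta(N_0)\gg\sqrt{N_0\log\frac1\delta}$. This accounts for one $\delta$, and the martingale bound of Step 1 accounts for the other, giving the total failure probability $2\delta$.

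\textbf{Step 3 (the $\hat p$ version).} On the event of Step 1, I will use $T\ge N_0\ge 10000\log\frac1\delta$ to derive $|\hat p-p|\le 2\sqrt{p\log(1/\delta)/T}+\log(1/\delta)/T$. Solving this quadratic inequality in $\sqrt p$ gives $p\le(1+0.05)\hat p$, and substituting $p(1-p)\le 1.05\,\hat p(1-p)$ back into the bound yields the second expression.

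The main obstacle is Step 1: making the constant $2$ work without any $\log T$ factor. If fixing a single $\lambda$ on the window turns out to be too lossy, I will fall back on a peeling argument over a constant number of geometric slices of the window $[\frac{N_0}{2p},\frac{2N_0}{p}]$. Each slice contributes a small fraction of $\delta$, which would be absorbed by slightly shrinking $\delta$ inside the logarithm, using the slack left by the constant $2$.
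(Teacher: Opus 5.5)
Your argument can be made to work, but it takes a different and more laborious route than the paper. The paper re-indexes the same random variable by successes instead of by trials. Since exactly $N_0$ successes are awaited, $T=\sum_{i=1}^{N_0}G_i$ with $G_i$ i.i.d.\ geometric. Hence $pT-N_0=\sum_{i=1}^{N_0}(pG_i-1)$, which is exactly $-M_T$ in your notation, is a sum of a \emph{deterministic} number of i.i.d.\ terms. The paper bounds the log-MGF of $pG-1$ by $\frac{(1-p)\lambda^2}{2(1-\lambda)}$ on one side and by $\frac{(1-p)\lambda^2}{2}$ on the other, and applies Chernoff. This gives $|pT-N_0|\le\sqrt{2N_0(1-p)\log\frac1\delta}+\log\frac1\delta$ with probability $1-2\delta$. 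Dividing by $T$ and using $N_0/T^2=\hat p/T$ yields the $\hat p$-form directly, and the $p$-form follows from $pT\ge\frac{49}{50}N_0$. So the obstacle you single out, a $\log T$ factor from the random time, is an artifact of the trial-indexed martingale view and does not arise; no localization, Ville, or peeling is needed. What your route buys is generality: a time-uniform Bernstein bound at a stopping time also covers stopping rules with no renewal structure. Note also that your $\lambda^*$ is calibrated to $(1-p)N_0=\hat p(1-p)T$, the same variance proxy as the paper's. Writing $T=\theta N_0/p$, you get $|M_T|\le(1+\frac{\theta}{2})\sqrt{\hat p(1-p)T\log\frac1\delta}+O(\log\frac1\delta)$, which gives the $\hat p$-form at once, so your Step~3 is not really needed.

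If you pursue your route, three pieces of bookkeeping need fixing.

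\textbf{Probability budget.} A two-sided bound on $M_T$ needs two exponential supermartingales (with $\pm\lambda$), hence two applications of Ville. Step~1 alone therefore already costs $2\delta$, and your tally is really $3\delta$. The cure is to drop Step~2. The fixed-$\lambda$ bound holds for all $n$ simultaneously, and at $n=T$ it reads $|N_0-pT|\le c\,\lambda(1-p)\,pT+\frac{\log(1/\delta)}{\lambda}$ with $c\le\frac34$. Together with $N_0\ge10^4\log\frac1\delta$, this already forces $pT\in[0.98N_0,1.02N_0]$. So the two Ville events alone localize $T$, and the budget is exactly $2\delta$.

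\textbf{Capping $\lambda$.} You must take $\lambda=\min\{\lambda^*,1\}$. The choice $\lambda^*=\sqrt{\log(1/\delta)/((1-p)N_0)}$ blows up as $p\to1$, where the Bernstein MGF bound is unusable. In the capped regime $(1-p)N_0<\log\frac1\delta$, and the additive term carries the bound.

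\textbf{Step~3.} The relevant fact is $\frac{\log(1/\delta)}{T}=\frac{\log(1/\delta)}{N_0}\,\hat p\le10^{-4}\hat p$, not $T\ge10^4\log\frac1\delta$. The latter alone does not yield $p\le1.05\,\hat p$ when $\hat p$ is small.
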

\begin{proof} [Proof of \Cref{lemma:conn}]
Let $G_1,\ldots,G_{N_0}$ be the waiting times between consecutive successes, where the successful trial is included in each waiting time.
Then $G_1,\ldots,G_{N_0}$ are i.i.d. geometric random variables with parameter $p$, and $T=\sum_{i=1}^{N_0}G_i.$
For a geometric random variable $G$, a direct calculation gives, for $0\le\lambda<1$,
\begin{align*}
    \bbE[\ee^{\lambda(pG-1)}]
    =
    \frac{p \ee^{-(1-p)\lambda}}
         {1-(1-p)\ee^{p\lambda}}.
\end{align*}
Differentiating the logarithm of this moment-generating function twice and using $\ee^{p\lambda}\le (1-p\lambda)^{-1}$, we obtain $\log\bbE[\ee^{\lambda(pG-1)}] \le \frac{(1-p)\lambda^2}{2(1-\lambda)}.$
Similarly, for every $\lambda\ge0$, $\log\bbE[\ee^{-\lambda(pG-1)}] \le \frac{(1-p)\lambda^2}{2}.$
Indeed, the logarithms of the two moment-generating functions, together with their first derivatives, vanish at zero, while their second derivatives are bounded by $\frac{1-p}{(1-\lambda)^2}$ and $1-p$, respectively.

By independence,
\begin{align*}
\begin{aligned}
    \log\bbE
    \exp(
        \lambda(pT-N_0)
    )
    &\le
    \frac{N_0(1-p)\lambda^2}{2(1-\lambda)},
    \qquad 0\le\lambda<1,                                      \\
    \log\bbE
    \exp(
        -\lambda(pT-N_0)
    )
    &\le
    \frac{N_0(1-p)\lambda^2}{2},
    \qquad \lambda\ge0.
\end{aligned}
\end{align*}
Applying Chernoff's inequality to the first bound yields
\begin{align*}
\begin{aligned}
    \bbP\left[
        pT-N_0
        \ge
        \sqrt{
            2N_0(1-p)\log \frac{1}{\delta}
        }
        +
        \log \frac{1}{\delta}
    \right]
    \le \delta.
\end{aligned}
\end{align*}
Similarly, the second bound gives
\begin{align*}
\begin{aligned}
    \bbP\left[
        N_0-pT
        \ge
        \sqrt{
            2N_0(1-p)\log \frac{1}{\delta}
        }
    \right]
    \le \delta.
\end{aligned}
\end{align*}
Consequently, with probability at least $1-2\delta$,
\begin{align}
    |pT-N_0|
    \le
    \sqrt{
        2N_0(1-p)\log \frac{1}{\delta}
    }
    +
    \log \frac{1}{\delta}.\label{eq:localx}
\end{align}

Dividing \Cref{eq:localx} by $T$ and using $\hat p=\frac{N_0}{T}$, we obtain the sharper empirical-form bound
\begin{align*}
\begin{aligned}
    |\hat p-p|
    &\le
    \sqrt{
        \frac{
            2\hat p(1-p)\log \frac{1}{\delta}
        }{
            T
        }
    }
    +
    \frac{\log \frac{1}{\delta}}{T}.
\end{aligned}
\end{align*}
This implies the displayed empirical-form bound since $\sqrt 2\le 2$.

It remains to derive the bound involving $p$.
By  \Cref{eq:localx} and the assumption $N_0\ge10000\log \frac{1}{\delta}$,
\begin{align*}
\begin{aligned}
    \frac{|pT-N_0|}{N_0}\le
    \sqrt{
        \frac{
            2(1-p)\log \frac{1}{\delta}
        }{
            N_0
        }
    }
    +
    \frac{\log \frac{1}{\delta}}{N_0}                    \le
    \sqrt{\frac{2}{10000}}+\frac{1}{10000}
    <
    \frac{1}{50}.
\end{aligned}
\end{align*}
It follows that $pT\ge \frac{49N_0}{50}$, and hence $\frac{\hat p}{p} = \frac{N_0}{pT} \le \frac{50}{49}.$
Substituting this relation into the sharper bound preceding \Cref{eq:localx} gives
\begin{align*}
\begin{aligned}
    |\hat p-p|
    \le
    \sqrt{
        \frac{
            2\hat p(1-p)\log \frac{1}{\delta}
        }{
            T
        }
    }
    +
    \frac{\log \frac{1}{\delta}}{T}                       \le
    \frac{10}{7}
    \sqrt{
        \frac{
            p(1-p)\log \frac{1}{\delta}
        }{
            T
        }
    }
    +
    \frac{\log \frac{1}{\delta}}{T}                       \le
    2
    \sqrt{
        \frac{
            p(1-p)\log \frac{1}{\delta}
        }{
            T
        }
    }
    +
    \frac{\log \frac{1}{\delta}}{T}.
\end{aligned}
\end{align*}
The proof is finished.
\end{proof}

\begin{lemma}\label{lemma:prefcon2}
Fix $\delta\in(0,1)$ and an integer $N_0\geq10000\log\frac1\delta$.
Let $Y_1,Y_2,\ldots$ be i.i.d. $\cS$-valued categorical random variables with mass function $p\in\Delta(S)$, where $|\cS|=S$.
For $t\geq1$ and $s\in\cS$, define $
    N_t(s):=\sum_{i=1}^t\bbI\{Y_i=s\}$, $
    \hat p_t(s):=\frac{N_t(s)}{t}$ and $
    \tau(s):=\inf\{t\geq1:N_t(s)=N_0\}$
with the convention $\inf\emptyset=\infty$.
With probability at least $1-4S^2\delta$, simultaneously for every ordered pair $s,s'\in\cS$ with $\tau(s)<\infty$,
\begin{align*}
\left|\hat p_{\tau(s)}(s')-p(s')\right|
\leq
\min\left\{
4\left(\sqrt{\frac{p(s')\log\frac1\delta}{\tau(s)}}+\frac{\log\frac1\delta}{\tau(s)}\right), \ 
8\left(\sqrt{\frac{\hat p_{\tau(s)}(s')\log\frac1\delta}{\tau(s)}}+\frac{\log\frac1\delta}{\tau(s)}\right)
\right\}.
\notag
\end{align*}
\end{lemma}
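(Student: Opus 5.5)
The plan is to reduce the statement to \Cref{lemma:conn}, applied to the Bernoulli indicator sequences $X_t := \bbI\{Y_t = s'\}$, but observed along the random time $\tau(s)$ defined by a \emph{different} coordinate $s$. The difficulty is that $\tau(s)$ is a stopping time for the $s$-sequence and not for the $s'$-sequence. So \Cref{lemma:conn} cannot be applied verbatim to $\hat p_{\tau(s)}(s')$. For the diagonal pairs $s'=s$, however, $\hat p_{\tau(s)}(s) = N_0/\tau(s)$ is exactly the estimator of \Cref{lemma:conn} with parameter $p(s)$. That lemma then gives the bound with constants $2$, with probability $1-2\delta$ per $s$. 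If $p(s)=0$ then $\tau(s)=\infty$ and there is nothing to prove; if $p(s)=1$ the bound is trivial.

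For off-diagonal pairs I will condition on the $s$-sequence. Given the positions of the $s$-occurrences, and in particular given $\tau(s)=T$, the remaining $T-N_0$ non-$s$ draws among $Y_1,\ldots,Y_T$ are i.i.d.\ with law $q(s') := p(s')/(1-p(s))$ on $\cS\setminus\{s\}$. Hence
\[ N_T(s') \sim \mathrm{Bin}\bigl(T-N_0, q(s')\bigr) \quad \text{conditionally on } \tau(s)=T. \]
Apply Bernstein/Bennett (\Cref{lemma:bennet}, unnormalized form) conditionally. This gives
\[ |N_T(s') - (T-N_0)q(s')| \le \sqrt{2(T-N_0)q(s')\log\tfrac1\delta} + \tfrac{\log(1/\delta)}{3}, \]
with probability $1-2\delta$, uniformly over the conditioning (so unconditionally too).

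Next I decompose
\[ \hat p_T(s') - p(s') = \frac{N_T(s') - (T-N_0)q(s')}{T} + q(s')\Bigl(\frac{T-N_0}{T} - (1-p(s))\Bigr). \]
The second factor equals $p(s) - N_0/T = p(s) - \hat p_T(s)$, which is controlled by the diagonal step. Its contribution is at most
\[ q(s')\Bigl(2\sqrt{p(s)(1-p(s))\log\tfrac1\delta/T} + \tfrac{\log(1/\delta)}{T}\Bigr). \]
Using $q(s')\sqrt{p(s)(1-p(s))} \le \sqrt{q(s')}\,\sqrt{p(s')}$ and $q(s')\le 1$, this is at most $2\sqrt{p(s')\log\frac1\delta/T} + \log\frac1\delta/T$. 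In the first term, $(T-N_0)q(s')/T \le p(s')\cdot\frac{T-N_0}{T(1-p(s))}$, and by the diagonal bound $\frac{T-N_0}{T} \le (1-p(s))+|\hat p_T(s)-p(s)|$. With $N_0\ge 10000\log\frac1\delta$ this keeps the ratio below a constant like $1.1$, except when $1-p(s)$ is tiny. In that degenerate case I would bound the first term directly by $\sqrt{2(T-N_0)q\log/T^2}$, which is still at most $\sqrt{p(s')\cdot\text{const}\cdot\log/T}$ after using the same relation. Summing gives the first (true-$p$) form with constant $4$.

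For the empirical form, I will convert it via the self-bounding trick. Write $x=\sqrt{p(s')}$, $L=\log\frac1\delta/T$, so that $p(s')\le \hat p + 4x\sqrt L + 4L$. Solving the quadratic in $x$ gives $\sqrt{p(s')} \le \sqrt{\hat p} + O(\sqrt L)$, hence $4(\sqrt{pL}+L) \le 8(\sqrt{\hat pL}+L)$ once constants are checked. Finally I take a union bound. There are $S$ diagonal events with $2\delta$ each, and $S(S-1)$ off-diagonal events with $2\delta$ each, plus reuse of the diagonal ones. The total is at most $4S^2\delta$.

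The main obstacle is the conditioning argument. One must justify rigorously that, given $\tau(s)=T$ (and the $s$-positions), the non-$s$ labels are i.i.d.\ with law $q$. One must also make sure that the Bennett bound applied with a random sample size $T-N_0$ is legitimate; it is, since conditionally the size is fixed. A secondary nuisance is tracking constants in the regime where $p(s)$ is close to $1$.
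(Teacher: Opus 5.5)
Your diagonal step and your route to the first (true-$p$) bound are the same as the paper's. You condition on the $s$-indicator sequence so that $N_{\tau}(s')\sim\mathrm{Bin}(\tau-N_0,q(s'))$ with $\tau=\tau(s)$, apply Bennett conditionally, and split $\hat p_\tau(s')-p(s')$ into the binomial fluctuation plus $q(s')\bigl(p(s)-\hat p_\tau(s)\bigr)$.

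One claim in your degenerate case is wrong as stated. Put $l:=\log\frac1\delta/\tau$. If $1-p(s)\ll l$, the diagonal bound controls $(\tau-N_0)/\tau$ only up to an additive $l$, which dwarfs $1-p(s)$. So you cannot conclude that the binomial term is $O(\sqrt{p(s')l})$ by itself; an additive $O(l)$ is unavoidable on this route.

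This is harmless, and no case split is needed. With $r_0:=1-p(s)$ and $q(s')\sqrt{r_0}=\sqrt{q(s')p(s')}$, the diagonal bound gives uniformly $m:=(\tau-N_0)q(s')/\tau\le p(s')+2q(s')\sqrt{r_0l}+q(s')l\le(\sqrt{p(s')}+\sqrt l)^2$. Hence the binomial term is at most $\sqrt2(\sqrt{p(s')l}+l)+l/3$. Adding your bound $2\sqrt{p(s')l}+l$ for the second piece gives $(2+\sqrt2)\sqrt{p(s')l}+(\sqrt2+\frac{4}{3})l\le4(\sqrt{p(s')l}+l)$.

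The real gap is the empirical bound: the constants do not check. Take $\hat p=0$ and $p=(12+8\sqrt2)\,l\approx23.3\,l$. This pair satisfies $|\hat p-p|\le4(\sqrt{pl}+l)$ with equality, but violates $|\hat p-p|\le8(\sqrt{\hat pl}+l)=8l$. So no self-bounding of the merged constant-$4$ inequality can produce the coefficient $8$; your quadratic only yields $4\sqrt{\hat pl}+(12+8\sqrt2)l$. You have to invert before merging the two error sources.

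The paper does this by rerunning the conditional-binomial step with the empirical Bennett inequality (\Cref{lem:bennett_empirical}). The binomial fluctuation is then already expressed through $\hat p_\tau(s')$, and only the $q$-piece, which is in terms of $p(s')$, has to be solved for.

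Your own ingredients also suffice if kept separate:
\begin{enumerate}
\item Conditional Bennett gives $|\hat p_\tau(s')-m|\le\sqrt{2ml}+l/3$, hence $\sqrt m\le\sqrt{\hat p_\tau(s')}+1.62\sqrt l$.
\item The sharper estimate $|N_0/\tau-p(s)|\le\sqrt{2\hat p_\tau(s)(1-p(s))l}+l$, proved inside \Cref{lemma:conn} though not in its statement, gives $|m-p(s')|\le\sqrt{2p(s')l}+l$.
\item Eliminating $m$ and then $p(s')$ from these two inequalities (split on whether $p(s')\le\hat p_\tau(s')$) yields $|\hat p_\tau(s')-p(s')|\le8(\sqrt{\hat p_\tau(s')l}+l)$, with additive constant about $7.5$ at $\hat p_\tau(s')=0$.
\end{enumerate}
The margin is thin. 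Using the coefficient $2$ from the statement of \Cref{lemma:conn} instead, this elimination gives only about $9.9\,l$ at $\hat p_\tau(s')=0$, so the constants genuinely have to be tracked rather than deferred.
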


\begin{proof} [Proof of \Cref{lemma:prefcon2}]
We work on the probability-one event on which every state of zero $p$-mass is never observed and every state of positive $p$-mass is observed infinitely often.
On this event, $\tau(s)<\infty$ if and only if $p(s)>0$.

First consider the diagonal pair $s'=s$.
If $p(s)=1$, then $\tau(s)=N_0$ and $\hat p_{\tau(s)}(s)=1$, so the claim is exact.
If $p(s)\in(0,1)$, apply \Cref{lemma:conn} to $X_t=\bbI\{Y_t=s\}$.
With failure probability at most $2\delta$, both branches of \Cref{lemma:conn} hold at $T=\tau(s)$, and each is bounded by the corresponding displayed branch above.

Now fix \(s'\neq s\).
Set $r_0 = 1-p(s)$.
If \(r_0=0\), then \(p(s')=0\) and \(N_T(s')=0\), so the claim is trivial.
Assume \(r_0>0\), and define \(q(s'):=\frac{p(s')}{r_0}\).
Conditional on \(T\), among the \(T-N_0\) samples that are not equal to \(s\), the values are i.i.d. on \(\mathcal S\setminus\{s\}\) with probabilities \(q(\cdot)\).
Thus \(N_T(s')\), conditional on \(T\), is binomial with \(T-N_0\) trials and success probability \(q(s')\).

By Bennett's inequality (\Cref{lemma:bennet}), with conditional probability at least \(1-2\delta\),
\[
    |N_T(s')-q(s')(T-N_0)|
    \le
    \sqrt{2q(s')(T-N_0)\log \frac{1}{\delta}}
    +
    \frac{\log \frac{1}{\delta}}{3}.
\]
Since this bound holds conditionally for every value of \(T\), it also holds at the random time \(T\).
Combining this with the stopping-time estimate,
\[
    \left|
        \frac{T-N_0}{T}-r_0
    \right|
    =
    \left|
        p(s)-\hat{p}_{T}(s)
    \right|
    \le
   2 \sqrt{\frac{r_0\cdot p(s)\log \frac{1}{\delta}}{T}  }+\frac{\log \frac{1}{\delta}}{T}
\]

Therefore, we reach that

\begin{align}
\left| p(s') - \hat{p}_{T}(s') \right| & = \left| p(s') - \frac{N_T(s')}{T}\right| \nonumber
\\ & \leq \left| p(s')  - q(s') \frac{T-N_0}{T}  \right| + \left| \frac{N_T(s')}{T} - q(s') \frac{T-N_0}{T} \right| \nonumber
\\ & \leq  2q(s') \sqrt{\frac{r_0 \cdot p(s)\log \frac{1}{\delta}}{T}  }   +  \sqrt{\frac{2q(s') (T-N_0)\log \frac{1}{\delta} }{T^2}} + \frac{\log \frac{1}{\delta}}{T}\nonumber
\\ & \leq 2q(s') \sqrt{\frac{r_0 \cdot p(s)\log \frac{1}{\delta}}{T}  }   +  \sqrt{\frac{2p(s') \log \frac{1}{\delta}}{T}} + \sqrt{\frac{2q(s')\log \frac{1}{\delta}}{T} \cdot 2\sqrt{\frac{p(s)\log \frac{1}{\delta}}{T}}}  +\frac{\log \frac{1}{\delta}}{T} \nonumber
\\ & \leq \frac{5}{2} q(s') \sqrt{\frac{ r_0 \cdot p(s)\log \frac{1}{\delta}}{T}  }   +  \sqrt{\frac{2p(s') \log \frac{1}{\delta}}{T}} + \frac{4\log \frac{1}{\delta}}{T}
\\ & \leq   4\sqrt{\frac{p(s') \log \frac{1}{\delta}}{T}} + \frac{4\log \frac{1}{\delta}}{T}.\nonumber
\end{align}

In a similar way, we replace Bennet's inequality with empirical Bennett inequality (see \Cref{lem:bennett_empirical}) to get that:
with probability $1-4\delta$,
\begin{align}
\left| p(s') -\hat{p}_T(s')\right|  &\leq  2q(s')\sqrt{   \frac{ r_0 \cdot p(s) \log\frac{1}{\delta}   }{T}      } + \sqrt{\frac{2\hat{p}_T(s')\log\frac{1}{\delta}  }{T}          } +  \frac{3\log\frac{1}{\delta}}{T} \nonumber
\\ & \leq 2\sqrt{   \frac{ p(s') \log\frac{1}{\delta}   }{T}      }  + \sqrt{   \frac{2\hat{p}_T(s')\log\frac{1}{\delta} }{T}      } +             \frac{3\log\frac{1}{\delta}}{T}.\nonumber
\end{align}

Solving the above inequaity, and noting that $T\geq N_0\geq 10000\log\frac{1}{\delta}$, we have that 
\begin{align}
\left| p(s') - \hat{p}_{T}(s') \right|\leq 8 \left( \sqrt{\frac{\hat{p}_T(s')\log \frac{1}{\delta} }{T}} + \frac{\log \frac{1}{\delta}}{T}\right).\nonumber
\end{align}

We finish the proof by a union bound over all $s, s'$ pair.
\end{proof}

\begin{lemma} [Supermartingale Total Drift with Resets]\label{lem:hp_drift_bound}
Let $(\cF_t)_{t=1}^{T+1}$ be a filtration and let $\cU\subseteq[T]$ be a deterministic set with $|\cU|\le R$.
Suppose $X_t\in[0,B]$ is $\cF_t$-measurable for $t\in[T+1]$.
For every $t\in[T]\setminus\cU$, let $d_t\ge0$ be $\cF_t$-measurable and $
    \bbE[X_{t+1}\mid \cF_t] \le X_t-d_t $.
Then, for every $\delta\in(0,1)$, with probability at least $1-\delta$, $
    \sum_{t\in[T]\setminus\cU} d_t \le 2B\log \frac{2^{R+1}}{\delta}.$
\end{lemma}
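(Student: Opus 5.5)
We need to prove \Cref{lem:hp_drift_bound}: with probability at least $1-\delta$,
\[
\sum_{t\notin\cU} d_t \le 2B\log\frac{2^{R+1}}{\delta}.
\]
The plan is to build an exponential supermartingale that absorbs the drifts, paying a multiplicative factor at each of the at most $R$ reset times. We normalize by setting $Y_t = X_t/B\in[0,1]$ and $e_t = d_t/B$.

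\textbf{Step 1 (one-step bound at non-reset times).} For $t\notin\cU$ we have $\bbE[Y_{t+1}\mid\cF_t]\le Y_t-e_t$. In particular $e_t\le Y_t\le 1$, since $Y_{t+1}\ge0$. Take $\lambda=\tfrac12$ and use convexity of $y\mapsto \ee^{\lambda y}$ on $[0,1]$, namely $\ee^{\lambda y}\le 1+(\ee^{\lambda}-1)y$. This gives
\[
\bbE\bigl[\ee^{\lambda Y_{t+1}}\mid\cF_t\bigr]\le 1+(\ee^\lambda-1)(Y_t-e_t).
\]
We then want to compare this with $\ee^{\lambda Y_t-c e_t}$ for some constant $c$ (aiming for $c=\tfrac12$, matching the factor $2B$).

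The convexity chord alone is not enough here, because $1+(\ee^\lambda-1)Y_t$ can exceed $\ee^{\lambda Y_t}$. So the better choice is a \emph{decreasing} exponential. Set
\[
M_t=\exp\Bigl(-\lambda Y_t+\lambda\!\!\sum_{s<t,\,s\notin\cU}\! e_s\Bigr).
\]
By Jensen,
\[
\bbE\bigl[\ee^{-\lambda Y_{t+1}}\mid\cF_t\bigr]\ge \ee^{-\lambda(Y_t-e_t)},
\]
which points the wrong way. Hence we instead use the concave potential $\ee^{\lambda(1-Y)}$ with $\lambda<0$, which is equivalent to the following: the map $y\mapsto 1-\ee^{-y}$ is concave, so Jensen gives an upper bound. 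Define $\phi(y)=1-\ee^{-y}$, which is concave and increasing on $[0,1]$. Then
\[
\bbE[\phi(Y_{t+1})\mid\cF_t]\le \phi(\bbE[Y_{t+1}\mid\cF_t])\le \phi(Y_t-e_t)\le \phi(Y_t)-e_t\,\ee^{-1},
\]
where the last step uses $\phi'\ge \ee^{-1}$ on $[0,1]$. So $\phi(Y_t)$ is itself a supermartingale with drift at least $e_t/\ee$. The drifts are still summed linearly, though, so a high-probability bound also needs a multiplicative (exponential) version.

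\textbf{Step 2 (multiplicative martingale and reset factor).} The cleanest route is the scheme of \Cref{lem:martingale_conc_mean}. Define
\[
Z_t=\exp\Bigl(\tfrac12 Y_t+\tfrac12\sum_{s<t,\,s\notin\cU}e_s\Bigr)\cdot 2^{-|\{s<t:\,s\in\cU\}|}.
\]
The claim to verify is $\bbE[\ee^{Y_{t+1}/2}\mid\cF_t]\le \ee^{(Y_t-e_t)/2}$ for $t\notin\cU$. That is Jensen in the wrong direction once more, because $\ee^{y/2}$ is convex. The fix is to use the chord bound
\[
\ee^{y/2}\le 1+(\sqrt{\ee}-1)y,
\]
which gives
\[
\bbE\le 1+(\sqrt\ee-1)(Y_t-e_t).
\]
Then compare with the potential $1+(\sqrt\ee-1)Y_t$ directly, instead of $\ee^{Y_t/2}$. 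So the potential will be $G(y)=1+(\sqrt\ee-1)y\in[1,\sqrt\ee]$, and we will show that
\[
W_t=G(Y_t)\exp\Bigl(\kappa\sum_{s<t,\,s\notin\cU}e_s\Bigr)\,2^{-\#\text{resets before }t}
\]
is a supermartingale.

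At a non-reset time we need $G(Y_t)-(\sqrt\ee-1)e_t\le G(Y_t)\ee^{-\kappa e_t}$. Because $G\le\sqrt\ee$ and $e_t\le 1$, this holds for $\kappa=\tfrac12$, using $1-\ee^{-\kappa e}\le \kappa e$ and $(\sqrt\ee-1)/\sqrt\ee\ge\tfrac12\cdot\ldots$; the constant will be checked. At a reset time there is no drift information, but $G(Y_{t+1})/G(Y_t)\le\sqrt\ee<2$, so the factor $2^{-1}$ preserves the supermartingale property.

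\textbf{Step 3 (conclusion).} We have $W_1\le\sqrt\ee<2$. Ville's (or Markov's) inequality then gives, with probability at least $1-\delta$,
\[
\exp\Bigl(\kappa\sum e_t\Bigr)\le \frac{2\cdot 2^{R}}{\delta},
\]
using $G\ge1$. Hence
\[
\sum_t d_t\le \frac{B}{\kappa}\log\frac{2^{R+1}}{\delta}=2B\log\frac{2^{R+1}}{\delta}
\]
with $\kappa=\tfrac12$. The main obstacle is the non-reset step: choosing the linear potential $G$ (not an exponential in $Y$) and verifying that $\kappa=\tfrac12$ works with the bound $G\le\sqrt\ee$, which reduces to the numeric check $(\sqrt\ee-1)/\sqrt\ee\ge \tfrac12(1-\ldots)$, i.e. $1-\ee^{-e/2}\le (1-\ee^{-1/2})e$ for $e\in[0,1]$. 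This last inequality holds by concavity.
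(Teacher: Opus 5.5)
\textbf{There is a genuine gap in Step 2.} Your overall architecture is the right one, and it is exactly the structure of the paper's proof: an affine potential in $Y_t=X_t/B$, multiplied by $\exp\bigl(\kappa\sum_{s<t,\,s\notin\cU}e_s\bigr)$ and by $2^{-\#\text{resets}}$, followed by Markov. The trouble is the particular potential $G(y)=1+(\sqrt\ee-1)y$: with it, the non-reset step fails at $\kappa=\tfrac12$.

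Because $G$ is affine and increasing, the non-reset step requires $G(Y_t)\bigl(1-\ee^{-\kappa e_t}\bigr)\le(\sqrt\ee-1)e_t$ for all $0\le e_t\le Y_t\le1$. At $Y_t=1$ this reads $1-\ee^{-x/2}\le(1-\ee^{-1/2})x$ on $[0,1]$. That inequality is false for every $x\in(0,1)$. The map $x\mapsto1-\ee^{-x/2}$ is concave, so it lies \emph{above} its chord through $(0,0)$ and $(1,1-\ee^{-1/2})$; concavity gives the reverse of what you claim.

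A concrete counterexample: take $Y_t=1$ and $Y_{t+1}=1-x$ deterministically, which is admissible with $e_t=x$. Then $W_{t+1}/W_t=\bigl(1-(1-\ee^{-1/2})x\bigr)\ee^{x/2}=1+\bigl(\tfrac12-(1-\ee^{-1/2})\bigr)x+O(x^2)$. This exceeds $1$ for small $x>0$, since $1-\ee^{-1/2}\approx0.39$.

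More generally, for a potential $1+cy$ the non-reset step forces $\kappa\le c/(1+c)$. Your slope $c=\sqrt\ee-1$ therefore only supports $\kappa=1-\ee^{-1/2}$. With that $\kappa$ the argument yields about $2.54\,B\log\frac{2^{R+1}}{\delta}$, not the stated $2B\log\frac{2^{R+1}}{\delta}$.

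\textbf{The repair} is to take slope $c=1$, i.e.\ the potential $\widetilde X_t:=X_t+B\in[B,2B]$, which is what the paper does.
\begin{itemize}
\item At a non-reset time, put $z=d_t/\widetilde X_t\in[0,1]$, using $d_t\le X_t$. Since $\widetilde X_t\le2B$, one has $d_t/(2B)\le z$. Hence $\ee^{d_t/(2B)}(\widetilde X_t-d_t)\le\ee^{z}(1-z)\widetilde X_t\le\widetilde X_t$, by $(1-z)\ee^z\le1$.
\item At a reset time, $\widetilde X_{t+1}\le2B\le2\widetilde X_t$, and this is exactly absorbed by the factor $2^{-1}$.
\item With $\widetilde X_1\le2B$ and $\widetilde X_{T+1}\ge B$, this gives $\bbE\bigl[\exp\bigl(\tfrac{1}{2B}\sum_{t\in[T]\setminus\cU}d_t\bigr)\bigr]\le2^{R+1}$, and Markov gives the stated bound.
\end{itemize}
The slope $c=1$ is precisely the choice that makes $\kappa=c/(1+c)=\tfrac12$ while keeping the reset ratio $1+c$ equal to $2$.
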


\begin{proof}
[Proof of \Cref{lem:hp_drift_bound}]
If $B=0$, then $X_t=0$ for all $t$, and the drift condition implies $d_t=0$ for every $t\notin\cU$; the claim is deterministic.  Hence assume $B>0$.
Let $\wt X_t:=X_t+B$, so that $B\le\wt X_t\le2B$.
For $t=1,\ldots,T+1$, define
\begin{align*}
    M_t := \exp\left(\frac{1}{2B}\sum_{i \in [t-1] \setminus \cU} d_i\right)\wt X_t,
    \qquad
    N_t:=|[t-1] \cap \cU| .
\end{align*}
We show that $(2^{-N_t}M_t)_{t=1}^{T+1}$ is a supermartingale.

First consider $t\notin\cU$.
Since $\bbE[X_{t+1}\mid\cF_t]\le X_t-d_t$ and $X_{t+1}\ge0$, we have $d_t\le X_t\le\wt X_t$.
With $z_t=\frac{d_t}{\wt X_t}\in[0,1]$,
\begin{align*}
    \exp\left(\frac{d_t}{2B}\right)\left(\wt X_t-d_t\right) &\le \exp(z_t)\wt X_t(1-z_t) \le \wt X_t,
\end{align*}
where the last inequality uses $(1-z)\ee^z\le1$ for $z\in[0,1]$.
Thus, because $N_{t+1}=N_t$,
\begin{align*}
    \bbE[2^{-N_{t+1}}M_{t+1}\mid\cF_t]
    &\le 2^{-N_t}\exp\left(\frac{1}{2B}\sum_{i \in [t-1] \setminus \cU} d_i\right)
    \exp\left(\frac{d_t}{2B}\right)\left(\wt X_t-d_t\right)
    \le 2^{-N_t}M_t .
\end{align*}

Now consider $t\in\cU$.
The exponential factor does not change, $N_{t+1}=N_t+1$, and $\wt X_{t+1}\le2B\le2\wt X_t$.
Consequently,
\begin{align*}
    \bbE[2^{-N_{t+1}}M_{t+1}\mid\cF_t]
    &\le 2^{-N_t-1}\exp\left(\frac{1}{2B}\sum_{i \in [t-1] \setminus \cU} d_i\right)2\wt X_t
    =2^{-N_t}M_t .
\end{align*}
Therefore $\bbE[2^{-N_{T+1}}M_{T+1}]\le \bbE[M_1]\le2B$, and since $N_{T+1}\le R$, we have $\bbE[M_{T+1}]
\le2^{R+1}B$.
Finally, $\wt X_{T+1}\ge B$.
By Markov's inequality,
\begin{align*}
    \bbP\left[\sum_{t\in[T]\setminus\cU}d_t>2B\log \frac{2^{R+1}}{\delta}\right]
    &\le \bbP\left[M_{T+1}>\frac{2^{R+1}B}{\delta}\right]
    \le \delta .
\end{align*}
\end{proof}

\begin{lemma}[Linear Freedman Inequality]\label{lem:linear_freedman}
Let $(M_n)_{n\ge0}$ be a martingale with respect to $(\cF_n)_{n\ge0}$, with $M_0=0$ and $|M_n-M_{n-1}|\le c$ for all $n\ge1$, for some $c>0$.
Let $
    \cV_n := \sum_{i=1}^{n}
    \bbE [(M_i-M_{i-1})^2\mid\cF_{i-1}]$. 
Then, for any $\eta\in(0,\frac{1}{c}]$ and $\delta\in(0,1)$,
\begin{align*}
    \bbP\left[
        \exists n\ge0:
        M_n \ge \eta \cV_n+\frac{\log \frac{1}{\delta}}{\eta}
    \right] \le \delta .
\end{align*}
\end{lemma}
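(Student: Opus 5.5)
The plan is to run the standard exponential-supermartingale argument combined with a stopping time (Ville's inequality), exactly as in the proof of \Cref{lem:martingale_conc_mean}. Write $\Delta_i:=M_i-M_{i-1}$, so $|\Delta_i|\le c$ and $\bbE[\Delta_i\mid\cF_{i-1}]=0$. Fix $\eta\in(0,\frac1c]$ and define
\begin{align*}
    Z_n:=\exp\left(\eta M_n-\eta^2\cV_n\right),\qquad Z_0=1 .
\end{align*}
The goal is to show that $(Z_n)$ is a nonnegative supermartingale. Once this holds, the event $\{M_n\ge \eta\cV_n+\frac{1}{\eta}\log\frac1\delta\}$ is the same as $\{\eta M_n-\eta^2\cV_n\ge\log\frac1\delta\}$, which is $\{Z_n\ge\frac1\delta\}$. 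Ville's maximal inequality $\bbP[\exists n: Z_n\ge 1/\delta]\le \delta\,\bbE[Z_0]=\delta$ then gives the uniform-in-$n$ statement directly.

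The key step is the one-step bound $\bbE[\exp(\eta\Delta_i)\mid\cF_{i-1}]\le \exp(\eta^2\bbE[\Delta_i^2\mid\cF_{i-1}])$. To get it, use the elementary inequality $\ee^{y}\le 1+y+y^2$, valid for $y\le 1$. Since $\eta\Delta_i\le \eta c\le 1$, applying it with $y=\eta\Delta_i$ and taking conditional expectations gives
\begin{align*}
    \bbE[\ee^{\eta\Delta_i}\mid\cF_{i-1}]\le 1+\eta^2\bbE[\Delta_i^2\mid\cF_{i-1}]\le \exp\left(\eta^2\bbE[\Delta_i^2\mid\cF_{i-1}]\right).
\end{align*}
The linear term vanishes by the martingale property. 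Because $\cV_i-\cV_{i-1}=\bbE[\Delta_i^2\mid\cF_{i-1}]$ is $\cF_{i-1}$-measurable (predictable), it can be pulled out of the conditional expectation, and this yields $\bbE[Z_i\mid\cF_{i-1}]\le Z_{i-1}$.

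The only delicate point is checking $\ee^y\le1+y+y^2$ on $y\le1$. For $y\le 0$, the function $g(y)=1+y+y^2-\ee^y$ satisfies $g(0)=0$ and $g'(y)=1+2y-\ee^y\le 0$ there (since $\ee^y\ge 1+y\ge 1+2y$), so $g\ge0$. For $y\in[0,1]$, the series bound $\ee^y-1-y=\sum_{k\ge2}y^k/k!\le y^2(\ee-2)\le y^2$ suffices. Nothing else is needed; the constant $1$ in front of $\eta\cV_n$ comes from this inequality.
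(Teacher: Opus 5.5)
Your proposal is correct and matches the paper's proof step for step. Both use the exponential supermartingale $\exp(\eta M_n-\eta^2\cV_n)$, the one-step bound from $\ee^{x}\le 1+x+x^2$ with the predictable variance increment pulled out, and Ville's inequality. The only difference is that you verify the elementary inequality on $(-\infty,1]$ yourself, whereas the paper simply cites it on $[-1,1]$.
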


\begin{proof} [Proof of \Cref{lem:linear_freedman}]
Let $\xi_i:=M_i-M_{i-1}$ and $
    v_i:=\bbE[\xi_i^2\mid\cF_{i-1}]$. 
Since $\{M_i\}_{i\ge0}$ is a martingale, we have
\begin{align*}
    \bbE[\xi_i\mid\cF_{i-1}] = \bbE[M_i-M_{i-1}\mid\cF_{i-1}] = 0.
\end{align*}
Moreover, $|\xi_i|\le c$ and $\eta\in(0,\frac{1}{c}]$, so $\eta\xi_i\in[-1,1]$.
Using the elementary inequality $\ee^x\le 1+x+x^2$ for $x\in[-1,1]$, we get $
    \exp(\eta\xi_i) \le 1+\eta\xi_i+\eta^2\xi_i^2.$ 
Taking conditional expectation with respect to $\cF_{i-1}$ gives
\begin{align*}
    \bbE[\exp(\eta\xi_i)\mid\cF_{i-1}] &\le 1+\eta\bbE[\xi_i\mid\cF_{i-1}] +\eta^2\bbE[\xi_i^2\mid\cF_{i-1}]= 1+\eta^2 v_i\le \exp(\eta^2 v_i).
\end{align*}

Now define $
    S_n := \exp(\eta M_n-\eta^2\cV_n)$.
Since $
    M_n=M_{n-1}+\xi_n$ and $\cV_n=\cV_{n-1}+v_n$,
we have $
    S_n = S_{n-1}\exp(\eta\xi_n-\eta^2v_n)$.
    Because $S_{n-1}$ and $v_n$ are $\cF_{n-1}$-measurable,
\begin{align*}
    \bbE[S_n\mid\cF_{n-1}] &= S_{n-1}\exp(-\eta^2v_n)
    \bbE[\exp(\eta\xi_n)\mid\cF_{n-1}]\\
    &\le S_{n-1}\exp(-\eta^2v_n)\exp(\eta^2v_n)\\
    &= S_{n-1}.
\end{align*}
Therefore $(S_n)_{n\ge0}$ is a non-negative supermartingale with $S_0=1$.
By Ville's inequality, we then obtain $
    \bbP\left[
        \exists n\ge0:
        S_n\ge \frac1\delta
    \right] \le \delta.$
Equivalently, it holds that $
    \bbP\left[
        \exists n\ge0:
        \eta M_n-\eta^2\cV_n \ge \log \frac{1}{\delta}
    \right] \le \delta.$
Dividing by $\eta>0$ finishes the proof.
\end{proof}

\begin{lemma}[Lemma 20 in \cite{zhang2022horizon}]\label{lemma:li4}
	Let $H>0$, and let $X_1, X_2,\ldots$ be i.i.d. positive random variables.
	Define 
    \begin{align*}
    \tau_H:=\min\left\{ i\ge1:\sum_{j=1}^i X_j \geq H\right\}.
    \end{align*}
	Then we have that
	\begin{align*}
	\bbP\left[ \tau_{H}\geq  \frac{1}{2}\bbE[\tau_H] -1\right]\geq \frac{1}{2}.
	\end{align*}
\end{lemma}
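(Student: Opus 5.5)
The statement is the renewal-type anti-concentration bound: for i.i.d.\ positive $X_j$ and $\tau_H=\min\{i:\sum_{j\le i}X_j\ge H\}$, we need $\bbP[\tau_H\ge \frac12\bbE[\tau_H]-1]\ge\frac12$. The plan is to prove a subadditivity property of the renewal time and then combine it with a Markov-type argument.

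\emph{Step 1: a tail inequality.} Put $n:=\lceil \frac12\bbE[\tau_H]-1\rceil$; we may assume $\bbE[\tau_H]<\infty$ and $n\ge1$, since otherwise the claim is trivial (if $n\le 0$ the event is sure; the infinite case follows by truncation or directly from the tail bound below). The key claim is that for every integer $j\ge0$,
\begin{align*}
    \bbP[\tau_H> jn]\le \bbP[\tau_H>n]^j .
\end{align*}
The event $\{\tau_H>jn\}$ says that $\sum_{i\le jn}X_i<H$. By positivity this forces each of the $j$ disjoint blocks of $n$ consecutive variables to have sum $<H$. The blocks are independent and each is distributed like $\sum_{i\le n}X_i$, so each block event has probability $\bbP[\tau_H>n]$, which gives the product bound.

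\emph{Step 2: summing.} Let $q:=\bbP[\tau_H>n]$ and suppose for contradiction that $q<\frac12$, i.e.\ $\bbP[\tau_H\ge n+1]<\frac12$. Then
\begin{align*}
    \bbE[\tau_H]=\sum_{t\ge0}\bbP[\tau_H>t]\le n\sum_{j\ge0}\bbP[\tau_H>jn]\le \frac{n}{1-q}<2n .
\end{align*}
Since $n<\frac12\bbE[\tau_H]$ by the choice of $n$ as the ceiling of $\frac12\bbE[\tau_H]-1$, we get $\bbE[\tau_H]<2n<\bbE[\tau_H]$, a contradiction. Hence $\bbP[\tau_H\ge n+1]\ge\frac12$, and since $n+1\ge \frac12\bbE[\tau_H]-1$, this yields $\bbP[\tau_H\ge\frac12\bbE[\tau_H]-1]\ge\frac12$.

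The only delicate part is the integer bookkeeping: choosing $n$ so that both $n<\frac12\bbE[\tau_H]$ and $n+1\ge\frac12\bbE[\tau_H]-1$ hold. The slack of ``$-1$'' in the statement is exactly what absorbs this rounding.
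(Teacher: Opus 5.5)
Your proof is correct and complete. The paper gives no proof of its own here; it imports the statement as Lemma~20 of \cite{zhang2022horizon}, so there is no in-paper argument to compare against. Your argument is a self-contained proof via the standard renewal-subadditivity route. Disjoint blocks are independent, which gives $\bbP[\tau_H>jn]\le\bbP[\tau_H>n]^j$, hence $\bbE[\tau_H]\le n/(1-\bbP[\tau_H>n])$, and then you choose $n$ as the largest integer strictly below $\frac12\bbE[\tau_H]$.

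Two small remarks.

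\emph{The infinite-expectation case never occurs.} Since the $X_j$ are i.i.d.\ and positive, $\sum_{j\le n}X_j\to\infty$ almost surely. So $\bbP[\tau_H>n]<1$ for some $n$, and your tail bound then gives $\bbE[\tau_H]<\infty$. No truncation argument is needed.

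\emph{Your bookkeeping proves more than the statement.} With $n=\lceil\frac12\bbE[\tau_H]-1\rceil$ you obtain $\bbP[\tau_H\ge n+1]\ge\frac12$, and $n+1\ge\frac12\bbE[\tau_H]$. When $\bbE[\tau_H]\le 2$ the same bound is trivial because $\tau_H\ge1$. Together these give $\bbP[\tau_H\ge\frac12\bbE[\tau_H]]\ge\frac12$. So your closing remark is slightly off: the ``$-1$'' slack in the statement is never used. The rounding is absorbed by passing from $n$ to $n+1$, not by the $-1$.
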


\subsection{Proof of \Cref{lemma:cutnew}} \label{app:trunc_lemma_proof}

We re-state the lemma as follows.

\restatableCutNew*

With a slight abuse of notation, we rewrite the finite-horizon optimal value function as
\begin{align*}
    v(s,h):=\max_{\pi}
    \bbE_\pi\left[\sum_{t=1}^{h} r(s_t,a_t)\,\big|\, s_1 = s\right],
    \qquad v(s,0):=0\quad \text{for all }s\in\cS .
\end{align*}
Under the finite state and finite action assumptions, an optimal deterministic Markov policy exists.
Therefore $v(s,h)$ satisfies the Bellman recursion
\begin{align*}
    v(s,h)=\max_{a\in\cA} \left\{r(s,a)+\sum_{s'\in\cS}P_{s,a,s'}v(s',h-1)\right\},
    \qquad h\geq 1.
\end{align*}

Define the marginal increment $$
    \Delta_h(s):=v(s,h)-v(s,h-1)$$
    for $h\geq 1$ (where we set $v(s,0) =0$ for all $s$).
Because the one-step rewards are nonnegative, $ v(s,h)\geq v(s,h-1)$, hence $
    \Delta_h(s)\geq 0$. 
Fix an initial state $s_0\in\cS$ and a horizon $H\geq 1$.
For each $h=1,2,\ldots,H$ and $s\in\cS$, let $a_h(s)$ denote a Bellman-optimal action when the current state is $s$ and $h$ steps remain; these choices define a deterministic Markov policy that is optimal from every state for the corresponding remaining-time problem.
Define the corresponding transition matrix $P^h$ as $
    P^h_{s,s'}:=P_{s,a_h(s),s'}.$ 
Let $P^h_{s}$ denote the vector $[P^{h}_{s,s'}]^{\top}_{s'\in \mathcal{S}}$.
Then, for each $h\geq 1$,
\begin{align*}
    v(s,h)=r(s,a_h(s))+P_{s}^h v(\cdot, h-1).
\end{align*}

Let  $(s_t)_{t=1}^H$ be the trajectory started from $s_1=\tilde{s}$ and run under the above $H$-step optimal policy $\pi^*$. That is, $\pi^*_h(s) = \arg\max_{a} r(s,a) + P_{s,a} v(\cdot, h-1)$ for all proper $(s,h)$, where the tie is broken arbitrarily.

\begin{lemma} \label{lem:increment_prop}
For every state $s\in\cS$ and every $2\leq h\leq H$,
\begin{align*}
    \Delta_h(s)\leq (P^h\Delta_{h-1})(s) = \sum_{s'\in\cS}P^h_{s,s'}\Delta_{h-1}(s').
\end{align*}
As a result, 
   $ \Delta_H(\tilde{s})\leq \bbE[\Delta_{H+1-t}(s_t)]$ for $t=1,2, \ldots, H$.
\end{lemma}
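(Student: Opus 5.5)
The first claim is a one-step comparison of two Bellman equations that share the same greedy action. Fix $s$ and $2\le h\le H$. Because $a_h(s)$ is Bellman-optimal at remaining time $h$, we have $v(s,h)=r(s,a_h(s))+P^h_s v(\cdot,h-1)$. For remaining time $h-1$, optimality of $v(\cdot,h-1)$ over all actions gives
\begin{align*}
    v(s,h-1)\ \ge\ r(s,a_h(s))+P_{s,a_h(s)}v(\cdot,h-2)\ =\ r(s,a_h(s))+P^h_s v(\cdot,h-2),
\end{align*}
where $v(\cdot,0)=0$ covers the case $h=2$. Subtracting, the reward terms cancel, leaving
\begin{align*}
    \Delta_h(s)\ \le\ P^h_s\bigl(v(\cdot,h-1)-v(\cdot,h-2)\bigr)\ =\ (P^h\Delta_{h-1})(s).
\end{align*}
The key point is to evaluate the $(h-1)$-step problem at the suboptimal action $a_h(s)$, which lower-bounds $v(s,h-1)$ and so upper-bounds the increment.

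For the consequence, we induct on $t$ along the trajectory $(s_t)$ generated by the $H$-step optimal policy $\pi^*$. At step $t$ there are $H+1-t$ steps remaining, so $\pi^*$ plays $a_{H+1-t}(s_t)$. Hence, conditionally on $s_t$, the next state satisfies $s_{t+1}\sim P^{H+1-t}_{s_t}$.

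The base case $t=1$ is $\Delta_H(\tilde s)=\bbE[\Delta_H(s_1)]$, which holds since $s_1=\tilde s$ deterministically. For the inductive step, suppose $\Delta_H(\tilde s)\le\bbE[\Delta_{H+1-t}(s_t)]$ with $t\le H-1$, so that $H+1-t\ge 2$. Applying the first claim with $h=H+1-t$ at the random state $s_t$ gives
\begin{align*}
    \Delta_{H+1-t}(s_t)\ \le\ \sum_{s'}P^{H+1-t}_{s_t,s'}\,\Delta_{H-t}(s')\ =\ \bbE\bigl[\Delta_{H-t}(s_{t+1})\,\big|\,s_t\bigr].
\end{align*}
Taking expectations and using the tower rule yields $\bbE[\Delta_{H+1-t}(s_t)]\le\bbE[\Delta_{H+1-(t+1)}(s_{t+1})]$, which chains with the hypothesis to reach $t=H$.

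No step is a real obstacle. The only care needed is bookkeeping: the matrix $P^h$ must use the action for remaining time $h$, so that it coincides with $\pi^*$'s transition at step $t=H+1-h$. The boundary case $h=2$ also needs $\Delta_1=v(\cdot,1)$, which the convention $v(\cdot,0)=0$ already handles.
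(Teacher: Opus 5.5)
Your proposal is correct and follows essentially the same route as the paper: evaluate the $(h-1)$-step Bellman inequality at the action $a_h(s)$ and subtract it from the $h$-step Bellman equation to obtain $\Delta_h(s)\le (P^h\Delta_{h-1})(s)$, then iterate this along the $H$-step optimal trajectory via the tower rule. Your bookkeeping, with $H+1-t$ steps remaining at time $t$ so that the policy plays $a_{H+1-t}(s_t)$, is stated more carefully than the paper's wording.
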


\begin{proof} [Proof of \Cref{lem:increment_prop}]
By definition of $a_h(s)$, we have
\begin{align*}
   v(s,h)=r(s,a_h(s))+P^h v(\cdot, h-1).
\end{align*}
On the other hand,
by the optimality of $v(s,h-1)$, it holds
\begin{align*}
    v(s,h-1) \geq r(s,a_h(s))+P^h v(\cdot, h-2).
\end{align*}
Subtracting the two displays gives
\begin{align*}
    \Delta_h(s)= v(s,h)-v(s,h-1) \leq  P^h ( v(\cdot, h-1) - v(\cdot, h-2)) =(P^h\Delta_{h-1})(s).
\end{align*}

Along the $H$-step optimal policy, the number of remaining steps at time $t$ is $H-t$.
Thus, whenever $H -t\geq 1$,
\begin{align*}
    \Delta_{H+1-t}(s_t) \leq \bbE[\Delta_{H-t}(s_{t+1})\mid s_t].
\end{align*}
Iterating this inequality and taking expectations yields $
    \Delta_H(s_0) \leq \bbE[\Delta_{H+1-t}(s_t)]$ for any $t=1,2 \ldots, H$.
\end{proof}

Summing the inequality $\Delta_{H}(\tilde{s}) \leq \mathbb{E}[\Delta_{H+1-t}(s_t)]$ over $t=1,2, \ldots, H$, we obtain
\begin{align}\label{eq:sum-delta}
    H\Delta_H(\tilde{s}) \leq \sum_{t=1}^{H}
    \bbE[\Delta_{H+1-t}(s_t)].
\end{align}
We now bound the right-hand side.
For each state $s\in\cS$, define $
    L_s:=\sum_{t=1}^{H}
    \bbE[
        \mathbb{I}[s_t=s]\Delta_{H+1-t}(s)
    ].$ 
Then we have $
    \sum_{t=1}^{H}
    \bbE[\Delta_{H+1-t}(s_t)] =\sum_{s\in\cS}L_s$.

\begin{lemma} \label{lem:L_s}
For every state $s\in\cS$, $L_s\leq v(\tilde{s}, H)$.
\end{lemma}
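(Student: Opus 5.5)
We must show $L_s=\sum_{t=1}^H \Pr[s_t=s]\,\Delta_{H+1-t}(s)\le v(\tilde s,H)$. The plan is to bound $L_s$ by the value of an explicit non-stationary policy on the $H$-step problem, and then invoke optimality of $v(\tilde s,H)$.

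The construction is a policy that follows $\pi^*$ until the first time the trajectory hits $s$. Let $\tau$ be that first hitting time of $s$ under $\pi^*$ started at $\tilde s$. Write $\Delta_j(s)=v(s,j)-v(s,j-1)$ for the remaining budget $j=H+1-t$. Then
\begin{align*}
L_s=\sum_{t}\bbE\bigl[\bbI[s_t=s]\,\Delta_{H+1-t}(s)\bigr].
\end{align*}

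The key step is to show that the contributions from successive visits telescope. Between consecutive visits at times $t_1<t_2$, the increments $\Delta$ at the later visit should be dominated by the value gained in between. Concretely, define $g_t:=v(s_t,H+1-t)$ along the trajectory, that is, the optimal value from the current state with the current remaining steps. Under $\pi^*$, $g_t=r(s_t,a_t)+\bbE[g_{t+1}\mid s_t]$, so
\begin{align*}
v(\tilde s,H)=\bbE\Bigl[\sum_t r(s_t,a_t)\Bigr].
\end{align*}
Now consider the following deviation at a visit to $s$ at time $t$: instead of continuing with budget $H+1-t$, the process ``restarts'' from $s$ with budget $H+1-t$. The quantity $\Delta_{H+1-t}(s)$ is the extra value of one more step of budget.

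The cleanest route I would try is an inductive statement on $s$ and the horizon:
\begin{align*}
W(x,h):=\bbE\Bigl[\sum_{t\ge1}\bbI[s_t=s]\,\Delta_{h+1-t}(s)\,\Big|\, s_1=x\Bigr]\le v(x,h)\quad\text{under the }h\text{-step optimal policy}.
\end{align*}
The induction on $h$ splits into two cases. For $x\neq s$, one step gives $W(x,h)=P^h_x W(\cdot,h-1)\le P^h_x v(\cdot,h-1)\le v(x,h)$, using $r\ge0$. For $x=s$, we get $W(s,h)=\Delta_h(s)+P^h_sW(\cdot,h-1)\le \Delta_h(s)+P^h_s v(\cdot,h-1)$. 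We therefore need
\begin{align*}
\Delta_h(s)+P^h_s v(\cdot,h-1)\le v(s,h),
\end{align*}
which is equivalent to $P^h_s v(\cdot,h-1)\le v(s,h-1)$. This fails in general, since $v(s,h)=r+P^h_s v(\cdot,h-1)$ and $r$ may be small. So the naive induction needs strengthening; this is the main obstacle.

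To fix it, I would strengthen the hypothesis to
\begin{align*}
W(x,h)\le v(x,h)-\bbI[\text{some slack}],
\end{align*}
or, more natively, prove
\begin{align*}
W(x,h)\le \bbE\bigl[v(s_{\tau},H+1-\tau)\bigr]\le v(x,h),
\end{align*}
where $\tau$ is the first hit of $s$. The argument goes through the last visit rather than the first. Since $\Delta_j(s)\le v(s,j)-v(s,j-1)$, summing over the visit times $t_1<t_2<\dots$ to $s$ gives
\begin{align*}
\sum_i \Delta_{H+1-t_i}(s)\le \sum_i \bigl[v(s,H+1-t_i)-v(s,H-t_i)\bigr].
\end{align*}
Monotonicity gives $v(s,H-t_i)\ge v(s,H+1-t_{i+1})$ because $t_{i+1}\ge t_i+1$. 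So, pathwise, the sum telescopes to at most $v(s,H+1-t_1)\le v(s,H+1-\tau)$. Taking expectations, $L_s\le \bbE[v(s_\tau,H+1-\tau)\,\bbI[\tau\le H]]$. Finally, this is at most $v(\tilde s,H)$: the policy that follows $\pi^*$ until $\tau$ and then plays optimally for the remaining $H+1-\tau$ steps collects at least this amount, because rewards before $\tau$ are nonnegative. The telescoping is in fact pathwise and deterministic given the visit times, so the step I expected to be hardest is resolved by this monotone telescoping rather than by the induction.
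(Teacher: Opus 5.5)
Your final argument is correct and is essentially the paper's proof: pathwise, the visit contributions telescope (via monotonicity of $v(s,\cdot)$, equivalently $\Delta_h(s)\ge 0$) to at most $\mathbb{I}[\tau_s\le H]\,v(s,H+1-\tau_s)$, and the expectation of this is at most $v(\tilde s,H)$ because the value of $\pi^*$ splits at the first hitting time into nonnegative pre-$\tau_s$ rewards plus the continuation value $v(s,H+1-\tau_s)$. The paper makes this last step rigorous by optional stopping of the martingale $M_t=\sum_{u<t}r(s_u,a_{H+1-u}(s_u))+v(s_t,H+1-t)$; you can delete the abandoned induction and the remark about going ``through the last visit,'' since your telescoping actually ends at the first visit.
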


\begin{proof} [Proof of \Cref{lem:L_s}]
Let $
    \tau_s:=\inf\{1\leq t\leq H:s_t = s\}$, 
with the convention that $\tau_s=\infty$ if $s$ is not visited during times $1,2, \ldots, H$.
Terms multiplied by $\mathbb{I}[\tau_s\le H]$ are interpreted as zero on $\{\tau_s=\infty\}$.
Consider a fixed sample path.
Suppose its visits to $s$ during times $1,2, \ldots, H$ occur at
\begin{align*}
    t_1<t_2<\cdots<t_m.
\end{align*}
Then its pathwise contribution to $L_s$ is $ 
    \sum_{\ell=1}^m \Delta_{H+1-t_\ell}(s).$ 
If $m=0$, this contribution is zero.  If $m\geq 1$, then $t_1=\tau_s$, and the set of remaining horizons $
    \{H+1-t_1,H+1-t_2,\ldots,H+1-t_m\}$ 
is a subset of $\{1,2,\ldots,H+1-\tau_s\}$.
Since all increments are nonnegative, in this case
\begin{align*}
    \sum_{\ell=1}^m \Delta_{H+1-t_\ell}(s) \leq \sum_{h=1}^{H+1-\tau_s}\Delta_h(s) =v(s, H+1-\tau_s),
\end{align*}
where the last equality is telescoping: for every integer $m'\ge1$,
    $\sum_{h=1}^{m'} \Delta_h(s) =\sum_{h=1}^{m'} (v(s,h)-v(s,h-1))=v(s,m')$.
Thus, in all cases, the pathwise contribution is at most
$\mathbb{I}[\tau_s\le H] v(s,H+1-\tau_s)$.  Therefore
\begin{align}\label{eq:Lx-first-hit}
    L_s\leq \bbE[
        \mathbb{I}[\tau_s\leq H]v(s,H+1-\tau_s)
    ].
\end{align}

It remains to bound the right-hand side by $v(\tilde{s},H)$.
Along the fixed $H$-step optimal policy, define
\begin{align*}
    M_t:=\sum_{u=1}^{t-1}r(s_u,a_{H+1-u}(s_u)) +v(s_t, H+1-t),
    \qquad t=1,2,\ldots,H+1.
\end{align*}
For $t=1,2, \ldots, H+1$, the Bellman equation implies
\begin{align*}
   v(s_t, H+1-t) =r(s_t,a_{H+1-t}(s_t)) +\bbE[v(s_{t+1} , H-t)\mid s_t],
\end{align*}
so $(M_t)_{t=1}^{H+1}$ is a martingale and $M_1=v(\tilde{s},H)$.
Let $\sigma:=\tau_s\wedge (H+1)$, where $\infty\wedge (H+1)=H+1$.
Since $\sigma$ is bounded, optional stopping gives $
    \bbE[M_\sigma]=M_1=v(\tilde{s}, H).$ 
Because rewards are nonnegative, 
\begin{align*}
    M_\sigma\geq \mathbb{I}[\tau_s\leq H] v(s,H+1-\tau_s).
\end{align*}
Combining this inequality with \Cref{eq:Lx-first-hit} gives $
    L_s\leq v(\tilde{s}, H).$
\end{proof}

Summing the inequality $L_{s}\leq v(\tilde{s}, H)$ over $s\in\cS$, we get
\begin{align*}
    \sum_{t=1}^{H}
    \bbE[\Delta_{H+1-t}(s_t)] =\sum_{s\in\cS}L_s \leq S  v(\tilde{s}, H).
\end{align*}
Together with \Cref{eq:sum-delta}, this yields the key estimate
\begin{align}\label{eq:key}
    H\Delta_H(\tilde{s}) \leq S  v(\tilde{s}, H),
\end{align}
As a result, we have $\Delta_H(\tilde{s})\leq \frac{S}{H} v(\tilde{s}, H)$.
Now assume $H>S$.
Since $\Delta_H(\tilde{s})= v(\tilde{s}, H)- v(\tilde{s}, H-1)$, \Cref{eq:key} implies
\begin{align*}
    v(\tilde{s}, H)-v(\tilde{s}, H-1) \leq \frac{S}{H}v(\tilde{s}, H).
\end{align*}
Rearranging the inequality, we have that
\begin{align*}
v(\tilde{s}, H) \leq \frac{H}{H-S} v(\tilde{s}, H-1).
\end{align*}

That is, for the fixed initial state $\tilde{s}$,
\begin{align*}
f(H)\leq \left(1 + \frac{S}{H-S}\right)f(H-1).
\end{align*}
The boundary case $H=S+1$ is included because $1-S/H=1/(S+1)>0$.

It remains to justify the big-step consequence.
Fix an integer $\ell\ge1$ and form the $\ell$-step macro-MDP whose state space is the same $\cS$.
A macro-action is a deterministic Markov controller for the next $\ell$ primitive steps; its macro-transition kernel is the law of the primitive state after those $\ell$ steps, and its macro-reward is the expected sum of the $\ell$ primitive rewards collected in the block.
The macro-action space is finite, and the macro-rewards are nonnegative.
Moreover, for every integer $k\ge0$, the optimal $k$-step value of this macro-MDP from $\tilde{s}$ is exactly the original primitive value $f(k\ell)$: any macro-policy is a finite-horizon history-dependent primitive policy, which cannot improve on the deterministic Markov optimum, and any primitive deterministic Markov policy induces a macro-policy by restricting it to consecutive blocks.
Applying the one-step comparison already proved to this macro-MDP at macro-horizon $t>S$ gives
\begin{align*}
f(t\ell)\leq \left(1 + \frac{S}{t-S}\right)f((t-1)\ell).
\end{align*}
This proves the lemma.

\newpage

\section{Missing Algorithms}\label{app:missing_alg}

In this section, we introduce the two main subroutines:
\Cref{alg:plann}, which performs optimistic planning, and \Cref{alg:explore}, which carries out explicit exploration.

\subsection{The \texttt{Planning} Algorithm}

\Cref{alg:plann} is an optimistic planning procedure on the truncated horizon $H_1$.
We set the clipping threshold $\epsilon = \frac{1}{S^2}$.
For $(s,a)\in\cO$ and $h\in [H_1]$, the algorithm sets $Q_h(s,a)=1$.
For $(s,a)\notin\cO$, it uses the frozen empirical transition row $\widehat P_{s,a}$ and the bonus $ 100\,b\left(\widehat P_{s,a},V_{h+1},N(s,a)\right)$ for the backward planning:
for each $(s,a)\notin \cO$ and $h = H_1,H_1-1,\ldots, 1$,
\begin{align*}
 &  Q_{H_1+1}(s,a)= 0;
 \\ & Q_{h}(s,a) = r(s,a) + \widehat{P}_{s,a}V_{h+1} + 100b\left(\widehat{P}_{s,a}, V_{h+1}, N(s,a)\right)
 \\ & V_{h}(s) = \max_{a}Q_{h}(s,a).
\end{align*}

Here the bonus function $b(p,v,n)$ is defined as 
\begin{align}
    b(p,v,n)
    =
    3\sqrt{\frac{\bbV(p,v)\log \frac{1}{\delta}}{n}}
    +
    5\sqrt{\frac{S\bbV_{5\epsilon}(p,v)\log \frac{1}{\delta}}{n}}
    +
    20\frac{S\log \frac{1}{\delta}}{n}.\label{eq:bonus}
\end{align}

\subsection{The \texttt{Exploration} Algorithm}
\label{subsec:exploration-algorithm}

\Cref{alg:explore} is the explicit exploration subroutine used after the planning policy first reaches an unlearned state-action pair $\left(\widetilde s,\widetilde a\right)\in\cO$.
This subroutine is adapted from the explicit-exploration procedure of~\cite{zhang2022horizon}, with minor modifications to fit our algorithmic framework.
We briefly summarize the intuition here and defer a more detailed discussion to \Cref{app:expexp}.
We also refer readers to \cite{zhang2022horizon} for  more details.

The goal of \Cref{alg:explore} is twofold.
First, if there is still some reachable state-action pair $(s,a)$ with an unknown outgoing triple not in $\cK$, then the subroutine spends the suffix of the episode collecting samples that may enlarge the known set $\cK$.
Second, if no such auxiliary exploration is needed, then the subroutine uses the reference model to collect samples of the original target pair $\left(\widetilde s,\widetilde a\right)$.
In the latter case, the call is counted as an effective exploration call for $\left(\widetilde s,\widetilde a\right)$.

The remaining suffix has length $\frac{H}{m}$.
We split it into two parts:
$ H_3:=\frac{H}{20mS\log S}$ and $H_2:=\frac{H}{m}-H_3$, where the long phase of length $H_2$ is used for reaching a candidate state and the short phase of length $H_3$ is used for collecting samples of a candidate state-action pair.
Accordingly, the reaching phase uses $\pi_2$ for up to $H_2$ steps, while the sampling phase uses $\pi_3$ for $H_3$ steps.
We use the following two quantities to measure discounted occupancy and discounted reachability under a stationary policy:
\begin{align}
 & X_\gamma^\pi(\cX,p,\mu)
    :=
    \sum_{h\ge1}
    \gamma^{h-1}
    \bbP_{p,\pi}
    [
        s_h\in\cX,\ 
        s_t\notin\cX,\ \forall t<h
        \mid s_1\sim\mu
    ],\label{eq:defX}
\\ &   W_\gamma^\pi(g,p,\mu)
    :=
    \bbE_{p,\pi}
    \left[
        \sum_{h\ge1}\gamma^{h-1}g(s_h,a_h)
        \,\middle|\,s_1\sim\mu
\right]. \label{eq:defW}
\end{align}

For each candidate pair $(s,a)$ with at least one unknown outgoing triple, (i.e., with some $s'\in\cS$ satisfying $(s,a,s')\notin \cK$), \Cref{alg:explore} computes two stationary policies.
The reaching policy $\pi_2$ tries to reach the candidate state $s$ from the current target $\left(\widetilde s,\widetilde a\right)$:
\begin{align*}
    \pi_2
    \in
    \argmax_{\pi\in\Pi_{\sta},\,\pi\left(\widetilde s\right)=\widetilde a}
    X_\gamma^\pi\left(\{s\},P^{\tref},\II_{\widetilde s}\right),
\end{align*}
where $\Pi_{\sta}$ denotes the class of stationary policies.
We write $ u(s) := X_\gamma^{\pi_2}\left(\{s\},P^{\tref},\II_{\widetilde s}\right)$ for the corresponding reachability estimate.
The sampling policy $\pi_3$ tries to collect visits to $(s,a)$, starting from $s$:
\begin{align*}
    \pi_3
    \in
    \argmax_{\pi\in\Pi_{\sta}}
    W_\gamma^\pi(\II_{s,a},P^{\tref},\II_s),
\end{align*}
and we write $ v(s,a) := W_\gamma^{\pi_3}(\II_{s,a},P^{\tref},\II_s).$
The test
\begin{align*}
    u(s)\ge \frac{1}{1200S},
    \qquad
    D(s,a)\le 1620S^2A N_{\tref}u(s)v(s,a)
\end{align*}
checks whether $(s,a)$ is both reachable and still under-sampled relative to its estimated collectability under the reference model.  Here $
D(s,a):=\max\left\{1,\sum_{s':(s,a,s')\in\cK}\Ntotal(s,a,s')\right\}$ 
is the exploration-local count of currently known successors; it is distinct from the global raw visit count.
If the test succeeds, the algorithm first runs $\pi_2$ for at most $H_2$ steps, stopping as soon as the candidate state $s$ is reached or an unknown triple in $\cK^\complement$ is observed.
If $s$ is reached before an unknown transition, the algorithm then runs $\pi_3$ for $H_3$ steps to collect samples of $(s,a)$;
otherwise it plays an arbitrary policy until the end of the episode.

If no candidate pair passes the test, then the subroutine regards the reference model as sufficiently accurate for the current target.
In this case, it sets $ \trigger=\true$, computes $ \pi \in \argmax_{\pi\in\Pi_{\sta}} W_\gamma^\pi\left( \II_{\widetilde s,\widetilde a}, P^{\tref}, \II_{\widetilde s} \right)$ and runs $\pi$ until the end of the episode.
This is counted as an effective sample-collection call for $\left(\widetilde s,\widetilde a\right)$, and the main algorithm increments $M\left(\widetilde s,\widetilde a\right)$.
Once $M\left(\widetilde s,\widetilde a\right)\ge N_{\known} = 10000\log\frac{1}{\delta}$, the pair is removed from $\cO$.

The key property needed in the regret analysis is that each pair $(s,a)\in\cO$ can trigger only polynomially many calls to \Cref{alg:explore}.
This follows from the same clipped-model and stationary-policy counting ideas as in~\cite{zhang2022horizon}, with the minor modification that our call to \Cref{alg:explore} occurs as a suffix after reward-aware optimistic planning reaches $\cO$.

\begin{remark}
Fix an arbitrary deterministic order on $\SA$ for the loop in \Cref{alg:explore}, and use deterministic tie-breaking for every displayed $\argmax$.
In \Cref{alg:explore}, the optimization problems $ \argmax_{\pi\in\Pi_{\sta}} X_{\gamma}^{\pi}(\cX,p,\mu)$ and $\argmax_{\pi\in\Pi_{\sta}} W_{\gamma}^{\pi}(g,p,\mu)$ can be solved efficiently by standard discounted Bellman iteration.

\end{remark}

\begin{algorithm}[t]
\caption{\texttt{Planning}}
\label{alg:plann}
\begin{algorithmic}[1]
\STATE{\textbf{Input:} $\cO$, frozen counts $\{N(s,a,s')\}_{s,a,s'}$, cutting resolution $\epsilon$.}
\STATE{\textbf{Initialize:} $V_{H_1+1}(s)\gets0$ for all $s\in\cS$, and $Q_h(s,a)\gets1$ for all $(s,a)\in\cO$ and $h\in[H_1]$.}
\STATE{Construct $\{N(s,a)=\max\left\{1,\sum_{s'}N(s,a,s')\right\}\}_{s,a}$ and $\left\{\widehat P_{s,a,s'} = \frac{N(s, a, s')}{N(s, a)}\right\}_{s,a, s'}$.}

\FOR{$h=H_1,H_1-1,\ldots,1$}
    \STATE{For all $(s,a)\notin\cO$, set $b_h(s,a)\gets100\,b\left(\widehat P_{s,a},V_{h+1},N(s,a)\right)$ (see \Cref{eq:bonus}).}
    \STATE{For all $(s,a)\notin\cO$, set
    \begin{align*}
        Q_h(s,a)\gets
        \min\left\{1,\ r(s,a)+\widehat P_{s,a}V_{h+1}+b_h(s,a)\right\}.
    \end{align*}}
    \STATE{Set $V_h(s)\gets\max_{a\in\cA}Q_h(s,a)$ and $\pi_h(s)\gets\argmax_{a\in\cA}Q_h(s,a)$ for all $s\in\cS$.}
\ENDFOR

\STATE{\textbf{return:} $\pi$.}
\end{algorithmic}
\end{algorithm}

\begin{algorithm}[t]
\caption{\texttt{Exploration}}
\label{alg:explore}
\begin{algorithmic}[1]
\STATE{\textbf{Input:} initial state-action pair $\left(\tilde{s},\tilde{a}\right)$, reference model $P^{\tref}$, raw counts $\{\Ntotal(s,a,s')\}_{s,a,s'}$, known triples $\cK$.}
\STATE{\textbf{Initialize:} $H_2\gets \frac{H}{m}-\frac{H}{20mS\log S}$, $H_3\gets \frac{H}{20mS\log S}$, $\gamma\gets1-\frac{1}{H_3}$, $\trigger\gets\false$, and $\aux\gets\false$.}
\STATE{Set $
    D(s,a)\gets
    \max\left\{
        1,\sum_{s':(s,a,s')\in\cK}\Ntotal(s,a,s')
    \right\}$
for all $(s,a)\in\SA$.}
\STATE{Define the functions $X_{\gamma}^{\pi}(\cdot, \cdot, \cdot)$ and $W_{\gamma}^{\pi}(\cdot, \cdot, \cdot)$ following \Cref{eq:defX,eq:defW}. }
\FOR{all $(s,a)\in\SA$}
    \IF{there exists $s'\in\cS$ such that $(s,a,s')\notin\cK$}
        \STATE{$\pi_2\gets\argmax_{\pi\in\Pi_{\sta},\,\pi\left(\tilde{s}\right)=\tilde{a}}X_\gamma^\pi\left(\{s\},P^{\tref},\II_{\tilde{s}}\right)$; set $u(s)\gets X_\gamma^{\pi_2}\left(\{s\},P^{\tref},\II_{\tilde{s}}\right)$.}
        \STATE{$\pi_3\gets\argmax_{\pi\in\Pi_{\sta}}W_\gamma^\pi(\II_{s,a},P^{\tref},\II_s)$; set $v(s,a)\gets W_\gamma^{\pi_3}(\II_{s,a},P^{\tref},\II_s)$.}
        \IF{$u(s)\ge\frac{1}{1200S}$ and $ D(s,a)\le 1620S^2A N_{\tref}u(s)v(s,a)$}\label{line:condition}
            \STATE{Set $\aux\gets\true$.}
            \STATE{Run $\pi_2$ for at most $H_2$ steps, stopping as soon as state $s$ is reached or an unknown triple in $\cK^\complement$ is observed.} \label{line:explore_reaching}
            \IF{state $s$ is reached before an unknown transition}
                \STATE{Run $\pi_3$ for $H_3$ steps.} \label{line:explore_sampling}
                \STATE{Play an arbitrary policy until the end of the episode.}
            \ELSE
                \STATE{Play an arbitrary policy until the end of the episode.}
            \ENDIF
            \STATE{\textbf{break}}
        \ENDIF
    \ENDIF
\ENDFOR

\IF{$\aux=\false$}
    \STATE{Set $\trigger\gets\true$.}
    \STATE{$\pi\gets\argmax_{\pi\in\Pi_{\sta}}W_\gamma^\pi\left(\II_{\tilde{s},\tilde{a}},P^{\tref},\II_{\tilde{s}}\right)$.} \label{line:explore_pi}
    \STATE{Run $\pi$ until the end of the episode.}
\ENDIF

\STATE{\textbf{return:} $\trigger$.}
\end{algorithmic}
\end{algorithm}

\newpage

\section{Notations and Naming Conventions}\label{app:notation}

Throughout the appendices, unless explicitly stated otherwise, all sums over layers are over the truncated horizon, namely $h\in[H_1]$.
The notation specific to the explicit exploration subroutine, \Cref{alg:explore}, is collected separately in \Cref{app:expexp};
this section records the notation used in the regret analysis of \Cref{alg:new} and \Cref{alg:plann}.

\paragraph{Basic notations.}
For a positive integer $N$, let $[N]:=\{1,2,\ldots,n\}$.
For an event $\cE$, let $\bbI[\cE]$ denote its indicator.
We use $\Delta(\cS)$ for the probability simplex over $\cS$.
For $p\in\Delta(\cS)$ and $v\in\bbR^{\cS}$, recall that $pv:=\sum_{s\in\cS}p(s)v(s)$ and $ \bbV(p,v):=p v^2-(pv)^2 =\sum_{s\in\cS}p(s)(v(s)-pv)^2.$
    
For $x>0$, recall that the $x$-clipped variance is
\begin{align*}
    \bbV_x(p,v)
    :=
    \sum_{s\in\cS}
    p(s)\min\{(v(s)-pv)^2,x^2\}.
\end{align*}
For two vectors $u,v\in\bbR^{\cS}$, we write $u\le v$ if the inequality holds coordinatewise.
The vectors $\II_s$, $\II_{s,a}$, and $\II_{s,a,s'}$ denote the corresponding one-hot vectors or indicator reward functions.
The meaning will be clear from context.
We use $\bbE_{p,\pi}[\cdot ]$ denote the expectation under transition model $p$ and policy $\pi$.
When $p$ is clear from the context, we simplify it as $\bbE_{\pi}[\cdot ]$.

\begin{definition}[Maximal expected visit count.]\label{def:umax}
Define the maximal expected visit count to $(s,a)$ in one episode of $H$ steps as:
\begin{align*}
U(s,a) =\max_{s',\pi\in \Pi}\sum_{h=1}^H \bbE_{\pi}\left[ \left. \sum_{h=1}^H \bbI[(s_h,a_h) = (s,a)] \right| s_1 = s'\right] .
\end{align*}
\end{definition}

\paragraph{Parameters settings.}
The main parameters are
\begin{align*}
    \upsilon:=\min\left\{\sqrt{\frac{SA}{1000K}},\frac{1}{20S\log S}\right\},
    \qquad
    \epsilon:=\frac{1}{S^2},
    \qquad
    m:=\frac{4S}{\upsilon},
    \qquad
    H_1:=H-\frac{H}{m}.
\end{align*}
We assume, for notational simplicity, that $m$ divides $H$ and that the phase lengths $H_2$ and $H_3$ are positive integers; rounding to adjacent integers only changes universal constants.
The reference and known-pair thresholds are $N_{\rm ref}:=1025S^2\log \frac{1}{\delta}$ and $N_{\known}:=7000\log \frac{1}{\delta}$.

\paragraph{Raw counts.}
Let $\Ntotal^k(s,a,s')$ denote the total number of observed transitions $(s,a,s')$ before episode $k$, and $\Ntotal^k(s,a):=\sum_{y\in\cS}\Ntotal^k(s,a,y)$.

\paragraph{Frozen count and empirical models.}
If $(s,a)\in\cO^k$, then $N^k(s,a,s'):=0$ for every $s'$, and we use the dummy denominator $N^k(s,a):=1$.
If $(s,a)\notin\cO^k$, we have $N^k(s,a):=2^{m^k(s,a)}$, where $m^k(s,a):=\max\{m\in\bbN_0:2^m\le \Ntotal^k(s,a)\}$, Let
\begin{align*}
    Y_1(s,a),Y_2(s,a),\ldots,Y_{\Ntotal^k(s,a)}(s,a)
\end{align*}
be the ordered list of successors observed on visits to $(s,a)$ before episode $k$.
Then
\begin{align*}
    N^k(s,a,s')
    :=
    \sum_{i=1}^{N^k(s,a)}\bbI\{Y_i(s,a)=s'\},
    \qquad
    \widehat P_{s,a,s'}^k:=\frac{N^k(s,a,s')}{N^k(s,a)}.
\end{align*}

\paragraph{Algorithm-state notation.}
Let $\cO^k$ denote the set of unlearned state-action pairs,
$\cK^k$ denote the set of known state-action-state triples, and
$M^k(s,a)$ denote the effective-exploration counter.

Let $Q_h^k$ and $V_h^k$ be the $Q$-value and value function computed by
\Cref{alg:plann} in episode $k$, and let $\pi^k = \{\pi^k_h\}_{h=1}^H$ be the policy
computed by \Cref{alg:plann} in episode $k$. We denote the rollout
trajectory in episode $k$ by
\begin{align*}
    (s_1^k,a_1^k,s_2^k,a_2^k,\ldots,s_H^k,a_H^k).
\end{align*}

We also use the shorthand notation $
    P_h^k := P_{s_h^k,a_h^k}, 
    P_{h,s}^k := P_{s_h^k,a_h^k,s}, 
    \widehat P_h^k := \widehat P_{s_h^k,a_h^k}^k, 
    \widehat P_{h,s}^k := \widehat P_{s_h^k,a_h^k,s}^k$,  $
    N_h^k := N^k(s_h^k,a_h^k)$ and $r_h^k := r(s_h^k,a_h^k)$.

\paragraph{Confidence parameters.}
Every confidence parameter lies in $(0,1)$.  No additional lower bound on
$\log\frac1\delta$ is imposed by the lower-level probabilistic results.
Whenever a proof uses a constant lower bound on this logarithm, it follows
from the nontrivial regime of that result's stated probability guarantee.

\newpage

\section{Regret Analysis}\label{sec:reg_analysis}
In this section, we present the full proof of \Cref{thm:reg}.
We work under the explicit conditions $S\geq200$, $A\geq 8$, $H\geq K$, with $m$ a positive integer dividing $H$, and use the integer phase convention $d=\frac{H}{m}$,  under the suffix condition $\frac{d}{20S\log S}\geq22$, as stated in \Cref{thm:reg}. 
We also assume
$K\geq1000S^2A\log\frac1\delta$ where we recall $\delta = \frac{\delta_0^2}{210S^7A^2K}\leq \frac{1}{210S^7A^2K}$. The regret bound in the regime  $K\leq1000S^2A\log\frac1\delta$ is simply $\Regret_H(K)\leq K$. 

The proof of \Cref{thm:reg} is organized around five main steps.
The first step reduces the original $H$-horizon regret to the truncated horizon $H_1=H-\frac{H}{m}$.
The next three steps control the truncated regret under the optimistic planning policy:
we first prove optimism of the planned value functions, then decompose the stopped regret into Bellman-error and martingale terms, and finally bound the accumulated Bellman errors through a variance closure argument.
The last step controls the cost of stopping and explicit exploration.
Combining these ingredients yields the claimed horizon-free regret bound.

More precisely, the full proof proceeds as follows.

\begin{enumerate}
    \item \textbf{Truncating the horizon; see \Cref{sec:trun}.}
    We first compare the original $H$-step problem with the truncated $H_1$-step problem.
    The horizon-truncation lemma implies that the optimal $H$-step value is within $O(\upsilon)$ of the optimal $H_1$-step value.
    Therefore,
    \begin{align*}
        \Regret_H(K)
        \le
        \Regret_{H_1}(K)+K\upsilon .
    \end{align*}
    With our choice of $\upsilon$, the truncation loss $K\upsilon$ is absorbed by the leading $\widetilde O(\sqrt{SAK})$ term.
    Hence it remains to analyze the regret on the truncated horizon.

    \item \textbf{Optimism of $Q_h^k$ and $V_h^k$; see \Cref{sec:opt} and \Cref{sec:p_opt}.}
    We prove that the planning procedure is optimistic:
    with high probability, $Q_h^k(s,a)\ge Q_h^\star(s,a)$ and $V_h^k(s)\ge V_h^\star(s)$ for all episodes $k$, layers $h\in[H_1]$, states $s$, and actions $a$.
    The proof is a backward induction on $h$.
    For $(s,a)\in\cO^k$, optimism holds because the algorithm sets $Q_h^k(s,a)=1$.
    For $(s,a)\notin\cO^k$, optimism follows from the concentration guarantee of the cutting bonus and the relaxed monotonicity property of the bonus.

    \item \textbf{Stopped regret decomposition; see \Cref{sec:reg_decomposition}.}
    We next decompose the truncated regret using the stopping indicators $I_h^k$.
    The process is stopped when the trajectory reaches the unlearned set $\cO^k$ or when the frozen count of a learned pair doubles.
    On the optimism and Bellman-error events, the stopped regret is bounded by the sum of the Bellman-error envelope, martingale terms, and the stopping cost.

    \item \textbf{Recursive variance and Bellman-error bounds; see \Cref{sec:p_totalvar}.}
    The main quantitative step is to bound the accumulated Bellman error $B$.
    This requires a variance closure argument for the optimistic values and the optimistic gaps.
    The ordinary variance terms are controlled by the stopped regret structure, while the clipped-variance terms are controlled by the total-deviation bound for the monotone optimal value sequence.
    This is where the cutting projection and the clipped variance are used to avoid both a $\log H$ factor and an additional leading polynomial dependence on $S$.
    The result is a self-bounding inequality for $B$, which is then solved by a standard absorption argument.

    \item \textbf{Bounding the exploration error; see \Cref{sec:b_exp}.}
    Finally, we bound the total stopping cost.
    The doubling events are controlled by the frozen-count doubling rule and contribute only logarithmic factors.
    The exploration events are controlled through the explicit exploration analysis of \Cref{alg:explore}:
    once a pair $(s,a)\in\cO$ has been effectively explored sufficiently many times, it is removed from $\cO$, and the number of ineffective calls is bounded using the clipped-reference-model argument and stationary policy analysis.
    This yields a polynomial burn-in term independent of $H$.

\end{enumerate}

Putting these five pieces together, we first obtain a high-probability bound for $\Regret_{H_1}(K)$.
Adding the truncation loss $K\upsilon$ and substituting the choices of $\upsilon$, $\epsilon$, and the logarithmic parameters gives the stated bound in \Cref{thm:reg}.

\subsection{Truncating the Horizon}\label{sec:trun}

Recall that $ H_1 := H-\frac{H}{m}$.
Recall that $\Regret_H(K)$ denote the regret in $K$ episode with horizon $H$ as
\begin{align*}
\Regret_H(K) & = \sum_{k=1}^K (  V_1^*(s_1^k) - V_1^{\pi^k}(s_1^k)) \notag\\
& = \sum_{k=1}^K\left(  \max_{\pi}\bbE_{\pi}\left[\sum_{h=1}^H r(s_h,a_h) \,\big| s_1 = s_1^k\right] -\bbE_{\pi^k}\left[\sum_{h=1}^H r(s_h,a_h)\,\big| s_1 = s_1^k\right] \right) .\notag
\end{align*}

Define $\Regret_{H_1}(K)$ as
\begin{align*}
\Regret_{H_1}(K) := \sum_{k=1}^K\left(  \max_{\pi}\bbE_{\pi}\left[\sum_{h=1}^{H_1} r(s_h,a_h) \,\big| s_1 = s_1^k\right] -\bbE_{\pi^k}\left[\sum_{h=1}^{H_1} r(s_h,a_h) \,\big| s_1 = s_1^k\right] \right).\notag
\end{align*}

Put $d:=\frac{H}{m}$, so that $H=md$ and $H_1=(m-1)d$ are integers under the conditions of \Cref{thm:reg}.  Since $m=\frac{4S}{\upsilon}>S$, \Cref{lemma:cutnew} applies with the integers $t=m$ and $\ell=d$.  The total-reward normalization and $m\geq2S$ give, for each $k$,
\begin{align*}
  \max_{\pi}\bbE_{\pi}\left[\sum_{h=1}^H r(s_h,a_h) \,\big| s_1 = s_1^k\right]
  &\leq
  \left(1+\frac{S}{m-S}\right)
  \max_{\pi}\bbE_{\pi}\left[\sum_{h=1}^{H_1} r(s_h,a_h)\,\big| s_1 = s_1^k\right]
  \notag\\
  &\leq
  \max_{\pi}\bbE_{\pi}\left[\sum_{h=1}^{H_1} r(s_h,a_h)\,\big| s_1 = s_1^k\right]
  +\frac{2S}{m}
  \notag\\
  &\leq
  \max_{\pi}\bbE_{\pi}\left[\sum_{h=1}^{H_1} r(s_h,a_h)\,\big| s_1 = s_1^k\right]
  +\upsilon,
  \notag\\
  \bbE_{\pi^k}\left[\sum_{h=1}^{H_1} r(s_h,a_h) \,\big| s_1 = s_1^k\right]
  &\leq
  \bbE_{\pi^k}\left[\sum_{h=1}^H r(s_h,a_h)\,\big| s_1 = s_1^k\right].
\end{align*}
The last inequality uses only nonnegativity of rewards.  Subtracting it from the optimal-value comparison and summing over episodes yields
\begin{align*}
    \Regret_H(K) \leq \Regret_{H_1}(K) + K\upsilon .
\end{align*}
It remains to bound $\Regret_{H_1}(K)$.
In the following, we assume the optimal function are defined with respect to this truncated MDP with horizon $H_1$.

\subsection{Optimistic Planning}\label{sec:opt}

We display the optimism lemma and its here, and leave most of the utility lemmas in \Cref{sec:p_opt}. Define \begin{align}
L_N := 2 \log \left(\frac{504S^2K\log S}{\upsilon\delta\log\frac1\delta}\right). \label{eq:def_L_N}
\end{align}

\begin{lemma}\label{lemma:optimism}
With probability at least
$1-\left(\frac{6S^2AL_N}{\epsilon^2}+2S^3A+2SA\right)\delta$,
it holds that $V_h^k(s) \geq V_h^*(s)$ for all $k\in [K]$,
$h\in [H_1]$ and $s\in \cS$.
\end{lemma}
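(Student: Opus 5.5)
We prove optimism by backward induction on $h\in[H_1]$, for each fixed episode $k$, on a single high-probability concentration event $\cE$ that holds simultaneously for all $k$. The induction needs two ingredients. First, a concentration statement: for every $(s,a)\notin\cO^k$ and every $h$,
\begin{align*}
\wh P^k_{s,a}V_{h+1}^*+b\left(\wh P^k_{s,a},V_{h+1}^*,N^k(s,a)\right)\ge P_{s,a}V_{h+1}^*.
\end{align*}
Second, the relaxed monotonicity property stated in \Cref{subsec:solutions}: for $v'\le v$, $\wh pv+100\,b(\wh p,v,n)\ge \wh pv'+b(\wh p,v',n)$.

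Given both, the induction is immediate. The base case $V^k_{H_1+1}=V^*_{H_1+1}=0$ holds trivially. For $(s,a)\in\cO^k$ we have $Q_h^k=1\ge Q_h^*$ by total boundedness. For $(s,a)\notin\cO^k$, we apply monotonicity with $v=V_{h+1}^k\ge v'=V_{h+1}^*$, followed by concentration, to get $r+\wh PV^k_{h+1}+100b\ge Q_h^*$. Together with $Q_h^*\le 1$, this shows that the $\min\{1,\cdot\}$ clipping preserves optimism. Taking the maximum over actions gives $V_h^k\ge V_h^*$.

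\textbf{Concentration event.} The key observation is that $\wh P^k_{s,a}$ is the empirical distribution of the first $N=2^j$ i.i.d. successor samples $Y_1,\dots,Y_N$ of $(s,a)$. The samples are i.i.d. by the usual skeleton/stack argument of drawing successor states in advance. We union bound over $(s,a)$ and over the doubling level $j\le L_N$, which is horizon-free because the total count is at most $KH$, and the range of $j$ that matters is controlled by the logarithmic quantity $L_N$. So we fix $(s,a,j)$ and decompose
\begin{align*}
(P-\wh P)V_{h+1}^*=(P-\wh P)\CutProj_\epsilon(\wh P,V_{h+1}^*)+(P-\wh P)\left(V_{h+1}^*-\CutProj_\epsilon(\wh P,V_{h+1}^*)\right).
\end{align*}

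\emph{Cut part.} By monotonicity of $V_h^*$ in $h$, the family $\{\CutProj_\epsilon(p,V_h^*)\}_{p,h}$ has at most $4S/\epsilon^2$ distinct elements, and these are determined by $P$ alone. Empirical Bernstein (\Cref{lem:bennett_empirical}) plus a union bound gives a width $\sqrt{2\bbV(\wh P,w)\log(1/\delta)/N}+O(\log(1/\delta)/N)$ for every fixed $w$ in this family. The union over $(s,a,j,w)$ accounts for the $6S^2AL_N/\epsilon^2$ factor in the failure probability. We then need $\bbV(\wh P,w)$ to be comparable to $\bbV(\wh P,V^*_{h+1})$ up to a clipped-variance correction, e.g. $\sqrt{\bbV(\wh p,w)}\le\sqrt{\bbV(\wh p,v)}+\sqrt{\bbV(\wh p,v-w)}$, and then use \Cref{lemma:lbb}, $\bbV(\wh p,v-w)\le\bbV_{5\epsilon}(\wh p,v)$.

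\emph{Residual part.} The residual $v-w$ is data-dependent, so here we use a coordinatewise bound instead. Per-coordinate Bernstein $|P_{s,a,s'}-\wh P_{s,a,s'}|\lesssim\sqrt{\wh P_{s,a,s'}\log(1/\delta)/N}+\log(1/\delta)/N$ is applied to the centered residual $(v-w)-\wh p(v-w)$, followed by Cauchy--Schwarz over $s'$. This yields $\sqrt{S\,\bbV(\wh p,v-w)\log(1/\delta)/N}+S\log(1/\delta)/N$, and the variance is again bounded by $\bbV_{5\epsilon}$ via \Cref{lemma:lbb}. These coordinatewise events account for the $2S^3A+2SA$ terms (pairs $(s,s')$, levels, and the constant-mass cases). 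Summing the two widths fits within the constants $3,5,20$ of \eqref{eq:bonus}.

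\textbf{Main obstacle.} The main obstacle is the relaxed monotonicity, because $\CutProj_\epsilon$ and $\bbV_{5\epsilon}$ are not monotone in $v$. Here is the plan. When $v'\le v$, write $\wh pv-\wh pv'=\wh p(v-v')\ge0$. The task is to show $b(\wh p,v',n)\le 100\,b(\wh p,v,n)+\wh p(v-v')$. For the variance terms, use $\sqrt{\bbV(\wh p,v')}\le\sqrt{\bbV(\wh p,v)}+\sqrt{\bbV(\wh p,v-v')}$ and the bound $\bbV(\wh p,v-v')\le \wh p(v-v')$, which holds since $0\le v-v'\le1$. Then AM--GM gives
\begin{align*}
\sqrt{\frac{\wh p(v-v')\,S\log\frac1\delta}{n}}\le\frac{\wh p(v-v')}{2}+\frac{S\log\frac1\delta}{2n},
\end{align*}
and the last term is absorbed into the $20S\log(1/\delta)/n$ term. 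The clipped variance is handled the same way, using that $\sqrt{\bbV_x(p,\cdot)}$ is subadditive up to a constant factor, which has to be checked carefully. The generous constant $100$ is exactly what absorbs these losses.
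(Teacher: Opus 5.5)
Your backward induction, the cut/residual split of $(\wh P-P)V_{h+1}^*$, and your relaxed-monotonicity argument are fine. The monotonicity route differs from the paper's Lipschitz bound on $y\mapsto\min\{y^2,x^2\}$ but works. From $\min\{(a+b)^2,x^2\}\le2\min\{a^2,x^2\}+2\min\{b^2,x^2\}$ and $\bbV(\wh p,v-v')\le\wh p(v-v')$ you get $\sqrt{\bbV_x(\wh p,v')}\le\sqrt{2\bbV_x(\wh p,v)}+\sqrt{2\,\wh p(v-v')}$. AM--GM then yields $\wh pv'+b(\wh p,v',n)\le\wh pv+3\,b(\wh p,v,n)$.

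The genuine gap is the union bound over doubling levels. You justify that it is horizon-free by ``the total count is at most $KH$''. But that only bounds the number of dyadic levels by $\lfloor\log_2(KH)\rfloor+1$. This reintroduces exactly the $\log H$ the lemma must avoid, and it is not bounded by $L_N=2\log\bigl(\frac{504S^2K\log S}{\upsilon\delta\log\frac1\delta}\bigr)$. Restricting to ``the levels that matter'' is not legitimate either unless that index set is deterministic. The fixed-$n$ i.i.d.\ stack argument requires a union over a data-independent set of $(s,a,n)$.

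The missing idea is two-sided control of the frozen counts relative to the deterministic MDP quantity $U(s,a)$ of \Cref{def:umax}.
\begin{itemize}
\item The upper end is Markov's inequality: $\Ntotal^{K+1}(s,a)\le KU(s,a)/\delta$.
\item The lower end is the substantive part and comes from the exploration design. A pair leaves $\cO$ only after $N_{\known}$ effective exploration calls, and \Cref{lemma:exp1} shows that at that moment $N^{k_0(s,a)}(s,a)\ge\frac{\upsilon\log\frac1\delta}{504S^2\log S}U(s,a)$.
\end{itemize}
On the resulting event $\cE_{\mathsf{count}}$, every frozen count lies in the deterministic dyadic window $\cN(s,a)$ between these two multiples of $U(s,a)$. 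This window has at most $L_N$ elements because $U(s,a)$ cancels in the ratio, and only then is the union bound $H$-free.

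This also fixes your probability accounting. The failure probability of $\cE_{\mathsf{count}}$ is the $(2S^3A+2SA)\delta$ term in the statement. You instead assign that term to the coordinatewise events. Those are indexed by $(s,a,s')$ and level, so they cost $S^2AL_N\delta$, which is already absorbed into $\frac{6S^2AL_N}{\epsilon^2}\delta$ since $\frac{4}{\epsilon^2}+1\le\frac{6}{\epsilon^2}$.

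Without the count event, your argument gives optimism only with a failure probability, or a confidence width, that grows with $\log(KH)$.
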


\begin{proof}
Let $\cE=\cE_{\mathsf{1step}}$ be the event from
\Cref{lemma:one_step_conc} with failure probability  at most $\left(\frac{6S^2AL_N}{\epsilon^2}+2S^3 A + 2SA \right) \delta$.  Assume $\cE$ holds.
We prove by backward induction on $h$ that: $Q_h^k(s,a)\ge Q_h^*(s,a)$ for all $(s,a)$ and $h\in [H_1]$.
The terminal condition holds because $V_{H_1+1}^k=V_{H_1+1}^*\equiv0$ by convention.

Assume the value claim holds at layer $h+1$ and fix $k\in[K]$, $s\in\cS$, and $a\in\cA$.
If $(s,a)\in\cO^k$, then \Cref{alg:plann} sets $Q_h^k(s,a)=1$, while $Q_h^*(s,a)\le1$ because the total reward is bounded by one.
If $Q_h^k(s,a)\ge1$, the same bound on $Q_h^*$ gives optimism.
It remains to consider a learned state-action pair $(s,a)\notin\cO^k$ with $Q_h^k(s,a)<1$.
Then the clipping in \Cref{alg:plann} is inactive, $N^k(s,a)>0$, and
\begin{align*}
Q_h^k(s,a) &  = r(s,a) + \hat{P}^k_{s,a}V_{h+1}^k
+ 100 b\left(\hat{P}^k_{s,a}, V_{h+1}^k, N^k(s,a)\right).
\end{align*}
The planning recursion and the reward normalization give $V_{h+1}^k,V_{h+1}^*\in[0,1]^S$, and the induction hypothesis gives $V_{h+1}^k\ge V_{h+1}^*$.
Thus \Cref{lemma:mono} applies with $v=V_{h+1}^k$, $u=V_{h+1}^*$,
$p=\hat P_{s,a}^k$, and sample size $N^k(s,a)$, yielding
\begin{align*}
Q_h^k(s,a)
&\ge r(s,a) + \hat{P}^k_{s,a} V_{h+1}^*
+ \ud{b}\left(\hat{P}_{s,a}^k, V_{h+1}^* , N^k(s,a)\right).
\end{align*}
Since the truncated horizon is chosen so that $H_1+1\le H$, the index $h+1$ belongs to $[H]$ for every $h\in[H_1]$.
Applying \Cref{eq:underline_optimism_explicit} on $\cE$ at layer $h+1$ gives
\begin{align*}
 \hat{P}^k_{s,a} V_{h+1}^*
 + \ud{b}\left(\hat{P}_{s,a}^k, V_{h+1}^* , N^k(s,a)\right)
 \geq P_{s,a}V_{h+1}^*.
\end{align*}
Combining this with the preceding display,
\begin{align*}
 Q_h^k(s,a) \geq r(s,a) + P_{s,a}V_{h+1}^* = Q_h^*(s,a).
\end{align*}
Taking the maximum over the action set gives $V_h^k(s)=\max_a Q_h^k(s,a)\ge \max_a Q_h^*(s,a)=V_h^*(s)$, completing the induction.
\end{proof}

\subsection{Regret Decomposition}\label{sec:reg_decomposition}

For episode $k$, define the exploration event
\begin{align*}
    E^k := \{
        \exists h \le H_1 :
        (s_h^k,a_h^k) \in O^k
    \}.
\end{align*}
On this event, \Cref{alg:new} stops following the planning policy and calls \Cref{alg:explore}.
Define also the doubling event
\begin{align*}
    U^k := \left\{ \exists (s,a) \notin \cO^k,  \frac{N^{k+1}(s,a)}{N^k(s,a)}\geq 2
    \right\}.
\end{align*}
Let the stopping time
\begin{align*}
    \tau_k := \inf\left\{
        h \le H_1 :
        (s_h^k,a_h^k)\in O^k
       \text { or }  \exists (s,a)\notin \cO^k, \frac{\Nhtotal^k(s,a)}{N^k(s,a)}\geq 2
    \right\},
\end{align*}
with the convention $\tau_k=H_1+1$ if neither event occurs.
Here $\Nhtotal^k(s,a)$ denote the total visit count of $(s,a)$ before the $h$-th step in the $k$-th episode.
Define the stopped good-step indicator $
    I_h^k := \bbI[h< \tau_k].$
Thus the regret analysis only checks the trajectory before either an unlearned pair is reached or the doubling schedule is violated.

On the other hand, we define the stopping cost to measure the regret due to early stop:
\begin{align}
    L_{\stoplabel} := \sum_{k=1}^K  \bbI[E^k] +
    \sum_{k=1}^K  \bbI[U^k].\label{def:Lstop}
\end{align}

For each $(s,a)$, let $k_0(s,a)$ be the first episode $k$ such that $(s,a)\notin \cO^k$. If such $k$ does not exist, we set $k_0(s,a) = K+1$.
Let $\cE_{\mathsf{floor}}$ be the event in \Cref{lemma:exp1}.  Its
failure probability is at most $(2S^3A+SA)\delta$, and on it every removed
pair, with first learned episode $k_0(s,a)$, satisfies
\begin{align}
N^{k_0(s,a)}(s,a)\geq\frac{\upsilon\log\frac1\delta}{504S^2\log S}U(s,a).\label{eq:k_ini}
\end{align}

\begin{lemma}[Inverse-count sums]
\label{lem:inverse-count-section2}
There exists an event $\cE_{\mathsf{count}}$ satisfying $    \bbP(\cE_{\mathsf{count}}^\complement)
    \leq(2S^3A+2SA)\delta$
on which the following bounds hold simultaneously for every collection of
nonnegative predictable weights $w_h^k\in[0,1]$:
\begin{align*}
    \sum_{k=1}^K \sum_{h=1}^{H_1} I_h^k \frac{1}{N^k(s_h^k,a_h^k)} \le S A L_N ,
\end{align*}
and
\begin{align*}
    \sum_{k=1}^K \sum_{h=1}^{H_1}
    I_h^k
    \sqrt{\frac{w_h^k}{N^k(s_h^k,a_h^k)}} \le \sqrt{
        S A L_N
       \sum_{k=1}^K \sum_{h=1}^{H_1} I_h^k w_h^k
    } .
\end{align*}
\end{lemma}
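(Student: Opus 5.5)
The plan is to treat each state-action pair separately and reduce the first bound to a count of doubling epochs. The second bound then follows from the first by Cauchy--Schwarz: writing $I_h^k\sqrt{w_h^k/N^k} = \sqrt{I_h^k w_h^k}\cdot\sqrt{I_h^k/N^k}$ and applying Cauchy--Schwarz over $(k,h)$ gives $\sqrt{\sum I_h^k w_h^k}\cdot\sqrt{\sum I_h^k/N^k}$. So everything rests on the first display, and predictability of $w_h^k$ plays no role beyond measurability.

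For the first bound, fix $(s,a)$. The term $I_h^k/N^k(s_h^k,a_h^k)$ is nonzero only if $h<\tau_k$. In that case $(s_h^k,a_h^k)\notin\cO^k$, so only episodes $k\ge k_0(s,a)$ contribute. Moreover, before the stopping time every learned pair satisfies $\Nhtotal^k(s,a)<2N^k(s,a)$.

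Now group the contributing visits by the value of the frozen count. Fix an epoch in which $N^k(s,a)=2^j$, i.e. episodes where $2^j\le \Ntotal^k(s,a)<2^{j+1}$. By the stopping rule, every counted visit within such an episode occurs while the running total is below $2^{j+1}$. Visits in later episodes of the same epoch are also at running totals below $2^{j+1}$, since the frozen count would otherwise have changed. Hence the number of counted visits with $N=2^j$ is at most $2^{j+1}-2^j=2^j$, and their total contribution is at most $1$. Here I have to be careful: the visit that triggers the stop has $h=\tau_k$ and is excluded by $I_h^k=\bbI[h<\tau_k]$.

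Summing over epochs gives
\[
\sum_{k,h} I_h^k \bbI[(s_h^k,a_h^k)=(s,a)]/N^k(s,a) \le 1+\log_2\frac{\Ntotal^{K+1}(s,a)}{N^{k_0(s,a)}(s,a)}.
\]
The main obstacle is bounding this log-ratio independently of $H$. I would use \Cref{eq:k_ini} on $\cE_{\mathsf{floor}}$, which gives $N^{k_0}(s,a)\ge \frac{\upsilon\log\frac1\delta}{504S^2\log S}U(s,a)$.

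What remains is a high-probability bound $\Ntotal^{K+1}(s,a)\lesssim K\,U(s,a)\cdot$(const) with failure probability absorbing the $2S^3A+2SA$ term. For this I would apply \Cref{lem:martingale_conc_mean} to the per-episode visit counts $X_k$, normalized by $l=U(s,a)$. Each episode's expected visit count is at most $U(s,a)$ by \Cref{def:umax}, whatever the (history-dependent but per-episode Markov-in-segments) policy is. This requires checking that the concatenation of planning, exploration and random suffix is still dominated by $U$; at worst I would take a maximum over pieces and lose a factor of $3$.

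The difficulty is that $X_k\le H$ is not bounded by $U$. So I would instead split each episode into segments of length comparable to the hitting structure, or use a tail bound showing that visits have geometric tails with scale $U$, applying \Cref{lem:martingale_conc_mean} with $l$ of order $U\log\frac1\delta$ after truncation. This gives $\Ntotal^{K+1}\le 3KU+O(U\log\frac1\delta)$.

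Combining the two bounds, the ratio is at most $\frac{504S^2K\log S\cdot O(1)}{\upsilon\log\frac1\delta}$ up to constants, and $\log_2$ of it, plus $1$, is at most $L_N$ as defined in \Cref{eq:def_L_N}. The factor $2$ there and the $1/\delta$ inside the logarithm leave slack for the constants. Summing over the $SA$ pairs gives $SAL_N$. A union bound over pairs for the concentration and floor events yields the stated failure probability.
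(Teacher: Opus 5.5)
Your proposal is correct, and nearly every step matches the paper's proof: the per-pair dyadic-block count (at most $n$ stopped-good visits at frozen level $n$, so each level contributes at most $1$), the floor $N^{k_0(s,a)}(s,a)\ge\frac{\upsilon\log\frac1\delta}{504S^2\log S}U(s,a)$ from $\cE_{\mathsf{floor}}$, the level count $1+\log_2(\cdot)\le L_N$, and the pathwise Cauchy--Schwarz step for the weighted sum.

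The one real difference is how you upper-bound $\Ntotal^{K+1}(s,a)$. The paper needs no concentration there. Each episode's behavior is some history-dependent policy, so $\bbE[Y_k(s,a)\mid\cF_{k-1}]\le U(s,a)$. No factor $3$ is lost, because the maximal expected visit count over history-dependent policies is attained by a Markov policy. Hence $\bbE[\Ntotal^{K+1}(s,a)]\le KU(s,a)$, and Markov's inequality gives $\Ntotal^{K+1}(s,a)\le KU(s,a)/\delta$ with probability $1-\delta$ per pair. The $\frac1\delta$ inside the logarithm defining $L_N$ is there precisely to absorb this, so the ``main obstacle'' you identify disappears for free.

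Your route also works and gives a sharper raw-count bound. The per-episode visit count has geometric tails at scale $U(s,a)$; note that $U(s,a)\ge1$, by starting at $s$ and playing $a$. This follows by applying Markov's inequality to the conditional expected remaining visits at successive visit times. You then truncate and apply \Cref{lem:martingale_conc_mean}.

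Two details need fixing. First, the truncation level must be of order $U\log\frac{K}{\delta}$ for a union bound over the $K$ episodes, and the lemma multiplies it by $\log\frac1\delta$. The result is therefore $3KU+O\bigl(U\log\frac K\delta\log\frac1\delta\bigr)$, not $3KU+O(U\log\frac1\delta)$. This is harmless, since the resulting ratio is still far below $\frac{504S^2K\log S}{\upsilon\delta\log\frac1\delta}$. Second, you should split $\delta$ between the truncation event and the martingale event so the failure probability stays at $SA\delta$ for this part.

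What your route buys (no $\frac1\delta$ in the count ratio) is wasted here, because $L_N$ is fixed by the statement. The paper's one-line Markov argument is the simpler choice.
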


\begin{proof} We assume the event $\cE_{\mathsf{floor}}$. 
For each episode $k$, define
\begin{align*}
    X_k(s,a) := \sum_{h=1}^{H_1} I_h^k \bbI[(s_h^k,a_h^k)=(s,a)],
    \quad
    Y_k(s,a) := \sum_{h=1}^{H} \bbI[(s_h^k,a_h^k)=(s,a)].
\end{align*}
The raw counts, and only the raw counts, satisfy $
\Ntotal^{k+1}(s,a)
=\Ntotal^k(s,a)+Y_k(s,a).$
If $I_h^k=1$, then
$N^k(s_h^k,a_h^k)$ is a positive power of two.  Thus the dummy value $1$
for unlearned state-action pairs never contributes to either sum in the statement.

Let $\cF_{k-1}$ denote the history before episode $k$.  Conditionally
on this history, the behavior in that episode is a policy, and hence
$\bbE[Y_k(s,a)\mid\cF_{k-1}]\leq U(s,a)$.  Consequently,
\begin{align*}
\bbE\left[\Ntotal^{K+1}(s,a)\right]\leq KU(s,a).
\end{align*}
For each fixed pair, Markov's inequality gives
$\Ntotal^{K+1}(s,a)\leq \frac{KU(s,a)}{\delta}$ except on an event of
probability at most $\delta$. A union bound over the exactly
$SA$ pairs therefore gives an event $\cE_{\mathsf{raw}}$ with $
\bbP(\cE_{\mathsf{raw}}^\complement)\leq SA\delta$ and $
\qquad
\Ntotal^{K+1}(s,a)\leq\frac K\delta U(s,a)$ for every $(s,a)$. 
Set $\cE_{\mathsf{count}}:=
\cE_{\mathsf{floor}}\cap\cE_{\mathsf{raw}}$.  Then $
\bbP(\cE_{\mathsf{count}}^\complement)
\leq(2S^3A+2SA)\delta.$

Work on $\cE_{\mathsf{count}}$ and fix $(s,a)$.  If the pair is never
removed from $\cO$, then
$X_k(s,a)=0$ for every $k$.  Hence
\begin{align}
\frac{N^{K+1}(s,a)}{N^{k_0(s,a)}(s,a)}
\leq\frac{504S^2K\log S}{\upsilon\delta\log\frac1\delta}.
\label{eq:inverse-count-ratio}
\end{align}

For a dyadic level $n$, consider all episodes
$k\in\{k_0(s,a),\ldots,K\}$ with
$N^k(s,a)=n$.  At the beginning of each such episode, $
n\leq \Ntotal^k(s,a)<2n.$ 
While $I_h^k=1$, the stopping rule gives
$\Nhtotal^k(s,a)<2n$.  Since the raw count is nondecreasing over
steps and episodes, the total number of stopped-good visits to $(s,a)$ over
all episodes having frozen level $n$ is therefore at most $n$.  

Let $\cD(s,a)$ be the set of frozen levels attained from episode
$k_0(s,a)$ through episode $K$.  The preceding block bound gives
\begin{align*}
\sum_{k=k_0(s,a)}^K\frac{X_k(s,a)}{N^k(s,a)}
&=\sum_{n\in\cD(s,a)}\frac1n
\sum_{\substack{k_0(s,a)\leq k\leq K:\\N^k(s,a)=n}}X_k(s,a)\leq |\cD(s,a)|.
\end{align*}
The levels are powers of two and may be skipped.  Let
$R:=\frac{504S^2K\log S}{\upsilon\delta\log\frac1\delta}$.  Since
$\delta\log\frac1\delta\leq\ee^{-1}$ and
$\frac1\upsilon\geq20S\log S$, $
R\geq10080\ee S^3K(\log S)^2\geq\ee^4.$ 
Thus \Cref{eq:inverse-count-ratio} implies
\begin{align*}
|\cD(s,a)|
&\leq1+\log_2\frac{N^{K+1}(s,a)}{N^{k_0(s,a)}(s,a)}
\leq1+\log_2R\leq1+\frac32\log R
\leq2\log R=L_N.
\end{align*}
Here the last inequality uses $\log R\geq4$.  Therefore, for every
pair,
\begin{align*}
\sum_{k=1}^K\sum_{h=1}^{H_1}
I_h^k\frac{\bbI[(s_h^k,a_h^k)=(s,a)]}{N^k(s_h^k,a_h^k)}
\leq L_N.
\end{align*}
Summing over all $(s,a)\in\SA$ gives $
   \sum_{k=1}^K \sum_{h=1}^{H_1}
    I_h^k
    \frac{1}{N^k(s_h^k,a_h^k)} \le S A L_N .$ 
For the second inequality, apply Cauchy's inequality:
\begin{align*}
   \sum_{k=1}^K \sum_{h=1}^{H_1}
    I_h^k
    \sqrt{
        \frac{w_h^k}{N^k(s_h^k,a_h^k)}
    } &= \sum_{k=1}^K \sum_{h=1}^{H_1}
    \sqrt{I_h^k w_h^k}\,
    \sqrt{
        \frac{I_h^k}{N^k(s_h^k,a_h^k)}
    }\\
    &\le \sqrt{
       \sum_{k=1}^K \sum_{h=1}^{H_1} I_h^k w_h^k
    }
    \sqrt{
       \sum_{k=1}^K \sum_{h=1}^{H_1}
        \frac{I_h^k}{N^k(s_h^k,a_h^k)}
    }\\
    &\le \sqrt{
        S A L_N
       \sum_{k=1}^K \sum_{h=1}^{H_1} I_h^k w_h^k
    } .
\end{align*}  This
proves the lemma.
\end{proof}

Recall that \Cref{alg:plann} computes
\begin{align*}
    Q_h^k(s,a) =
    \begin{cases}
    \min\left\{
        1,\,
        r(s,a)+\wh{P}^k_{s,a}V_{h+1}^k +100 b\left(\wh{P}^k_{s,a},V_{h+1}^k,N^k(s,a)\right)
    \right\},
    & (s,a)\notin O^k,\\[0.5em]
    1,
    & (s,a)\in O^k.
    \end{cases}
\end{align*}
Define
\begin{align*}
    \beta_h^k :={}&
    100\,b\left(\wh{P}_h^k,V_{h+1}^k,N_h^k\right) +
    b\left(\wh{P}_h^k,V_{h+1}^*,N_h^k\right) +
    2\sqrt{
        \frac{
        S\bbV\left(\wh{P}_h^k,V_{h+1}^k-V_{h+1}^*\right)\log \frac{1}{\delta}
        }{
        N_h^k
        }
    } +
    10\frac{S\log \frac{1}{\delta}}{N_h^k}.
\end{align*}

\begin{lemma}
\label{lem:local-bellman-section2} Assume the event $\cE_{\mathsf{1step}}$ from \Cref{lemma:one_step_conc}.  It then holds that: for every $k\in[K]$ and $h\in[H_1]$ with $I_h^k=1$,
\begin{align*}
    V_h^k(s_h^k) -
    r_h^k -
    P_h^k V_{h+1}^k \le \beta_h^k .
\end{align*}
In particular, this holds with probability at least
$1-\left(\frac{6S^2AL_N}{\epsilon^2}+4S^3A+2SA\right)\delta$.
\end{lemma}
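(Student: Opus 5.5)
We bound the local Bellman error by splitting $V_{h+1}^k = V_{h+1}^* + (V_{h+1}^k - V_{h+1}^*)$ and treating the two pieces with different concentration tools.

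\textbf{Reduction.} Fix $k$ and $h$ with $I_h^k=1$. Then $(s_h^k,a_h^k)\notin\cO^k$, since otherwise $\tau_k\le h$, and $N_h^k$ is a genuine power of two. Because $a_h^k=\pi_h^k(s_h^k)$ and the clipping at $1$ only decreases $Q$, we get
\begin{align*}
V_h^k(s_h^k) \le r_h^k + \wh P_h^k V_{h+1}^k + 100\,b\left(\wh P_h^k,V_{h+1}^k,N_h^k\right).
\end{align*}
It therefore suffices to show
\begin{align*}
\left(\wh P_h^k - P_h^k\right)V_{h+1}^k \le \beta_h^k - 100\,b\left(\wh P_h^k,V_{h+1}^k,N_h^k\right).
\end{align*}
Using the split above, this reduces to bounding two terms:
\begin{align*}
\left(\wh P_h^k - P_h^k\right)V_{h+1}^* \qquad\text{and}\qquad \left(\wh P_h^k - P_h^k\right)\left(V_{h+1}^k - V_{h+1}^*\right).
\end{align*}

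\textbf{First term.} I would invoke the two-sided version of the cutting concentration contained in $\cE_{\mathsf{1step}}$ from \Cref{lemma:one_step_conc}. It applies at layer $h+1\le H_1+1\le H$ with the frozen sample size $N_h^k$. The argument has three parts:
\begin{itemize}
\item Decompose $V_{h+1}^*$ into $\CutProj_\epsilon(\wh P_h^k,V_{h+1}^*)$ plus a residual.
\item The cut part ranges over a class of size $O(S/\epsilon^2)$, and the frozen models take only $L_N$ doubling levels. A union bound over this class gives a Bernstein-type bound with variance $\bbV(\wh P_h^k,\cdot)$ and no $\log H$ factor.
\item The residual is handled coordinatewise and controlled by $\sqrt{S\bbV_{5\epsilon}\log\frac1\delta/N}$ via \Cref{lemma:lbb}.
\end{itemize}
The result is $\left|\left(\wh P_h^k - P_h^k\right)V_{h+1}^*\right| \le b\left(\wh P_h^k,V_{h+1}^*,N_h^k\right)$. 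If the event is stated only one-sidedly (as in \Cref{eq:underline_optimism_explicit}), I would rerun the symmetric Bernstein argument to get the other side. This symmetrization is where the additional $2S^3A\delta$ in the failure probability comes from, compared with \Cref{lemma:optimism}.

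\textbf{Second term.} The difference $g:=V_{h+1}^k-V_{h+1}^*\in[0,1]^S$ is data dependent, so no union bound over it is available. Instead I use coordinatewise concentration, again part of $\cE_{\mathsf{1step}}$ and obtained from \Cref{lemma:bennet} with the frozen-count union bound:
\begin{align*}
\left|\wh P_{h,s}^k - P_{h,s}^k\right| \le \sqrt{\frac{2\wh P_{h,s}^k\log\frac1\delta}{N_h^k}} + \frac{c\log\frac1\delta}{N_h^k}.
\end{align*}
Since $(\wh P_h^k - P_h^k)\mathbf{1}=0$, we may write
\begin{align*}
\left(\wh P_h^k - P_h^k\right)g = \sum_{s}\left(\wh P_{h,s}^k - P_{h,s}^k\right)\left(g(s)-\wh P_h^k g\right).
\end{align*}
Applying the coordinatewise bound and then Cauchy--Schwarz over the $S$ coordinates gives
\begin{align*}
\left(\wh P_h^k - P_h^k\right)g \le 2\sqrt{\frac{S\,\bbV\left(\wh P_h^k,g\right)\log\frac1\delta}{N_h^k}} + \frac{cS\log\frac1\delta}{N_h^k}.
\end{align*}
These two pieces are exactly the third and fourth terms of $\beta_h^k$.

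\textbf{Main obstacle.} The delicate step is the first term. Its cutting projection is computed with the empirical row $\wh P_h^k$, so I must verify that the union-bounded class stays independent of $H$ and that the event holds simultaneously for all $k$. The second point relies on the doubling rule: $\wh P_h^k$ is built from the first $N^k$ i.i.d. successor samples, so one union bound per dyadic level suffices. I also need to track constants so that the lower-order term fits within $10S\log\frac1\delta/N_h^k$. Adding the two bounds to the reduction inequality gives $V_h^k(s_h^k)-r_h^k-P_h^kV_{h+1}^k\le\beta_h^k$. The failure probability follows by adding the failure probabilities of the events used.
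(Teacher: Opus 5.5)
Your proposal is correct and follows essentially the same route as the paper's proof. Both use the greedy-action reduction and the split $V_{h+1}^k = V_{h+1}^* + D_{h+1}^k$. Both bound $|(\wh P_h^k - P_h^k)V_{h+1}^*| \le \ud{b} \le b$ via \Cref{prop:hfree_concentration_optimism} and \Cref{coro1}, and both handle the data-dependent gap with the uniform coordinatewise inequality \Cref{eq:any_u_coordinate_conc}, centered at $\wh P_h^k D_{h+1}^k$ and closed by Cauchy--Schwarz.

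Two small corrections. First, \Cref{eq:underline_optimism_explicit} is already two-sided, so no symmetrization step is needed. The extra $2S^3A\delta$ relative to \Cref{lemma:optimism} is not a symmetrization cost; it is simply the stated failure probability of $\cE_{\mathsf{1step}}$ in \Cref{lemma:one_step_conc}, which the paper itself books inconsistently across lemmas. Second, you assert $g = D_{h+1}^k \in [0,1]^S$ without justification. This needs \Cref{lemma:optimism} on $\cE_{\mathsf{1step}}$, or the terminal convention $V_{H_1+1}^k = V_{H_1+1}^* \equiv 0$ when $h = H_1$, so that $\sum_{s'}|g(s') - \wh P_h^k g| \le S$ holds.
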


\begin{proof}

Fix $k\in[K]$ and $h\in[H_1]$ with $I_h^k=1$.  Since $h<\tau_k$, the encountered state-action pair is learned, $(s_h^k,a_h^k)\notin\cO^k$, and $N_h^k>0$.  Moreover, \Cref{alg:new,alg:plann} give
\begin{align*}
    a_h^k=\pi_h^k(s_h^k)\in\argmax_a Q_h^k(s_h^k,a),
    \qquad
    V_h^k(s_h^k)=Q_h^k(s_h^k,a_h^k).
\end{align*}
Therefore, 
\begin{align*}
    V_h^k(s_h^k)
    \le r_h^k+
    \wh{P}_h^kV_{h+1}^k+
    100\,b\left(\wh{P}_h^k,V_{h+1}^k,N_h^k\right).
\end{align*}
Thus, writing $D_{h+1}^k:=V_{h+1}^k-V_{h+1}^*$,
\begin{align*}
    V_h^k(s_h^k)-r_h^k-P_h^kV_{h+1}^k
    \le {}&100\,b\left(\wh{P}_h^k,V_{h+1}^k,N_h^k\right) +\left(\wh{P}_h^k-P_h^k\right)V_{h+1}^*
    +\left(\wh{P}_h^k-P_h^k\right)D_{h+1}^k .
\end{align*}

The planning recursion and reward normalization give $V_{h+1}^k,V_{h+1}^*\in[0,1]^S$.  If $h<H_1$, \Cref{lemma:optimism} gives $V_{h+1}^k\ge V_{h+1}^*$ on $\cE_{\mathsf{1step}}$; if $h=H_1$, the same inequality follows from $V_{H_1+1}^k=V_{H_1+1}^*\equiv0$.  Hence $D_{h+1}^k\in[0,1]^S$.  Also $h+1\le H_1+1\le H$, so \Cref{prop:hfree_concentration_optimism} and Corollary~\ref{coro1} yield
\begin{align*}
    \left(\wh{P}_h^k-P_h^k\right)V_{h+1}^*
    &\le \left|\left(\wh{P}_h^k-P_h^k\right)V_{h+1}^*\right| \le \ud{b}\left(\wh{P}_h^k,V_{h+1}^*,N_h^k\right)
    \le b\left(\wh{P}_h^k,V_{h+1}^*,N_h^k\right).
\end{align*}
Finally, the uniform inequality \Cref{eq:any_u_coordinate_conc} may be instantiated with the data-dependent vector $u=D_{h+1}^k$ without a union bound, and gives
\begin{align*}
    \left(\wh{P}_h^k-P_h^k\right)D_{h+1}^k
    \le {}&\left|\left(\wh{P}_h^k-P_h^k\right)D_{h+1}^k\right| 
    \le 2\sqrt{
        \frac{
            S\bbV\left(\wh{P}_h^k,D_{h+1}^k\right)\log\frac1\delta
        }{
            N_h^k
        }
    }
    +10\frac{S\log\frac1\delta}{N_h^k}.
\end{align*}
Substituting the last two inequalities into the model-error decomposition, and recalling the  the definition of $\beta_h^k$, we finish the proof.
\end{proof}

\begin{lemma}
\label{lem:stopped-regret-section2}
 Define
\begin{align*}
    B:=\sum_{k=1}^K\sum_{h=1}^{H_1}\beta_h^kI_h^k,
    \qquad
    L_1:=\sum_{k=1}^K\sum_{h=1}^{H_1}
    (P_h^k-\II_{s_{h+1}^k})V_{h+1}^kI_h^k.
\end{align*}
On the same event $\cE_{\mathsf{1step}}$ as in \Cref{lemma:one_step_conc},
\begin{align*}
    \Regret_{H_1}(K)
    \le L_1+B+L_{\stoplabel}
    +\sum_{k=1}^K\left(\sum_{h=1}^{H_1}r_h^k-V_1^{\pi^k}(s_1^k)\right).
\end{align*}
Consequently, the inequality holds with probability at least
$1-\left(\frac{6S^2AL_N}{\epsilon^2}+4S^3A+2SA\right)\delta$.
\end{lemma}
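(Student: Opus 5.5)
We will prove the bound episode by episode on $\cE_{\mathsf{1step}}$ and then sum over $k$. Fix $k$. By \Cref{lemma:optimism}, $V_1^k(s_1^k)\ge V_1^*(s_1^k)$. The truncated-regret summand is therefore at most
\begin{align*}
V_1^k(s_1^k)-V_1^{\pi^k}(s_1^k)
=\Big(V_1^k(s_1^k)-\sum_{h=1}^{H_1}r_h^k\Big)+\Big(\sum_{h=1}^{H_1}r_h^k-V_1^{\pi^k}(s_1^k)\Big).
\end{align*}
The second bracket is exactly the last term in the statement, so it remains to control $V_1^k(s_1^k)-\sum_h r_h^k$.

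\textbf{Step 1: telescoping before the stopping time.} We telescope along the realized trajectory up to $\tau_k$:
\begin{align*}
V_1^k(s_1^k)-\sum_{h=1}^{H_1}r_h^k
\le \sum_{h=1}^{H_1}I_h^k\big(V_h^k(s_h^k)-r_h^k-V_{h+1}^k(s_{h+1}^k)\big)+\bbI[\tau_k\le H_1]\,V_{\tau_k}^k(s_{\tau_k}^k).
\end{align*}
This uses $r\ge 0$ to drop the rewards after $\tau_k$. When $\tau_k=H_1+1$ the boundary term is $V_{H_1+1}^k\equiv 0$.

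For each $h$ with $I_h^k=1$ we write
\begin{align*}
V_h^k(s_h^k)-r_h^k-V_{h+1}^k(s_{h+1}^k)=\big(V_h^k(s_h^k)-r_h^k-P_h^kV_{h+1}^k\big)+(P_h^k-\II_{s_{h+1}^k})V_{h+1}^k .
\end{align*}
\Cref{lem:local-bellman-section2} bounds the first part by $\beta_h^k$. The second part, summed over $h$ and $k$, is precisely $L_1$.

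\textbf{Step 2: the boundary term.} Since $V^k\le 1$ (clipping in \Cref{alg:plann}), the boundary term is at most $\bbI[\tau_k\le H_1]$. We then check that $\tau_k\le H_1$ implies $E^k\cup U^k$:
\begin{itemize}
\item If the stop occurs because $(s_h^k,a_h^k)\in\cO^k$, then $E^k$ holds by definition.
\item If the stop occurs because some learned pair has $\Nhtotal^k(s,a)\ge 2N^k(s,a)$, then $\Ntotal^{k+1}(s,a)\ge 2N^k(s,a)$. The doubling rule then gives $N^{k+1}(s,a)\ge 2N^k(s,a)$. Here we must confirm that the pair is still outside $\cO^{k+1}$, which holds because pairs only leave $\cO$. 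Hence $U^k$ occurs.
\end{itemize}
So $\bbI[\tau_k\le H_1]\le \bbI[E^k]+\bbI[U^k]$, and summing over $k$ gives $L_{\stoplabel}$.

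\textbf{Obstacle and bookkeeping.} The main point of care is the telescoping identity at the stopping boundary. Step $\tau_k$ itself is excluded from the sum because $I^k_{\tau_k}=0$, and its leftover $V_{\tau_k}^k(s_{\tau_k}^k)$ is absorbed by the stop indicator rather than by $\beta$. Everything else follows from the lemmas already proved.

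The probability claim follows by intersecting the events used. \Cref{lem:local-bellman-section2} and \Cref{lemma:optimism} both hold on $\cE_{\mathsf{1step}}$. The stated failure probability is taken from \Cref{lem:local-bellman-section2}.
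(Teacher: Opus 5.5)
Your proof is correct and matches the paper's argument. Both use optimism, telescope the stopped Bellman residuals along the realized trajectory, bound each residual by $\beta_h^k$ plus the martingale increment $(P_h^k-\II_{s_{h+1}^k})V_{h+1}^k$, and charge the single leftover boundary value to $\bbI[E^k]+\bbI[U^k]$. The paper writes the boundary through the indicator drops $I_h^k-I_{h+1}^k$ (its term $C_k$) rather than through $\tau_k$ directly, and your explicit check that a doubling stop forces $N^{k+1}(s,a)\ge 2N^k(s,a)$ spells out a step the paper only asserts.
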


\begin{proof}
Note that on $\cE_{\mathsf{1step}}$, both \Cref{lem:local-bellman-section2} and \Cref{lemma:optimism} hold.  For each episode set $I_{H_1+1}^k:=0$ and recall $V_{H_1+1}^k\equiv0$.  The indicators are nonincreasing in $h$.  Hence, for every $h\in[H_1]$, the local Bellman inequality when $I_h^k=1$ (and the trivial identity when $I_h^k=0$), together with
\begin{align*}
    P_h^kV_{h+1}^kI_h^k
    ={}&(P_h^k-\II_{s_{h+1}^k})V_{h+1}^kI_h^k
    +V_{h+1}^k(s_{h+1}^k)I_{h+1}^k +V_{h+1}^k(s_{h+1}^k)(I_h^k-I_{h+1}^k),
\end{align*}
gives
\begin{align*}
    V_h^k(s_h^k)I_h^k
    -V_{h+1}^k(s_{h+1}^k)I_{h+1}^k
    \le {}&r_h^kI_h^k
    +(P_h^k-\II_{s_{h+1}^k})V_{h+1}^kI_h^k
    +\beta_h^kI_h^k +V_{h+1}^k(s_{h+1}^k)(I_h^k-I_{h+1}^k).
\end{align*}
Summing over $h$, we have that for every $k$,
\begin{align*}
    V_1^k(s_1^k)I_1^k
    \le {}&\sum_{h=1}^{H_1}r_h^kI_h^k
    +\sum_{h=1}^{H_1}(P_h^k-\II_{s_{h+1}^k})V_{h+1}^kI_h^k
    +\sum_{h=1}^{H_1}\beta_h^kI_h^k \\
    &+\sum_{h=1}^{H_1}V_{h+1}^k(s_{h+1}^k)(I_h^k-I_{h+1}^k).
\end{align*}

By optimism, adding and subtracting the full realized reward sum in episode $k$, and then using the preceding telescoping inequality, gives
\begin{align*}
    V_1^*(s_1^k)-V_1^{\pi^k}(s_1^k)
    \le {}&\sum_{h=1}^{H_1}(P_h^k-\II_{s_{h+1}^k})V_{h+1}^kI_h^k +\sum_{h=1}^{H_1}\beta_h^kI_h^k+C_k
    +\left(\sum_{h=1}^{H_1}r_h^k-V_1^{\pi^k}(s_1^k)\right),
\end{align*}
where we use that 
   $ -\sum_{h=1}^{H_1}r_h^k(1-I_h^k)\le 0$
and define
\begin{align*}
    C_k:=V_1^k(s_1^k)(1-I_1^k)
    +\sum_{h=1}^{H_1}V_{h+1}^k(s_{h+1}^k)(I_h^k-I_{h+1}^k).
\end{align*}
These two endpoint terms require only one stopping charge.  Indeed, if $\tau_k=H_1+1$, then $C_k=0$ because the only terminal indicator drop is multiplied by $V_{H_1+1}^k=0$.  If $\tau_k=1$, then $C_k=V_1^k(s_1^k)\le1$.  If $2\le\tau_k\le H_1$, the unique nonzero drop is at $h=\tau_k-1$, and $C_k=V_{\tau_k}^k(s_{\tau_k}^k)\le1$.  Thus $
    C_k\le\bbI[\tau_k\le H_1].$ As a result,
\begin{align*}
    C_k\le\bbI[E^k]+\bbI[U^k].
\end{align*}
Summing the preceding regret inequality over $k$ gives the claim. 
\end{proof}

\subsection{Bound of Bonus Sum and Total Variance}\label{sec:p_totalvar}

\begin{lemma}[Total variance of optimistic values and gaps]
\label{lem:variance-closure-section2}
Assume $S\geq200$, that $H_3$ is a positive integer, and fix
$0<\epsilon<1$ and $\delta\in(0,1)$.  For $h\in[H_1+1]$, define the optimistic gap
$D_h^k:=V_h^k-V_h^*$, and let
\begin{align*}
    \Sigma_V&:=\sum_{k=1}^K\sum_{h=1}^{H_1}
    I_h^k\bbV(P_h^k,V_{h+1}^k),\\
    \Sigma_{V^*}&:=\sum_{k=1}^K\sum_{h=1}^{H_1}
    I_h^k\bbV(P_h^k,V_{h+1}^*),\\
    \Sigma_D&:=\sum_{k=1}^K\sum_{h=1}^{H_1}
    I_h^k\bbV(P_h^k,D_{h+1}^k).
\end{align*}
There exists an event $\cE_{\mathsf{var}}$ satisfying
$\bbP(\cE_{\mathsf{var}}^\complement)\le3\delta$ such that, on
$\cE_{\mathsf{1step}}\cap\cE_{\mathsf{var}}$, simultaneously,
\begin{align*}
    \Sigma_V&\le4K+4B+4L_{\stoplabel}+64\log\frac1\delta,\\
    \Sigma_{V^*}&\le8K+48\log\frac1\delta,\\
    \Sigma_D&\le4B+4L_{\stoplabel}+64\log\frac1\delta.
\end{align*}
Here $B$ and $L_{\stoplabel}$ are the previously defined pathwise
quantities.
\end{lemma}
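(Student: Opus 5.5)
The approach is the standard ``variance of squares'' telescoping argument, applied along the stopped trajectory. The three estimates are proved separately and then combined through a union of three martingale events.

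\textbf{Identity.} For any value sequence $W_h\in[0,1]^S$, write $\bbV(P_h^k,W_{h+1})=P_h^kW_{h+1}^2-(P_h^kW_{h+1})^2$. Then
\begin{align*}
I_h^k\bbV(P_h^k,W_{h+1})={}&I_h^k\bigl(P_h^k-\II_{s_{h+1}^k}\bigr)W_{h+1}^2+I_h^k\bigl(W_{h+1}(s_{h+1}^k)^2-W_h(s_h^k)^2\bigr)\\
&+I_h^k\bigl(W_h(s_h^k)^2-(P_h^kW_{h+1})^2\bigr).
\end{align*}
Summing over $h$ uses the monotonicity of $I_h^k$. The middle term telescopes up to the stopping endpoint and contributes at most $W_1(s_1^k)^2I_1^k+\bbI[\tau_k\le H_1]\le 1+\bbI[E^k]+\bbI[U^k]$ per episode, exactly as the endpoint charge $C_k$ is handled in \Cref{lem:stopped-regret-section2}. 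The last term is bounded by $2\bigl(W_h(s_h^k)-P_h^kW_{h+1}\bigr)^+$, since all quantities lie in $[0,1]$ and $a^2-b^2\le 2(a-b)^+$.

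\textbf{The three choices of $W$.}
\begin{itemize}
\item For $W=V^k$, this local term is at most $2(r_h^k+\beta_h^k)$ by \Cref{lem:local-bellman-section2}. Summed, it gives $2K+2B$ using $\sum_h r_h^k\le 1$.
\item For $W=V^*$, the Bellman equation gives $V_h^*(s_h^k)-P_h^kV_{h+1}^*\le r_h^k$. The local term therefore sums to at most $2K$, and there is no stopping charge issue beyond $K$; note that $I$-stopping only truncates a nonnegative sum.
\item For $W=D^k$, which lies in $[0,1]^S$ by optimism (\Cref{lemma:optimism}), we have
\begin{align*}
D_h^k(s_h^k)-P_h^kD_{h+1}^k=\bigl(V_h^k(s_h^k)-r_h^k-P_h^kV_{h+1}^k\bigr)-\bigl(V_h^*(s_h^k)-r_h^k-P_h^kV_{h+1}^*\bigr)\le\beta_h^k+\bigl(Q_h^*(s_h^k,a_h^k)\text{ gap}\bigr),
\end{align*}
where the second bracket is $\ge 0$ in the sense $V_h^*(s_h^k)\ge Q_h^*(s_h^k,a_h^k)$. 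This gives the local term $\le 2\beta_h^k$, with no $K$ term. The per-episode endpoint for $D$ is $D_1^k(s_1^k)^2I_1^k$. This is where care is needed to avoid an extra $K$; one absorbs it via $D_1^k(s_1^k)\le V_1^k(s_1^k)-V_1^*(s_1^k)$ and the regret-type telescoping, which again yields $B+L_{\stoplabel}$ plus martingale terms.
\end{itemize}

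\textbf{Martingale terms.} The remaining term $\sum I_h^k(P_h^k-\II_{s_{h+1}^k})W_{h+1}^2$ is a martingale difference sequence. Its increments are bounded by $1$, and by \Cref{lem:var_xy} its conditional variance is at most $4I_h^k\bbV(P_h^k,W_{h+1})$. Apply \Cref{lem:linear_freedman} with $\eta=1/8$; the predictable mask $I_h^k$ is handled since $I_h^k$ is determined before step $h$ transitions. This gives a bound $\tfrac12\Sigma_W+8\log\tfrac1\delta$. Rearranging produces $\Sigma_W\le 2(\text{local}+\text{endpoints})+16\log\frac1\delta$, which gives the stated constants $4K+4B+4L_{\stoplabel}+64\log\frac1\delta$ and so on, after tracking factors. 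One Freedman event is used per $W$, giving the $3\delta$ total.

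\textbf{Main obstacle.} The $D$ case is the main obstacle: removing the per-episode $+1$ endpoint so that $\Sigma_D$ has no $K$ term. I would handle it by telescoping $D^k$ itself along the stopped path, rather than bounding $D_1^2\le 1$. The plan is to bound $\sum_k D_1^k(s_1^k)I_1^k$ using the same decomposition as in \Cref{lem:stopped-regret-section2}, via $D_h\le \beta_h+P D_{h+1}$, which yields $B$, a martingale controlled by $\Sigma_D$, and stopping charges. The resulting $\sqrt{\Sigma_D\log}$ term is then absorbed into the left side.
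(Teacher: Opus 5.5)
Your overall architecture is the paper's: square-telescoping, Freedman with $\eta=1/8$, absorption, and one event per sum. Your local bounds for $V^k$ and $D^k$ are also correct. The $V^*$ step, however, has a genuine gap. You claim $V_h^*(s_h^k)-P_h^kV_{h+1}^*\le r_h^k$. But $a_h^k=\pi_h^k(s_h^k)$ is chosen by the optimistic planner, so the Bellman equation gives only the reverse inequality
\[
V_h^*(s_h^k)-r_h^k-P_h^kV_{h+1}^*=V_h^*(s_h^k)-Q_h^*(s_h^k,a_h^k)\ge 0 .
\]
This excess is the suboptimality gap of the played action, and nothing bounds it pathwise by the rewards. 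Your own $D$ bullet uses exactly this nonnegativity, so the two bullets contradict each other, and "the local term sums to at most $2K$" does not follow.

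The missing idea is to telescope $V^*$ itself. Write $2(V_h^*(s_h^k)-P_h^kV_{h+1}^*)$ as $2(V_h^*(s_h^k)-V_{h+1}^*(s_{h+1}^k))+2(V_{h+1}^*(s_{h+1}^k)-P_h^kV_{h+1}^*)$, a telescoping piece plus a linear martingale increment. The paper merges the linear increment with the squared one via $\phi(u)=2u-u^2$:
\[
\Sigma_{V^*}\le F_{V^*}+\sum_{k,h}I_h^k\bigl[\phi(V_h^*(s_h^k))-\phi(V_{h+1}^*(s_{h+1}^k))\bigr],\qquad F_{V^*}:=\sum_{k,h}I_h^k\bigl[\phi(V_{h+1}^*(s_{h+1}^k))-P_h^k\phi(V_{h+1}^*)\bigr].
\]
Here $F_{V^*}$ has increments in $[-1,1]$ and conditional variance at most $4\Sigma_{V^*}$, since $\phi$ is $2$-Lipschitz on $[0,1]$. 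So a single Freedman event still suffices. The $\phi$-telescoping equals $I_1^k\phi(V_1^*(s_1^k))$ minus nonnegative drop terms, hence is at most $1$ per episode. This positive initial endpoint is where the $K$ in $\Sigma_{V^*}$ actually comes from.

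Second, you have the sign of the initial endpoint wrong in the generic telescoping. For any $W$,
\[
\sum_{h=1}^{H_1}I_h^k\bigl[W_{h+1}(s_{h+1}^k)^2-W_h(s_h^k)^2\bigr]=-I_1^kW_1(s_1^k)^2+\sum_{h=1}^{H_1}(I_h^k-I_{h+1}^k)W_{h+1}(s_{h+1}^k)^2\le\bbI[\tau_k\le H_1].
\]
The drop at $h=H_1$ when $\tau_k=H_1+1$ multiplies $W_{H_1+1}=0$. So for $W=V^k$ and $W=D^k$ there is no per-episode $+1$ at all. The $K$ in $\Sigma_V$ comes only from $2\sum r_h^k\le 2K$, and $\Sigma_D$ is $K$-free immediately. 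Your ``main obstacle'' for $D$ is an artifact of this sign slip.

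Keeping your loose endpoint $W_1(s_1^k)^2I_1^k+\bbI[\tau_k\le H_1]$ actually breaks the stated bounds. For $\Sigma_V$ it gives $6K+4B+2L_{\stoplabel}+16\log\frac1\delta$ rather than $4K+\cdots$. Your workaround for $\Sigma_D$, bounding $\sum_k D_1^k(s_1^k)I_1^k$ through $D_h\le\beta_h+PD_{h+1}$, introduces the extra martingale $\sum_{k,h}I_h^k(P_h^k-\II_{s_{h+1}^k})D_{h+1}^k$. That needs its own concentration event, or a merge as in the $\phi$ trick, so your ``one Freedman event per $W$, $3\delta$ total'' accounting is incomplete. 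Absorbing it also worsens the constants beyond $4B+4L_{\stoplabel}+64\log\frac1\delta$.
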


\begin{proof}
Recall $r_h^k=r(s_h^k,a_h^k)$ and use the terminal conventions
$I_{H_1+1}^k=0$, $V_{H_1+1}^k=V_{H_1+1}^*=0$, and
$D_{H_1+1}^k=0$.

We first record the filtration used below.  Flatten the transitions in
chronological $(k,h)$ order, and let $\cF_{k,h}^-$ be the
sigma-field immediately before $s_{h+1}^k$ is drawn.  At that time the
state and action, the transition kernel $P_h^k$, and the episode-start functions
$V_{h+1}^k$, $V_{h+1}^*$, and $D_{h+1}^k$ are measurable.  Moreover,
$I_h^k=\bbI[h<\tau_k]$ is measurable because whether the stopping rule
has fired by step $h$ depends only on the history available before this
transition.  Thus $I_h^k$ is predictable.  On $\{I_h^k=1\}$,
conditionally on $\cF_{k,h}^-$ the successor has law $P_h^k$;
on $\{I_h^k=0\}$ every increment below is zero.  Inserting the
algorithmic updates between consecutive transitions into the flattened
filtration makes each centered sum below a martingale. Therefore, the time-uniform
form of the concentration inequality in \Cref{lem:linear_freedman} may  be applied at the
terminal flattened time, without using union bound over episodes, layers, or
stopping times.

\paragraph{Bound of $\Sigma_V$.}
Work on $\cE_{\mathsf{1step}}$.  Fix $(h,k)$ such that $I_h^k=1$.  By
\Cref{lem:local-bellman-section2},
$V_h^k(s_h^k)-P_h^kV_{h+1}^k\le r_h^k+\beta_h^k$.  Since
$V_h^k(s_h^k),P_h^kV_{h+1}^k\in[0,1]$ and $r_h^k,\beta_h^k\ge0$, this also covers
the case $V_h^k(s_h^k)-P_h^kV_{h+1}^k<0$ and gives
\begin{align*}
    (V_h^k(s_h^k))^2-(P_h^kV_{h+1}^k)^2\le2(r_h^k+\beta_h^k).
\end{align*}
Decompose
\begin{align*}
    \Sigma_V={}&F_V+
    \sum_{k=1}^K\sum_{h=1}^{H_1}I_h^k
    \left[(V_{h+1}^k(s_{h+1}^k))^2
    -(P_h^kV_{h+1}^k)^2\right],\\
    F_V:={}&\sum_{k=1}^K\sum_{h=1}^{H_1}I_h^k
    \left[P_h^k(V_{h+1}^k)^2
    -(V_{h+1}^k(s_{h+1}^k))^2\right].
\end{align*}
The increments of $F_V$ are centered at zero and have absolute value at most
one.  Their predictable quadratic variation is
\begin{align*}
    \sum_{k=1}^K \sum_{h=1}^{H_1}I_h^k\bbV(P_h^k,(V_{h+1}^k)^2)
    \le4\sum_{k=1}^K\sum_{h=1}^{H_1}I_h^k\bbV(P_h^k,V_{h+1}^k)
    =4\Sigma_V,
\end{align*}
where the inequality is by
\Cref{lem:var_xy}.  Applying \Cref{lem:linear_freedman} with $c=1$ and
$\eta=\frac{1}{8}$ defines an event $\cE_V$ with
$\bbP(\cE_V^\complement)\le\delta$ on which
\begin{align*}
    F_V\le\frac12\Sigma_V+8\log\frac{1}{\delta}.
\end{align*}

Adding and subtracting $(V_h^k(s_h^k))^2$ in the second sum gives a
term bounded by $2\sum_{k,h}I_h^k(r_h^k+\beta_h^k)\le2K+2B$ and the
stopped telescoping term
\begin{align}
    &\sum_{h=1}^{H_1}I_h^k
    \left[(V_{h+1}^k(s_{h+1}^k))^2-(V_h^k(s_h^k))^2\right]=-I_1^k(V_1^k(s_1^k))^2
    +\sum_{h=1}^{H_1}(I_h^k-I_{h+1}^k)
    (V_{h+1}^k(s_{h+1}^k))^2.\label{eq:variance-closure-stop-v}
\end{align}
If $\tau_k=H_1+1$, the right-hand side is
$-(V_1^k(s_1^k))^2\le0$; if $\tau_k=1$, it is zero; and if
$2\le\tau_k\le H_1$, it equals
$-(V_1^k(s_1^k))^2+(V_{\tau_k}^k(s_{\tau_k}^k))^2\le1$.
Thus there is at most one stopping charge in episode $k$.  Since a
finite stop implies $E^k\cup U^k$, summing \Cref{eq:variance-closure-stop-v}
over episodes is at most $L_{\stoplabel}$.  On
$\cE_{\mathsf{1step}}\cap\cE_V$,
\begin{align*}
    \Sigma_V
    &\le\frac12\Sigma_V+8\log\frac{1}{\delta}
    +2K+2B+L_{\stoplabel},\\
    \Sigma_V
    &\le4K+4B+2L_{\stoplabel}+16\log\frac{1}{\delta}
    \le4K+4B+4L_{\stoplabel}+64\log\frac{1}{\delta}.
\end{align*}

\paragraph{Bound of $\Sigma_{V^*}$.} Fix $(h,k)$ such that $I_h^k = 1 $. Define $\phi(u) = 2u-u^2$.
The optimal Bellman equation for the action actually gives
$V_h^*(s_h^k)\ge r_h^k+P_h^kV_{h+1}^*\ge P_h^kV_{h+1}^*$.  Hence
\begin{align*}
    (V_{h+1}^*(s_{h+1}^k))^2-(P_h^kV_{h+1}^*)^2
    &=((V_h^*(s_h^k))^2-(P_h^kV_{h+1}^*)^2)+((y:=V_{h+1}^*(s_{h+1}^k))^2-(V_h^*(s_h^k))^2)\\
    &\le2(V_h^*(s_h^k)-P_h^kV_{h+1}^*)+(V_{h+1}^*(s_{h+1}^k))^2-(V_h^*(s_h^k))^2\nonumber
    \\ &=2(V_{h+1}^*(s_{h+1}^k)-P_h^kV_{h+1}^*)+\phi(V_h^*(s_h^k))-\phi(V_{h+1}^*(s_{h+1}^k)).
\end{align*}
Consequently,
\begin{align*}
    \Sigma_{V^*}\le F_{V^*}
    +\sum_{k=1}^K\sum_{h=1}^{H_1}I_h^k
    \left[\phi(V_h^*(s_h^k))
    -\phi(V_{h+1}^*(s_{h+1}^k))\right],
\end{align*}
where
\begin{align*}
    F_{V^*}:={}&\sum_{k=1}^K\sum_{h=1}^{H_1}I_h^k
    \left[\phi(V_{h+1}^*(s_{h+1}^k))
    -P_h^k\phi(V_{h+1}^*)\right].
\end{align*}

Since $\phi([0,1])\subseteq[0,1]$, each increment of $F_{V^*}$ has
absolute value at most one.  If $s,s'$ are independent with law
$P_h^k$, then $\phi$ is $2$-Lipschitz on $[0,1]$, and therefore
\begin{align*}
    \bbV(\phi(V_{h+1}^*(s)))
    &=\frac12\bbE\left[
    (\phi(V_{h+1}^*(s))-\phi(V_{h+1}^*(s')))^2\right]\le4\bbV(V_{h+1}^*(s)).
\end{align*}
Thus the predictable quadratic variation of $F_{V^*}$ is at most
$4\Sigma_{V^*}$.  A linear Freedman application (\Cref{lem:linear_freedman}) with $c=1$ and
$\eta=\frac{1}{8}$ defines an event $\cE_{V^*}$ with
$\bbP(\cE_{V^*}^\complement)\le\delta$ on which
\begin{align*}
    F_{V^*}\le\frac12\Sigma_{V^*}+8\log\frac{1}{\delta}.
\end{align*}

The remaining potential term telescopes as
\begin{align*}
    &\sum_{h=1}^{H_1}I_h^k
    \left[\phi(V_h^*(s_h^k))
    -\phi(V_{h+1}^*(s_{h+1}^k))\right]=I_1^k\phi(V_1^*(s_1^k))
    -\sum_{h=1}^{H_1}(I_h^k-I_{h+1}^k)
    \phi(V_{h+1}^*(s_{h+1}^k)).
\end{align*}
It equals $\phi(V_1^*(s_1^k))\le1$ when $\tau_k=H_1+1$,
zero when $\tau_k=1$, and
$\phi(V_1^*(s_1^k))-\phi(V_{\tau_k}^*(s_{\tau_k}^k))\le1$
when $2\le\tau_k\le H_1$. Therefore, on $\cE_{V^*}$, we have that 
\begin{align*}
    \Sigma_{V^*}
    &\le\frac12\Sigma_{V^*}+K+8\log\frac{1}{\delta},\\
    \Sigma_{V^*}
    &\le2K+16\log\frac{1}{\delta}
    \le8K+48\log\frac{1}{\delta}.
\end{align*}

\paragraph{Bound of $\Sigma_D$.}
Return to $\cE_{\mathsf{1step}}$.  Optimism gives
$D_h^k\in[0,1]^S$ for $h\in[H_1]$, while the terminal convention gives
$D_{H_1+1}^k=0$.  Fix $(h,k)$ such that  $I_h^k=1$. Subtracting the optimal
Bellman lower bound
$V_h^*(s_h^k)\ge r_h^k+P_h^kV_{h+1}^*$ from the accepted local
Bellman upper bound gives
\begin{align*}
    D_h^k(s_h^k)-P_h^kD_{h+1}^k\le\beta_h^k.
\end{align*}
Thus, also when the left-hand drift is negative,
\begin{align*}
    (D_h^k(s_h^k))^2-(P_h^kD_{h+1}^k)^2\le2\beta_h^k.
\end{align*}
Decompose
\begin{align*}
    \Sigma_D={}&F_D+
    \sum_{k=1}^K\sum_{h=1}^{H_1}I_h^k
    \left[(D_{h+1}^k(s_{h+1}^k))^2
    -(P_h^kD_{h+1}^k)^2\right],\\
    F_D:={}&\sum_{k=1}^K\sum_{h=1}^{H_1}I_h^k
    \left[P_h^k(D_{h+1}^k)^2
    -(D_{h+1}^k(s_{h+1}^k))^2\right].
\end{align*}
The increments of $F_D$ are centered and bounded in absolute value by
one.  Since $D_{h+1}^k\in[0,1]^S$,
\Cref{lem:var_xy} gives the pathwise predictable-variation bound
\begin{align*}
    \sum_{k,h}I_h^k\bbV(P_h^k,(D_{h+1}^k)^2)
    \le4\Sigma_D.
\end{align*}
Applying \Cref{lem:linear_freedman} again with $c=1$
and $\eta=\frac{1}{8}$,  there exists an event $\cE_D$ with
$\bbP(\cE_D^\complement)\le\delta$ on which
\begin{align*}
    F_D\le\frac12\Sigma_D+8\log\frac{1}{\delta}.
\end{align*}

After adding and subtracting $(D_h^k(s_h^k))^2$, the square-drift term
is at most $2B$.  The remaining stopped telescoping term has the same
form as \Cref{eq:variance-closure-stop-v}: it is
$-(D_1^k(s_1^k))^2\le0$ if $\tau_k=H_1+1$, zero if $\tau_k=1$, and
$-(D_1^k(s_1^k))^2+(D_{\tau_k}^k(s_{\tau_k}^k))^2\le1$ for an interior
stop.  Hence it contributes at most one stopping charge per episode and
at most $L_{\stoplabel}$ in total.  On
$\cE_{\mathsf{1step}}\cap\cE_D$,
\begin{align*}
    \Sigma_D
    &\le\frac12\Sigma_D+8\log\frac{1}{\delta}
    +2B+L_{\stoplabel},\\
    \Sigma_D
    &\le4B+2L_{\stoplabel}+16\log\frac{1}{\delta}
    \le4B+4L_{\stoplabel}+64\log\frac{1}{\delta}.
\end{align*}

Finally, let
$\cE_{\mathsf{var}}:=\cE_V\cap\cE_{V^*}\cap
\cE_D$.  The three applications above are the only new random
events, so a union bound gives
$\bbP(\cE_{\mathsf{var}}^\complement)\le3\delta$.  On its intersection
with $\cE_{\mathsf{1step}}$, all three bounds hold
simultaneously. 
\end{proof}

\begin{lemma}\label{lemma:emp_to_exp}
 Let
$\cE_{\mathsf{count}}$ be the event in
\Cref{lem:inverse-count-section2}.  There exists an event
$\cE_{\mathsf{emp}}$ satisfying $
\bbP(\cE_{\mathsf{emp}}^\complement)\leq2S^2AL_N\delta$ 
such that, on $\cE_{\mathsf{emp}}\cap\cE_{\mathsf{count}}$, the
following holds simultaneously for every $k\in [K]$, every learned state-action pair $(s,a)\notin\cO^k$, and every $v\in [0,1]^S$:
\begin{align*}
\bbV\left( \wh{P}^k_{s,a}, v\right) \leq 2 \bbV(P_{s,a} ,v)  + \frac{2S\log \frac{1}{\delta}}{N^k(s,a)}.\notag
\end{align*}
Moreover, the failure probability is at most $\left(2S^2AL_N+4S^3A+2SA\right)\delta$.
\end{lemma}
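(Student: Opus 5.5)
The plan is to reduce the claim, which must hold for every $v\in[0,1]^S$, to coordinatewise concentration of $\wh{P}^k_{s,a}$ around $P_{s,a}$. Then $v$ never enters a union bound.

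First, the frozen empirical row is determined by a small number of sample prefixes. For a learned pair, $\wh{P}^k_{s,a}$ is the empirical distribution of the first $N^k(s,a)=2^{j}$ successors $Y_1(s,a),\dots,Y_{2^j}(s,a)$. On $\cE_{\mathsf{count}}$, the proof of \Cref{lem:inverse-count-section2} shows that only at most $L_N$ dyadic levels occur between $k_0(s,a)$ and $K+1$. The successor list of $(s,a)$ is an i.i.d. sequence from $P_{s,a}$ (the standard skeleton/stack argument). For every pair $(s,a)$, every relevant dyadic level $n=2^j$, and every $s'$, we apply Bennett's inequality (\Cref{lemma:bennet}) to $\bbI\{Y_i=s'\}$, $i\le n$, together with the empirical-variance form. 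This gives
\[
|\wh p(s')-p(s')|\le \sqrt{\tfrac{2p(s')\log\frac1\delta}{n}}+\tfrac{\log\frac1\delta}{3n}.
\]
Solving the quadratic also gives $p(s')\lesssim \wh p(s')+\log\frac1\delta/n$. A union bound over $S^2A$ triples and $L_N$ levels gives $\cE_{\mathsf{emp}}$ with failure at most $2S^2AL_N\delta$. The levels are dyadic and data-independent; to index them by the count rather than by $k$, we may have to restrict the range of $j$ using the ratio bound \eqref{eq:inverse-count-ratio}, intersecting with $\cE_{\mathsf{count}}$.

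Second, the deterministic step. Let $\mu:=P_{s,a}v$. We use
\[
\bbV(\wh p,v)\le \wh p(v-\mu)^2=\sum_{s'}\wh p(s')(v(s')-\mu)^2 .
\]
Write $\wh p(s')=p(s')+(\wh p(s')-p(s'))$ and bound the error term by
\[
\sum_{s'}\Big(\sqrt{\tfrac{2p(s')L}{n}}+\tfrac{L}{3n}\Big)(v(s')-\mu)^2,\qquad L:=\log\tfrac1\delta .
\]
AM-GM gives $\sqrt{2p(s')L/n}\,x^2\le p(s')x^2+\frac{L}{2n}x^2$, where $x=v(s')-\mu\in[-1,1]$. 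Summing over $s'$ yields
\[
\bbV(\wh p,v)\le 2\sum_{s'}p(s')(v(s')-\mu)^2+\sum_{s'}\Big(\tfrac{L}{2n}+\tfrac{L}{3n}\Big)\le 2\bbV(p,v)+\tfrac{SL}{n}\cdot\tfrac56 ,
\]
which is within the claimed $2\bbV(P_{s,a},v)+2S\log\frac1\delta/N^k(s,a)$.

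The main obstacle is the measurability and union-bound bookkeeping: the frozen level is random, and the sample sequence per pair is collected adaptively. The fix is to state the concentration for the fixed prefix lengths $2^j$ of the per-pair i.i.d. successor stack, and to restrict to $j$ in a window of length at most $L_N$. The window is anchored at the level present at $k_0(s,a)$, which is random, so we may instead take a union over all $j\le \log_2(K H)$. If that reintroduces $H$, we use the ratio bound together with the lower bound \eqref{eq:k_ini} through $\cE_{\mathsf{floor}}$, which is where the extra $4S^3A+2SA$ in the failure probability comes from. The final union bound then gives total failure at most $(2S^2AL_N+4S^3A+2SA)\delta$.
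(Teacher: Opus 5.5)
Your proposal is essentially the paper's proof. Both use per-pair i.i.d.\ successor stacks, two-sided Bennett on each $\bbI\{Y_i(s,a)=s'\}$ at fixed dyadic prefix lengths with a union bound over $S^2A$ triples and at most $L_N$ levels, and then $\bbV(\wh P^k_{s,a},v)\le\sum_{s'}\wh P^k_{s,a,s'}(v(s')-P_{s,a}v)^2$ combined with $\wh P^k_{s,a,s'}\le 2P_{s,a,s'}+O(\log\frac1\delta/N^k(s,a))$; your AM--GM constant $\frac56$ is even slightly better than the paper's $2$.

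The step you leave hedged resolves as you suggest, but state it cleanly: the union bound runs over the deterministic set of dyadic levels in $\bigl[\frac{\upsilon\log\frac1\delta}{504S^2\log S}U(s,a),\,\frac{KU(s,a)}{\delta}\bigr]\cup\{1\}$. Its lower endpoint comes from $\cE_{\mathsf{floor}}$ via \eqref{eq:k_ini}, and its upper endpoint from the Markov bound $\Ntotal^{K+1}(s,a)\le KU(s,a)/\delta$ inside $\cE_{\mathsf{count}}$. The ratio bound by itself only relates two random levels, so it would not make the window deterministic.
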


\begin{proof}
For each $(s,a)$, let
\begin{align*}
    \cN(s,a):=
    \left\{2^i:i\in\bbN_0,
    \frac{\upsilon\log\frac1\delta}{504S^2\log S}\cdot U(s,a)\le2^i\le
    \frac{KU(s,a)}{\delta}\right\}\cup\{1\}.
\end{align*}
This is the deterministic possible-count set used in the proof of
\Cref{lemma:one_step_conc}.  On $\cE_{\mathsf{count}}$, it contains every possible frozen count $N^k(s,a)$ and satisfies $|\cN(s,a)|\le L_N$.

Couple the successors observed from each  state-action pair with an infinite i.i.d. sequence $Y_1(s,a),Y_2(s,a),\ldots\sim P_{s,a}$, where the $i$-th visit to $(s,a)$ uses $Y_i(s,a)$.  For $n\ge1$, let $\widehat P_{s,a}^{(n)}$ be the empirical distribution of the first $n$ entries.  Thus, for every learned state-action pair $
    \wh P_{s,a}^k=\widehat P_{s,a}^{(N^k(s,a))}.$

Fix $(s,a)$, $s'$, and $n\in\cN(s,a)$.  Applying both one-sided inequalities in \Cref{lemma:bennet} to the Bernoulli variables $\bbI \{Y_i(s,a)=s'\}$ gives, except on an event of probability at most $2\delta$,
\begin{align*}
\abs{\widehat P_{s,a,s'}^{(n)}-P_{s,a,s'}}
&\le \sqrt{\frac{2P_{s,a,s'}(1-P_{s,a,s'})\log \frac{1}{\delta}}{n}}
 +\frac{\log \frac{1}{\delta}}{3n}\le \sqrt{\frac{2P_{s,a,s'}\log \frac{1}{\delta}}{n}}
 +\frac{\log \frac{1}{\delta}}{n}.
\end{align*}
A union bound over all $(s,a,s')$ triples, and the at most $L_N$ values in each deterministic set $\cN(s,a)$ gives an event $\cE_{\mathsf{emp}}$ satisfying
$ \bbP(\cE_{\mathsf{emp}}^\complement)\leq2S^2AL_N\delta.$ 

Assume $\cE_{\mathsf{emp}}\cap\cE_{\mathsf{count}}$.  For every $k$, every $(s,a)\notin\cO^k$, and every $s'$, Young's inequality $\sqrt{2xy}\le \frac{x}{2}+y$ gives
\begin{align*}
    \wh{P}_{s,a,s'}^k
    &\le P_{s,a,s'}+
    \sqrt{\frac{2P_{s,a,s'}\log \frac{1}{\delta}}{N^k(s,a)}}+
    \frac{\log \frac{1}{\delta}}{N^k(s,a)}\le \frac32P_{s,a,s'}+
    \frac{2\log \frac{1}{\delta}}{N^k(s,a)}
    \le 2P_{s,a,s'}+
    \frac{2\log \frac{1}{\delta}}{N^k(s,a)}.
\end{align*}
This event is coordinatewise and therefore independent of the choice of $v$.

Now fix arbitrary $k\in[K]$, $(s,a)\notin\cO^k$, and $v\in[0,1]^S$.  We have that
\begin{align*}
\bbV\left(\wh{P}_{s,a}^k, v\right)
&= \min_{c\in \bbR} \sum_{s'\in \cS} \wh{P}_{s,a,s'}^k (v(s') - c)^2 \\
&\le \sum_{s'\in \cS} \wh{P}_{s,a,s'}^k (v(s') - P_{s,a}v)^2 \\
&\le \sum_{s'\in\cS} \left(
        2P_{s,a,s'} +
        \frac{2\log \frac{1}{\delta}}{N^k(s,a)}
    \right)
    (v(s')-P_{s,a} v)^2\\
&= 2\bbV(P_{s,a},v) +
    \frac{2\log \frac{1}{\delta}}{N^k(s,a)}
    \sum_{s'\in\cS}(v(s')-P_{s,a} v)^2\\
&\le 2\bbV(P_{s,a},v) +
    \frac{2S\log \frac{1}{\delta}}{N^k(s,a)},
\end{align*}
because $v(s')$ and $P_{s,a}v$ both lie in $[0,1]$.
Combining the failure probability for $\cE_{\mathsf{emp}}$ and
$\cE_{\mathsf{count}}$ gives the final probability statement.
\end{proof}

The next lemma sums the Bellman-error envelope.
It uses the inverse-count bound (\Cref{lem:inverse-count-section2}), the true-to-empirical variance comparison, the total-variance closure, and the clipped-variance bound for the inhomogeneous optimal value functions (Corollary~\ref{cor:hp_eps_var_from_total_deviation}).

\begin{lemma}[Summing the Bellman errors]
\label{lem:sum-bellman-section2} Recall that $L_N = 2\log \left(\frac{504S^2K\log S}{\upsilon\delta\log\frac1\delta}\right)$ (\Cref{eq:def_L_N}).
With probability at least $
1-\left(
\frac{6S^2AL_N}{\epsilon^2}
+3S^2AL_N+4S^3A+2SA+4
\right)\delta$, 
it holds that
\begin{align}
    B \le 14000
    \bigg(
    &
    \sqrt{
        SA L_{N}\log\left(\frac{1}{\delta}\right) \left(
            K+L_{\stoplabel}+16\log \frac{1}{\delta}
        \right)
    } \notag
    \\ & \qquad \qquad+
    \sqrt{
        S^2A L_N\log\left(\frac{1}{\delta}\right) \left(
            L_{\stoplabel} +
            \epsilon S\left(K+\log \frac{1}{\delta}\right)
        \right)
    }
    +3200S^2AL_N\log \frac{1}{\delta}
    \bigg).\label{eq:bound_bonus}
\end{align}
\end{lemma}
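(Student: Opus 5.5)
We expand $B=\sum_{k,h}I_h^k\beta_h^k$ term by term and bound each piece with the weighted Cauchy--Schwarz inequality of \Cref{lem:inverse-count-section2}. The variance sums that appear are then closed using \Cref{lem:variance-closure-section2}, except for the clipped-variance sums, which go through Corollary~\ref{cor:hp_eps_var_from_total_deviation}. We work on the intersection of $\cE_{\mathsf{1step}}$, $\cE_{\mathsf{count}}$, $\cE_{\mathsf{emp}}$, $\cE_{\mathsf{var}}$ and the high-probability clipped-variance event. A union bound gives the failure probability $\bigl(\frac{6S^2AL_N}{\epsilon^2}+3S^2AL_N+4S^3A+2SA+4\bigr)\delta$ after the overlaps in the count/floor events are merged.

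\textbf{Step 1 (expanding $\beta_h^k$).} By \eqref{eq:bonus}, $\beta_h^k$ is a sum of three kinds of terms:
\begin{itemize}
\item $\sqrt{\bbV(\wh P_h^k,\cdot)\log\frac1\delta/N_h^k}$ evaluated at $V_{h+1}^k$ and $V_{h+1}^*$;
\item $\sqrt{S\,\bbV_{5\epsilon}(\wh P_h^k,\cdot)\log\frac1\delta/N_h^k}$ evaluated at $V_{h+1}^k$ and $V_{h+1}^*$, together with the $D$-term $\sqrt{S\bbV(\wh P_h^k,D_{h+1}^k)\log\frac1\delta/N_h^k}$;
\item lower-order terms $S\log\frac1\delta/N_h^k$.
\end{itemize}
The lower-order terms sum to $O(S^2AL_N\log\frac1\delta)$ by the first bound of \Cref{lem:inverse-count-section2}.

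Next we convert empirical variances to true ones with \Cref{lemma:emp_to_exp}: $\bbV(\wh P,v)\le 2\bbV(P,v)+2S\log\frac1\delta/N$. Since $\sqrt{a+b}\le\sqrt a+\sqrt b$, the extra piece produces only more $S\log\frac1\delta/N$ terms. The clipped variance needs the same comparison. The proof of \Cref{lemma:emp_to_exp} works verbatim, because $\bbV_x(p,v)\le\sum_{s'}p(s')\min\{(v(s')-c)^2,4x^2\}$ for every $c$, with the centering handled up to a factor $4$ in $x$. Alternatively, one can use the coordinatewise bound $\wh P\le 2P+2\log\frac1\delta/N$ directly. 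Either way,
\[
\bbV_{5\epsilon}(\wh P,v)\lesssim \bbV_{20\epsilon}(P,v)+S\epsilon^2\log\tfrac1\delta/N .
\]

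\textbf{Step 2 (the unclipped variance terms).} Applying the second bound of \Cref{lem:inverse-count-section2} with weights $w_h^k=\bbV(P_h^k,\cdot)$ gives
\[
\sqrt{SAL_N\log\tfrac1\delta\,(\Sigma_V+\Sigma_{V^*})}
\]
for the first group. By \Cref{lem:variance-closure-section2}, $\Sigma_V+\Sigma_{V^*}\le 12K+4B+4L_{\stoplabel}+112\log\frac1\delta$.

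\textbf{Step 3 (the $S$-weighted clipped and gap terms).} The clipped term at $V_{h+1}^k$ is split using $V^k=V^*+D^k$. We claim
\[
\bbV_x(p,V^*+D)\le 2\bbV_x(p,V^*)+2\bbV(p,D),
\]
which follows because $\min\{(a+b)^2,x^2\}\le 2\min\{a^2,x^2\}+2b^2$ after recentering. This leaves two sums. The first is $\sqrt{S^2AL_N\log\frac1\delta\,\Sigma_D}$, with $\Sigma_D\le 4B+4L_{\stoplabel}+64\log\frac1\delta$. The second is
\[
\sqrt{S^2AL_N\log\tfrac1\delta\sum_{k,h}I_h^k\bbV_{20\epsilon}(P_h^k,V_{h+1}^*)} .
\]
For this clipped sum, Corollary~\ref{cor:hp_eps_var_from_total_deviation} (the high-probability version of \Cref{cor:expected_eps_var_from_total_deviation}, summed over episodes with stopping masks) gives $O(\epsilon S(K+\log\frac1\delta))$. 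Combining Steps 2 and 3, $B$ is bounded by
\[
O\Bigl(\sqrt{SAL_N\log\tfrac1\delta\,(K+B+L_{\stoplabel}+\log\tfrac1\delta)}+\sqrt{S^2AL_N\log\tfrac1\delta\,(B+L_{\stoplabel}+\epsilon S(K+\log\tfrac1\delta))}+S^2AL_N\log\tfrac1\delta\Bigr).
\]

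\textbf{Step 4 (absorption).} The final step removes $B$ from the right-hand side. We use $\sqrt{cB}\le \frac{B}{4}+c$ with $c=O(S^2AL_N\log\frac1\delta)$, which converts both $\sqrt{\cdot B}$ contributions into $\frac{B}{2}+O(S^2AL_N\log\frac1\delta)$. Rearranging gives \eqref{eq:bound_bonus} with a large universal constant.

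The main obstacle is Step 3. First, the mixed-argument clipped variance at the optimistic value $V_{h+1}^k$ must be reduced to $V_{h+1}^*$ and the gap $D$ without losing the clipping structure. Second, the empirical clipped variance must be compared to the true one, since the centering at $\wh P v$ versus $Pv$ can shift the clip threshold. I would handle the latter by enlarging the cutoff by a constant factor, which is harmless for the corollary's linear-in-$x$ bound.
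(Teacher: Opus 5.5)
Your architecture matches the paper's: expand $\beta_h^k$, apply weighted Cauchy--Schwarz from \Cref{lem:inverse-count-section2}, close the variances, split $V_{h+1}^k=V_{h+1}^*+D_{h+1}^k$, use \Cref{cor:hp_eps_var_from_total_deviation}, and absorb. However, the step converting empirical to true \emph{clipped} variance is false as stated.

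\Cref{lemma:emp_to_exp} rests on $\bbV(p,v)=\min_c\sum_{s'}p(s')(v(s')-c)^2$, and this identity has no clipped analogue. Your claim $\bbV_x(p,v)\le\sum_{s'}p(s')\min\{(v(s')-c)^2,4x^2\}$ for every $c$ fails for $p=(1-\eta)\II_{s_0}+\eta\II_{s_1}$ with $v(s_0)=0$, $v(s_1)=1$, $\eta=10x$, $c=0$. For small $x$ the left side is $x^2$ and the right side is $40x^3$.

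The displayed consequence $\bbV_{5\epsilon}(\wh{P},v)\lesssim\bbV_{20\epsilon}(P,v)+S\epsilon^2\log\frac1\delta/N$ also fails, even on the concentration events. Let $P$ put mass $\eta$ on $v=1$ and the rest on $v=0$. The bulk contributes about $\eta^2$ to $\bbV_{20\epsilon}(P,v)$ but about $\wh{\eta}^{2}$ to $\bbV_{5\epsilon}(\wh{P},v)$. A Bernstein-size deviation $\wh{\eta}-\eta\approx\sqrt{\eta\log\frac1\delta/N}$ then makes the ratio unbounded when $S\epsilon^2\ll\eta\ll\log\frac1\delta/N$ and $\eta\log\frac1\delta/N\lesssim\epsilon^2$.

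The culprit is the shift of the center from $Pv$ to $\wh{P}v$. Enlarging the cutoff cannot fix this: under $P$ the bulk sits essentially at the center and contributes almost nothing, whatever the cap. The coordinatewise bound $\wh{P}\le 2P+2\log\frac1\delta/N$ only handles the weights, not the centering.

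The missing ingredient is to pay for the center shift explicitly, as in \Cref{lemma:bdev}. Recentering with $\min\{(a+b)^2,x^2\}\le2\min\{a^2,x^2\}+2\min\{b^2,x^2\}$ and then using $\min\{b^2,x^2\}\le x|b|$ gives
$\bbV_{5\epsilon}(\wh{P},v)\le4\bbV_{5\epsilon}(P,v)+50\epsilon^2S\log\frac1\delta/N+20\epsilon|(\wh{P}-P)v|$.

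Do your $V^*+D$ split on $\wh{P}$ first, so the shift term is only needed at $v=V_{h+1}^*$. There, \Cref{prop:hfree_concentration_optimism} and \Cref{coro1} bound $|(\wh{P}_h^k-P_h^k)V_{h+1}^*|$ by $b(\wh{P}_h^k,V_{h+1}^*,N_h^k)$. That quantity is itself a summand of $\beta_h^k$, so its stopped sum is at most $B$. This adds $O(\epsilon B)$ inside the clipped square roots, which your Step~4 absorbs like the other $B$-terms.

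With this repair the argument closes. The explicit constants $14000$ and $3200$ in the statement still require bookkeeping that ``a large universal constant'' does not supply.
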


\begin{proof}
Write $D_{h+1}^k:=V_{h+1}^k-V_{h+1}^*$.
Expanding the fixed definitions of $\beta_h^k$ and
\begin{align*}
    b(p,v,n)
    =3\sqrt{\frac{\bbV(p,v)\log\frac{1}{\delta}}{n}}
    +5\sqrt{\frac{S\bbV_{5\epsilon}(p,v)\log\frac{1}{\delta}}{n}}
    +20\frac{S\log\frac{1}{\delta}}{n}
\end{align*}
gives the exact identity
\begin{align}
    B=300T_1+3T_2+2T_3+500T_4+5T_5+2030T_6,\label{eq:sum-bellman-decomposition}
\end{align}
where
\begin{align*}
T_1&:=\sum_{k=1}^K\sum_{h=1}^{H_1}I_h^k
\sqrt{\frac{\bbV(\wh P_h^k,V_{h+1}^k)\log\frac{1}{\delta}}{N_h^k}},\\
T_2&:=\sum_{k=1}^K\sum_{h=1}^{H_1}I_h^k
\sqrt{\frac{\bbV(\wh P_h^k,V_{h+1}^*)\log\frac{1}{\delta}}{N_h^k}},\\
T_3&:=\sum_{k=1}^K\sum_{h=1}^{H_1}I_h^k
\sqrt{\frac{S\bbV(\wh P_h^k,D_{h+1}^k)\log\frac{1}{\delta}}{N_h^k}},\\
T_4&:=\sum_{k=1}^K\sum_{h=1}^{H_1}I_h^k
\sqrt{\frac{S\bbV_{5\epsilon}(\wh P_h^k,V_{h+1}^k)
\log\frac{1}{\delta}}{N_h^k}},\\
T_5&:=\sum_{k=1}^K\sum_{h=1}^{H_1}I_h^k
\sqrt{\frac{S\bbV_{5\epsilon}(\wh P_h^k,V_{h+1}^*)
\log\frac{1}{\delta}}{N_h^k}},\\
T_6&:=\sum_{k=1}^K\sum_{h=1}^{H_1}I_h^k
\frac{S\log\frac{1}{\delta}}{N_h^k}.
\end{align*}
Indeed, the two copies of $b$ contribute $2000T_6+20T_6$, and the
last term in $\beta_h^k$ contributes the remaining $10T_6$.

Let $\cE_{\mathsf{emp}}$ and $\cE_{\mathsf{bdev}}$ be the base
concentration events constructed in the proofs of
\Cref{lemma:emp_to_exp,lemma:bdev}, let $\cE_{\mathsf{one}}$ be the base
concentration event in \Cref{lemma:one_step_conc}, and let $\cE_{\dev}$
be the event in \Cref{cor:hp_eps_var_from_total_deviation}.  Intersect
these with the shared event $\cE_{\mathsf{count}}$ from
\Cref{lem:inverse-count-section2} and $\cE_{\mathsf{var}}$
from \Cref{lem:variance-closure-section2}, and call the resulting event
$\cE$.  The six events are each charged only once, so
\begin{align}
\bbP(\cE^\complement)
&\le\left(
\frac{6S^2AL_N}{\epsilon^2}
+(2S^3A+2SA)+2S^2AL_N+3+S^2AL_N+1
\right)\delta\notag\\
&=\left(
\frac{6S^2AL_N}{\epsilon^2}
+3S^2AL_N+2S^3A+2SA+4
\right)\delta.\label{eq:sum-bellman-event-ledger}
\end{align}
Assume $\cE$ for the rest of the proof.



Set
\begin{align*}
q&:=S^2AL_N\log\frac{1}{\delta},\\
\mathrm{Term}_1&:=\sqrt{SAL_N\log\frac{1}{\delta}
\left(K+B+L_{\stoplabel}+16\log\frac{1}{\delta}\right)},\\
\mathrm{Term}_2&:=\sqrt{q\left(B+L_{\stoplabel}
+\epsilon S\left(K+\log\frac{1}{\delta}\right)\right)}.
\end{align*}
The first inverse-count inequality (\Cref{lem:inverse-count-section2}) gives
\begin{align}
T_6\le q.\label{eq:t6}
\end{align}

We next bound the three ordinary-variance terms.  The event
$\cE_{\mathsf{emp}}$, followed by the three simultaneous bounds
on $\cE_{\mathsf{1step}}\cap\cE_{\mathsf{var}}$, gives
\begin{align*}
\sum_{k,h\le H_1}I_h^k\bbV(\wh P_h^k,V_{h+1}^k)
&\le2\Sigma_V+2q\le8K+8B+8L_{\stoplabel}+128\log\frac{1}{\delta}+2q.
\end{align*}
Therefore, using the weighted inverse-count inequality in \Cref{lem:inverse-count-section2} and
$SAL_N\log\frac{1}{\delta}=\frac{q}{S}$,
\begin{align}
T_1
&\le\sqrt{SAL_N\log\frac{1}{\delta}
\left(8K+8B+8L_{\stoplabel}+128\log\frac{1}{\delta}+2q\right)}
\notag\\
&\le\sqrt8\,\mathrm{Term}_1+\sqrt{\frac2S}\,q
\le3\mathrm{Term}_1+q.\label{eq:sum-bellman-t1}
\end{align}
Likewise, we have that
\begin{align*}
\sum_{k,h\le H_1}I_h^k\bbV(\wh P_h^k,V_{h+1}^*)
&\le2\Sigma_{V^*}+2q\le16K+96\log\frac{1}{\delta}+2q.
\end{align*}
Consequently,
\begin{align}
T_2
&\le\sqrt{SAL_N\log\frac{1}{\delta}
\left(16K+96\log\frac{1}{\delta}+2q\right)}\le4\mathrm{Term}_1+\sqrt{\frac2S}\,q
\le4\mathrm{Term}_1+q.\label{eq:sum-bellman-t2}
\end{align}
For the gap, the use of \Cref{lemma:emp_to_exp} is legitimate on the
intersection with $\cE_{\mathsf{1step}}$, where
$D_{h+1}^k\in[0,1]^S$.  Hence
\begin{align*}
\sum_{k=1}^K \sum_{h=1}^{H_1}I_h^k\bbV(\wh P_h^k,D_{h+1}^k)
&\le2\Sigma_D+2q\le8B+8L_{\stoplabel}+128\log\frac{1}{\delta}+2q.
\end{align*}
Since $S\geq200$ and
$A\geq 8$, it holds that $128\log\frac{1}{\delta}\le q$.  By the weighted inverse-count inequality in \Cref{lem:inverse-count-section2},
\begin{align}
T_3
&\le\sqrt{q\left(8B+8L_{\stoplabel}
+128\log\frac{1}{\delta}+2q\right)}\le3\mathrm{Term}_2+2q.\label{eq:sum-bellman-t3}
\end{align}

We turn to the clipped optimal-value term.   By
\Cref{lemma:bdev} at clipping level $5\epsilon$ and
\Cref{cor:hp_eps_var_from_total_deviation},
\begin{align*}
&\sum_{k=1}^K\sum_{h=1}^{H_1}I_h^k
\bbV_{5\epsilon}(\wh P_h^k,V_{h+1}^*)\notag\\
&\quad\le
4\sum_{k=1}^K\sum_{h=1}^{H_1}I_h^k
\bbV_{5\epsilon}(P_h^k,V_{h+1}^*)
+50\epsilon^2T_6
+20\epsilon\sum_{k=1}^K\sum_{h=1}^{H_1}I_h^k
\abs{(\wh P_h^k-P_h^k)V_{h+1}^*}\notag\\
&\quad\le60\epsilon SK+80\epsilon S\log\frac{1}{\delta}+50\epsilon^2q
+20\epsilon\left(3T_2+5T_5+20T_6\right).
\end{align*}
Here the last term $\abs{(\wh P_h^k-P_h^k)V_{h+1}^*}$ is bounded on
$\cE_{\mathsf{1step}}$ by
$\underline b(\wh P_h^k,V_{h+1}^*,N_h^k)
\le b(\wh P_h^k,V_{h+1}^*,N_h^k)$, whose stopped sum expands exactly as
$3T_2+5T_5+20T_6$.
Since all terms in \Cref{eq:sum-bellman-decomposition} are nonnegative,
$3T_2+5T_5\le B$.  Using also $T_6\le q$ and
$\epsilon=\frac{1}{S^2}\le\frac{1}{40000}$, the last inequality gives
\begin{align}
\sum_{k,h\le H_1}I_h^k
\bbV_{5\epsilon}(\wh P_h^k,V_{h+1}^*)
&\le20\epsilon B
+80\epsilon S\left(K+\log\frac{1}{\delta}\right)
+450\epsilon q\notag\\
&\le80\left(B+L_{\stoplabel}
+\epsilon S\left(K+\log\frac{1}{\delta}\right)\right)
+450\epsilon q.\label{eq:gap_v}
\end{align}
Applying \Cref{lem:inverse-count-section2} and the fact that  $\sqrt{450\epsilon}\le1$, we have
\begin{align}
T_5
&\le\sqrt{q\sum_{k,h\le H_1}I_h^k
\bbV_{5\epsilon}(\wh P_h^k,V_{h+1}^*)}\le9\mathrm{Term}_2+q.\label{eq:bound_t5}
\end{align}

For the clipped optimistic-value term, retain both factors of two in
\begin{align*}
\bbV_{5\epsilon}(p,u+v)
\le2\bbV_{5\epsilon}(p,u)+2\bbV(p,v).
\end{align*}
With $u=V_{h+1}^*$ and $v=D_{h+1}^k$, summing only over
$h\in[H_1]$ and using \Cref{eq:gap_v} and the empirical gap-variance
bound above gives
\begin{align*}
&\sum_{k=1}^K\sum_{h=1}^{H_1}I_h^k
\bbV_{5\epsilon}(\wh P_h^k,V_{h+1}^k)\notag\\
&\quad\le
2\left(20\epsilon B
+80\epsilon S\left(K+\log\frac{1}{\delta}\right)+450\epsilon q\right)
+2\left(8B+8L_{\stoplabel}+128\log\frac{1}{\delta}+2q\right)
\notag\\
&\quad=(16+40\epsilon)B+16L_{\stoplabel}
+160\epsilon S\left(K+\log\frac{1}{\delta}\right)
+256\log\frac{1}{\delta}+(4+900\epsilon)q\notag\\
&\quad\le160\left(B+L_{\stoplabel}
+\epsilon S\left(K+\log\frac{1}{\delta}\right)\right)+6q.
\end{align*}
For the last inequality, we use $16+40\epsilon\le17$,
$4+900\epsilon\le5$, and $256\log\frac{1}{\delta}\le q$, which follow from
$S\ge200$, $\epsilon=\frac{1}{S^2}$, and $L_N\ge8$.  A weighted
inverse-count application from \Cref{lem:inverse-count-section2} gives
\begin{align}
T_4
&\le\sqrt{q\sum_{k,h\le H_1}I_h^k
\bbV_{5\epsilon}(\wh P_h^k,V_{h+1}^k)}\le13\mathrm{Term}_2+3q.\label{eq:bound_t4}
\end{align}

Substituting \Cref{eq:t6,eq:sum-bellman-t1,eq:sum-bellman-t2,eq:sum-bellman-t3,eq:bound_t5,eq:bound_t4} into
\Cref{eq:sum-bellman-decomposition} gives
\begin{align}
B
&\le912\mathrm{Term}_1+6551\mathrm{Term}_2+3842q.\label{eq:sum-bellman-self-bound}
\end{align}

It remains to remove \(B\) from both square roots. We have
\begin{align*}
\mathrm{Term}_1
&=\sqrt{\frac{q}{S}\left(
B+K+L_{\stoplabel}+16\log\frac{1}{\delta}
\right)},\\
\mathrm{Term}_2
&=\sqrt{q\left(
B+L_{\stoplabel}
+\epsilon S\left(K+\log\frac{1}{\delta}\right)
\right)}.
\end{align*}
Since \(\sqrt{B+u}\le \sqrt{B}+\sqrt{u}\) for every \(u\ge 0\),
\Cref{eq:sum-bellman-self-bound} implies
\begin{align*}
B
&\le
\left(912\sqrt{\frac{q}{S}}+6551\sqrt{q}\right)\sqrt{B}\\
&\quad
+912\sqrt{\frac{q}{S}\left(
K+L_{\stoplabel}+16\log\frac{1}{\delta}
\right)}\\
&\quad
+6551\sqrt{q\left(
L_{\stoplabel}
+\epsilon S\left(K+\log\frac{1}{\delta}\right)
\right)}
+3842q.
\end{align*}
By Young's inequality, we have that 
\begin{align*}
\left(912\sqrt{\frac{q}{S}}+6551\sqrt{q}\right)\sqrt{B}
&\le \frac{B}{2}
+\frac{1}{2}
\left(912\sqrt{\frac{q}{S}}+6551\sqrt{q}\right)^2.
\end{align*}
Rearranging and using \(S\ge 200\), we obtain
\begin{align*}
B
&\le
1824\sqrt{\frac{q}{S}\left(
K+L_{\stoplabel}+16\log\frac{1}{\delta}
\right)}\\
&\quad
+13102\sqrt{q\left(
L_{\stoplabel}
+\epsilon S\left(K+\log\frac{1}{\delta}\right)
\right)}
+\left(6616^2+7684\right)q\\
&\le
14000\Bigg(
\sqrt{\frac{q}{S}\left(
K+L_{\stoplabel}+16\log\frac{1}{\delta}
\right)}\\
&\hspace{3.7cm}
+\sqrt{q\left(
L_{\stoplabel}
+\epsilon S\left(K+\log\frac{1}{\delta}\right)
\right)}
+3200q
\Bigg).
\end{align*}

Finally, substituting the definition of \(q\) proves
\Cref{eq:bound_bonus}.
The probability is exactly the one recorded in
\Cref{eq:sum-bellman-event-ledger}.
\end{proof}

\subsection{Bound of Exploration Error}\label{sec:b_exp}

\begin{restatable}{lemma}{restatableBdEk}\label{lemma:bdEk}
 With probability at least $1-4S^3A^2\delta$, it holds that
\begin{align*}
\sum_{k=1}^K\bbI[E^k]
&\leq SA\left\lceil N_{\known}\right\rceil +216000S\bigg(
S^2A\left\lceil N_{\tref}\right\rceil
+SA(S^2A+1)
\left(58320S^2A N_{\tref}+1+\log\frac1\delta\right)
\bigg)\\ 
&\le O \left(S^8A^3\log\frac1\delta \right).
\end{align*}
\end{restatable}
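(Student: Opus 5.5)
The plan is to split the episodes with $\bbI[E^k]=1$ according to the value returned by the call to \Cref{alg:explore} in that episode. In an episode with $E^k$, the planning policy reaches some $(\wt s,\wt a)\in\cO^k$, and \Cref{alg:explore} either returns $\trigger=\true$ (an \emph{effective} call) or sets $\aux=\true$ (an \emph{auxiliary} call). The effective calls are counted deterministically. Each one increments $M(\wt s,\wt a)$ at \Cref{line:M_upd}, and by \Cref{line:known_removal} the pair leaves $\cO$ once $M(\wt s,\wt a)\ge N_{\known}$. Hence each pair contributes at most $\lceil N_{\known}\rceil$ effective calls, giving the first term $SA\lceil N_{\known}\rceil$. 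All the remaining work, and the whole failure probability $4S^3A^2\delta$, concerns bounding the number of auxiliary calls by the second term.

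For the auxiliary calls I would follow the clipped-reference-model argument of \cite{zhang2022horizon}, whose details are in \Cref{app:expexp}. Each auxiliary call is attributed to the first candidate pair $(s,a)$ passing the test at \Cref{line:condition}, that is,
\begin{align*}
u(s)\ge\frac{1}{1200S},\qquad D(s,a)\le 1620S^2AN_{\tref}u(s)v(s,a).
\end{align*}
The key claim is that such a call makes progress with constant probability: either it observes a triple not yet in $\cK$, or it collects $\Omega(v(s,a))$ new samples of $(s,a)$ along triples already in $\cK$. I would prove this in three steps.
\begin{enumerate}
\item Before any new triple is observed, the true kernel agrees with $P^{\tref}$ restricted to $\cK$ up to the accuracy certified by \Cref{lemma:prefcon2}. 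This is where the threshold $N_{\tref}=1025S^2\log\frac1\delta$ and the union over $S^2$ successor pairs and $SA$ pairs enter, and it produces the $S^3A^2\delta$ failure probability.
\item With $\gamma=1-\frac1{H_3}$, the discounted reachability $u(s)$ under $P^{\tref}$ transfers to the true reaching probability within $H_2$ steps up to a constant factor. Here I would use $H_2\gg H_3\cdot 20S\log S$ together with \Cref{lemma:li4} to pass from discounted to finite-horizon quantities.
\item The sampling phase of length $H_3$ under $\pi_3$ collects roughly $v(s,a)$ visits with probability at least $1/2$.
\end{enumerate}

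Next, bound the progress. New triples can be added to $\cK$ at most $S^2A$ times, and each triple can be observed at most $\lceil N_{\tref}\rceil$ times before it becomes known. Together these give the $S^2A\lceil N_{\tref}\rceil$ term. Between two consecutive changes of $\cK$ (at most $S^2A+1$ epochs), $P^{\tref}$ is frozen, so $u$ and $v$ are fixed for each candidate $(s,a)$. Every successful sampling call then raises $D(s,a)$ by $\Omega(v(s,a))$, while the test caps $D(s,a)$ at $1620S^2AN_{\tref}u(s)v(s,a)$. Since $u(s)\le 1$, each pair can pass the test only $O(S^2AN_{\tref})$ times per epoch. Summing over the $SA$ pairs and $S^2A+1$ epochs gives the $SA(S^2A+1)(58320S^2AN_{\tref}+1+\log\frac1\delta)$ term. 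Success happens only with probability $\Omega(u(s))\ge\Omega(1/S)$, so I would convert successes into attempts with \Cref{lem:martingale_conc_mean}, which costs the factor $O(S)$ (hence the $216000S$) and the additive $\log\frac1\delta$.

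The main obstacle will be step 1 above, that is, showing that the discounted quantities under $P^{\tref}$ lower-bound the true reaching and visiting probabilities up to universal constants. The difficulty is that $P^{\tref}$ is only accurate on known triples, and the escape mass to unknown triples must be treated as a stop, not as an error. I would handle this by comparing against the clipped model that sends unknown transitions to the absorbing dummy state $z$, exactly as in \cite{zhang2022horizon}. Substituting $N_{\tref}$ and $N_{\known}$ into the resulting counts then gives the final $O(S^8A^3\log\frac1\delta)$ bound.
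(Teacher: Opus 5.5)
Your plan is correct and follows the paper's proof of this lemma essentially step for step. Effective calls are capped by $SA\lceil N_{\known}\rceil$. Auxiliary calls are bounded per cell (epoch of $\cK$ $\times$ selected pair) using five ingredients: the $P^{\tref}$-versus-cut-model comparison; a success probability of at least $u^k(s)/60\ge 1/(72000S)$ for the event ``an unknown triple is observed or $\max\{1,v/36\}$ samples are collected''; the test-based cap $G_i(s,a)\le 1+58320S^2AN_{\tref}$ on successes; the global cap $S^2A\lceil N_{\tref}\rceil$ on unknown-triple episodes; and \Cref{lem:martingale_conc_mean} to pass from successes to attempts.

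Only small details are off:
\begin{itemize}
\item $H_2=(20S\log S-1)H_3$, not $H_2\gg 20S\log S\cdot H_3$; this still suffices for \Cref{lemma:eff}.
\item \Cref{lemma:li4} enters through the sampling-phase renewal bound \Cref{lemma:exp-renewal}, not the discounted-to-finite transfer, which is \Cref{lemma:eff,lemma:eff_single}.
\item Within an epoch only $v$ is frozen; $u^k(s)$ depends on the target $\wt s^k$, so it is not fixed, but you only need $u^k(s)\le 1$.
\item The $A^2$ in the failure probability comes from the union over the $SA(S^2A+1)$ cells, not from the reference-model event.
\end{itemize}
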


The proof of \Cref{lemma:bdEk} is deferred to \Cref{sec:exps4}.

\begin{lemma}\label{lemma:bd_Lstop}
 With probability at least $1-4S^3A^2\delta$,
it holds that
\begin{align}
L_{\stoplabel}
&\leq SA\left\lceil N_{\known}\right\rceil
+216000S\bigg(
S^2A\left\lceil N_{\tref}\right\rceil
\notag\\
&\quad
+SA(S^2A+1)
\left(58320S^2A N_{\tref}+1+\log\frac1\delta\right)
\bigg) +SA\log_2\frac{504S^2K\log S}{\upsilon\delta\log\frac1\delta}.
\label{eq:bd-Lstop-explicit} \\
&\le O \left(S^8A^3\log\frac1\delta +SA\log\frac{S K}{\upsilon\delta\log\frac1\delta}\right). \notag
\end{align}
\end{lemma}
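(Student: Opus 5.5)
By the definition \Cref{def:Lstop}, $L_{\stoplabel}=\sum_k\bbI[E^k]+\sum_k\bbI[U^k]$. The plan is to bound the two sums separately and add the bounds. The first sum is controlled directly by \Cref{lemma:bdEk}, which holds with probability at least $1-4S^3A^2\delta$ and gives exactly the first two lines of \Cref{eq:bd-Lstop-explicit}. All remaining work is a deterministic count of doubling episodes on an event we already have, so the failure probability stays $4S^3A^2\delta$. For this we should work on the intersection with $\cE_{\mathsf{count}}$ (equivalently $\cE_{\mathsf{floor}}\cap\cE_{\mathsf{raw}}$). If needed, its failure probability $(2S^3A+2SA)\delta$ is absorbed into $4S^3A^2\delta$, using $A\ge 8$ and the slack in the \Cref{lemma:bdEk} accounting.

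For the doubling sum, fix a pair $(s,a)$ and use the charging rule
\begin{align*}
\bbI[U^k]\le\sum_{(s,a)\notin\cO^k}\bbI\left[N^{k+1}(s,a)\ge 2N^k(s,a)\right].
\end{align*}
Each episode $k\ge k_0(s,a)$ in which the frozen count of $(s,a)$ at least doubles raises $\log_2 N^k(s,a)$ by at least one. Frozen counts are nondecreasing powers of two once the pair is learned, so the number of such episodes is at most $\log_2\frac{N^{K+1}(s,a)}{N^{k_0(s,a)}(s,a)}$.

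A subtle point arises at the first learned episode $k_0(s,a)$. The pair leaves $\cO$ and its frozen count is set for the first time, but $U^k$ only quantifies over $(s,a)\notin\cO^k$. That transition is therefore not counted in $U^{k_0-1}$, and it needs no separate charge.

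On $\cE_{\mathsf{count}}$, inequality \Cref{eq:k_ini} together with $\Ntotal^{K+1}(s,a)\le KU(s,a)/\delta$ gives exactly \Cref{eq:inverse-count-ratio}, i.e.\ the ratio is at most $R=\frac{504S^2K\log S}{\upsilon\delta\log\frac1\delta}$. Summing over the $SA$ pairs therefore yields
\begin{align*}
\sum_k\bbI[U^k]\le SA\log_2 R,
\end{align*}
which is the last term of \Cref{eq:bd-Lstop-explicit}.

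The main point to verify is the per-pair doubling count, specifically that a single episode cannot register a doubling for one pair more than once, and that skipped dyadic levels only help. Both follow because $U^k$ is an indicator per episode, and because a multi-level jump still contributes at most one episode while consuming at least one unit of $\log_2$. The $O(\cdot)$ form then follows from the explicit bound with $N_{\tref}=O(S^2\log\frac1\delta)$: the dominant term is $S\cdot SA\cdot S^2A\cdot S^2A\cdot S^2\log\frac1\delta=O(S^8A^3\log\frac1\delta)$.
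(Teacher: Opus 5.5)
Your proposal is correct and follows the paper's proof: you split $L_{\stoplabel}$ into the exploration and doubling sums, invoke \Cref{lemma:bdEk} for the first, and for the second charge $\bbI[U^k]$ to per-pair doubling indicators and telescope $\log_2\frac{N^{k+1}(s,a)}{N^k(s,a)}$ from $k_0(s,a)$, bounding the result by \Cref{eq:inverse-count-ratio} on $\cE_{\mathsf{count}}$. The one difference is in the probability bookkeeping: you absorb the extra failure probability of $\cE_{\mathsf{count}}$ into $4S^3A^2\delta$ using the slack available for $A\ge 8$, whereas the paper avoids paying twice for the shared reference-model event; either argument is valid.
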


\begin{proof}[Proof of \Cref{lemma:bd_Lstop}]
By definition,
$L_{\stoplabel}=\sum_{k=1}^K\bbI[E^k]+\sum_{k=1}^K\bbI[U^k]$.
Let $\cE_{\mathsf{count}}$ be the count event in
\Cref{lem:inverse-count-section2} and assume this event.  For each
episode,
\begin{align*}
\bbI[U^k]
\leq\sum_{(s,a)\notin\cO^k}
\bbI\left[N^{k+1}(s,a)\geq2N^k(s,a)\right].
\end{align*}
Recall that  $k_0(s,a)$ denotes the first $k$ such that $(s,a)\notin \mathcal{O}^k$. As a result,
\begin{align*}
\sum_{k=k_0(s,a)}^K
\bbI\left[N^{k+1}(s,a)\geq2N^k(s,a)\right]
&\leq\sum_{k=k_0(s,a)}^K
\log_2\frac{N^{k+1}(s,a)}{N^k(s,a)}=\log_2\frac{N^{K+1}(s,a)}{N^{k_0(s,a)}(s,a)}.
\end{align*}
  Summing
over pairs and using \Cref{eq:inverse-count-ratio} gives
\begin{align}
\sum_{k=1}^K\bbI[U^k]
\leq SA\log_2\frac{504S^2K\log S}{\upsilon\delta\log\frac1\delta}.
\label{eq:bd-Lstop-doubling}
\end{align}

For the exploration term, \Cref{lemma:bdEk} gives exactly the first two lines
on the right-hand side of \Cref{eq:bd-Lstop-explicit}.  The probability
events can be combined without paying twice for their common reference-model
event.  
Therefore the common intersection has failure probability at most
\begin{align*}
\left(2S^3A+2SA+SA(S^2A+1)\right)\delta
\leq4S^3A^2\delta.
\end{align*}
On this intersection, adding \Cref{eq:bd-Lstop-doubling} to the bound of
\Cref{lemma:bdEk} proves \Cref{eq:bd-Lstop-explicit}.
 The proof is finished.
\end{proof}

\subsection{Putting All Together}\label{sec:putall}
\begin{proof}[Proof of \Cref{thm:reg}]
The prescribed internal confidence satisfies $\delta\in(0,1)$ and $
    \log\frac1\delta
    =\log\frac{210S^7A^2K}{\delta_0^2}$.

If $K<1000S^2A\log\frac1\delta$, then the total-reward normalization gives
$\Regret_H(K)\leq K$, and hence
\begin{align*}
    \Regret_H(K)
    \leq7000S^2A\log\frac{SAK}{\delta_0}
    =O\left(S^8A^3\log^2\frac{SAK}{\delta_0}\right).
\end{align*}
Thus the theorem holds deterministically in this case.  For the remainder of
the proof assume $K\geq1000S^2A\log\frac1\delta$, which is the standing
regime of the lower-level lemmas.

Let $G=G(\delta)$ denote the deterministic right-hand side in
\Cref{eq:bd-Lstop-explicit}, namely
\begin{align*}
G:={}&SA\left\lceil N_{\known}\right\rceil
+216000S\bigg(
S^2A\left\lceil N_{\tref}\right\rceil
+SA(S^2A+1)
\left(58320S^2A N_{\tref}+1+\log\frac1\delta\right)
\bigg)\\
&\qquad \qquad +SA\log_2\frac{504S^2K\log S}{\upsilon\delta\log\frac1\delta}.
\end{align*}
Define the deterministic Bellman and variance envelopes
\begin{align}
\overline B:={}&14000\bigg(\sqrt{SA L_N\log\frac1\delta\left(K+G+16\log\frac1\delta\right)} \notag \\
&+\sqrt{S^2A L_N\log\frac1\delta\left(G+\epsilon S\left(K+\log\frac1\delta\right)\right)} \notag \\
&+3200S^2A L_N\log\frac1\delta\bigg), \label{eq:theorem-Bbar}\\
\overline V:={}&4K+4\overline B+4G+64\log\frac1\delta.
\label{eq:theorem-Vbar}
\end{align}

We next construct the two theorem-level martingale events.  Flatten the
stopped transitions in chronological $(k,h)$ order and let
$\cF_{k,h}^{-}$ be the sigma-field immediately before
$s_{h+1}^k$ is drawn.  Then $I_h^k$, $P_h^k$, and $V_{h+1}^k$ are
$\cF_{k,h}^{-}$-measurable, and $
    \xi_{k,h}:=I_h^k\left(
        P_h^kV_{h+1}^k-V_{h+1}^k(s_{h+1}^k)
    \right)$ 
is conditionally centered, satisfies $|\xi_{k,h}|\leq1$, and has predictable
quadratic variation
\begin{align*}
    \sum_{k=1}^K\sum_{h=1}^{H_1}
    \bbE\left[\xi_{k,h}^2\mid\cF_{k,h}^{-}\right]
    =\Sigma_V.
\end{align*}
Set $
    \eta:=\min\left\{1,
    \sqrt{\frac{\log\frac1\delta}{\overline V}}\right\}.$
This is a deterministic number in $(0,1]$.  By
\Cref{lem:linear_freedman}, there is an event
$\cE_{L_1}$ with failure probability at most $\delta$ on which
\begin{align}
    L_1
    \leq\eta\Sigma_V+\frac{\log\frac1\delta}{\eta}.
\label{eq:theorem-L1-freedman}
\end{align}
In particular, whenever $\Sigma_V\leq\overline V$,
\begin{align}
    L_1
    \leq2\sqrt{\overline V\log\frac1\delta}+\log\frac1\delta.
\label{eq:theorem-L1-envelope}
\end{align}
Indeed, if $\overline V\geq\log\frac1\delta$ the two terms in
\Cref{eq:theorem-L1-freedman} are equal, while if
$\overline V<\log\frac1\delta$ then $\eta=1$ and
$\overline V+\log\frac1\delta
\leq2\sqrt{\overline V\log\frac1\delta}+\log\frac1\delta$.

For the realized-reward residual, define a nested episode filtration as
follows.  Let $\cG_{k-1}$ be the sigma-field just after the initial
state $s_1^k$ has been revealed and the policy $\pi^k$ has been selected, but before the episode-$k$ trajectory is drawn.
After episode $k$, reveal the next initial state and select the next behavior
rule to obtain $\cG_k$. Let $
    Z_k:=\sum_{h=1}^{H_1}r_h^k-V_1^{\pi^k}(s_1^k)$. Then
the defining conditional-value identity is
\begin{align*}
\bbE\left[\left.\sum_{h=1}^{H_1}r_h^k\,\right|\cG_{k-1}\right]
    =V_1^{\pi^k}(s_1^k).
\end{align*}
Thus $\bbE[Z_k\mid\cG_{k-1}]=0$ and $Z_k\in[-1,1]$.
Applying \Cref{lemma:hoeffding} to $\frac{Z_k+1}{2}$ with confidence parameter
$\delta$ defines an event $\cE_{\mathsf{rew}}$ with failure
probability at most $\delta$ on which
\begin{align}
    \sum_{k=1}^K Z_k
    \leq\sqrt{2K\log\frac2\delta}.
\label{eq:theorem-reward-residual}
\end{align}

Let $\cE_B$ be the  event from
\Cref{lem:sum-bellman-section2}.  It already contains
$\cE_{\mathsf{1step}}$, $\cE_{\mathsf{count}}$, and
$\cE_{\mathsf{var}}$, together with the empirical and clipped-variance
events used in that lemma.  Let $\cE_{\mathsf{cell}}$ be the
additional exploration-cell event used in the proof of
\Cref{lemma:bd_Lstop}.  Since the shared count/reference events are already
inside $\cE_B$, the new charge for this event is at most
$SA(S^2A+1)\delta$.  Define  $
    \cE_{\mathsf{glob}}
    :=\cE_B\cap\cE_{\mathsf{cell}}
      \cap\cE_{L_1}\cap\cE_{\mathsf{rew}}.$ 
The total failure probability is
\begin{align}
\bbP(\cE_{\mathsf{glob}}^\complement)
\leq{}&\bigg[
\left(
\frac{6S^2AL_N}{\epsilon^2}
+3S^2AL_N+2S^3A+2SA+4
\right)
+SA(S^2A+1)+1+1
\bigg]\delta
\notag\\
={}&\left(
6S^6AL_N+3S^2AL_N+S^3A^2+2S^3A+3SA+6
\right)\delta.
\label{eq:theorem-global-ledger}
\end{align}

Assume $\cE_{\mathsf{glob}}$ in the rest of the analysis.    Since $\cE_{\mathsf{var}}\subseteq\cE_B$,
\Cref{lem:variance-closure-section2} then yields
\begin{align*}
    \Sigma_V
    \leq4K+4B+4L_{\stoplabel}+64\log\frac1\delta
    \leq\overline V.
\end{align*}
Thus \Cref{eq:theorem-L1-envelope} applies.  \Cref{lemma:bd_Lstop} gives $L_{\stoplabel}\leq G$.   Substituting this bound in \Cref{lem:sum-bellman-section2}  gives
$B\leq\overline B$. The event
$\cE_{\mathsf{1step}}\subseteq\cE_B$ also permits the direct
use of \Cref{lem:stopped-regret-section2}.  Combining that decomposition,
\Cref{eq:theorem-L1-envelope,eq:theorem-reward-residual} and the
truncation comparison gives that 
\begin{align}
\Regret_H(K)
\leq{}&K\upsilon+G+\overline B
+2\sqrt{\overline V\log\frac1\delta}+\log\frac1\delta
+\sqrt{2K\log\frac2\delta}.
\label{eq:theorem-regret-envelope}
\end{align}

It remains to convert the internal confidence and simplify the envelope.
From the definition of $\upsilon$, $
\frac1\upsilon
=\max\left\{
\sqrt{\frac{1000K}{SA}},\,20S\log S
\right\}
\leq60S^2K.$ 
Therefore the theorem's confidence choice gives
\begin{align*}
\frac{504S^2K\log S}{\upsilon\delta\log\frac1\delta}
&\leq
\frac{72S^2K}{\upsilon\delta}
\leq
\frac{907200S^{11}A^2K^3}{\delta_0^2}
\leq
\left(\frac{SAK}{\delta_0}\right)^{11}.
\end{align*}

Consequently, $
    L_N
    \leq22\log\frac{SAK}{\delta_0}.$ 
Let $C_{\mathsf{glob}}$ denote the coefficient of $\delta$ in the second
line of \Cref{eq:theorem-global-ledger}.  Then we have 
\begin{align*}
C_{\mathsf{glob}}
&\leq9S^6A L_N+12S^3A^2\leq198S^6A\log\frac{SAK}{\delta_0}+12S^3A^2\leq\frac{210S^7A^2K}{\delta_0}.
\end{align*}
 Hence $
    \bbP(\cE_{\mathsf{glob}}^\complement)
    \leq C_{\mathsf{glob}}\delta
    \leq\delta_0$. That is, the global event has the probability claimed in the theorem.

Finally, we have
\begin{align}
    G
    =O\left(S^8A^3\log\frac1\delta+SA L_N\right)
    =O\left(S^8A^3\log\frac{SAK}{\delta_0}\right).
\label{eq:theorem-G-asymptotic}
\end{align}
Substituting this into \Cref{eq:theorem-Bbar} yields
\begin{align*}
\overline B
=O\bigg(&
\sqrt{SAK}\log\frac{SAK}{\delta_0}
+\sqrt{SA G}\log\frac{SAK}{\delta_0}\\
&+S\sqrt{AG}\log\frac{SAK}{\delta_0}
+\sqrt{SA}\left(\log\frac{SAK}{\delta_0}\right)^{3/2}+S^2A\log^2\frac{SAK}{\delta_0}
\bigg).
\end{align*}
Therefore,
\begin{align}
    \overline B
    =O\left(
        \sqrt{SAK}\log\frac{SAK}{\delta_0}
        +S^8A^3\log^2\frac{SAK}{\delta_0}
    \right).
\label{eq:theorem-B-asymptotic}
\end{align}
\Cref{eq:theorem-Vbar,eq:theorem-G-asymptotic,eq:theorem-B-asymptotic} imply
\begin{align*}
&2\sqrt{\overline V\log\frac1\delta}+\log\frac1\delta\\
&=O\bigg(
\sqrt{K\log\frac{SAK}{\delta_0}}
+(SAK)^{1/4}\log\frac{SAK}{\delta_0}
+S^4A^{3/2}\left(\log\frac{SAK}{\delta_0}\right)^{3/2}
+\log\frac{SAK}{\delta_0}
\bigg) \\
&\le O \left(\sqrt{SAK}\log\frac{SAK}{\delta_0}+S^8A^3\log^2\frac{SAK}{\delta_0}\right).
\end{align*}

By absorbing the remaining polynomial terms and logarithmic factors into \Cref{eq:theorem-regret-envelope}, we obtain that
\begin{align*}
    \Regret_H(K)
    =O\left(
        \sqrt{SAK}\log\frac{SAK}{\delta_0}
        +S^8A^3\log^2\frac{SAK}{\delta_0}
    \right)
\end{align*}
on an event of probability at least $1-\delta_0$.
\end{proof}

\newpage
\section{Proof of Optimism}\label{sec:p_opt}

In this section, we prove optimism of the planning values.  Specifically, we
show that, with high probability, $Q_h^k(s,a)\ge Q_h^\star(s,a)$ and
$V_h^k(s)\ge V_h^\star(s)$ for all relevant episodes $k$, layers
$h\in[H_1]$, states $s$, and actions $a$.

The proof has three parts.  We first introduce the notation needed for the
bonus analysis in \Cref{app:bnotations}.  We then state the concentration
events for the optimal value sequence $\{V_h^\star\}_{h=1}^H$ in
\Cref{app:con_p}.  Finally, in \Cref{app:mono_pf}, we prove the relaxed
monotonicity property of the bonus, which allows the optimism induction to go
through despite the discontinuity introduced by the cutting operation.

\subsection{Additional Notations}\label{app:bnotations}

Fix $p\in \Delta^{S}$, $n\in \bbN$, $v\in [0,1]^{S}$, and $0<\epsilon<1$.
Recall the projection and cutting operators from \Cref{def:proj,def:cut}.
Then
\begin{align*}
    |pv - \ell_{\epsilon}(p,v)| &\le |pv - p  \Proj_{\epsilon}(v)| + |p \Proj_{\epsilon}(v)-\ell_{\epsilon}(p,v)|\le \epsilon+\epsilon =2\epsilon.
\end{align*}
Moreover, the definition of $\CutProj_{\epsilon}$ gives
\begin{align*}
    \norm{v-\CutProj_{\epsilon}(p,v)-\ell_{\epsilon}(p,v)\II}_{\infty} \le 3\epsilon.
\end{align*}
Indeed, writing $w=\Proj_{\epsilon}(v)$ and $r_s=v(s)-w(s)\in[0,\epsilon)$, the residual is at most $2\epsilon+r_s<3\epsilon$ in the positive tail, at least $-3\epsilon+r_s\ge -3\epsilon$ in the negative tail, and has magnitude at most $2\epsilon+r_s<3\epsilon$ on the central set.

To facilitate the analysis, define the three sets:
\begin{align*}
\cI_{\epsilon}(p,v) &:= \{s: |\Proj_{\epsilon}(v)(s)-\ell_{\epsilon}(p,v)|\le 2\epsilon\},\notag\\
\cI_{\epsilon}^+(p,v) &:= \{s: \Proj_{\epsilon}(v)(s)-\ell_{\epsilon}(p,v)>2\epsilon\},\notag\\
\cI_{\epsilon}^{-}(p,v) &:= \{s: \Proj_{\epsilon}(v)(s)-\ell_{\epsilon}(p,v)<-2\epsilon\}.
\end{align*}

\paragraph{$x$-clipped variance.}
Given $p\in\Delta(S)$ and $v\in\bbR^S$, recall
\begin{align*}
    \bbV(p,v) := \sum_s p(s) (v(s)-pv)^2,
    \qquad \bbV_x(p,v) := \sum_s p(s)\min\{(v(s)-pv)^2,x^2\}, \quad x>0.
\end{align*}

 Recall bonus $b$ (\Cref{eq:bonus}), and define the auxiliary bonus $\ud{b}$: 
\begin{align*}
 b(p,v,n) &= 3\sqrt{\frac{\bbV(p, v) \log \frac{1}{\delta}}{n}} + 5\sqrt{\frac{S \bbV_{5\epsilon}(p, v) \log \frac{1}{\delta} }{n}} + 20\frac{S \log \frac{1}{\delta}}{n},\notag \\
 \ud{b}(p,v,n) &= 3\sqrt{\frac{\bbV(p, v) \log \frac{1}{\delta}}{n}} + 5\sqrt{\frac{S \bbV(p, v - \CutProj_{\epsilon}(p,v) ) \log \frac{1}{\delta} }{n}} + 20\frac{S \log \frac{1}{\delta}}{n}.
\end{align*}

\subsection{Concentration Property}\label{app:con_p}

\begin{lemma}\label{lemma:cut_complexity}
Fix $0<\epsilon<1$, and recall \Cref{def:cut} for the definition of $\CutProj_{\epsilon}(\cdot,\cdot)$.
Define
\begin{align*}
    \mathfrak C_{\epsilon} :=\{\CutProj_{\epsilon}(p,V_h^*):p\in\Delta(\cS),\ h\in[H]\}.
\end{align*}
Then $|\mathfrak C_{\epsilon}|\le \frac{4S}{\epsilon^2}$.
Moreover, $\norm{\CutProj_{\epsilon}(p,V_h^*)}_{\infty}\le1$ and $0 \le (V_h^*-\CutProj_{\epsilon}(p,V_h^*)) (s) \le 1$ for any $p \in \Delta(\cS)$ and $(h, s) \in [H] \times \cS$.
\end{lemma}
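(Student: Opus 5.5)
The plan is to notice that $\CutProj_{\epsilon}(p,V_h^*)$ depends on $(p,h)$ only through two discrete quantities: the projected vector $w_h:=\Proj_{\epsilon}(V_h^*)$ and the scalar level $\ell=\ell_{\epsilon}(p,V_h^*)$. Indeed, by \Cref{def:cut}, $\CutProj_{\epsilon}(p,V_h^*)=\Cut_{\epsilon}(w_h,\ell)$. It therefore suffices to bound the number of possible pairs $(w_h,\ell)$.

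\emph{Counting the projected vectors.} Since rewards are nonnegative and the MDP is time-homogeneous, $V_h^*\ge V_{h+1}^*$ coordinatewise. Because $\Proj_\epsilon$ is monotone, the sequence $w_1\ge w_2\ge\cdots\ge w_H$ is also coordinatewise non-increasing. Consider the integer potential $\sum_s w_h(s)/\epsilon$. It lies in $[0,S/\epsilon]$ and is non-increasing in $h$. Moreover, it strictly decreases whenever $w_h\neq w_{h+1}$. Hence there are at most $1+S/\epsilon$ distinct vectors $w_h$, which is exactly the count noted in \Cref{subsec:solutions}.

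\emph{Counting the levels.} Since $w_h\in[0,1]^S$, we have $p^\top w_h\in[0,1]$. Therefore $\ell=\Proj_\epsilon(p^\top w_h)\in\{0,\epsilon,2\epsilon,\dots\}\cap[0,1]$, which leaves at most $1+1/\epsilon$ values, regardless of $p$. Combining the two counts,
\[
|\mathfrak C_\epsilon|\le\left(1+\tfrac{S}{\epsilon}\right)\left(1+\tfrac1\epsilon\right)\le \tfrac{2S}{\epsilon}\cdot\tfrac{2}{\epsilon}=\tfrac{4S}{\epsilon^2},
\]
using $\epsilon<1\le S$.

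\emph{Range claims.} These are a case check over $\cI_\epsilon$, $\cI_\epsilon^+$ and $\cI_\epsilon^-$. The key observation is that $w(s)-\ell$ is an integer multiple of $\epsilon$, so all the floors in \Cref{def:cut} are exact. Write $v=V_h^*$ and $w=w_h$.
\begin{itemize}
\item Central set. The cut entry is $0$ and the residual is $v(s)\in[0,1]$.
\item Positive tail. The cut entry is $w(s)-\ell-2\epsilon$, which lies in $(0,1]$. The residual is $(v(s)-w(s))+\ell+2\epsilon\ge0$. It is also $\le v(s)\le1$, because the cut entry is positive.
\item Negative tail. Here $w(s)-\ell\le-3\epsilon$, so the cut entry $w(s)-\ell+3\epsilon$ lies in $[-1,0]$. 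The residual is $(v(s)-w(s))+\ell-3\epsilon$. It is $\ge0$ since $\ell\ge w(s)+3\epsilon\ge 3\epsilon$ and $v(s)\ge w(s)$. It is $<\epsilon+\ell-3\epsilon\le1$ since $\ell\le1$.
\end{itemize}
This gives both $\norm{\CutProj_\epsilon(p,V_h^*)}_\infty\le1$ and $0\le (V_h^*-\CutProj_\epsilon(p,V_h^*))(s)\le1$.

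The main obstacle is only mild: one must observe that $p$ enters solely through the $\epsilon$-grid level $\ell$, so the union over the continuum $p\in\Delta(\cS)$ collapses to finitely many values. Beyond that, the monotone-potential count is the only substantive step.
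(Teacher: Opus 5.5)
Your proposal is correct and follows the paper's proof essentially step for step: you factor $\CutProj_{\epsilon}(p,V_h^*)$ through the pair $(\Proj_{\epsilon}(V_h^*),\ell_{\epsilon}(p,V_h^*))$, count projected vectors with the monotone integer potential and levels with the $\epsilon$-grid in $[0,1]$, and verify the range claims by the same three-case check on the central set and the two tails. The only difference is notational: you write the tail entries as $w(s)-\ell-2\epsilon$ and $w(s)-\ell+3\epsilon$, whereas the paper writes $(m-2)\epsilon$ and $(-m+3)\epsilon$.
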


\begin{proof}
First, the monotonicity $V_h^*(s)\ge V_{h+1}^*(s)$ and the bound $0\le V_h^*(s)\le1$ imply
\begin{align*}
    |\{\Proj_{\epsilon}(V_h^*):h\in[H]\}| \le 1+S\Big\lfloor\frac{1}{\epsilon}\Big\rfloor \le 1+\frac{S}{\epsilon} \le \frac{2S}{\epsilon}.
\end{align*}
Indeed, the integer vector $\lfloor \frac{V_h^*}{\epsilon}\rfloor$ is coordinatewise nonincreasing in $h$, each coordinate lies in $\{0,\ldots,\lfloor\frac{1}{\epsilon}\rfloor\}$, and every strict change of the projected vector decreases the integer potential $\sum_s\lfloor \frac{V_h^*(s)}{\epsilon}\rfloor$ by at least one.

Fix $w=\Proj_{\epsilon}(V_h^*)$.
Once $w$ is fixed, $\CutProj_{\epsilon}(p,V_h^*)$ is determined by
\begin{align*}
    \ell_{\epsilon}(p,V_h^*)=\Proj_{\epsilon}(p w).
\end{align*}
Since $p w\in[0,1]$, this scalar belongs to the grid $\{j\epsilon:0\le j\le \lfloor\frac{1}{\epsilon}\rfloor\}$, with the endpoint $1$ included only when it lies on the grid.  Hence it has at most $1 + \lfloor\frac{1}{\epsilon}\rfloor \le \frac{2}{\epsilon}$ possible values.
Therefore $|\mathfrak C_{\epsilon}|\le \left(\frac{2S}{\epsilon}\right)\left(\frac{2}{\epsilon}\right)=\frac{4S}{\epsilon^2}$.

It remains to check the range claims.
Write $w=\Proj_{\epsilon}(V_h^*)$, $\ell=\ell_{\epsilon}(p,V_h^*)$, and $c=\CutProj_{\epsilon}(p,V_h^*)$.
If $s\in\cI_\epsilon(p,V_h^*)$, then $c(s)=0$ and $V_h^*(s)-c(s)=V_h^*(s)\in[0,1]$.
If $s\in\cI^+_\epsilon(p,V_h^*)$, write $w(s)-\ell=m\epsilon$ with $m\ge3$ and $V_h^*(s)=w(s)+r_s$ for $r_s\in[0,\epsilon)$.
Since $m\epsilon=w(s)-\ell\le1$ and $m\ge3$,
\begin{align*}
    c(s)=(m-2)\epsilon\in[0,1],\qquad
    V_h^*(s)-c(s)=\ell+2\epsilon+r_s\in[0,V_h^*(s)]\subseteq[0,1].
\end{align*}
If $s\in\cI^-_\epsilon(p,V_h^*)$, write $w(s)-\ell=-m\epsilon$ with $m\ge3$.
Since $w(s)\ge0$, $m\epsilon\le \ell\le1$, and hence
\begin{align*}
    c(s)=(-m+3)\epsilon\in[-1,0],\qquad
    V_h^*(s)-c(s)=\ell-3\epsilon+r_s\in[0,1],
\end{align*}
where the upper bound on $V_h^*(s)-c(s)$ uses $\ell\le1$ and $r_s<\epsilon$.
Thus $\norm c_\infty\le1$ and $V_h^*-c\in[0,1]^S$.
\end{proof}

\begin{lemma}\label{lemma:one_step_conc}
 Let $\cE_{\mathsf{count}}$ be the event in
\Cref{lem:inverse-count-section2}.  There exists an event
$\cE_{\mathsf{one}}$ with  $
\bbP(\cE_{\mathsf{one}}^\complement)
\leq\frac{6S^2AL_N}{\epsilon^2}\delta$
such that, on
$\cE_{\mathsf{1step}}:=\cE_{\mathsf{one}}\cap
\cE_{\mathsf{count}}$, the following two inequalities hold simultaneously
for all $(k,h)\in[K]\times[H]$ and all 
$(s,a)\notin\cO^k$:
\begin{align}
&\left|\left(\wh{P}_{s,a}^k-P_{s,a}\right)
\CutProj_{\epsilon}\left(\wh{P}_{s,a}^k,V_h^*\right)\right| \le 2\sqrt{\frac{\bbV\left(\wh{P}_{s,a}^k,\CutProj_{\epsilon}\left(\wh{P}_{s,a}^k,V_h^*\right)\right)\log \frac{1}{\delta}}{N^k(s,a)}} +10\frac{\log \frac{1}{\delta}}{N^k(s,a)}, \label{eq:argument3_explicit}
\end{align}
and for every $u\in[0,1]^S$,
\begin{align}
&\abs{\left(\wh{P}_{s,a}^k-P_{s,a}\right)u} \le 2\sqrt{\frac{S\bbV\left(\wh{P}_{s,a}^k,u\right)\log \frac{1}{\delta}}{N^k(s,a)}} +10\frac{S\log \frac{1}{\delta}}{N^k(s,a)}.
\label{eq:any_u_coordinate_conc}
\end{align}
Specifically, taking $u = V_h^*-\CutProj_{\epsilon}\left(\wh{P}_{s,a}^k,V_h^*\right)$ gives
\begin{align}
&\left|\left(\wh{P}_{s,a}^k-P_{s,a}\right)
\left(V_h^*-\CutProj_{\epsilon}\left(\wh{P}_{s,a}^k,V_h^*\right)\right)\right| \le 2\sqrt{\frac{S\bbV\left(\wh{P}_{s,a}^k,V_h^*-\CutProj_{\epsilon}\left(\wh{P}_{s,a}^k,V_h^*\right)\right)\log \frac{1}{\delta}}{N^k(s,a)}} +10\frac{S\log \frac{1}{\delta}}{N^k(s,a)}. \label{eq:argument4_explicit}
\end{align}
Moreover,  $
\bbP(\cE_{\mathsf{1step}})
\geq1-\left(
\frac{6S^2AL_N}{\epsilon^2}+4S^3A+2SA
\right)\delta.$ 
\end{lemma}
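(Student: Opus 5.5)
The plan is to work throughout with the coupling already used in the proof of \Cref{lemma:emp_to_exp}. For each $(s,a)$, the successors are the first $n$ entries of an i.i.d. sequence $Y_1(s,a),Y_2(s,a),\ldots\sim P_{s,a}$, so $\wh P^k_{s,a}=\wh P^{(N^k(s,a))}_{s,a}$. On $\cE_{\mathsf{count}}$, every realized frozen count $N^k(s,a)$ of a learned pair lies in the deterministic set $\cN(s,a)$ of dyadic levels, which has $|\cN(s,a)|\le L_N$. This lets us fix $n\in\cN(s,a)$ and union bound over $n$, $(s,a)$, and a finite set of test vectors, instead of over episodes and layers. The data-dependent vectors ($\CutProj_{\epsilon}(\wh P,V_h^*)$ and arbitrary $u$) are handled by making each test vector deterministic before the union bound.

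For \Cref{eq:argument3_explicit}, fix $(s,a)$, $n\in\cN(s,a)$, and a vector $c\in\mathfrak C_{\epsilon}$. By \Cref{lemma:cut_complexity} there are at most $4S/\epsilon^2$ such vectors, each with entries in $[-1,1]$. Apply the empirical Bernstein inequality (\Cref{lem:bennett_empirical}) to the i.i.d. variables $c(Y_i)+1\in[0,2]$ with confidence $\delta$. This gives
\[
|(\wh P^{(n)}_{s,a}-P_{s,a})c|\le\sqrt{2\bbV(\wh P^{(n)}_{s,a},c)\log\tfrac2\delta/(n-1)}+\tfrac{14}{3}\log\tfrac2\delta/(n-1).
\]
Converting $n-1$ to $n$ (for $n\ge2$) and $\log\frac2\delta$ to $\log\frac1\delta$ costs only constants, and yields the stated constants $2$ and $10$. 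The case $n=1$ is trivial because the right-hand side then exceeds $2\ge|(\wh P-P)c|$. The union bound over $SA\cdot L_N\cdot 4S/\epsilon^2$ events gives failure probability at most $4S^2AL_N\delta/\epsilon^2$, within the budget $6S^2AL_N/\epsilon^2$. The bound then holds for the random $c=\CutProj_\epsilon(\wh P^k_{s,a},V_h^*)$ because that vector belongs to $\mathfrak C_\epsilon$ for every $h$ and every value of $\wh P$.

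For \Cref{eq:any_u_coordinate_conc}, I would use coordinatewise control. For each $(s,a,s')$ and $n\in\cN(s,a)$, the Bernoulli Bernstein bounds (\Cref{lemma:bennet}, with an empirical variant if needed) give
\[
|\wh P_{s'}-P_{s'}|\le 2\sqrt{\wh P_{s'}\log\tfrac1\delta/n}+c_0\log\tfrac1\delta/n.
\]
The union bound over $S^2AL_N$ events stays inside the remaining budget. Then, for any $u\in[0,1]^S$, center at $\mu=\wh Pu$ and use $\sum_{s'}(\wh P_{s'}-P_{s'})=0$ to write $(\wh P-P)u=\sum_{s'}(\wh P_{s'}-P_{s'})(u(s')-\mu)$. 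Bound each term by the coordinatewise inequality, and apply Cauchy--Schwarz over $s'$:
\[
\sum_{s'}\sqrt{\wh P_{s'}}\,|u(s')-\mu|\le\sqrt{S\,\bbV(\wh P,u)}.
\]
The lower-order terms sum to at most $S\cdot c_0\log\frac1\delta/n$ since $|u(s')-\mu|\le1$. This gives the stated form with constants $2$ and $10$. Because the event is coordinatewise, it holds simultaneously for all $u$, including data-dependent ones. \Cref{eq:argument4_explicit} is then the instance $u=V_h^*-\CutProj_\epsilon(\wh P^k_{s,a},V_h^*)$, which lies in $[0,1]^S$ by \Cref{lemma:cut_complexity}.

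Finally, intersecting with $\cE_{\mathsf{count}}$, whose failure probability is $(2S^3A+2SA)\delta$, gives the total failure probability of $\cE_{\mathsf{1step}}$. I expect the main obstacle to be the bookkeeping that makes the union bound legitimate. Three points need care: (i) the realized count $N^k(s,a)$ is random, which is resolved by $\cE_{\mathsf{count}}$ confining it to $\cN(s,a)$; (ii) the test vector depends on $\wh P$ itself, which is resolved by union bounding over the full $p$-independent class $\mathfrak C_\epsilon$; and (iii) the constants must match, including the conversion from $\sqrt{2\wh V\log\frac2\delta/(n-1)}$ to $2\sqrt{\wh V\log\frac1\delta/n}$, which needs $\log\frac1\delta$ bounded below by a constant and $n\ge2$.
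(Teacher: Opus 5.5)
Your proposal is correct and follows essentially the same route as the paper. It uses the i.i.d.\ successor-stack coupling with the deterministic dyadic count set $\cN(s,a)$ on $\cE_{\mathsf{count}}$, and a union bound over the $p$-independent class $\mathfrak C_\epsilon$ with empirical Bennett applied to $c(Y_i)+1\in[0,2]$. It then uses a coordinatewise Bernstein event followed by centering at $\wh P u$ and Cauchy--Schwarz for arbitrary $u\in[0,1]^S$.

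One small fix is needed. The conversion $\sqrt{2\bbV\log\frac2\delta/(n-1)}\le 2\sqrt{\bbV\log\frac1\delta/n}$ fails at $n=2$, so $n=2$ must also go into the trivial case (via $2\le 5\log\frac1\delta$), as the paper does. The empirical bounds should then be applied only for dyadic $n\ge4$.
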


\begin{proof}
For each $(s,a)$, let
\begin{align*}
    \cN(s,a):=
    \left\{2^i:i\in\bbN_0,
    \frac{\upsilon\log\frac1\delta}{504S^2\log S}U(s,a)\le2^i\le
    \frac{KU(s,a)}{\delta}\right\}\cup\{1\}.
\end{align*}
This is a deterministic set fixed by the MDP and the algorithmic parameters, and does not depend on the randomness in running the algorithm.
On $\cE_{\mathsf{count}}$, every frozen count $N^k(s,a)$
belongs to $\cN(s,a)$: the learned-count floor gives the lower endpoint,
the final raw-count bound gives the upper endpoint, and learned frozen
counts are powers of two.  Moreover,
$|\cN(s,a)|\le L_N$.

Couple the transition draws from each row with an infinite i.i.d. stack $Y_1(s,a),Y_2(s,a),\ldots\sim P_{s,a}$, where the $i$-th visit to $(s,a)$ uses $Y_i(s,a)$.  Thus adaptive visit times do not change the law of the ordered successor stack.  For $n\ge1$, let $\widehat P_{s,a}^{(n)}$ be the empirical distribution of its first $n$ entries.  The doubling rule and the frozen-count definition give $
    \widehat P_{s,a}^k=\widehat P_{s,a}^{(N^k(s,a))}$ for all $(s,a)\notin \mathcal{O}^k$. 

First fix $(s,a)$, $\boldc\in\mathfrak C_\epsilon$, and
$n\in\cN(s,a)$.  If $n\in\{1,2\}$, then
\Cref{lemma:cut_complexity} gives that 
\begin{align*}
    \abs{(P_{s,a}-\widehat P_{s,a}^{(n)})\boldc}
    \le2
    \le\frac{10\log\frac1\delta}{n}.
\end{align*}
For $n\ge4$, set $Z_i=\boldc(Y_i(s,a))+1\in[0,2]$.  The affine shift leaves both the mean difference and the empirical variance unchanged, and
\begin{align*}
    \frac1n\sum_{i=1}^n(Z_i-\overline Z)^2
    =\bbV\left(\widehat P_{s,a}^{(n)},\boldc\right).
\end{align*}
Applying \Cref{lem:bennett_empirical} with range parameter $b=1$ gives, with failure probability at most $\delta$,
\begin{align*}
    \abs{(P_{s,a}-\widehat P_{s,a}^{(n)})\boldc}
    &\le \sqrt{\frac{2\bbV\left(\widehat P_{s,a}^{(n)},\boldc\right)\log \frac{2}{\delta}}{n-1}}
    +\frac{14\log \frac{2}{\delta}}{3(n-1)} \\
    &\le 2\sqrt{\frac{\bbV\left(\widehat P_{s,a}^{(n)},\boldc\right)\log\frac1\delta}{n}}
    +10\frac{\log\frac1\delta}{n}.
\end{align*}
A union bound over all state-action pairs, the at most $L_N$ values in each $\cN(s,a)$, and $\boldc\in\mathfrak C_\epsilon$, together with \Cref{lemma:cut_complexity}, gives a concentration event $\cE_{\mathsf{cut}}$ satisfying the failure probability is at most $
    \bbP(\cE_{\mathsf{cut}}^\complement)
    \leq SA\cdot L_N\cdot\frac{4S}{\epsilon^2}\delta
    =\frac{4S^2AL_N}{\epsilon^2}\delta.$
    
  On $\cE_{\mathsf{cut}}\cap\cE_{\mathsf{count}}$, \Cref{eq:argument3_explicit} follows by taking $\boldc=\CutProj_\epsilon(\widehat P_{s,a}^k,V_h^*)$.

Next fix $(s,a)$, $s'\in\cS$, and $n\in\cN(s,a)$.  If $n\in\{1,2\}$, then deterministically
\begin{align*}
    \abs{P_{s,a,s'}-\widehat P_{s,a,s'}^{(n)}}
    \le1
    \le\frac{10\log\frac1\delta}{n}.
\end{align*}
For $n\ge4$, applying \Cref{lem:bennett_empirical} to the Bernoulli variables $\II\{Y_i(s,a)=s'\}$ gives, with failure probability at most $\delta$,
\begin{align*}
    \abs{P_{s,a,s'}-\widehat P_{s,a,s'}^{(n)}}
    &\le \sqrt{\frac{2\widehat P_{s,a,s'}^{(n)}(1-\widehat P_{s,a,s'}^{(n)})\log \frac{2}{\delta}}{n-1}}
    +\frac{7\log \frac{2}{\delta}}{3(n-1)} \\
    &\le 2\sqrt{\frac{\widehat P_{s,a,s'}^{(n)}\log\frac1\delta}{n}}
    +10\frac{\log\frac1\delta}{n}.
\end{align*}
 A union bound over $(s,a)$, $s'$, and the at most $L_N$ values in $\cN(s,a)$ gives a concentration event $\cE_{\mathsf{coord}}$ satisfying $
    \bbP(\cE_{\mathsf{coord}}^\complement)\leq S^2AL_N\delta$.

Assume $\cE_{\mathsf{coord}}\cap\cE_{\mathsf{count}}$. For any fixed triple $(s,a,s')$, since $(P_{s,a}-\wh P_{s,a}^k)\II=0$,
\begin{align*}
    \abs{(\wh P_{s,a}^k-P_{s,a})u}
    &=\abs{(P_{s,a}-\wh P_{s,a}^k)(u-\wh P_{s,a}^k u\cdot\II)} \\
    &\le 2\sqrt{\frac{\log\frac1\delta}{N^k(s,a)}}
    \sum_{s'}\sqrt{\wh P_{s,a,s'}^k}\abs{u(s')-\wh P_{s,a}^k u}
    +10\frac{\log\frac1\delta}{N^k(s,a)}\sum_{s'}\abs{u(s')-\wh P_{s,a}^k u} \\
    &\le 2\sqrt{\frac{S\bbV(\wh P_{s,a}^k,u)\log\frac1\delta}{N^k(s,a)}}
    +10\frac{S\log\frac1\delta}{N^k(s,a)},
\end{align*}
where the last line uses Cauchy--Schwarz and $\sum_{s'}\abs{u(s')-\wh P_{s,a}^k u}\le S$. 

Set $\cE_{\mathsf{one}}:=\cE_{\mathsf{cut}}\cap
\cE_{\mathsf{coord}}$.  Its exact failure bound is
\begin{align*}
    \bbP(\cE_{\mathsf{one}}^\complement)
    &\leq\left(\frac{4}{\epsilon^2}+1\right)S^2AL_N\delta
    \le\frac{5S^2AL_N}{\epsilon^2}\delta
    \le\frac{6S^2AL_N}{\epsilon^2}\delta.
\end{align*}
On $\cE_{\mathsf{one}}\cap\cE_{\mathsf{count}}$,
\Cref{lemma:cut_complexity} gives
$V_h^*-\CutProj_\epsilon(\wh P_{s,a}^k,V_h^*)\in[0,1]^S$, so
\Cref{eq:argument4_explicit} follows from
\Cref{eq:any_u_coordinate_conc} with this choice of $u$.
Finally, the probability bound for $\cE_{\mathsf{1step}}$ follows by
combining the displayed bound for $\cE_{\mathsf{one}}$ with
$\bbP(\cE_{\mathsf{count}}^\complement)\leq(4S^3A+2SA)\delta$ from
\Cref{lem:inverse-count-section2}.
\end{proof}

\begin{proposition}[Bonus optimism]\label{prop:hfree_concentration_optimism}
Assume
$\cE_{\mathsf{1step}}$.  For all
$(k,h)\in[K]\times[H]$ and $(s,a)\notin \cO^k$,
\Cref{eq:argument3_explicit,eq:argument4_explicit} hold, and
\Cref{eq:any_u_coordinate_conc} holds for every $u\in[0,1]^S$.
Consequently, for every
$(k,h)\in [K]\times[H]$ and every $(s,a)\notin\cO^k$,
\begin{align}
   | \wh{P}^k_{s,a} V_h^*-
    P_{s,a}  V_h^*| \leq \ud{b}\left(\wh{P}_{s,a}^k,V_h^*, N^k(s,a)\right). \label{eq:underline_optimism_explicit}
\end{align}
\end{proposition}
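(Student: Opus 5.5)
The first part of the proposition is just \Cref{lemma:one_step_conc} restated: on $\cE_{\mathsf{1step}}$, \Cref{eq:argument3_explicit,eq:argument4_explicit} hold for all $(k,h)\in[K]\times[H]$ and $(s,a)\notin\cO^k$, and \Cref{eq:any_u_coordinate_conc} holds for every $u\in[0,1]^S$. So the real content is \Cref{eq:underline_optimism_explicit}. I will prove it by splitting $V_h^*$ along the cut decomposition.

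Fix $k,h,(s,a)\notin\cO^k$. Write $p=\wh P^k_{s,a}$, $n=N^k(s,a)$, $c=\CutProj_\epsilon(p,V_h^*)$ and $u=V_h^*-c$. By the triangle inequality,
\begin{align*}
|(p-P_{s,a})V_h^*|\le |(p-P_{s,a})c|+|(p-P_{s,a})u| .
\end{align*}
\Cref{lemma:cut_complexity} gives $u\in[0,1]^S$, so \Cref{eq:argument4_explicit} bounds the second term by
\begin{align*}
2\sqrt{\frac{S\bbV(p,u)\log\frac1\delta}{n}}+10\frac{S\log\frac1\delta}{n}.
\end{align*}
This is already dominated by the second and third terms of $\ud b$, since $2\le5$ and $10\le20$ in the respective coefficients. \Cref{eq:argument3_explicit} bounds the first term by
\begin{align*}
2\sqrt{\frac{\bbV(p,c)\log\frac1\delta}{n}}+10\frac{\log\frac1\delta}{n}.
\end{align*}

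It remains to control $\bbV(p,c)$ by the variance of $V_h^*$ and of the residual. Since $c=V_h^*-u$ and the standard deviation $\sqrt{\bbV(p,\cdot)}$ is a seminorm, Minkowski's inequality gives
\begin{align*}
\sqrt{\bbV(p,c)}\le\sqrt{\bbV(p,V_h^*)}+\sqrt{\bbV(p,u)}.
\end{align*}
The first-term bound therefore becomes
\begin{align*}
2\sqrt{\frac{\bbV(p,V_h^*)\log\frac1\delta}{n}}+2\sqrt{\frac{\bbV(p,u)\log\frac1\delta}{n}}+10\frac{\log\frac1\delta}{n}.
\end{align*}

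Adding the two bounds, the collected terms are as follows. The full-variance term has coefficient $2\le3$. The residual-variance terms total $2\sqrt{\bbV(p,u)\log\frac1\delta/n}+2\sqrt{S\bbV(p,u)\log\frac1\delta/n}\le4\sqrt{S\bbV(p,u)\log\frac1\delta/n}$, using $S\ge1$, which is at most the $5\sqrt{\cdot}$ term of $\ud b$. The lower-order terms total $10\log\frac1\delta/n+10S\log\frac1\delta/n\le20S\log\frac1\delta/n$. Hence the sum is at most $\ud b(p,V_h^*,n)$.

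The main obstacle is only bookkeeping. The key step is the Minkowski split of $\bbV(p,c)$: it is what makes the cut-component width expressible through $\bbV(p,V_h^*)$ and the residual variance rather than through $\bbV(p,c)$ itself. Once that is done, it remains to check that the constants in $\ud b$ absorb everything, which they do with slack.
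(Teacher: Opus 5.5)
Your proof is correct and follows the paper's argument: split $V_h^*$ into the cut component $c$ and the residual $u$, apply \eqref{eq:argument3_explicit} and \eqref{eq:argument4_explicit}, and absorb $\bbV(\wh P^k_{s,a},c)$ into the terms of $\ud{b}$. The only difference is that you bound $\sqrt{\bbV(\wh P^k_{s,a},c)}$ by Minkowski's inequality, whereas the paper uses $\bbV(\wh P^k_{s,a},c)\le 2\bbV(\wh P^k_{s,a},V_h^*)+2\bbV(\wh P^k_{s,a},u)$; this gives you coefficient $2$ instead of $2\sqrt2$, which is immaterial since both fit within the constants of $\ud{b}$.
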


\begin{proof}
Fix
$(k,h)\in[K]\times[H]$ and  $(s,a)\notin\cO^k$.
Let $c=\CutProj_{\epsilon}\left(\wh{P}_{s,a}^k,V_h^*\right)$ and $u=V_h^*-c$.

By \Cref{lemma:one_step_conc}, we have that
\begin{align}
    \abs{\left(\wh{P}_{s,a}^k-P_{s,a}\right)V_h^*}
    &\le 2\sqrt{\frac{\bbV\left(\wh{P}_{s,a}^k,c\right)\log \frac{1}{\delta}}{N^k(s,a)}}
    +2\sqrt{\frac{S\bbV\left(\wh{P}_{s,a}^k,u\right)\log \frac{1}{\delta}}{N^k(s,a)}}
    +10\frac{\log \frac{1}{\delta}}{N^k(s,a)}
    +10\frac{S\log \frac{1}{\delta}}{N^k(s,a)} \notag\\
    &\le 2\sqrt{\frac{\bbV\left(\wh{P}_{s,a}^k,c\right)\log \frac{1}{\delta}}{N^k(s,a)}}
    +2\sqrt{\frac{S\bbV\left(\wh{P}_{s,a}^k,u\right)\log \frac{1}{\delta}}{N^k(s,a)}}
    +20\frac{S\log \frac{1}{\delta}}{N^k(s,a)}.\label{eq:add_cut_residual_conc}
\end{align}
Since $c=V_h^*-u$, it holds that $
    \bbV\left(\wh{P}_{s,a}^k,c\right)
    \le2\bbV\left(\wh{P}_{s,a}^k,V_h^*\right)+2\bbV\left(\wh{P}_{s,a}^k,u\right).$ 
Therefore,
\begin{align*}
    2\sqrt{\frac{\bbV\left(\wh{P}_{s,a}^k,c\right)\log \frac{1}{\delta}}{N^k(s,a)}}
    &\le 2\sqrt2\sqrt{\frac{\bbV\left(\wh{P}_{s,a}^k,V_h^*\right)\log \frac{1}{\delta}}{N^k(s,a)}}
    +2\sqrt2\sqrt{\frac{S\bbV\left(\wh{P}_{s,a}^k,u\right)\log \frac{1}{\delta}}{N^k(s,a)}}.
\end{align*}
Since $2\sqrt2\le3$ and $2\sqrt2+2\le5$, \Cref{eq:add_cut_residual_conc} yields
\begin{align*}
    \abs{\left(\wh{P}_{s,a}^k-P_{s,a}\right)V_h^*}
    &\le 3\sqrt{\frac{\bbV\left(\wh{P}_{s,a}^k,V_h^*\right)\log \frac{1}{\delta}}{N^k(s,a)}} +5\sqrt{\frac{S\bbV\left(\wh{P}_{s,a}^k,u\right)\log \frac{1}{\delta}}{N^k(s,a)}}  +20\frac{S\log \frac{1}{\delta}}{N^k(s,a)} \notag\\
    &=\ud{b}\left(\wh{P}_{s,a}^k,V_h^*,N^k(s,a)\right).
\end{align*}
\end{proof}

\subsection{Monotonic Property}\label{app:mono_pf}

\begin{restatable}{lemma}{restatableLbb}\label{lemma:lbb}
For any $0<\epsilon<1$, $p\in\Delta(S)$ and $v\in[0,1]^S$, $
    \bbV(p,v-\CutProj_{\epsilon}(p,v))\le \bbV_{5\epsilon}(p,v).$
\end{restatable}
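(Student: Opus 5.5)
The plan is to use the variational characterization of variance, $\bbV(p,u)=\min_{c\in\bbR}\sum_s p(s)(u(s)-c)^2$. Set $u:=v-\CutProj_{\epsilon}(p,v)$ and take the (suboptimal) centering constant $c=pv$. Then it suffices to prove the pointwise domination
\begin{align*}
    |u(s)-pv|\le \min\{|v(s)-pv|,\,5\epsilon\}\qquad\text{for every } s\in\cS ,
\end{align*}
since squaring and summing against $p$ gives $\bbV(p,u)\le\sum_s p(s)(u(s)-pv)^2\le \bbV_{5\epsilon}(p,v)$.

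To set this up, write $w=\Proj_\epsilon(v)$, $\ell=\ell_\epsilon(p,v)$, $r_s=v(s)-w(s)\in[0,\epsilon)$, and $c(s)=\CutProj_\epsilon(p,v)(s)$, so that $v(s)-pv=(u(s)-pv)+c(s)$. I first record the one-sided location of $\ell$. Because $\Proj_\epsilon$ rounds down, $\ell\le p w\le pv$. The bound $|pv-\ell|\le 2\epsilon$ from \Cref{app:bnotations} then gives $0\le pv-\ell<2\epsilon$; the strict inequality holds since $pv-pw<\epsilon$ and $pw-\ell<\epsilon$. I then split into the three sets $\cI_\epsilon,\cI_\epsilon^+,\cI_\epsilon^-$.

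On the central set $\cI_\epsilon(p,v)$ we have $c(s)=0$, so $u(s)-pv=v(s)-pv$. The first half of the minimum is trivial, and
\begin{align*}
    |v(s)-pv|\le |w(s)-\ell|+r_s+|\ell-pv|\le 2\epsilon+\epsilon+2\epsilon=5\epsilon .
\end{align*}

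On $\cI_\epsilon^+$, a direct computation gives $u(s)=\ell+2\epsilon+r_s$. Hence $u(s)-pv=(\ell-pv)+2\epsilon+r_s$ lies in $(0,3\epsilon)$, using $pv-\ell<2\epsilon$ for the lower end and $\ell\le pv$ for the upper end. Moreover $c(s)=(m-2)\epsilon\ge\epsilon>0$, so $v(s)-pv$ exceeds the positive number $u(s)-pv$. Thus $|u(s)-pv|\le\min\{|v(s)-pv|,3\epsilon\}$.

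On $\cI_\epsilon^-$, we have $u(s)=\ell-3\epsilon+r_s$. Hence $u(s)-pv=(\ell-pv)-3\epsilon+r_s$ lies in $[-5\epsilon,-2\epsilon)$, so it is strictly negative. Also $c(s)\le 0$, so $v(s)-pv\le u(s)-pv<0$, and again $|u(s)-pv|\le\min\{|v(s)-pv|,5\epsilon\}$.

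The main obstacle is the sign bookkeeping in the two tails. The domination $|u(s)-pv|\le|v(s)-pv|$ needs $u(s)-pv$ and $c(s)$ to have the same sign, so that adding $c(s)$ moves $v(s)-pv$ further from zero. This is exactly where the asymmetric offsets $-2$ and $+3$ in \Cref{def:cut} and the round-down inequality $\ell\le pv$ are used. I would verify these two facts carefully, including the strictness of $pv-\ell<2\epsilon$ in the positive tail. The rest of the argument is routine.
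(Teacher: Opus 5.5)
Your proposal is correct and follows essentially the same route as the paper's proof: the variational bound with the suboptimal center $pv$, followed by the same three-case pointwise domination over $\cI_\epsilon,\cI_\epsilon^+,\cI_\epsilon^-$. The one-sided fact $0\le pv-\ell<2\epsilon$ sharpens your tail ranges but is not actually needed. The paper obtains both sign conditions from $|pv-\ell|\le 2\epsilon$ alone: in the positive tail, $d_s\ge r_s\ge 0$; in the negative tail, $d_s<\epsilon-3\epsilon+2\epsilon=0$. This uses only the $-2$ and $+3$ offsets together with $r_s<\epsilon$.
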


\begin{proof} [Proof of \Cref{lemma:lbb}]

Let $w=\Proj_{\epsilon}(v)$, $\ell=\ell_{\epsilon}(p,v)$, and $c=\CutProj_{\epsilon}(p,v)$.
Since variance is the minimum squared error around a constant,
\begin{align*}
    \bbV(p,v-c) \le \sum_s p(s)(v(s)-c(s)-pv)^2.
\end{align*}
We prove that each summand is bounded by the corresponding summand of $\bbV_{5\epsilon}(p,v)$.

If $s\in\cI_\epsilon(p,v)$, then $c(s)=0$ and
\begin{align*}
    |v(s)-pv| \le |v(s)-w(s)|+|w(s)-\ell|+|\ell-pv| \le \epsilon+2\epsilon+2\epsilon =5\epsilon.
\end{align*}
Therefore
\begin{align*}
    (v(s)-c(s)-pv)^2 =(v(s)-pv)^2 =\min\{(v(s)-pv)^2,(5\epsilon)^2\}.
\end{align*}

If $s\in\cI_\epsilon^+(p,v)$, write $w(s)-\ell=m\epsilon$ for an integer $m\ge3$, and write $r_s=v(s)-w(s)\in[0,\epsilon)$.
Then $c(s)=(m-2)\epsilon$ and
\begin{align*}
    d_s:=v(s)-c(s)-pv=r_s+2\epsilon+\ell-pv.
\end{align*}
Using $|pv-\ell|\le2\epsilon$, we get $0\le d_s<5\epsilon$.
Moreover, $
    v(s)-pv=d_s+(m-2)\epsilon\ge d_s$.
     Thus $d_s^2\le\min\{(v(s)-pv)^2,(5\epsilon)^2\}$.

If $s\in\cI_\epsilon^-(p,v)$, write $w(s)-\ell=-m\epsilon$ for an integer $m\ge3$ and $r_s=v(s)-w(s)\in[0,\epsilon)$.
Then $c(s)=(-m+3)\epsilon$ and
\begin{align*}
    d_s:=v(s)-c(s)-pv=r_s-3\epsilon+\ell-pv.
\end{align*}
Again using $|pv-\ell|\le2\epsilon$, we get $-5\epsilon\le d_s<0$.
Moreover,
\begin{align*}
    v(s)-pv=d_s-(m-3)\epsilon\le d_s\le0,
\end{align*}
so $|d_s|\le|v(s)-pv|$.
Hence $d_s^2\le\min\{(v(s)-pv)^2,(5\epsilon)^2\}$.

Combining the three cases gives
\begin{align*}
    \bbV(p,v-\CutProj_{\epsilon}(p,v)) \le \sum_s p(s)\min\{(v(s)-pv)^2,(5\epsilon)^2\} =\bbV_{5\epsilon}(p,v).
\end{align*}

\end{proof}

By \Cref{lemma:lbb}, we have the following corollary.
\begin{corollary}\label{coro1}
For any $p\in \Delta(S)$, $v\in [0,1]^S$ and $n>0$, $b(p,v,n)\geq \ud{b}(p,v,n)$.
\end{corollary}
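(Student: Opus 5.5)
The comparison is term by term. The two bonuses $b(p,v,n)$ and $\ud{b}(p,v,n)$ share the same first summand $3\sqrt{\bbV(p,v)\log\frac1\delta/n}$ and the same last summand $20S\log\frac1\delta/n$. They differ only in the middle term: $b$ uses $\bbV_{5\epsilon}(p,v)$, while $\ud{b}$ uses $\bbV(p,v-\CutProj_\epsilon(p,v))$. So it suffices to show
\begin{align*}
5\sqrt{\frac{S\,\bbV\left(p,v-\CutProj_{\epsilon}(p,v)\right)\log\frac1\delta}{n}}\le 5\sqrt{\frac{S\,\bbV_{5\epsilon}(p,v)\log\frac1\delta}{n}}.
\end{align*}

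Since $S$, $n>0$ and $\log\frac1\delta>0$ (as $\delta\in(0,1)$) are common positive factors, and $x\mapsto\sqrt{x}$ is monotone on $[0,\infty)$, this reduces to $\bbV(p,v-\CutProj_\epsilon(p,v))\le\bbV_{5\epsilon}(p,v)$. That is exactly \Cref{lemma:lbb}, whose hypotheses $p\in\Delta(S)$, $v\in[0,1]^S$ and $0<\epsilon<1$ match those of the corollary.

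Adding the two identical summands back to both sides gives $b(p,v,n)\ge\ud{b}(p,v,n)$. There is no real obstacle; the only care needed is that both variances are nonnegative, so the square roots are well defined.
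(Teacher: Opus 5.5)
Your proposal is correct and matches the paper's argument: the paper derives this corollary directly from \Cref{lemma:lbb}, since $b$ and $\ud{b}$ differ only in the middle term and the square root is monotone. Your observation that $\epsilon=\frac{1}{S^2}\in(0,1)$ and $\log\frac1\delta>0$ covers the only hypotheses needed.
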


\begin{lemma}\label{lemma:mono} Recall the definition of the bonus function $b(\cdot, \cdot,\cdot)$ in \Cref{app:missing_alg}.
For any $0<\epsilon<1$, $\delta\in(0,1)$, $n>0$, $u,v\in[0,1]^S$ such that $v\ge u$, and any $p\in\Delta(S)$,
\begin{align*}
    pv+100b(p,v,n)\ge pu+b(p,u,n)\ge pu+\ud{b}(p,u,n).
\end{align*}
\end{lemma}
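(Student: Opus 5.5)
The second inequality, $pu+b(p,u,n)\ge pu+\ud{b}(p,u,n)$, is exactly Corollary~\ref{coro1}, so only the first needs proof. Write $d:=v-u\in[0,1]^S$ and $\Delta:=pd=pv-pu\ge0$, and set $L:=\log\frac1\delta$. The plan is to show the comparison
\begin{align*}
    b(p,u,n)\le b(p,v,n)+\Delta+17\frac{SL}{n}.
\end{align*}
Granted this, the first inequality follows from the slack in the factor $100$:
\begin{align*}
pv+100\,b(p,v,n)\ge pu+\Delta+b(p,v,n)+99\cdot20\frac{SL}{n}\ge pu+b(p,u,n),
\end{align*}
because $b(p,v,n)\ge 20SL/n$ and $99\cdot 20\ge 17$. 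Note that $b$ involves only the clipped variance $\bbV_{5\epsilon}$, not $\CutProj_\epsilon$, so the discontinuity of the cut never enters; it is confined to $\ud b$, which is handled by Corollary~\ref{coro1}.

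\textbf{Step 1 (ordinary variance term).} The map $v\mapsto\sqrt{\bbV(p,v)}$ is the $L^2(p)$-norm of $v-pv$, hence a seminorm. Therefore
\begin{align*}
\sqrt{\bbV(p,u)}\le\sqrt{\bbV(p,v)}+\sqrt{\bbV(p,d)}.
\end{align*}
Since $d\in[0,1]^S$, we have $\bbV(p,d)\le p d^2\le pd=\Delta$. By AM--GM,
\begin{align*}
3\sqrt{\frac{\Delta L}{n}}\le\frac{\Delta}{2}+\frac{9L}{2n}.
\end{align*}

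\textbf{Step 2 (clipped variance term).} This is the only step needing care. Let $\mathrm{clip}(t):=\min\{|t|,x\}$ with $x=5\epsilon$. Then $\bbV_x(p,w)=\sum_s p(s)\,\mathrm{clip}(w(s)-pw)^2$. The function $\mathrm{clip}$ is $1$-Lipschitz in the sense $\mathrm{clip}(a-b)\le\mathrm{clip}(a)+|b|$. Apply this coordinatewise with $a=v(s)-pv$ and $b=d(s)-pd$, so that $a-b=u(s)-pu$. Minkowski's inequality in $L^2(p)$ then gives
\begin{align*}
\sqrt{\bbV_{5\epsilon}(p,u)}\le\sqrt{\bbV_{5\epsilon}(p,v)}+\sqrt{\bbV(p,d)}\le\sqrt{\bbV_{5\epsilon}(p,v)}+\sqrt{\Delta}.
\end{align*}
Multiplying by $5\sqrt{SL/n}$ and applying AM--GM,
\begin{align*}
5\sqrt{\frac{S\Delta L}{n}}\le\frac{\Delta}{2}+\frac{25SL}{2n}.
\end{align*}

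\textbf{Step 3 (assembly).} The additive term $20SL/n$ is identical in $b(p,u,n)$ and $b(p,v,n)$. Summing Steps~1 and~2 gives
\begin{align*}
b(p,u,n)\le b(p,v,n)+\Delta+\left(\tfrac92+\tfrac{25}{2}S\right)\frac{L}{n}\le b(p,v,n)+\Delta+17\frac{SL}{n},
\end{align*}
which is the comparison claimed above and completes the argument.
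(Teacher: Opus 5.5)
Your proof is correct. Its skeleton matches the paper's. You bound $b(p,u,n)-b(p,v,n)$ by $p(v-u)$ plus a constant times $S\log\frac1\delta/n$, using AM--GM on each square-root term. You then absorb that constant using $99\,b(p,v,n)\ge 99\cdot 20\,S\log\frac1\delta/n$.

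The routes differ only in the per-term estimate.

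\textbf{Paper's route.} It uses that $y\mapsto\min\{y^2,x^2\}$ is $2x$-Lipschitz to get $\bbV_x(p,u)-\bbV_x(p,v)\le 4x\,p(v-u)$. Combined with $\sqrt a-\sqrt b\le\sqrt{\max\{a-b,0\}}$, this gives $\sqrt{\bbV_x(p,u)}-\sqrt{\bbV_x(p,v)}\le 2\sqrt{x\,p(v-u)}$. A single generic inequality then covers both terms; the unclipped one is handled via $\bbV=\bbV_1$ on $[0,1]^S$.

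\textbf{Your route.} You apply the triangle inequality in $L^2(p)$ with the $1$-Lipschitz map $t\mapsto\min\{|t|,x\}$. This gives the $x$-free bound $\sqrt{\bbV_x(p,u)}\le\sqrt{\bbV_x(p,v)}+\sqrt{\bbV(p,v-u)}$, which you combine with $\bbV(p,v-u)\le\Delta$.

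\textbf{Comparison.} Your argument is cleaner and slightly tighter on the unclipped term ($\sqrt{\Delta}$ rather than $2\sqrt{\Delta}$). Its constant is also uniform in $\epsilon$. The paper's argument keeps the factor $\sqrt{x}$, so its clipped-term correction is $250\epsilon\,S\log\frac1\delta/n$, which vanishes as $\epsilon\to0$; yours is $\tfrac{25}{2}S\log\frac1\delta/n$ regardless of $\epsilon$. Because the factor $100$ leaves slack $99\cdot 20=1980$, either estimate suffices: your total constant is $17$, the paper's is $268$.
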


\begin{proof}
The second inequality follows from \Cref{coro1}.  We prove the first one.  Let $d:=p(v-u)\ge0$.
For any $x>0$, the map $y\mapsto\min\{y^2,x^2\}$ is $2x$-Lipschitz.  Therefore
\begin{align}
\bbV_x(p,u)-\bbV_x(p,v)
&\le 2x\sum_s p(s)\abs{u(s)-pu-v(s)+pv} \notag\\
&\le 2x\sum_s p(s)\left(v(s)-u(s)+d\right)
=4xd. \label{eq:m1}
\end{align}
Combining \Cref{eq:m1} with $\sqrt a-\sqrt b\le\sqrt{\max\{a-b,0\}}$, for any $C>0$ and $\gamma>0$ we obtain
\begin{align}
C\sqrt{\frac{S\log\frac1\delta}{n}}\left(\sqrt{\bbV_x(p,u)}-\sqrt{\bbV_x(p,v)}\right)
&\le 2C\sqrt{\frac{S\log\frac1\delta}{n}\,x d} \le \gamma d+\frac{C^2x}{\gamma}\frac{S\log\frac1\delta}{n}. \label{eq:mono_generic_clip}
\end{align}
Since $u,v\in[0,1]^S$, $\bbV(p,u)=\bbV_1(p,u)$ and $\bbV(p,v)=\bbV_1(p,v)$.  Taking $(x,C,\gamma)=(1,\frac{3}{\sqrt S},\frac{1}{2})$ in \Cref{eq:mono_generic_clip} gives
\begin{align*}
3\sqrt{\frac{\bbV(p,u)\log\frac1\delta}{n}}-3\sqrt{\frac{\bbV(p,v)\log\frac1\delta}{n}}
\le \frac12d+18\frac{\log\frac1\delta}{n}.
\end{align*}
Taking $(x,C,\gamma)=(5\epsilon,5,\frac{1}{2})$ gives
\begin{align*}
5\sqrt{\frac{S\bbV_{5\epsilon}(p,u)\log\frac1\delta}{n}}-5\sqrt{\frac{S\bbV_{5\epsilon}(p,v)\log\frac1\delta}{n}}
\le \frac12d+250\epsilon\frac{S\log\frac1\delta}{n}.
\end{align*}
  Hence, using $0<\epsilon<1$, $
    b(p,u,n)-b(p,v,n)
    \le d+(18+250\epsilon)\frac{S\log\frac1\delta}{n}
    \le d+268\frac{S\log\frac1\delta}{n}.$  
Since $b(p,v,n)\ge20\frac{S\log\frac1\delta}{n}$ and $268\le99\cdot20$,
\begin{align*}
    pu+b(p,u,n)
    \le pv+b(p,v,n)+268\cdot \frac{S\log\frac1\delta}{n}
    \le pv+100b(p,v,n),
\end{align*}
which proves the first inequality.
\end{proof}

\newpage
\section{Bound of the Total Clipped Variance}\label{sec:p_var}
In this section, we bound the total clipped variance term
$S\sum_{k=1}^{K}\sum_{h=1}^{H_1}
\bbV_{5\epsilon}\left(\wh{P}_{h}^k,V_{h+1}^k\right)$.  The main step is to first obtain
a horizon-free bound on the expected total clipped variance under the true
transition model.  We then combine this bound with the error analysis for the
empirical transition estimates and a rollout argument to transfer it to the
realized trajectories and to the optimistic value functions $V_{h+1}^k$.

By $\min \{(a+b)^2, c^2\} \le 2 \min\{a^2, c^2\} + 2 \min\{b^2, c^2\}$, we have that
\begin{align*}
\sum_{k=1}^{K}\sum_{h=1}^{H_1} \bbV_{5\epsilon}\left(\wh{P}_{h}^k,V_{h+1}^k\right) &\le 2\sum_{k=1}^{K}\sum_{h=1}^{H_1} \bbV_{5\epsilon}\left(\wh{P}_{h}^k,V_{h+1}^*\right) +2\sum_{k=1}^{K}\sum_{h=1}^{H_1} \bbV_{5\epsilon}\left(\wh{P}_{h}^k,V_{h+1}^k-V_{h+1}^*\right) \notag\\
&\le 2\sum_{k=1}^{K}\sum_{h=1}^{H_1}\bbV_{5\epsilon}\left(\wh{P}_{h}^k,V_{h+1}^*\right) +2\sum_{k=1}^{K}\sum_{h=1}^{H_1} \bbV\left(\wh{P}_{h}^k,V_{h+1}^k-V_{h+1}^*\right) .
\end{align*}

\begin{lemma}\label{lemma:bdev}
Recall the definition of $L_N$ in
\Cref{eq:def_L_N}.  Let
$\cE_{\mathsf{count}}$ be the event in
\Cref{lem:inverse-count-section2}.  There exists an event
$\cE_{\mathsf{bdev}}$ satisfying $
\bbP(\cE_{\mathsf{bdev}}^\complement)\leq S^2A L_N\delta$,
such that, on $\cE_{\mathsf{bdev}}\cap\cE_{\mathsf{count}}$, the
following holds simultaneously for every $k\in[K]$, every
$(s,a)\notin\cO^k$, and every $v\in[0,1]^S$:
\begin{align*}
\bbV_{5\epsilon}\left(\wh{P}_{s,a}^k,v\right)
&\le 4\bbV_{5\epsilon}(P_{s,a},v)
+50\epsilon^2S\frac{\log\frac1\delta}{N^k(s,a)}
+20\epsilon\abs{\left(\wh{P}_{s,a}^k-P_{s,a}\right)v}.
\end{align*}
In particular, for every $(k,h)\in[K]\times[H]$ with
$(s_h^k,a_h^k)\notin\cO^k$.
Consequently, the above inequality hold with probability at least $
1-\left(S^2AL_N+2S^3A+2SA\right)\delta.$
\end{lemma}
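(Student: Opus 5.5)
We reuse the coordinatewise empirical-Bernstein event that already appeared in \Cref{lemma:emp_to_exp}. Specifically, we let $\cE_{\mathsf{bdev}}$ be the event on which, for every $(s,a,s')$ and every $n\in\cN(s,a)$, the Bennett bound of \Cref{lemma:bennet} holds for the Bernoulli variables $\bbI\{Y_i(s,a)=s'\}$. A union bound over at most $S^2AL_N$ cells bounds its failure probability; the sample-size-one-sided version costs $\delta$ per cell, or $2\delta$ if both sides are needed, which can be absorbed by halving. On $\cE_{\mathsf{bdev}}\cap\cE_{\mathsf{count}}$ every frozen count lies in $\cN(s,a)$. Applying Young's inequality exactly as in \Cref{lemma:emp_to_exp} then gives, for every learned $(s,a)$ and every $s'$,
\[
\wh P^k_{s,a,s'}\le 2P_{s,a,s'}+\frac{2\log\frac1\delta}{N^k(s,a)}.
\]
Because this is a statement about coordinates only, it holds uniformly over all $v\in[0,1]^S$ without any further union bound.

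Next, fix $v$, and write $p=P_{s,a}$, $\hat p=\wh P^k_{s,a}$ and $\Delta:=\hat pv-pv$. The difficulty is that the clipped variance is centered at $\hat pv$, not at $pv$. We handle this with the pointwise bound
\[
\min\{(a+b)^2,x^2\}\le 2\min\{a^2,x^2\}+2\min\{b^2,x^2\}
\]
(or a sharper Lipschitz version). Set $x=5\epsilon$, $a=v(s')-pv$ and $b=-\Delta$. The Lipschitz route is the better choice: since $y\mapsto\min\{y^2,x^2\}$ is $2x$-Lipschitz, for every $s'$
\[
\min\{(v(s')-\hat pv)^2,x^2\}\le \min\{(v(s')-pv)^2,x^2\}+2x|\Delta| .
\]
Summing this against $\hat p$ gives
\[
\bbV_{5\epsilon}(\hat p,v)\le \sum_{s'}\hat p(s')\min\{(v(s')-pv)^2,25\epsilon^2\}+10\epsilon|\Delta|.
\]

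It remains to bound the first term. Inserting the coordinatewise domination of $\hat p$ by $p$ gives at most
\[
2\bbV_{5\epsilon}(p,v)+\frac{2\log\frac1\delta}{N^k(s,a)}\sum_{s'}25\epsilon^2\le 2\bbV_{5\epsilon}(p,v)+\frac{50\epsilon^2S\log\frac1\delta}{N^k(s,a)}.
\]
This is already stronger than the claimed constants $4$ and $20\epsilon$, so there is slack. If the Lipschitz step needs care at the boundary, we can instead use the factor-two splitting, which gives $2\min\{\Delta^2,x^2\}\le 2x|\Delta|$ and leads to exactly the constants $4,50,20$.

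The probability statement follows by combining the new cell event with $\cE_{\mathsf{count}}$ from \Cref{lem:inverse-count-section2}, whose failure probability is $(2S^3A+2SA)\delta$. The only point needing care is checking that the clipping level enters linearly, in the form $\min\{b^2,x^2\}\le x|b|$. This is exactly what produces the $20\epsilon|\Delta|$ term, rather than a variance-type term, and it is the term later bounded through $\ud b$ in the Bellman-error sum.
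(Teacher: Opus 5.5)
Your proposal is correct and matches the paper's proof. Only the upper tail of \Cref{lemma:bennet} is needed, so the event costs exactly $S^2AL_N\delta$ with no halving. This gives $\wh P^k_{s,a,s'}\le 2P_{s,a,s'}+2\log\frac1\delta/N^k(s,a)$ uniformly over $v$, followed by a pointwise recentering of the clipped variance from $\wh P^k_{s,a}v$ to $P_{s,a}v$.

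The one difference is in the recentering step. You use the global $2x$-Lipschitz property of $y\mapsto\min\{y^2,x^2\}$ and get the sharper constants $2$ and $10\epsilon$. The paper first dominates $\wh P^k_{s,a}$ coordinatewise, then applies $\min\{(a+b)^2,x^2\}\le 2\min\{a^2,x^2\}+2\min\{b^2,x^2\}$ and $\min\{b^2,x^2\}\le x\abs{b}$, which is what yields exactly $4,50,20$. Splitting before dominating, as your fallback suggests, would instead give $100\epsilon^2S$ and $10\epsilon$.
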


\begin{proof} [Proof of \Cref{lemma:bdev}]
For each $(s,a)$, let
\begin{align*}
    \cN(s,a):=
    \left\{2^i:i\in\bbN_0,
    \frac{\upsilon\log\frac1\delta}{504S^2\log S}U(s,a)\le2^i\le
    \frac{KU(s,a)}{\delta}\right\}\cup\{1\}.
\end{align*}
This is the deterministic possible-count set used in
\Cref{lemma:one_step_conc}.  On $\cE_{\mathsf{count}}$, every
frozen count $N^k(s,a)$ belongs to $\cN(s,a)$, and
$|\cN(s,a)|\le L_N$ by the count-set calculation in the proof of that
lemma.

Couple the transitions from each state-action pair $(s,a)$ with an infinite i.i.d. successor stack
$Y_1(s,a),Y_2(s,a),\ldots\sim P_{s,a}$, with the $i$-th visit to $(s,a)$
using $Y_i(s,a)$. For $n\ge1$, recall the definition of $\wh P_{s,a}^{(n)}$ in the proof of \Cref{lemma:one_step_conc} and set 
\begin{align*}
    \wh P_{s,a,s'}^{(n)}:=\frac1n\sum_{i=1}^n
    \II\{Y_i(s,a)=s'\}.
\end{align*}
For every $(s,a)\notin \mathcal{O}^k$, the frozen-count definition gives
$\wh P_{s,a}^k=\wh P_{s,a}^{(N^k(s,a))}$.
For fixed $(s,a)$, $s'$, and $n\in\cN(s,a)$, the upper-tail part of
Bennett's inequality in \Cref{lemma:bennet}, applied to these Bernoulli
variables, gives with failure probability at most $\delta$,
\begin{align*}
\wh P_{s,a,s'}^{(n)}-P_{s,a,s'}
&\le \sqrt{\frac{2P_{s,a,s'}(1-P_{s,a,s'})\log\frac1\delta}{n}}
+\frac{\log\frac1\delta}{3n}\\
&\le \sqrt{\frac{2P_{s,a,s'}\log\frac1\delta}{n}}
+\frac{\log\frac1\delta}{3n}.
\end{align*}
A union bound over all $(s,a,s')$ triples, and the at
most $L_N$ possible counts yields an event $\cE_{\mathsf{bdev}}$ of
probability at least $1-S^2A L_N\delta$ on which this inequality holds for
all these indices. 

Assume $\cE_{\mathsf{bdev}}\cap\cE_{\mathsf{count}}$ and fix $(s,a)\notin \mathcal{O}^k$.
Then $P_{s,a},\wh P_{s,a}^k\in\Delta(\cS)$ and, coordinate-wise,
\begin{align*}
    \wh P_{s,a,s'}^k
    &\le P_{s,a,s'}
    +\sqrt{2P_{s,a,s'}\frac{\log\frac1\delta}{N^k(s,a)}}
    +\frac{\log\frac1\delta}{3N^k(s,a)} \le2\left(P_{s,a,s'}+\frac{\log\frac1\delta}{N^k(s,a)}\right),
\end{align*}
where we use 
$\sqrt{2P_{s,a,s'}\frac{\log\frac1\delta}{N^k(s,a)}}
\le \frac{P_{s,a,s'}}{2}+\frac{\log\frac1\delta}{N^k(s,a)}$ in the last inequality.
Hence, simultaneously for every
$v\in[0,1]^S$ and every cutoff $x>0$,
\begin{align*}
\bbV_x(\wh P_{s,a}^k,v)
&=\sum_{s'}\wh P_{s,a,s'}^k
    \min\{(v(s')-\wh P_{s,a}^k v)^2,x^2\}\\
&\le 2\sum_{s'}\left(P_{s,a,s'}+\frac{\log\frac1\delta}{N^k(s,a)}\right)
    \min\{(v(s')-\wh P_{s,a}^k v)^2,x^2\}\\
&\le 2x^2S\frac{\log\frac1\delta}{N^k(s,a)}
    +2\sum_{s'}P_{s,a,s'}
    \min\{(v(s')-\wh P_{s,a}^k v)^2,x^2\}\\
&\le 2x^2S\frac{\log\frac1\delta}{N^k(s,a)}
    +4\bbV_x(P_{s,a},v)
    +4\min\{((\wh P_{s,a}^k-P_{s,a})v)^2,x^2\}\\
&\le 4\bbV_x(P_{s,a},v)
    +2x^2S\frac{\log\frac1\delta}{N^k(s,a)}
    +4x\abs{(\wh P_{s,a}^k-P_{s,a})v}.
\end{align*}
The second last inequality uses
$\min\{(a+b)^2,x^2\}\le2\min\{a^2,x^2\}+2\min\{b^2,x^2\}$ for any $a,b,x\in \mathbb{R}$, and the last inequality uses
$\min\{a^2,x^2\}\le x|a|$.  Substituting $x=5\epsilon$ gives the stated
coefficients.
Combining the failure bounds for $\cE_{\mathsf{bdev}}$ and
$\cE_{\mathsf{count}}$ gives the final probability statement.
\end{proof}

\begin{lemma} \label{lem:expected_total_deviation}
Under the standing assumption that rewards are nonnegative and every trajectory
has total reward at most one, the following holds.
Set $V_{H_1+1}^* \equiv 0$ and $\Phi_{H_1+1} \equiv 0$.
For $h \in [H_1]$, define
\begin{align*}
    \Phi_h(s) := 2\sum_{u\in\cS}\min\{V_h^*(s),V_h^*(u)\}.
\end{align*}
Then, for every $(h, s, a) \in [H_1] \times \SA$,
\begin{align*}
    \Phi_h(s) \ge \sum_{s'\in\cS} P_{s,a,s'}
    (
        |V_{h+1}^*(s') - P_{s, a} V_{h+1}^*| +
        \Phi_{h+1}(s')
    ).
\end{align*}
For every deterministic Markov policy $\pi\in\Pi$, define
\begin{align}
    D_h^\pi (s) := \bbE_\pi\!\left[
        \sum_{t=h}^{H_1} |V_{t+1}^*(s_{t+1}) - P_{s_t, a_t} V_{t+1}^*|
        \,\middle|\, s_h=s
    \right],
    \qquad D_{H_1+1}^\pi \equiv 0 . \label{eq:def_dev}
\end{align}

Then, for every $h\in[H_1]$ and $s\in\cS$,
\begin{align*}
    D_h^\pi (s) = \sum_{s' \in \cS} P_{s, \pi_h (s),s'} (
        |V_{h+1}^*(s') - P_{s, \pi_h (s)} V_{h+1}^*| +
        D_{h+1}^\pi (s')
    ) \le \Phi_h (s) \le 2 S .
\end{align*}
Also $D_{H_1+1}^\pi=\Phi_{H_1+1}=0$, so $D_h^\pi(s)\le\Phi_h(s)\le2S$
holds for all $h\in[H_1]$.
\end{lemma}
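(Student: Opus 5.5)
The plan is to prove the one-step potential inequality first. The statements about $D_h^\pi$ then follow by backward induction.

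\textbf{Step 1: Bellman recursion and induction for $D_h^\pi$.} The identity for $D_h^\pi$ is the Markov property applied to \Cref{eq:def_dev}. Given the one-step inequality, induct backward from $D_{H_1+1}^\pi=\Phi_{H_1+1}=0$. If $D_{h+1}^\pi\le\Phi_{h+1}$, then
\begin{align*}
D_h^\pi(s)\le\sum_{s'}P_{s,\pi_h(s),s'}\bigl(|V_{h+1}^*(s')-P_{s,\pi_h(s)}V_{h+1}^*|+\Phi_{h+1}(s')\bigr)\le\Phi_h(s),
\end{align*}
using the one-step inequality with $a=\pi_h(s)$. The bound $\Phi_h\le 2S$ is immediate from $V_h^*\in[0,1]^S$.

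\textbf{Step 2: reduce the one-step inequality to a scalar inequality.} Fix $(h,s,a)$ and write $W:=V_{h+1}^*$, $p:=P_{s,a}$, $m:=pW$. Let $s',s''$ be independent draws from $p$. Two facts are used. First, by optimality and nonnegative rewards, $V_h^*(s)\ge r(s,a)+m\ge m$. Second, monotonicity of $V^*$ in $h$ gives $V_h^*(u)\ge W(u)$ for every $u$. Since $\min$ is nondecreasing in both arguments,
\begin{align*}
\Phi_h(s)\ge 2\sum_{u}\min\{m,W(u)\}.
\end{align*}
Using $\bbE|W(s')-m|=2\bbE(m-W(s'))^+$, it then suffices to show
\begin{align*}
\sum_u g(W(u))\ge \bbE(m-W(s'))^+,\qquad g(c):=\min\{m,c\}-\bbE\min\{W(s'),c\}.
\end{align*}
By Jensen ($\min\{\cdot,c\}$ is concave), $g\ge0$ for every $c$, so each summand is nonnegative. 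Hence we may keep only a convenient subset of states, or reweight by any $p(u)\le1$.

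\textbf{Step 3: main obstacle, the scalar inequality.} Weighting by $p(u)$ alone does not suffice. It gives
\begin{align*}
\bbE|W(s')-W(s'')|-\bbE|W(s')-m|,
\end{align*}
which can be smaller than $\bbE|W(s')-m|$. So we must exploit the unweighted sum, which charges each support point with weight one.

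The shape of $g$ is as follows. For $c\le m$,
\begin{align*}
g(c)=\bbE(c-W(s'))^+,
\end{align*}
which increases to $g(m)=\bbE(m-W(s'))^+$. For $c\ge m$, $g$ decreases with slope $-\bbP(W(s')>c)$.

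Let $u_-$ be the support state with the largest value $\le m$, and $u_+$ the support state with the smallest value $>m$. No mass lies strictly between $W(u_-)$ and $W(u_+)$. The plan is to show
\begin{align*}
g(W(u_-))+g(W(u_+))\ge \bbE(m-W(s'))^+.
\end{align*}
Write $q:=\bbP(W(s')\le W(u_-))$, so that $\bbP(W(s')\ge W(u_+))=1-q$. The deficits are:
\begin{align*}
g(m)-g(W(u_-))&=q\,(m-W(u_-)),\\
g(m)-g(W(u_+))&=(1-q)(W(u_+)-m).
\end{align*}
The mean identity $\bbE(m-W)^+=\bbE(W-m)^+$ relates both deficits to $g(m)$. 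One has $\bbE(m-W)^+\ge q(m-W(u_-))$, and similarly $\bbE(W-m)^+\ge(1-q)(W(u_+)-m)$. Adding these gives total deficit at most $2g(m)$, which is not quite enough.

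If the two-point bound fails by a constant, I would recover it from the unused slack. Three sources are available: the factor $2$ in $\Phi_h$ versus the $\frac12$ from $\bbE|W-m|=2\bbE(m-W)^+$; the term $V_h^*(s)-m\ge0$; and the remaining summands $g(W(u))$ for the other support points. Concretely, $\sum_{u:W(u)\le m}g(W(u))$ already dominates $\bbE(m-W)^+$ up to the last gap $q(m-W(u_-))$. This can be checked by summing $g$ over the sorted lower support points as a telescoping integral of $\bbP(W\le t)$. I expect this bookkeeping to be the crux of the proof.
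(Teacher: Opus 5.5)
There is a genuine gap in Step~2. The relaxation $\Phi_h(s)\ge 2\sum_u\min\{m,W(u)\}$ throws away exactly the slack that makes the lemma true, and the scalar inequality $\sum_u g(W(u))\ge\bbE(m-W(s'))^+$ you reduce to is false.

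Take $W\in\{0,1\}^{\cS}$, and let $p$ put mass $1-\alpha$ on a state with $W=0$ and mass $\alpha\in(0,1)$ on a state with $W=1$. Then $m=\alpha$, $g(0)=0$ and $g(1)=\alpha-\alpha=0$. So every summand vanishes, while $\bbE(m-W(s'))^+=\alpha(1-\alpha)>0$.

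This occurs in an actual MDP. At $h=H_1-1$, take states $s,u_1,u_2$ with $r(u_2,\cdot)=1$ and zero reward elsewhere, $u_2\to u_1$, $u_1$ absorbing, and $P_{s,a}=(1-\alpha)\II_{u_1}+\alpha\II_{u_2}$. Then $W=V_{H_1}^*$ equals $0,0,1$ on $s,u_1,u_2$, and $V_h^*(s)=\alpha$, $V_h^*(u_2)=1$. Hence $\Phi_h(s)=4\alpha$, and the right-hand side of the lemma is $2\alpha(1-\alpha)+2\alpha$. Your lower bound on $\Phi_h(s)$, however, is only $2\alpha$.

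Your two-point deficit computation is tight in this example (both deficits equal $g(m)$), so the failure is not ``by a constant''. None of your three slack sources helps:
\begin{itemize}
\item the factor $2$ was already fully spent in passing to the scalar inequality;
\item $V_h^*(s)-m=0$ here;
\item all remaining summands $g(W(u))$ are zero.
\end{itemize}
So the bookkeeping you defer to the end cannot succeed.

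The lost slack sits in the $u=s$ summand. In $\Phi_h(s)$ it equals $\min\{V_h^*(s),V_h^*(s)\}=V_h^*(s)\ge m$, but after your relaxation it becomes $\min\{m,V_{h+1}^*(s)\}$, which can be $0$.

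The fix, which is the paper's argument, is to apply monotonicity on the other side. Keep $\Phi_h(s)$ intact and bound $\Phi_{h+1}(s')\le 2\sum_u\min\{W(s'),V_h^*(u)\}$. With $c_u:=V_h^*(u)$, it then suffices that
\[
2\sum_u\bigl[\min\{V_h^*(s),c_u\}-\bbE\min\{W(s'),c_u\}\bigr]\ge\bbE|W(s')-m|.
\]
Each bracket is nonnegative by Jensen and $V_h^*(s)\ge m$. The $u=s$ bracket alone equals $\bbE(V_h^*(s)-W(s'))^+\ge\bbE(m-W(s'))^+=\tfrac12\bbE|W(s')-m|$. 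The paper phrases this via the identity $2\min\{x,y\}=x+y-|x-y|$ and the triangle inequality. Your Step~1 is fine as written.
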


\begin{proof}
[Proof of \Cref{lem:expected_total_deviation}]
We first note two elementary facts.
Since rewards are non-negative and the total reward is at most $1$, we have $0 \le V_h^*(s) \le 1$ for every $(h,s)$.
Moreover, $V_h^*(s) \ge V_{h+1}^*(s)$ for every $h \in [H_1]$ and $s \in \cS$.
Indeed, the claim is immediate at $h=H_1$ because $V_{H_1+1}^*\equiv0$ and rewards are non-negative.
For $h\le H_1-1$, the backward-induction step is
\begin{align*}
    V_h^*(s) = \max_{a\in\cA}\{r(s,a)+P_{s,a}V_{h+1}^*\} \ge \max_{a\in\cA}\{r(s,a)+P_{s,a}V_{h+2}^*\} = V_{h+1}^*(s),
\end{align*}
using the induction hypothesis $V_{h+1}^*\ge V_{h+2}^*$.

Fix $h, s, a$.
By optimality of $V_h^*$ and non-negativity of rewards,
\begin{align*}
    V_h^*(s) \ge r(s,a) + P_{s, a} V_{h+1}^* \ge P_{s, a} V_{h+1}^*.
\end{align*}

By the monotonicity above, $V_{h+1}^*(u)\le V_h^*(u)$ for every $u\in\cS$.
Hence for $h \leq H_1-1$,
\begin{align}
    \Phi_{h+1}(s') &= 2\sum_{u\in\cS}
    \min\{V_{h+1}^*(s'),V_{h+1}^*(u)\} \notag \\
    &\le 2\sum_{u\in\cS}
    \min\{V_{h+1}^*(s'),V_h^*(u)\}. \label{eq:potential_cross_step}
\end{align}
For $h = H_1$, we also have that $\Phi_{H_1+1}(s') = 0  = 2\sum_{u\in \cS}\min\{  V_{H_1+1}^*(s'), V_{H_1}^*(u) \} =0$.

Therefore it is enough to show that
\begin{align}
    2\sum_{u\in\cS}\min\{V_h^*(s),V_h^*(u)\} \ge \sum_{s'\in\cS} P_{s,a,s'}
    \left(
        |V_{h+1}^*(s') - P_{s, a} V_{h+1}^*| +
        2\sum_{u\in\cS}\min\{V_{h+1}^*(s'),V_h^*(u)\}
    \right). \label{eq:exp_total_dev_recursion}
\end{align}

Using $
    2\min\{\alpha,\beta\} = \alpha+\beta-|\alpha-\beta|$, 
we obtain
\begin{align*}
    &2\sum_{u\in\cS}\min\{V_h^*(s),V_h^*(u)\} -
    \sum_{s'\in\cS} P_{s,a,s'}
    \cdot 2\sum_{u\in\cS}\min\{V_{h+1}^*(s'),V_h^*(u)\} \\
    &= \sum_{u\in\cS}
    \left(
        V_h^*(s) -
        P_{s, a} V_{h+1}^* +
        \sum_{s'\in\cS} P_{s,a,s'}
        |V_{h+1}^*(s')-V_h^*(u)| - |V_h^*(s)-V_h^*(u)|
    \right).
\end{align*}

For every $u\in\cS$, the corresponding summand is non-negative.
Indeed,
\begin{align}
    |V_h^*(s)-V_h^*(u)| &= | (
            P_{s, a} V_{h+1}^* -
            V_h^*(u)
        ) + (
            V_h^*(s) -
            P_{s, a} V_{h+1}^*
        )
    | \notag \\
    &\le |
        P_{s, a} V_{h+1}^* -
        V_h^*(u)
    | +
    V_h^*(s) -
    P_{s, a} V_{h+1}^* \notag\\
    &\le \sum_{s' \in\cS} P_{s, a} (s')
    |V_{h+1}^*(s')-V_h^*(u)| +
    V_h^*(s) -
    P_{s, a} V_{h+1}^*, \label{eq:u_term_nonneg}
\end{align}
where the last step is Jensen's inequality applied to $x\mapsto |x - V_h^* (u)|$.

Now consider the special term with $u=s$.
It equals
\begin{align*}
    V_h^*(s) -
    P_{s, a} V_{h+1}^* +
    \sum_{s'\in\cS} P_{s,a,s'}
    |V_{h+1}^*(s')-V_h^*(s)|.
\end{align*}
For every $s'\in\cS$,
\begin{align*}
    |
        V_{h+1}^*(s') -
        P_{s, a} V_{h+1}^*
    | &\le |V_{h+1}^*(s')-V_h^*(s)| + |
        V_h^*(s) -
        P_{s, a} V_{h+1}^*
    | \\
    &= |V_{h+1}^*(s')-V_h^*(s)| +
    V_h^*(s) -
    P_{s, a} V_{h+1}^*,
\end{align*}
because $V_h^*(s)-P_{s,a}V_{h+1}^*\ge0$.
Averaging over $s'$ yields
\begin{align*}
    V_h^*(s) -
    P_{s, a} V_{h+1}^* +
    \sum_{s'\in\cS} P_{s,a,s'}
    |V_{h+1}^*(s')-V_h^*(s)| \ge \sum_{s'\in\cS} P_{s,a,s'} |
        V_{h+1}^*(s') -
        P_{s, a} V_{h+1}^*
    |.
\end{align*}
Combining with \Cref{eq:u_term_nonneg}, \Cref{eq:exp_total_dev_recursion} follows.
Combining this with \Cref{eq:potential_cross_step} gives the one-step inequality
for the fixed $(h,s,a)$:
\begin{align*}
    \Phi_h (s) \ge \sum_{s'\in\cS}P_{s,a}(s') (|V_{h+1}^*(s')-P_{s,a}V_{h+1}^*| +\Phi_{h+1}(s')).
\end{align*}
For a deterministic Markov policy $\pi\in\Pi$, the displayed recursion for
$D_h^\pi$ follows by conditioning on $s_{h+1}$ and using $D_{H_1+1}^\pi\equiv0$.
Since $D_{H_1+1}^\pi=\Phi_{H_1+1}=0$, the one-step inequality applied with
$a=\pi_h(s)$ and backward induction on $h$ give $D_h^\pi(s)\le \Phi_h(s)$.
Finally, since $0\le V_h^*\le1$, we have $\Phi_h(s)\le 2\sum_{u\in\cS}1=2S$.
\end{proof}

\restatableExpectedEpsVar*

\begin{proof} [Proof of \Cref{cor:expected_eps_var_from_total_deviation}] 

Since Lemma~\ref{lem:expected_total_deviation} holds for general horizons, by a slight abuse of notation, we apply it with horizon $H$. 
Fix a deterministic Markov policy $\pi\in\Pi$.  For every $h\in [H]$ and every cutoff $x>0$,
\begin{align}
    \bbV_x(P_{s_h,a_h},V_{h+1}^*) \le x\sum_{s'\in\cS}P_{s_h,a_h,s'} |V_{h+1}^*(s')-P_{s_h,a_h}V_{h+1}^*|, \label{eq:clip_var_bound_dev}
\end{align}
derived with $\min \{a^2, x^2\} \le x \abs{a}$.
Taking the conditional expectation over $s_{h+1}$, summing over $h$, and applying \Cref{lem:expected_total_deviation} gives
\begin{align*}
    \bbE_\pi\!\left[
        \sum_{h=1}^{H}\bbV_x(P_{s_h,a_h},V_{h+1}^*)
        \,\middle|\, s_1=s
    \right]
    \le xD_1^\pi(s)\le 2xS,
\end{align*}
completing the proof.
\end{proof}

\begin{lemma} \label{lem:hp_total_deviation}
Consider any sequence of policies $\pi^1,\ldots,\pi^{K}$, where $\pi^k$ may depend on the data collected before episode $k$.
Then, for every $\delta\in(0,1)$, with probability at least $1-\delta$,
\begin{align*}
    \sum_{k=1}^{K} \sum_{h=1}^{H_1} \sum_{s \in \cS} P_{h,s}^k|V_{h+1}^*(s)-P_h^kV_{h+1}^*|
    \le 3S K + 4S\log \frac{1}{\delta} .
\end{align*}
\end{lemma}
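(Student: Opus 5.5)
Write $g_h^k:=\sum_{s}P_{h,s}^k|V_{h+1}^*(s)-P_h^kV_{h+1}^*|\in[0,1]$ for the conditional expected one-step deviation at step $h$ of episode $k$. The plan is to turn the expected-value bound of \Cref{lem:expected_total_deviation} into a high-probability bound with a supermartingale drift argument. The potential $\Phi_h$ from that lemma drops in conditional expectation by at least $g_h^k$ at each step, it lies in $[0,2S]$, and it is reset once per episode. \Cref{lem:hp_drift_bound} (supermartingale total drift with resets) is tailored to exactly this situation.

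\textbf{Setup.} Flatten all steps $(k,h)$, $k\in[K]$, $h\in[H_1]$, into one chronological time index $t$. Append one terminal time $H_1+1$ per episode to serve as the reset slot. Set $X_t:=\Phi_h(s_h^k)$ for the time $t$ corresponding to $(k,h)$, and $X_t:=\Phi_{H_1+1}\equiv0$ at terminal slots. Let $\cF_t$ be the history up to the moment $s_h^k$ is revealed; since $\pi^k$ is chosen from past data, $a_h^k$ is $\cF_t$-measurable. The one-step inequality in \Cref{lem:expected_total_deviation}, applied with $a=a_h^k$, gives
\begin{align*}
\bbE[X_{t+1}\mid\cF_t]=P_h^k\Phi_{h+1}\le \Phi_h(s_h^k)-g_h^k=X_t-d_t,\qquad d_t:=g_h^k\ge0,
\end{align*}
and $d_t$ is $\cF_t$-measurable. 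The reset set $\cU$ consists of the $K$ transitions from a terminal slot of episode $k$ to $s_1^{k+1}$; there the potential may jump up to $\Phi_1(s_1^{k+1})$. This set is deterministic with $|\cU|\le K$, and $X_t\in[0,B]$ with $B=2S$.

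\textbf{Applying the drift lemma.} \Cref{lem:hp_drift_bound} then gives, with probability $1-\delta$,
\begin{align*}
\sum_t d_t\le 2B\log\frac{2^{K+1}}{\delta}=4S\left((K+1)\log 2+\log\frac1\delta\right)\le 3SK+4S\log\frac1\delta
\end{align*}
up to the constant bookkeeping. The bound $4S(K+1)\log2\approx2.77S(K+1)$ fits under $3SK$ once $K\ge 12$ or so. Small $K$ is trivial, since the left side is at most $KH_1$, but that is not $H$-free, so a little more care is needed there.

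\textbf{Main obstacle.} The only delicate point is the constant at the resets. The cleanest fix is to sharpen \Cref{lem:hp_drift_bound} slightly rather than charge a factor $2$ per reset. At a reset, $\wt X_{t+1}\le 2B$ while $\wt X_t=B+0=B$ because the terminal potential is $0$; this ratio is exactly $2$, so the factor $2^{R}$ is tight. To get $3SK$ cleanly, I would instead rerun the exponential supermartingale proof directly with the shift $\wt X=X+cB$ for a suitable $c$. The per-reset multiplier is then $(1+c)/c$, and the per-step drift factor becomes $\exp(d_t/((1+c)B))$. Optimizing $c$ gives a per-episode cost of $(1+c)B\log\frac{1+c}{c}$. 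With $c=1$ this is $2B\log2\approx 1.39\cdot 2S\le 3S$, which already yields $3SK+4S\log\frac1\delta$ with a clean constant. This is the step where I expect to spend care; the remainder is routine measurability checking, including that the random choice of $\pi^k$ keeps the actions predictable.
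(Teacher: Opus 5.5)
Your proposal is correct and is essentially the paper's proof. You flatten the $K$ episodes, each with a terminal slot $H_1+1$, into one filtration, use $\Phi_h$ from \Cref{lem:expected_total_deviation} as the potential with drift $d_h^k$, and apply \Cref{lem:hp_drift_bound} with $B=2S$. The one slip is the reset count. There are only $K-1$ inter-episode transitions, not $K$, so $R=K-1$ and the lemma directly gives $4S\log\frac{2^K}{\delta}=4SK\log 2+4S\log\frac1\delta\le 3SK+4S\log\frac1\delta$ for every $K\ge 1$. Your ``main obstacle'' and the re-derivation with the shift $cB$ are therefore unnecessary; at $c=1$ that re-derivation is exactly \Cref{lem:hp_drift_bound} anyway.
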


\begin{proof} [Proof of \Cref{lem:hp_total_deviation}]
Let $\cG_k$ be the sigma-field generated by all randomness before episode $k$.
For each episode $k$, define the within-episode filtration
\begin{align*}
    \cF_h^k := \sigma(\cG_k,s_1^k,a_1^k,\ldots,s_h^k,a_h^k),
    \qquad h\in[H_1],
\end{align*}
and let $\cF_{H_1+1}^k$ also include $s_{H_1+1}^k$.
Set
\begin{align*}
    X_h^k:=\Phi_h(s_h^k)\quad(h\in[H_1+1]),
    \qquad d_h^k := \sum_{s \in \cS}P_{h,s}^k|V_{h+1}^*(s)-P_h^kV_{h+1}^*|\quad(h\in[H_1]),
\end{align*}
where $\Phi_h (s)$ is defined in \Cref{lem:expected_total_deviation}, and we have $0\le X_h^k\le2S$.
Moreover, the one-step inequality in \Cref{lem:expected_total_deviation}, applied with $(s,a)=(s_h^k,a_h^k)$, gives
\begin{align*}
    \bbE[X_{h+1}^k\mid\cF_h^k] &= \sum_{s\in\cS}P_{h,s}^k\Phi_{h+1}(s) \le \Phi_h(s_h^k) -
    \sum_{s\in\cS}P_{h,s}^k
    |V_{h+1}^*(s)-P_h^kV_{h+1}^*| = X_h^k-d_h^k .
\end{align*}
We now apply \Cref{lem:hp_drift_bound} once to the whole $K$-episode trajectory.
Flatten the ordered list
\begin{align*}
    (1,1),(1,2),\ldots,(1,H_1+1),(2,1),\ldots,(K,H_1+1)
\end{align*}
into a single filtration, identifying each pair with its position in this order.
There are $K(H_1+1)$ displayed states, so the number of transitions is
$K(H_1+1)-1$.   By \Cref{lem:hp_drift_bound} with $B=2S$ and $R=K-1$, with probability at least $1-\delta$,
\begin{align*}
    \sum_{k=1}^{K}\sum_{h=1}^{H_1}d_h^k
    \le 4S\log \frac{2^K}{\delta}
    =4S K\log 2+4S\log \frac{1}{\delta}
    \le 3S K+4S\log \frac{1}{\delta},
\end{align*}
which is the desired bound.
\end{proof}

\begin{corollary} \label{cor:hp_eps_var_from_total_deviation}
With probability at least $1-\delta$, the following holds simultaneously for every cutoff $x>0$:
\begin{align*}
    \sum_{k=1}^{K}\sum_{h=1}^{H_1}\bbV_{x}(P_h^k,V_{h+1}^*) \le x\left(
        3S K +
        4S\log \frac{1}{\delta}
    \right).
\end{align*}
\end{corollary}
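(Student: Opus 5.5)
The plan is to reduce \Cref{cor:hp_eps_var_from_total_deviation} pathwise to \Cref{lem:hp_total_deviation}. The elementary inequality $\min\{a^2,x^2\}\le x|a|$ will give a deterministic, per-step bound with no cutoff-dependent randomness. Concretely, for each $(k,h)$ and each $x>0$, I would write
\begin{align*}
\bbV_x(P_h^k,V_{h+1}^*) = \sum_{s\in\cS}P_{h,s}^k\min\{(V_{h+1}^*(s)-P_h^kV_{h+1}^*)^2,x^2\}
\le x\sum_{s\in\cS}P_{h,s}^k\abs{V_{h+1}^*(s)-P_h^kV_{h+1}^*},
\end{align*}
exactly as in \Cref{eq:clip_var_bound_dev}, except that here I keep the realized pair $(s_h^k,a_h^k)$ rather than take expectations.

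Summing this inequality over $k\in[K]$ and $h\in[H_1]$ gives, for every $x>0$ simultaneously and on every sample path,
\begin{align*}
\sum_{k=1}^K\sum_{h=1}^{H_1}\bbV_x(P_h^k,V_{h+1}^*)\le x\sum_{k=1}^K\sum_{h=1}^{H_1}\sum_{s\in\cS}P_{h,s}^k\abs{V_{h+1}^*(s)-P_h^kV_{h+1}^*}.
\end{align*}
Then I would invoke \Cref{lem:hp_total_deviation}, applied to the actual behavior of \Cref{alg:new}. Under it, the right-hand sum is at most $3SK+4S\log\frac1\delta$ with probability at least $1-\delta$. Multiplying by $x$ yields the claim.

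The only point needing care is the word ``simultaneously''. The random event comes solely from \Cref{lem:hp_total_deviation} and does not depend on $x$, while the comparison between the two sums is deterministic. So no union bound over cutoffs is needed.

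A second, minor check is that \Cref{lem:hp_total_deviation} is applicable to the algorithm's trajectories, whose behavior within an episode is history-dependent (planning, exploration, random suffix). Its proof only uses the supermartingale drift of $\Phi_h(s_h^k)$ for whatever action is taken at each step, since the one-step inequality of \Cref{lem:expected_total_deviation} holds for every $(s,a)$. Adaptive, history-dependent actions and the episode resets (handled via $R=K-1$ in \Cref{lem:hp_drift_bound}) are therefore fine. I do not expect any genuine obstacle beyond confirming these two points.
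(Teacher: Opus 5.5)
Your proposal is correct and matches the paper's proof: it also combines the pathwise bound $\min\{a^2,x^2\}\le x|a|$ from \Cref{eq:clip_var_bound_dev}, applied at the realized pairs, with \Cref{lem:hp_total_deviation}. Your two checks are exactly what justify this direct reduction: the single event does not depend on $x$, and the drift argument only uses the one-step potential inequality valid for every $(s,a)$, so history-dependent actions are allowed.
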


\begin{proof} [Proof of \Cref{cor:hp_eps_var_from_total_deviation}]
This is a direct implication of \Cref{eq:clip_var_bound_dev} and \Cref{lem:hp_total_deviation}.
\end{proof}

\newpage

\section{Explicit Exploration}\label{app:expexp}

We present the analysis related to \Cref{alg:explore}.
In particular, we state and prove \Cref{lemma:exp1} and \Cref{lemma:bdEk}.
The analysis largely follows \cite{zhang2022horizon}, with minor modifications to match our notation and the way \Cref{alg:new} invokes the exploration subroutine.

In \Cref{sec:exps1}, we introduce the notation used in the exploration analysis.
In \Cref{sec:exps2}, we establish the approximation properties of the reference model $P^{\tref,k}$.
In \Cref{sec:exps-common}, we collect the events and bounds used in both proofs.
In \Cref{sec:exps3}, we prove \Cref{lemma:exp1}, which gives a lower bound on the number of samples collected from a learned state-action pair.
Finally, in \Cref{sec:exps4}, we prove \Cref{lemma:bdEk}, which bounds the total number of visits to the unlearned set $\cO$.

\subsection{Additional Notations for \Cref{alg:explore}}\label{sec:exps1}

This section collects the notation used only in the analysis of \Cref{alg:explore}.
We keep it separate from \Cref{app:notation} because the exploration subroutine follows notation of~\cite{zhang2022horizon}.

\paragraph{Clipped and cut true models.}
Recall that $\cK^k\subseteq\cS\times\cA\times\cS$ denotes the set of known triples at the beginning of episode $k$.
For each $(s,a)$, define the known and unknown successor sets
\begin{align*}
    \cK^k(s,a)
    :=
    \{s'\in\cS:(s,a,s')\in\cK^k\},
    \qquad
    \cU^k(s,a)
    :=
    \cS\setminus \cK^k(s,a).
\end{align*}
Let $ \bar{\cS}:=\cS\cup\{z,z'\},$ where $z$ and $z'$ are two virtual states.
In episode $k$, we define the clipped true model $\bar P^k$ on $\bar{\cS}$ as follows.
For every ordinary state-action pair $(s,a)\in\cS\times\cA$ and every ordinary successor $s'\in\cS$,
\begin{align*}
    \bar P^k_{s,a,s'}
    :=
    \begin{cases}
        P_{s,a,s'}, & s'\in\cK^k(s,a),\\
        0, & s'\in\cU^k(s,a).
    \end{cases}
\end{align*}
The probability mass of unknown outgoing triples is redirected to $z$:
\begin{align*}
    \bar P^k_{s,a,z}
    :=
    \sum_{s' \in\cU^k(s,a)} P_{s,a,s'},
    \qquad
    \bar P^k_{s,a,z'}:=0 .
\end{align*}
For the two virtual states $z$ and $z'$, the transition rules are fixed:
\begin{align*}
    \bar P^k_{z,a,z'}=1,
    \qquad
    \bar P^k_{z',a,z'}=1,
    \qquad
    \forall a\in\cA .
\end{align*}
All other transition probabilities from $z$ and $z'$ are zero.

For any transition kernel $p$ on $\bar{\cS}$, define $\cut(p)$ by
\begin{align}
\cut(p)_{x,a,y}
:=
\begin{cases}
\displaystyle \frac{p_{x,a,y}}{1-p_{x,a,z}},
    &p_{x,a,z}<1,\ y\neq z,\\[0.8em]
0,  &p_{x,a,z}<1,\ y=z,\\
1,  &p_{x,a,z}=1,\ y=z,\\
0,  &p_{x,a,z}=1,\ y\neq z.
\end{cases} \label{eq:def_cut}
\end{align}

We also define the cut true model $\bar P^{\cut,k} = \cut (\bar P^k)$, which conditions the true transition row on remaining inside the known successor set.
For each $(s,a)$, let the normalizing weight
\begin{align*}
    \rho^k(s,a)
    :=
    \sum_{s' \in\cK^k(s,a)} P_{s,a,s'}.
\end{align*}
If $\rho^k(s,a)>0$, then for every ordinary successor $s'\in\cS$,
\begin{align*}
    \bar P^{\cut,k}_{s,a,s'}
    :=
    \begin{cases}
        \displaystyle
        \frac{P_{s,a,s'}}{\rho^k(s,a)},
        & s'\in\cK^k(s,a),\\[1em]
        0,
        & s'\in\cU^k(s,a),
    \end{cases}
\end{align*}
and $
    \bar P^{\cut,k}_{s,a,z}
    =
    \bar P^{\cut,k}_{s,a,z'}
    :=
    0 .$
    
If $\rho^k(s,a)=0$, then the cut model sends the row to the virtual state:
\begin{align*}
    \bar P^{\cut,k}_{s,a,z}:=1,
    \qquad
    \bar P^{\cut,k}_{s,a,s'}:=0
    \quad \forall s'\in\cS,
    \qquad
    \bar P^{\cut,k}_{s,a,z'}:=0 .
\end{align*}
Finally, the virtual states use the same fixed transition rules under the cut model:
\begin{align*}
    \bar P^{\cut,k}_{z,a,z'}=1,
    \qquad
    \bar P^{\cut,k}_{z',a,z'}=1,
    \qquad
    \forall a\in\cA .
\end{align*}

In words, $\bar P^k$ is the true model in which all unknown outgoing triples are redirected to $z$, while $\bar P^{\cut,k}$ is the true transition kernel conditioned on not leaving the known triple set $\cK^k$.

\paragraph{Reference model.}
The reference model $P^{\tref,k}$ is the empirical analogue of the cut true model.
Let $\widetilde N^k$ is the count which is used to compute $P^{\tref,k}$ before the $k$-th episode following the frozen update rule.
If $\cK^k(s,a)\neq \emptyset$, then for $s'\in\cK^k(s,a)$,
\begin{align*}
    P_{s,a}^{\tref,k}(s')
    :=
    \frac{\widetilde N^k(s,a,s')}
    {\sum_{s' \in\cK^k(s,a)}\widetilde N^k(s,a,s')}.\notag
\end{align*}

For $s'\notin\cK^k(s,a)$, the reference probability is zero.
If $\cK^k(s,a)=\emptyset$, then $P_{s,a}^{\rm ref,k}(z)=1$.
When the episode is clear from context, we omit the superscript $k$ and write $P^{\rm ref}$.

\paragraph{Target pair and trigger variables.}
If episode $k$ reaches $\cO^k$, let $\left(\widetilde s^k,\widetilde a^k\right)$ be the first unlearned state-action pair visited during the first $H_1$ steps.
If no such pair is visited, set $\left(\widetilde s^k,\widetilde a^k\right)=\perp$.
Let $\trigger^k\in\{\true,\false\}$ be the value returned by \Cref{alg:explore};
if \Cref{alg:explore} is not called, set $\trigger^k=\false$.
A call with $\trigger^k=\true$ is counted as an effective exploration call for $\left(\widetilde s^k,\widetilde a^k\right)$.

\paragraph{Suffix lengths and discount factor.}
When \Cref{alg:explore} is called, the remaining suffix has length $\frac{H}{m}$. Without loss of generality, we regard the step where \Cref{alg:explore} is called as the first step, and index the remaining steps in the suffix accordingly.
It is split into
\begin{align*}
    H_3:=\frac{H}{20mS\log S},
    \qquad
    H_2:=\frac{H}{m}-H_3,
    \qquad
    \gamma:=1-\frac1{H_3}.
\end{align*}
The long phase of length $H_2$ is used to reach a candidate state, and, after a successful reach, the short phase of length $H_3$ is used to collect samples of a candidate state-action pair.

\paragraph{Discounted occupancy and reaching objectives.}
For a transition model $p$, stationary policy $\pi$, nonnegative reward $g$, and initial distribution $\mu$, recall that
\begin{align*}
    W_\gamma^\pi(g,p,\mu)
    :=
    \bbE_{p,\pi}\left[
        \sum_{t\ge1}\gamma^{t-1}g(s_t,a_t)
        \;\middle|\;s_1\sim\mu
    \right].
\end{align*}
and
\begin{align*}
    X_\gamma^\pi(\cX,p,\mu)
    :=
    \sum_{t\ge1}\gamma^{t-1}
    \bbP_{p,\pi}
    [
        s_t\in\cX
        \land s_i\notin\cX,\ \forall i<t
        \;\mid\;s_1\sim\mu
    ],
\end{align*}
where $\cX\subset \cS$ is the target set.
We also use the finite-horizon occupancy
\begin{align*}
W_d^\pi(g,p,\mu)
:=\bbE_{p,\pi}\left[
\sum_{t=1}^d g(s_t,a_t)\mid s_1\sim\mu
\right].
\end{align*}
For each $k\in[K]$ and $s\in\cS$, if
$(\widetilde s^k,\widetilde a^k)\neq\perp$, define
\begin{align*}
u^k(s)
&:=\max_{\substack{\pi\in\Pi_{\sta}\\
\pi(\widetilde s^k)=\widetilde a^k}}
X_\gamma^\pi
(\{s\},P^{\tref,k},\II_{\widetilde s^k}),\\
v^k(s,a)
&:=\max_{\pi\in\Pi_{\sta}}
W_\gamma^\pi(\II_{s,a},P^{\tref,k},\II_s).
\end{align*}
   
\subsection{Approximation Property of $P^{\tref}$}\label{sec:exps2}

\begin{lemma}\label{lemma:con1}
For any $\delta\in(0,1)$, with probability at least $1-4S^3A\delta$, for every $k\in[K]$, $a\in\cA$, and $s, s'\in\bar{\cS}$,
\begin{align*}
\ee^{-1/(4S)}P^{\tref,k}_{s,a,s'}
\leq \bar P^{\cut,k}_{s,a,s'}
\leq \ee^{1/(4S)}P^{\tref,k}_{s,a,s'}.
\notag
\end{align*}
\end{lemma}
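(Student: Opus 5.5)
We reduce \Cref{lemma:con1} to a coordinatewise relative-accuracy statement about the empirical counts at the moment $P^{\tref}$ is rebuilt, and then invoke \Cref{lemma:prefcon2}.

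First we dispose of the trivial rows. For the virtual states $z,z'$ both models use the same fixed rule. For an ordinary pair $(s,a)$ with $\cK^k(s,a)=\emptyset$, we have $P^{\tref,k}_{s,a}=\II_z$. In that case $\rho^k(s,a)$ may be positive, so the claim cannot be checked from the definitions alone. We adopt the convention (implicit in the construction) that rows with no known successor are treated identically in both models and both are sent to $z$. Equivalently, $\bar P^{\cut,k}$ is defined relative to the set of triples for which a reference row exists. The substantive case is therefore an ordinary $(s,a)$ with $\cK^k(s,a)\neq\emptyset$. Both rows are then supported on $\cK^k(s,a)$, and
\[
P^{\tref,k}_{s,a,s'}=\frac{\widetilde N(s,a,s')}{\sum_{y\in\cK(s,a)}\widetilde N(s,a,y)}, \qquad \bar P^{\cut,k}_{s,a,s'}=\frac{P_{s,a,s'}}{\sum_{y\in\cK(s,a)}P_{s,a,y}}.
\]
It therefore suffices to show
\[
\bigl|\widetilde N(s,a,y)/T-P_{s,a,y}\bigr|\le \tfrac{1}{8S+1}P_{s,a,y}
\]
simultaneously for all $y\in\cK(s,a)$, where $T$ is the total number of visits to $(s,a)$ at the rebuild time. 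The normalization ratio then lies in $[\frac{1-\eta}{1+\eta},\frac{1+\eta}{1-\eta}]$ with $\eta=\frac{1}{8S+1}$, and $\log\frac{1+\eta}{1-\eta}\le \frac{1}{4S}$.

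Next we couple the successors of $(s,a)$ with an i.i.d. stack $Y_1,Y_2,\dots\sim P_{s,a}$. The update rule guarantees that $P^{\tref}$ is rebuilt exactly at the step $h_{\smax}$ at which the last newly known triple $(s,a,s^\star)$ reaches $N_{\tref}$ counts. Hence $T=\tau(s^\star)$ in the notation of \Cref{lemma:prefcon2}, with $N_0=N_{\tref}=1025S^2\log\frac1\delta\ge 10000\log\frac1\delta$ for $S\ge 200$. Only $S$ possible stopping times per $(s,a)$ exist, so no $H$-dependence enters. \Cref{lemma:prefcon2} gives, with probability $1-4S^2\delta$ per pair and hence $1-4S^3A\delta$ after a union bound over $(s,a)$, that
\[
\bigl|\hat p_{T}(y)-p(y)\bigr|\le 4\Bigl(\sqrt{p(y)L/T}+L/T\Bigr), \qquad L=\log\tfrac1\delta .
\]

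The main obstacle is converting this additive bound into a relative one for every known $y$, including $y$ whose count is exactly at threshold. For $y\in\cK(s,a)$ we know $\widetilde N(s,a,y)\ge N_{\tref}$; for $y$ that became known earlier this holds since counts are monotone. This gives $\hat p_T(y)\ge N_{\tref}/T$. We plug this into the empirical branch $8(\sqrt{\hat p L/T}+L/T)$ of \Cref{lemma:prefcon2}:
\[
\frac{L}{T}\le \frac{\hat p}{1025S^2}, \qquad \sqrt{\hat p L/T}\le \frac{\hat p}{32S},
\]
so $|\hat p-p|\le 8\hat p\bigl(\tfrac{1}{32S}+\tfrac{1}{1025S^2}\bigr)\le \frac{\hat p}{3.9S}$. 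Converting back, $p$ and $\hat p$ agree within a factor of $1\pm\frac{1}{3.8S}$. This is not yet $\frac{1}{8S}$, so the constants must be tightened: we use the sharper form $\sqrt{2\hat p L/T}$ from the proof of \Cref{lemma:conn} and \Cref{lemma:prefcon2}, and apply the exponential bound separately to numerator and denominator. We expect the intended $\ee^{\pm1/(4S)}$ to come out of the factor $1025=32^2+1$, via $\sqrt{L/T}\le \sqrt{\hat p}/(32S)$. This constant bookkeeping is the step we expect to be delicate.
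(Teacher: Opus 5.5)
You set this up the same way the paper does: couple with an i.i.d. successor stack, identify the snapshot size with a stopping time $\tau(y)$, apply \Cref{lemma:prefcon2}, and union bound over $(s,a)$. There is a genuine gap, though. The proof stops at the only substantive step, and the computation you do carry out is too weak. The empirical branch $8(\sqrt{\hat pL/T}+L/T)$ combined with $\hat p\ge N_{\tref}/T$ gives a per-coordinate relative error of about $\frac{1}{4S}$. After normalization that is only about $\ee^{\pm1/(2S)}$. The suggested repair (a ``sharper form'' applied separately to numerator and denominator) is never carried out. It is also not a one-line fix: for off-diagonal $y$, the sharper inequality inside the proof of \Cref{lemma:prefcon2} carries an extra $2\sqrt{pL/T}$ term that you do not account for. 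As written, the lemma is not proved.

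The missing idea is to use the $p$-branch of \Cref{lemma:prefcon2}, i.e.\ \Cref{eq:ax1}, and charge the additive terms to the empirical count rather than to $p$. Write $T=\widetilde N^k(s,a)$, $p=P_{s,a,y}$ and $L=\log\frac1\delta$. Then AM--GM gives
\[
|\widetilde N^k(s,a,y)-Tp|\le 4\sqrt{TpL}+4L\le \frac{Tp}{16S+1}+(64S+8)L .
\]
For $S\ge200$ we have $(16S+1)(64S+8)=1024S^2+192S+8\le1025S^2$. Every known $y$ also has $\widetilde N^k(s,a,y)\ge N_{\tref}=1025S^2L$. So the last term is at most $\frac{\widetilde N^k(s,a,y)}{16S+1}$, which gives
\[
|\widetilde N^k(s,a,y)-Tp|\le\frac{\widetilde N^k(s,a,y)+Tp}{16S+1}.
\]
This rearranges to $\frac{8S}{8S+1}Tp\le\widetilde N^k(s,a,y)\le\frac{8S+1}{8S}Tp$, i.e.\ a factor of $\ee^{\pm1/(8S)}$ per coordinate. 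Summing over $\cK^k(s,a)$ gives the same factor for the normalizer, and taking the ratio yields $\ee^{\pm1/(4S)}$.

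This argument only gives $\widetilde N^k(s,a,y)/(Tp)\le1+\frac1{8S}$. That is weaker than your symmetric target $|\hat p-p|\le\frac{p}{8S+1}$, so state the per-coordinate goal multiplicatively as $\ee^{\pm1/(8S)}$.

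The empty-row case needs no convention. $\rho^k(s,a)=\sum_{s'\in\cK^k(s,a)}P_{s,a,s'}$ is an empty sum when $\cK^k(s,a)=\emptyset$. So $\bar P^{\cut,k}_{s,a}=\II_z=P^{\tref,k}_{s,a}$ follows directly from the definitions.
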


\begin{proof} [Proof of \Cref{lemma:con1}]
Work on the event in \Cref{lemma:prefcon1}, which happens with probability at least $1 - 4 S^3 A \delta$.

Fix an episode $k$ and an action $a$.
If a pair $(s,a)$ satisfies $\cK^k(s,a)=\emptyset$, then $\rho^k(s,a)=0$ and $P^{\tref,k}_{s,a}=\bar P^{\cut,k}_{s,a}=\II_z$.
For any $a$, we also have $P^{\tref,k}_{z,a}=\bar P^{\cut,k}_{z,a}=\II_{z'}$ and $P^{\tref,k}_{z',a}=\bar P^{\cut,k}_{z',a}=\II_{z'}$.
It remains to consider $(s,a)$ with $\cK^k(s,a)\neq\emptyset$.

For every $s' \in\cK^k(s,a)$,
\begin{align*}
\widetilde N^k(s,a,s') \geq N_{\tref} =1025S^2\log\frac1\delta >0.
\end{align*}
\Cref{eq:ax1} gives
\begin{align*}
\left|\widetilde N^k(s,a,s')-\widetilde N^k(s,a)P_{s,a,s'}\right|
&\leq
4\sqrt{\widetilde N^k(s,a)P_{s,a,s'}\log\frac1\delta}
+4\log\frac1\delta\\
&\leq
\frac{\widetilde N^k(s,a)P_{s,a,s'}}{16S+1}
+(64S+8)\log\frac1\delta.
\end{align*}
Moreover, for $S\geq200$,
\begin{align*}
(16S+1)(64S+8)
=1024S^2+192S+8
\leq1025S^2.
\end{align*}
Together with the frozen-count lower bound, this yields
\begin{align*}
\left|\widetilde N^k(s,a,s')-\widetilde N^k(s,a)P_{s,a,s'}\right|
\leq
\frac{\widetilde N^k(s,a,s')+\widetilde N^k(s,a)P_{s,a,s'}}{16S+1}.
\end{align*}
Rearranging this inequality in both directions gives
\begin{align}
\ee^{-1/(8S)}\widetilde N^k(s,a)P_{s,a,s'} &\le \frac{\widetilde N^k(s,a)P_{s,a,s'}}{1+\frac1{8S}} \notag \\
& \le \widetilde N^k(s,a,s') \notag \\
& \le \left(1+\frac1{8S}\right)\widetilde N^k(s,a)P_{s,a,s'} \notag \\
& \le \ee^{1/(8S)}\widetilde N^k(s,a)P_{s,a,s'}.
\label{eq:con1-count-mult}
\end{align}

Set $M:=\sum_{s' \in\cK^k(s,a)}\widetilde N^k(s,a,s')$ and  $
\rho^k(s,a):=\sum_{s' \in\cK^k(s,a)}P_{s,a,s'}.$ 
Summing \Cref{eq:con1-count-mult} over the current known-successor set gives
\begin{align*}
\ee^{-1/(8S)}\widetilde N^k(s,a)\rho^k(s,a)
\leq M
\leq \ee^{1/(8S)}\widetilde N^k(s,a)\rho^k(s,a).
\end{align*}
Combining with \Cref{eq:con1-count-mult}, for every $s' \in\cK^k(s,a)$,
\begin{align*}
\ee^{-1/(4S)}\underbrace{\frac{\widetilde N^k(s,a,s')}{M}}_{P^{\tref,k}_{s,a,s'}}
\leq \underbrace{\frac{P_{s,a,s'}}{\rho^k(s,a)}}_{\bar P^{\cut,k}_{s,a,s'}}
\leq \ee^{1/(4S)}\underbrace{\frac{\widetilde N^k(s,a,s')}{M}}_{P^{\tref,k}_{s,a,s'}}.
\end{align*}
This completes the proof.
\end{proof}

\begin{lemma}\label{lemma:prefcon1}
With probability at least $1-4S^3A\delta$, for every $k\in[K]$ and every $(s,a,s')\in \cK^k$,
\begin{align}
\left|\frac{\widetilde{N}^k(s,a,s')}{\widetilde{N}^k(s,a)} - P_{s,a,s'} \right|\leq   4\sqrt{\frac{P_{s,a,s'}\log \frac{1}{\delta} }{\widetilde{N}^k(s,a)}} + \frac{4\log \frac{1}{\delta}}{\widetilde{N}^k(s,a)}\label{eq:ax1}
\end{align}
where $\widetilde{N}^k(s,a)=\sum_{y \in\cS}\widetilde{N}^k(s,a,y)$ is the total count in the frozen reference snapshot for $(s,a)$.
\end{lemma}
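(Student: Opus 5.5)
The plan is to reduce \Cref{lemma:prefcon1} to the stopping-time concentration result \Cref{lemma:prefcon2}. For each $(s,a)$ we couple the successor draws with an infinite i.i.d.\ stack $Y_1(s,a),Y_2(s,a),\ldots\sim P_{s,a}$, where the $i$-th visit to $(s,a)$ uses $Y_i(s,a)$, exactly as in the proof of \Cref{lemma:one_step_conc}. Adaptivity of the visit times then does not change the law of the ordered stack, and every statement about $\widetilde N^k(s,a,\cdot)$ becomes a statement about prefixes of this fixed sequence.

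The key structural observation concerns the snapshot $\widetilde N^k(s,a,\cdot)$ used to build $P^{\tref,k}_{s,a}$. By the reference-model rule (\Cref{eq:ref_model}), it is the raw count of the stack after the step $h_{\smax}$ at which some triple $(s,a,s'')$ first reached $N_{\tref}$. In terms of the stack, the snapshot is therefore the empirical count over the first $\tau(s'')$ entries, where
\[
\tau(s''):=\inf\{t:N_t(s'')=N_{\tref}\}
\]
is the first-hitting time of $N_{\tref}$ for successor $s''$. Thus $\widetilde N^k(s,a)=\tau(s'')$ and $\widetilde N^k(s,a,s')/\widetilde N^k(s,a)=\hat p_{\tau(s'')}(s')$, in the notation of \Cref{lemma:prefcon2}. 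There is a subtlety: if several triples cross the threshold within the same episode, the snapshot is taken at the largest crossing step $h_{\smax}$, which is still exactly the first-hitting time of the successor that crossed last. So it is always of the form $\tau(s'')$ for some $s''$.

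The threshold $N_{\tref}=1025S^2\log\frac1\delta\ge 10000\log\frac1\delta$ holds since $S\ge200$. We can therefore apply \Cref{lemma:prefcon2} with $N_0=N_{\tref}$ and $p=P_{s,a}$. With probability $1-4S^2\delta$, it gives, simultaneously for all ordered pairs $(s'',s')$,
\[
|\hat p_{\tau(s'')}(s')-P_{s,a,s'}|\le 4\Bigl(\sqrt{P_{s,a,s'}\log\tfrac1\delta/\tau(s'')}+\log\tfrac1\delta/\tau(s'')\Bigr).
\]
This is exactly \Cref{eq:ax1} for every snapshot and every $s'$, in particular every $s'\in\cK^k(s,a)$.

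Because the set of possible snapshot times $\{\tau(s'')\}_{s''}$ is fixed by the stack and has at most $S$ elements, there is no dependence on $k$ or $H$, and no union over episodes is needed. A union bound over the $SA$ pairs $(s,a)$ gives failure probability $4S^3A\delta$, as claimed.

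The main obstacle is justifying that the frozen snapshot coincides exactly with a hitting time of the stack. This requires checking that counts are incremented in chronological visit order, so that the $h_{\smax}$-th step snapshot equals the prefix up to $\tau(s'')$, and that later updates of $P^{\tref}$ for the same $(s,a)$ use a different $s''$ but remain of the same form. I would verify this by tracing the update rule step by step, using the fact that $\Ntotal(s,a,s'')$ reaches $N_{\tref}$ at precisely one visit.
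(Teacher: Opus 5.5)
Your proposal is correct and follows the paper's proof essentially step for step. You couple each row with an i.i.d.\ successor stack and identify the frozen reference snapshot with the first-hitting time $\tau(s'')$ of the successor that triggered the most recent refresh. You then apply \Cref{lemma:prefcon2} (valid since $N_{\tref}\ge 10000\log\frac1\delta$) and take a union bound over the $SA$ pairs to get $4S^3A\delta$. One cosmetic point: define $\tau(s'')$ as the first time $N_t(s'')\ge N_{\tref}$, as the paper does, equivalently $N_t(s'')=\lceil N_{\tref}\rceil$. This matters because $N_{\tref}$ itself need not be an integer, while \Cref{lemma:prefcon2} requires an integer threshold.
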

\begin{proof} [Proof of \Cref{lemma:prefcon1}]
Fix $(s,a)$ and let $Y_1(s,a),Y_2(s,a),\ldots$ be the ordered successors observed on visits to $(s,a)$.
These variables are i.i.d. categorical with parameter $P_{s,a}$.
For each $y\in\cS$, define
\begin{align*}
    N_t(y):=\sum_{i=1}^t\bbI\{Y_i(s,a)=y\},
    \qquad
    \tau(s'):=\inf\{t\geq1:N_t(y) \ge N_{\tref}\}.
\end{align*}

Since $N_{\mathsf{ref}}=1025 S^2\log\frac1\delta$ and $S\ge200$, we have $N_{\mathsf{ref}}\ge10000\log\frac1\delta$.
Thus \Cref{lemma:prefcon2} implies with probability at least $1-4S^2\delta$ that for every ordered pair $y,y'$ with $\tau(y)<\infty$,
\begin{align*}
\left|\frac{N_{\tau(y)}(y')}{\tau(y)}-P_{s,a,y'}\right|
\leq
4\sqrt{\frac{P_{s,a,y'}\log\frac1\delta}{\tau(y)}}+
\frac{4\log\frac1\delta}{\tau(y)}.
\end{align*}

Now fix an episode $k$ and a known triple $(s,a,s')\in\cK^k$.
The frozen reference for $(s,a)$ was last initialized or refreshed at the step at which some successor $y$ entered the known set; because counts are integer, this step is exactly $\tau(y)$.
If the row is not refreshed in later episodes, the same frozen counts are kept.
Consequently $\widetilde N^k(s,a)=\tau(y)$ and $\widetilde N^k(s,a,s')=\sum_{i=1}^{\tau(y)}\mathbb{I}\{Y_i(s,a)=s'\}$ for $y$ that determined the most recent refresh, it then follows \eqref{eq:ax1} for this $(k,s,a,s')$.
Finally, taking a union bound over all $SA$ state-action pairs gives total failure probability at most $4S^3A\delta$.
\end{proof}

\subsection{Common Events and Guarantees}\label{sec:exps-common}

Set $
N_0
:=
\left\lceil N_{\tref}\right\rceil
=
\left\lceil1025S^2\log\frac1\delta\right\rceil
\leq2N_{\tref}.$ 
For every $(s,a)$, list the observed successors in time order, and let
$\cE_{\tref}$ be the event on which \Cref{lemma:prefcon2}, with threshold
$N_0$, holds simultaneously for all pairs of $(s, a)$.  Then
\begin{align}
\bbP(\cE_{\tref})
\geq
1-4S^3A\delta,
\label{eq:exp-reference-event}
\end{align}
and, on $\cE_{\tref}$, \Cref{lemma:prefcon1,lemma:con1} hold
simultaneously.

For every episode $k$ and $(s,a)$, put
\begin{align}
D^k(s,a)
:=
\max\left\{1,
\sum_{s'\in\cK^k(s,a)}\Ntotal^k(s,a,s')
\right\}.
\label{eq:def_D}
\end{align}
Let $\cR_k$ be the event that the comparison in \Cref{lemma:con1} holds
in episode $k$, and let
\begin{align*}
\cM_k
:=
\left\{
\bar P^k_{s,a,z}
\leq
\frac{4SN_{\tref}}{D^k(s,a)}
\quad
\forall(s,a)\in\cS\times\cA
\right\}.
\end{align*}
Both events are measurable before the generated suffix.

We next verify $\cM_k$.  Fix $(s,a)$ and $s'$ with
$P_{s,a,s'}>0$, and let $\tau(s')$ be the index of the $N_0$-th
occurrence of $s'$ among the successors of $(s,a)$.  Since
$N_0\geq N_{\tref}\geq10000\log\frac1\delta$,
\Cref{lemma:prefcon2} gives on $\cE_{\tref}$
\begin{align*}
\left|\frac{N_0}{\tau(s')}-P_{s,a,s'}\right|
&\leq
4\sqrt{\frac{P_{s,a,s'}\log\frac1\delta}{\tau(s')}}
+\frac{4\log\frac1\delta}{\tau(s')}\leq
\frac14P_{s,a,s'}
+\frac{20\log\frac1\delta}{\tau(s')}.
\end{align*}
Using $\log\frac1\delta\leq\frac{N_0}{10000}$ yields
$P_{s,a,s'}\leq\frac{2N_0}{\tau(s')}$.  If $D^k(s,a)=1$, the desired
bound is immediate.  Otherwise $D^k(s,a)\leq\Ntotal^k(s,a)$, and every
unknown successor with positive probability has count at most $N_0-1$.
Thus $\tau(s')>\Ntotal^k(s,a)\geq D^k(s,a)$ and
\begin{align}
\bar P^k_{s,a,z}
=
\sum_{s'\notin\cK^k(s,a)}P_{s,a,s'}
\leq
\frac{2SN_0}{D^k(s,a)}
\leq
\frac{4SN_{\tref}}{D^k(s,a)}.
\label{eq:exp-unknown-mass}
\end{align}
Consequently,
\begin{align}
\cE_{\tref}
\subseteq
\bigcap_{k=1}^K(\cR_k\cap\cM_k).
\label{eq:exp-good-events}
\end{align}

On $\cR_k$, \Cref{lemma:approx} with
$\epsilon=\frac1{4S}$ and $m_\pi\leq S$ gives, for every stationary
policy $\pi$, positive integer $d$, nonnegative reward $r$, target set
$\cX$, initial distribution $\mu$, and $\gamma\in[0,1)$,
\begin{align}
\begin{aligned}
\ee^{-1}W_d^\pi(r,P^{\tref,k},\mu)
&\leq W_d^\pi(r,\bar P^{\cut,k},\mu)
\leq\ee W_d^\pi(r,P^{\tref,k},\mu),\\
\ee^{-1}X_d^\pi(\cX,P^{\tref,k},\mu)
&\leq X_d^\pi(\cX,\bar P^{\cut,k},\mu)
\leq\ee X_d^\pi(\cX,P^{\tref,k},\mu),\\
\ee^{-1}W_\gamma^\pi(r,P^{\tref,k},\mu)
&\leq W_\gamma^\pi(r,\bar P^{\cut,k},\mu)
\leq\ee W_\gamma^\pi(r,P^{\tref,k},\mu),\\
\ee^{-1}X_\gamma^\pi(\cX,P^{\tref,k},\mu)
&\leq X_\gamma^\pi(\cX,\bar P^{\cut,k},\mu)
\leq\ee X_\gamma^\pi(\cX,P^{\tref,k},\mu).
\end{aligned}
\label{eq:exp-model-comparison}
\end{align}

Since the raw counts are integer, for every episode $k$,
\begin{align}
(s,a,s')\notin\cK^k
\quad\Longrightarrow\quad
\Ntotal^k(s,a,s')
\leq
\left\lceil N_{\tref}\right\rceil-1.
\label{eq:exp-unknown-count}
\end{align}
Likewise, \Cref{line:M_upd,line:known_removal} in \Cref{alg:explore} give, for every $(s,a)$,
\begin{align}
\left|
\left\{
k\in[K]:
(\widetilde s^k,\widetilde a^k)=(s,a),\quad
\trigger^k=\true
\right\}
\right|
\leq
\left\lceil N_{\known}\right\rceil.
\label{eq:exp-effective-call-cap}
\end{align}
Equality holds when $(s,a)$ is removed, including the call that removes it.

\begin{lemma}\label{lemma:exp-renewal}
Let $Z$ be the number of visits to $(s,a)$ in $d$ decision positions under
a stationary policy $\pi$ with $\pi(s)=a$, starting from $s$ and including
the initial visit.  Then
\begin{align*}
\bbP\left[Z\geq\frac14\bbE[Z]\right]
\geq
\frac12.
\end{align*}
\end{lemma}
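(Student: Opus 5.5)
We will reduce the statement to \Cref{lemma:li4}, the renewal-type inequality quoted from \cite{zhang2022horizon}. Under a stationary policy $\pi$ with $\pi(s)=a$ started at $s$, the trajectory is a Markov chain, so successive returns to $s$ form a renewal process. Each visit to $s$ is automatically a visit to $(s,a)$ because $\pi(s)=a$. Let $X_1,X_2,\ldots$ be the i.i.d. inter-arrival times between consecutive visits to $s$, measured in decision positions. Each $X_j$ is a positive integer, with value $+\infty$ allowed when the return probability is below one. By the strong Markov property these gaps are i.i.d.

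The visit count then has a stopping-time form. The first visit occurs at position $1$, and the $i$-th visit occurs at position $1+\sum_{j=1}^{i-1}X_j$. Hence
\[
Z=\#\Bigl\{i\geq1:\textstyle\sum_{j=1}^{i-1}X_j\leq d-1\Bigr\}=\tau_d,
\qquad
\tau_d:=\min\Bigl\{i\geq1:\textstyle\sum_{j=1}^{i}X_j\geq d\Bigr\}.
\]
This identifies $Z$ with the stopping time in \Cref{lemma:li4} at level $H=d$. Infinite gaps cause no trouble: the lemma's argument only uses the i.i.d. structure and the stopping rule, and we can replace $+\infty$ by $d$ without changing $\tau_d$. \Cref{lemma:li4} then gives
\[
\bbP\left[Z\geq\tfrac12\bbE[Z]-1\right]\geq\tfrac12.
\]

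It remains to turn $\tfrac12\bbE[Z]-1$ into $\tfrac14\bbE[Z]$. Split into two cases.

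\emph{Case $\bbE[Z]\geq4$.} Here $\tfrac12\bbE[Z]-1\geq\tfrac14\bbE[Z]$, and the claim follows directly from the display above.

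\emph{Case $\bbE[Z]<4$.} Here $\tfrac14\bbE[Z]<1$. Since $Z\geq1$ always (the initial visit is counted), the event $\{Z\geq\tfrac14\bbE[Z]\}$ holds with probability one.

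The main point to check is the identification of $Z$ with $\tau_d$, including the off-by-one between the $d$ decision positions and the threshold $d$ in $\tau_d$. If the indexing in \Cref{lemma:li4} turns out to be shifted, we would use the alternative representation $Z=\tau'$ with threshold $d-1$ on the partial sums excluding the first visit. We would then apply the lemma to $Z-1$, and the same two-case argument still works because its slack comfortably absorbs an additive constant.
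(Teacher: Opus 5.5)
Your proposal is correct and follows the paper's proof: you identify $Z$ with the stopping index $\tau_d$ of \Cref{lemma:li4} applied to the i.i.d. return gaps, then split on whether $\bbE[Z]$ exceeds $4$, using $Z\ge 1$ in the small case. The only cosmetic difference is that the paper truncates every return time as $\min\{T_j,d\}$ while you replace only infinite gaps by $d$; both leave $\tau_d$ unchanged, and your indexing check $Z=\tau_d$ is right, so the fallback in your last paragraph is not needed.
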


\begin{proof} [Proof of \Cref{lemma:exp-renewal}]
Let $T_1,T_2,\ldots$ be the successive return times to $s$ and put
$R_j:=\min\{T_j,d\}$.  The $R_j$ are i.i.d. positive random variables,
and $Z$ is the stopping index in \Cref{lemma:li4}.  Hence
\begin{align*}
\bbP\left[Z\geq\frac12\bbE[Z]-1\right]
\geq
\frac12.
\end{align*}
If $\bbE[Z]\leq4$, the initial visit gives $Z\geq1\geq\frac14\bbE[Z]$;
otherwise $\frac12\bbE[Z]-1\geq\frac14\bbE[Z]$.
\end{proof}

\begin{lemma}\label{lemma:exp-martingale}
Let $\{\cF_j\}_{j=0}^K$ be a filtration, let $J_j\in[0,1]$ be
$\cF_{j-1}$-measurable, and let $B_j\in[0,1]$ be $\cF_j$-measurable.
If, for some $p>0$,
\begin{align*}
\bbE[B_j\mid\cF_{j-1}]
\geq
pJ_j,
\end{align*}
then, with probability at least $1-\delta$, simultaneously for every
$n\in[K]$,
\begin{align*}
\sum_{j=1}^nJ_j
<
\frac3p\sum_{j=1}^nB_j
+\frac2p\log\frac1\delta.
\end{align*}
\end{lemma}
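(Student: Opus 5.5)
The plan is to reduce to the second inequality of \Cref{lem:martingale_conc_mean}, which already gives a time-uniform bound of exactly the form $\sum Y\le 3\sum X+2l\log\frac1\delta$. Set $X_j:=B_j\in[0,1]$, which is $\cF_j$-measurable, so the sequence is adapted with $l=1$. Let $Y_j:=\bbE[B_j\mid\cF_{j-1}]$. Then \Cref{lem:martingale_conc_mean} gives, with probability at least $1-\delta$ and simultaneously for all $n$,
\begin{align*}
\sum_{j=1}^nY_j<3\sum_{j=1}^nB_j+2\log\frac1\delta .
\end{align*}
The complement of the bad event there is stated with $\ge$, so we get strict inequality on the good event.

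Next, the hypothesis $Y_j\ge pJ_j$ holds pathwise for each $j$. Hence $p\sum_{j\le n}J_j\le\sum_{j\le n}Y_j$. Dividing by $p>0$ yields
\begin{align*}
\sum_{j=1}^nJ_j<\frac3p\sum_{j=1}^nB_j+\frac2p\log\frac1\delta
\end{align*}
for all $n\in[K]$ at once. The conclusion for $n\in[K]$ follows from the statement for all $n$ by restricting; the filtration is only indexed up to $K$, so we may extend it trivially, e.g.\ $B_j=0$ for $j>K$, if needed.

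There is no real obstacle. The only points to check are that the supermartingale argument in \Cref{lem:martingale_conc_mean} is genuinely time-uniform (it uses Ville's inequality, so it is) and that measurability matches. Note that the predictability of $J_j$ is not actually needed, since only the pathwise bound $Y_j\ge pJ_j$ enters. If one prefers to avoid citing that lemma, the direct route is to repeat its proof. Show that $\exp\bigl(\tfrac12\sum Y_j-\tfrac32\sum B_j\bigr)$ is a nonnegative supermartingale, using $\ee^{-3x/2}\le1-(1-\ee^{-3/2})x$ on $[0,1]$, then apply Ville's inequality.
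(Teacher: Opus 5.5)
Your proposal is correct and is exactly the paper's argument: apply the second inequality of \Cref{lem:martingale_conc_mean} with $X_j=B_j$ and $l=1$, then use the pathwise bound $\bbE[B_j\mid\cF_{j-1}]\ge pJ_j$ and divide by $p$. Your side remarks are also accurate: the strict inequality comes from taking the complement of the ``$\ge$'' bad event, the filtration extends trivially beyond $K$, and the predictability of $J_j$ is never actually used.
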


\begin{proof} [Proof of \Cref{lemma:exp-martingale}]
This follows from the second inequality of
\Cref{lem:martingale_conc_mean} with $l=1$.
\end{proof}

\subsection{Statement and Proof of \Cref{lemma:exp1}}\label{sec:exps3}
\begin{lemma}\label{lemma:exp1}
By running \Cref{alg:new}, with probability at least $1-(4S^3A+SA)\delta$, the
following holds simultaneously for every state-action pair $(s,a)$ that is
removed from $\cO$.  Let $k_0(s,a)$ be the first episode in which
$(s,a)\notin\cO^{k_0(s,a)}$.  Then
\begin{align*}
\Ntotal^{k_0(s,a)}(s,a)
&\geq \frac{\upsilon\log\frac1\delta}{252S^2\log S}\,U(s,a),
\notag\\
N^{k_0(s,a)}(s,a)
&\geq \frac{\upsilon\log\frac1\delta}{504S^2\log S}\,U(s,a).
\notag
\end{align*}
\end{lemma}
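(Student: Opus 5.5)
The plan is to lower-bound the visits to $(s,a)$ collected during the effective exploration calls that precede its removal, and then pass from the raw count to the frozen count. By \Cref{eq:exp-effective-call-cap} and \Cref{line:known_removal}, a pair $(s,a)$ leaves $\cO$ only after exactly $\lceil N_{\known}\rceil$ episodes $k<k_0(s,a)$ with $(\widetilde s^k,\widetilde a^k)=(s,a)$ and $\trigger^k=\true$. In each such episode the suffix from the first visit to $(s,a)$ has length at least $d=H/m$. It is played by the stationary policy $\pi^k\in\argmax_{\pi}W_\gamma^\pi(\II_{s,a},P^{\tref,k},\II_s)$ of \Cref{line:explore_pi}. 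We may take $\pi^k(s)=a$, since $\pi^k$ is optimal for a reward concentrated on $(s,a)$ started at $s$.

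Let $Z_k$ be the number of visits to $(s,a)$ in this suffix. The central claim is a deterministic bound on the conditional mean, holding on $\cE_{\tref}$ (so that $\cR_k\cap\cM_k$ hold by \Cref{eq:exp-good-events}):
\begin{align*}
\bbE[Z_k\mid \text{history}]\geq c\,\frac{\upsilon}{S^2\log S}\,U(s,a).
\end{align*}
Given this claim, \Cref{lemma:exp-renewal} yields $\bbP[Z_k\ge \frac14\bbE Z_k]\ge\frac12$. Let $B_k$ indicate this event and $J_k$ indicate an effective call. \Cref{lemma:exp-martingale} with $p=\frac12$ and failure probability $\delta$ per pair then gives at least roughly $\frac16\lceil N_{\known}\rceil-O(\log\frac1\delta)=\Omega(\log\frac1\delta)$ successful calls. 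Multiplying gives $\Ntotal^{k_0}(s,a)\ge \frac{\upsilon\log(1/\delta)}{252S^2\log S}U(s,a)$ after fixing constants. This step is charged $SA\delta$ by a union bound, on top of $\cE_{\tref}$ with failure $4S^3A\delta$. Finally $N^{k_0}(s,a)=2^{\lfloor\log_2\Ntotal\rfloor}\ge\frac12\Ntotal$ gives the second inequality.

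To prove the central claim I would proceed in three steps.

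\textbf{Step 1 (splitting $U(s,a)$ into blocks).} Split the $H$ steps into $H/H_3=20mS\log S=80S^2\log S/\upsilon$ blocks of length $H_3$. Some block, started at some state, yields at least $U(s,a)\cdot\frac{\upsilon}{80S^2\log S}$ expected visits under a Markov policy. Visits in any window occur only after first reaching $s$, so the best $H_3$-step visit count starting from $s$ is at least this quantity.

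\textbf{Step 2 (stationary policies and model transfer).} I would invoke the stationary-policy approximation used in \cite{zhang2022horizon}, namely the result underlying \Cref{lemma:approx}. It converts the best $H_3$-horizon nonstationary count into a $\gamma$-discounted stationary count with $\gamma=1-1/H_3$, losing a constant factor. I would then pass between the true $P$, the clipped $\bar P^k$, the cut model $\bar P^{\cut,k}$, and $P^{\tref,k}$ via \Cref{eq:exp-model-comparison}, which costs factors of $\ee$. The true-to-clipped comparison in the direction from truth to $\bar P^k$ is free, since clipping only removes mass. Because $d\ge H_3$, the discounted value of $\pi^k$ controls its undiscounted $d$-step count up to a constant.

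\textbf{Step 3 (the reverse transfer, the main obstacle).} We must return from $P^{\tref,k}$ to the true dynamics when executing $\pi^k$. Trajectories may leak through unknown triples, and $\bar P^{\cut}$ renormalizes rather than kills this mass. Here $\trigger=\true$ is used: no pair passed the test at \Cref{line:condition}. Hence every pair $(s',a')$ with an unknown successor satisfies either $u^k(s')<\frac1{1200S}$ or $D^k(s',a')>1620S^2AN_{\tref}u^k(s')v^k(s',a')$. Combining the second alternative with $\cM_k$, namely $\bar P^k_{s',a',z}\le 4SN_{\tref}/D^k$, the leakage probability at $(s',a')$ is at most $\frac{1}{405 S A\,u^k(s')v^k(s',a')}$ times its occupancy. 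The expected discounted occupancy of $(s',a')$ from $s$ is at most about $\ee\,u(s')v(s',a')$, by decomposing into first reaching $s'$ and then visiting from $s'$. Summing over at most $SA$ pairs bounds the expected number of leaks by a small constant. Pairs with small $u$ are reached with probability at most about $\frac{\ee}{1200S}$ each, which sums to a small constant over $S$ states. Thus, with constant probability the true trajectory coincides with a $\bar P^{\cut}$-trajectory before any leak, and the visit count survives up to a constant. A subtlety I expect to fight is that $u^k$ is defined from the start $\widetilde s$ with the constraint $\pi(\widetilde s)=\widetilde a$, matching our start $s=\widetilde s$ and our choice $\pi^k(s)=a$; this alignment is what should make the bound go through.
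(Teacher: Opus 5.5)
Your overall architecture matches the paper's. You use effective calls with a per-call mean bound, then \Cref{lemma:exp-renewal} plus \Cref{lemma:exp-martingale}, a block/stationary reduction of $U(s,a)$, and the halving $N\ge\frac12\Ntotal$. However, Step 2 has a directional error that leaves the benchmark side unproved.

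To relate $U(s,a)$ to $\pi^k$, you need the upper bound $\max_{\pi\in\Pi_{\sta}}W_\gamma^\pi(\II_{s,a},P,\II_s)\lesssim\max_{\pi\in\Pi_{\sta}}W_\gamma^\pi(\II_{s,a},P^{\tref,k},\II_s)$. The fact that clipping only removes mass gives $W_\gamma^\pi(\II_{s,a},\bar P^k,\II_s)\le W_\gamma^\pi(\II_{s,a},P,\II_s)$. That inequality goes the wrong way here; it is what you need at the very end for $\pi^k$, not for the benchmark. Also, \Cref{eq:exp-model-comparison} only links $\bar P^{\cut,k}$ with $P^{\tref,k}$.

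The forward comparison is false without using $\trigger=\true$. For example, if $\cK^k(s,a)=\emptyset$ and $P_{s,a,s}=1$, the reference value is $1$ while the true value is $H_3$. More generally, the true chain may return to $s$ only through unknown triples, which $\bar P^k$ redirects to $z$ and $\bar P^{\cut,k}$ renormalizes away. So the benchmark optimizer also leaks, and the trigger condition is needed on this side as well, not only in your Step 3 for $\pi^k$.

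The paper handles both directions at once:
\begin{itemize}
\item \Cref{lemma:sr-small-z} shows $W_\gamma^\pi(\II_z,\bar P^k,\II_{s})<\frac1{40}$ for every stationary $\pi$ with $\pi(s)=a$.
\item \Cref{lemma:sr-model-transfer} applies this to the true-model optimizer, which may be taken with $\pi(s)=a$. Writing $V_p(x):=W_\gamma^\pi(\II_{s,a},p,\II_x)$, it uses the performance-difference identity with occupancy under $\bar P^k$ together with $V_P(y)\le V_P(s)$, giving $V_P(s)\le\frac{40}{39}V_{\bar P^k}(s)$.
\item For $\pi^k$ it uses $V_{\bar P^{\cut,k}}(s)-V_{\bar P^k}(s)\le\frac1{40}V_{\bar P^{\cut,k}}(s)$.
\end{itemize}

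Your Step 3 leak estimate is already uniform over stationary policies with $\pi(s)=a$, so the repair is to apply it to the benchmark optimizer too. While doing so, replace the ``with constant probability no leak'' coupling by this expectation bound. The visit count and the leak time are correlated, so what you actually need is that the expected post-leak loss is at most the discounted leak mass times the maximal continuation value, which sits at $s$.

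Finally, the stationary-policy reduction you want is \Cref{lemma:al1} (Lemma~3 of \cite{zhang2022horizon}) combined with \Cref{lemma:eff_single}. It is not the model-perturbation result behind \Cref{lemma:approx}.
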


\begin{proof} [Proof of \Cref{lemma:exp1}]
Fix a pair $(s,a)$ that is removed after episode $\widetilde k$.  Then
$k_0(s,a)=\widetilde k+1$, and \Cref{lemma:sr} gives that with probability at least $1 - (4S^3A + SA)\delta$,
\begin{align}
\Ntotal^{k_0(s,a)}(s,a)
\geq
2\log\frac1\delta\max_{\pi\in\Pi_{\sta}}
W_{H_3}^{\pi}(\II_{s,a},P,\II_s).
\label{eq:exp1-from-sr}
\end{align}
Let $q:=\lceil \frac{H}{H_3}\rceil \le 21mS\log S$.  Before the first visit to $s$, the reward
$\II_{s,a}$ is zero, and after that visit the continuation has no more time
than a fresh trajectory started from $s$.  Since the reward is nonnegative,
\begin{align*}
U(s,a)
&\leq \max_{\pi\in\Pi}W_H^{\pi}(\II_{s,a},P,\II_s)
\leq \max_{\pi\in\Pi}W_{qH_3}^{\pi}(\II_{s,a},P,\II_s).
\end{align*}
Applying \Cref{lemma:al1} with integer block length $H_3$ and $q$ blocks
therefore yields
\begin{align*}
U(s,a)
&\leq 6q\max_{\pi\in\Pi_{\sta}}
W_{H_3}^{\pi}(\II_{s,a},P,\II_s).
\end{align*}
Combining this with
\Cref{eq:exp1-from-sr} and $m=\frac{4S}{\upsilon}$ gives
\begin{align*}
\Ntotal^{k_0(s,a)}(s,a)
&\geq \frac{\log\frac1\delta}{63mS\log S}\cdot U(s,a)
 =\frac{\upsilon\log\frac1\delta}{252S^2\log S}\cdot U(s,a).
\end{align*}

The pair is learned at episode $k_0(s,a)$, so its raw count is positive and
the doubling rule gives
\begin{align*}
N^{k_0(s,a)}(s,a)
=2^{\lfloor\log_2 \Ntotal^{k_0(s,a)}(s,a)\rfloor} \geq \frac12\Ntotal^{k_0(s,a)}(s,a) \geq \frac{\upsilon\log\frac1\delta}{504S^2\log S}\,U(s,a).
\end{align*}
We complete the proof.
\end{proof}

\begin{lemma}\label{lemma:sr}
With probability at least $1-(4S^3A+SA)\delta$, we have that for every $1\leq\widetilde k\leq K$ and every $(\widetilde s,\widetilde a)\in \cO^{\widetilde k}\setminus\cO^{\widetilde k+1}$ simultaneously,
\begin{align*}
\Ntotal^{\widetilde k+1}(\widetilde s,\widetilde a)
\geq
2\max_{\pi\in\Pi_{\sta}}
W_{H_3}^{\pi}(\II_{\widetilde s,\widetilde a},P,\II_{\widetilde s})
\log\frac1\delta.
\end{align*}
\end{lemma}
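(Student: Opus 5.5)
The pair $(\widetilde s,\widetilde a)$ leaves $\cO$ only after $M(\widetilde s,\widetilde a)$ reaches $N_{\known}$, i.e., after $\lceil N_{\known}\rceil$ effective calls of \Cref{alg:explore} with $\trigger=\true$ and target $(\widetilde s,\widetilde a)$. In each such call the subroutine starts at $\widetilde s$, plays $\widetilde a$ (the first step of the suffix is the visit to the target pair), and then runs the stationary policy $\pi$ of \Cref{line:explore_pi} for the rest of the suffix. The suffix has length $H/m\ge H_3$. So the plan is to lower bound, per effective call, the probability that at least a constant fraction of $w^\star:=\max_{\pi\in\Pi_{\sta}}W_{H_3}^\pi(\II_{\widetilde s,\widetilde a},P,\II_{\widetilde s})$ visits is collected in the first $H_3$ steps. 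Summing over the $\lceil N_{\known}\rceil$ calls with a martingale argument then gives $\Ntotal^{\widetilde k+1}\ge 2w^\star\log\frac1\delta$.

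\textbf{Step 1: relating $\pi$ to the true optimum.} I would work on $\cE_{\tref}$, charging $4S^3A\delta$, so that \Cref{eq:exp-model-comparison} holds in every episode. Since $\pi$ maximizes $W_\gamma(\II_{\widetilde s,\widetilde a},P^{\tref},\II_{\widetilde s})$, comparing twice with factor $\ee$ gives that $W_\gamma^\pi$ under $\bar P^{\cut}$ is at least $\ee^{-2}\max_{\pi'}W_\gamma^{\pi'}(\cdot,\bar P^{\cut},\cdot)$. Triggering means no candidate passed the test in \Cref{line:condition}. I would use this to argue that $\bar P^{\cut}$ and the clipped model $\bar P$ are close along reachable pairs. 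For unreachable pairs, $u<1/(1200S)$; for reachable but well-sampled pairs, $D$ is large, so by $\cM_k$ the escape mass $\bar P_{s,a,z}\le 4SN_{\tref}/D$ is small relative to $1/(u v)$. The goal is that the probability of hitting $z$ within $H_3$ steps under $\pi$ is at most a constant, say $1/2$, and likewise for the true-model maximizer. This gives $W_{H_3}^\pi(\II_{\widetilde s,\widetilde a},P,\II_{\widetilde s})\ge c\,w^\star$ for a universal $c$. Converting between the discounted and $H_3$-step objectives with $\gamma=1-1/H_3$ costs only constants, as in \cite{zhang2022horizon}.

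\textbf{Step 2: per-call high-probability visits.} Given the expectation bound, \Cref{lemma:exp-renewal} (applied with $\pi(\widetilde s)=\widetilde a$, as enforced by the argmax being attained at a policy taking $\widetilde a$ at $\widetilde s$, or by restricting to such policies) yields that with conditional probability at least $1/2$ the call collects at least $\frac c4 w^\star$ visits.

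\textbf{Step 3: aggregation.} Let $B_j$ be the indicator that the $j$-th effective call succeeds, with $J_j=1$ and $p=1/2$. By \Cref{lemma:exp-martingale}, with probability $1-\delta$ per pair, $N_{\known}<6\sum_j B_j+4\log\frac1\delta$. Since $N_{\known}=10000\log\frac1\delta$, at least about $1600\log\frac1\delta$ calls succeed, so $\Ntotal^{\widetilde k+1}\ge 400c\,w^\star\log\frac1\delta\ge 2w^\star\log\frac1\delta$ once $c\ge1/200$. A union bound over the $SA$ pairs charges $SA\delta$, for a total of $(4S^3A+SA)\delta$.

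\textbf{Main obstacle.} The hardest step is Step 1: turning the failed test into a quantitative bound showing that escape to $z$ within $H_3$ steps is unlikely. This needs the interplay of $u(s)\ge\frac1{1200S}$, $v(s,a)$, $D(s,a)$, and $\cM_k$. The constants $1620$ and $1200$ are tuned so that the expected number of escapes is at most a constant. I would follow the corresponding lemma in \cite{zhang2022horizon}, summing $\bar P_{s,a,z}$ times the discounted occupancy of $(s,a)$ over all pairs. Each term is at most $4SN_{\tref}\cdot \ee^2 u v/D<\frac{1}{4SA}$ when the test fails with $u$ large; when $u$ is small, the occupancy itself is small.
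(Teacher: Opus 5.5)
Your plan is correct and is essentially the paper's proof. The failed test on \Cref{line:condition}, together with $\cM_k$, bounds the discounted escape to $z$ for every policy with $\pi(\widetilde s)=\widetilde a$ (\Cref{lemma:sr-small-z}; low-$u$ predecessors are charged by their hitting probability rather than their pair occupancy, and the others by $9uv\cdot 4SN_{\tref}/D<\frac{1}{45SA}$); this puts the reference-model optimizer within a constant of the true optimum (\Cref{lemma:sr-model-transfer}), and the discount conversion, \Cref{lemma:exp-renewal}, \Cref{lemma:exp-martingale} with $p=\frac12$, and a union bound over $SA$ pairs finish as you describe. The one point needing care is Step 3: do not set $J_j\equiv1$ ``on $\cE_{\tref}$'', since that event depends on future samples; use the predictable mask $J_j=\bbI\{\cR_{k_j}\cap\cM_{k_j}\}$, which equals $1$ on $\cE_{\tref}$, as the paper does.
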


\begin{proof} [Proof of \Cref{lemma:sr}]
Fix $\widetilde k$ and
$(\widetilde s,\widetilde a)\in
\cO^{\widetilde k}\setminus\cO^{\widetilde k+1}$, and let
\begin{align*}
\cC
:=
\left\{k\leq\widetilde k:
(\widetilde s^k,\widetilde a^k)=(\widetilde s,\widetilde a),\
\trigger^k=\true
\right\}.
\end{align*}
By \Cref{eq:exp-effective-call-cap},
\begin{align}
|\cC|
=
\left\lceil N_{\known}\right\rceil
\geq
10000\log\frac1\delta.
\label{eq:sr-number-calls}
\end{align}

Fix $k\in\cC$.  On $\cR_k\cap\cM_k$ (see \Cref{sec:exps-common}),
\Cref{lemma:sr-small-z} gives, for every $\pi\in\Pi_{\sta}$ satisfying
$\pi(\widetilde s)=\widetilde a$,
\begin{align}
W_\gamma^\pi(\II_z,\bar P^k,\II_{\widetilde s})
<
\frac1{40}.
\label{eq:sr-discount-z}
\end{align}
Extend $P$ to $z,z'$ using the common rows $z\to z'$ and $z'\to z'$.
Let $\pi^k$ be the reference-model optimizer run by \Cref{alg:explore}.
Then \Cref{lemma:sr-model-transfer} gives
\begin{align*}
W_\gamma^{\pi^k}(
\II_{\widetilde s,\widetilde a},P,\II_{\widetilde s})
\geq
\frac1{16}
\max_{\pi\in\Pi_{\sta}}
W_\gamma^\pi(
\II_{\widetilde s,\widetilde a},P,\II_{\widetilde s}).
\end{align*}
Two applications of \Cref{lemma:eff_single} yield
\begin{align}
W_{H_3}^{\pi^k}(\II_{\widetilde s,\widetilde a},P,\II_{\widetilde s})
&\geq
\frac13
W_\gamma^{\pi^k}(\II_{\widetilde s,\widetilde a},P,\II_{\widetilde s})
\notag\\
&\geq
\frac1{48}
\max_{\pi\in\Pi_{\sta}}
W_\gamma^\pi(\II_{\widetilde s,\widetilde a},P,\II_{\widetilde s})
\notag\\
&\geq
\frac1{144}
\max_{\pi\in\Pi_{\sta}}
W_{H_3}^{\pi}(\II_{\widetilde s,\widetilde a},P,\II_{\widetilde s}).
\label{eq:sr-one-call-mean}
\end{align}

Every reference-model optimizer satisfies
$\pi^k(\widetilde s)=\widetilde a$, since otherwise its target reward is
identically zero.
Let $X^k$ be the number of visits to $(\widetilde s,\widetilde a)$ during
its full $\frac{H}{m}$-step target suffix, including the initial visit, and
let $\cF_k$ be the history immediately before that suffix.  Since the first
$H_3$ steps are contained in the suffix, \Cref{eq:sr-one-call-mean} gives
\begin{align*}
\bbE[X^k\mid\cF_k]
\geq
\frac1{144}
\max_{\pi\in\Pi_{\sta}}
W_{H_3}^{\pi}(
\II_{\widetilde s,\widetilde a},P,\II_{\widetilde s}).
\end{align*}
Conditionally on $\cF_k$, \Cref{lemma:exp-renewal} gives
\begin{align}
\bbP\left[
X^k
\geq
\frac1{576}
\max_{\pi\in\Pi_{\sta}}
W_{H_3}^{\pi}(
\II_{\widetilde s,\widetilde a},P,\II_{\widetilde s})
\,\middle|\,\cF_k
\right]
\geq
\frac12.
\label{eq:sr-one-call-success}
\end{align}

List the calls in $\cC$ in time order, let $k_j$ be the episode of the
$j$-th call, let $X_j:=X^{k_j}$, and let $\cG_{j-1}$ be the history
immediately before its target suffix.  Set $
J_j
:=
\bbI\{\cR_{k_j}\cap\cM_{k_j}\}$ 
and $
B_j
:=
J_j
\bbI\left\{
X_j
\geq
\frac1{576}
\max_{\pi\in\Pi_{\sta}}
W_{H_3}^{\pi}(
\II_{\widetilde s,\widetilde a},P,\II_{\widetilde s})
\right\}$. 
Extend $J_j$ and $B_j$ by zero after the last call.
The variable $J_j$ is $\cG_{j-1}$-measurable, and
\Cref{eq:sr-one-call-success} gives
\begin{align*}
\bbE[B_j\mid\cG_{j-1}]
\geq
\frac12J_j.
\end{align*}
Applying \Cref{lemma:exp-martingale} with $p=\frac12$, and then using
\Cref{eq:exp-good-events,eq:sr-number-calls}, gives on $\cE_{\tref}$
\begin{align*}
\sum_{j=1}^{|\cC|}B_j
&>
\frac{|\cC|-4\log\frac1\delta}{6}
\geq
1166\log\frac1\delta.
\end{align*}
Therefore,
\begin{align*}
\Ntotal^{\widetilde k+1}(\widetilde s,\widetilde a)
\geq
\sum_{k\in\cC}X^k\geq
\sum_{j=1}^{|\cC|}B_jX_j\geq
\frac{\sum_{j=1}^{|\cC|}B_j}{576}
\max_{\pi\in\Pi_{\sta}}
W_{H_3}^{\pi}(\II_{\widetilde s,\widetilde a},P,\II_{\widetilde s})\geq
2\log\frac1\delta
\max_{\pi\in\Pi_{\sta}}
W_{H_3}^{\pi}(\II_{\widetilde s,\widetilde a},P,\II_{\widetilde s}).
\end{align*}
By \Cref{eq:exp-reference-event}, intersecting $\cE_{\tref}$ with the
$SA$ martingale events has failure probability at most
$(4S^3A+SA)\delta$.
\end{proof}

\begin{lemma}[Lemma 3 in~\cite{zhang2022horizon}]\label{lemma:al1}
Let $k$ and $d$ be positive integers.
Recall that $\Pi$ denotes the set of all policies and $\Pi_{\sta}$ denotes the set of stationary policies.
We have that for any $(s,a) \in \SA$,
\begin{align*}
\max_{\pi \in \Pi}W^{\pi}_{kd}(\II_{s,a},P,\II_s) &\leq 6k\max_{\pi\in \Pi_{\sta}}W^{\pi}_{d}(\II_{s,a},P,\II_s).
\end{align*}
\end{lemma}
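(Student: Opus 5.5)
The plan is to split the statement into two reductions: first pass from horizon $kd$ to horizon $d$ for general (non-stationary) policies, at a cost of a factor $k$; then pass from non-stationary to stationary policies at horizon $d$, at a constant factor. For the second step I will go through the discounted objective $W_\gamma$ with $\gamma=1-\frac1d$, because its maximizer over $\Pi$ can be taken stationary.

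\emph{Step 1 (block decomposition).} Split the $kd$ steps into $k$ consecutive blocks of length $d$. Condition on the state at the start of block $j$. By the Markov property, the expected number of visits to $(s,a)$ inside that block, under any (possibly history-dependent) continuation, is at most $\max_{s'',\pi\in\Pi}W_d^\pi(\II_{s,a},P,\II_{s''})$.

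Next I remove the start-state dependence. The reward $\II_{s,a}$ is zero until $s$ is first hit, and after the first hit at most $d$ steps remain. Since rewards are nonnegative, the strong Markov property gives $\max_{s''}\max_\pi W_d^\pi(\II_{s,a},P,\II_{s''})\le \max_\pi W_d^\pi(\II_{s,a},P,\II_s)$, exactly as in the argument used in the proof of \Cref{lemma:exp1}. Summing over the $k$ blocks yields
\[
\max_{\pi\in\Pi}W_{kd}^\pi(\II_{s,a},P,\II_s)\le k\max_{\pi\in\Pi}W_d^\pi(\II_{s,a},P,\II_s).
\]

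\emph{Step 2 (finite horizon to discounted).} For $t\le d$ we have $\gamma^{t-1}\ge(1-\frac1d)^{d-1}\ge \ee^{-1}$. Hence, for every $\pi\in\Pi$, $W_d^\pi\le \ee\,W_\gamma^\pi$ (same reward and initial state). The discounted problem with nonnegative reward on a finite MDP has an optimal stationary deterministic policy $\pi'$, so
\[
\max_{\pi\in\Pi}W_d^\pi(\II_{s,a},P,\II_s)\le \ee\,W_\gamma^{\pi'}(\II_{s,a},P,\II_s).
\]

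\emph{Step 3 (discounted back to finite horizon for stationary $\pi'$).} This is the step I expect to be the main obstacle, since it is the only place where stationarity is used. I cut the infinite discounted trajectory into blocks of length $d$. Block $j\ge0$ carries discount at most $\gamma^{jd}\le \ee^{-j}$ times its undiscounted visit count.

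If $\pi'(s)\neq a$, the reward is identically zero, so assume $\pi'(s)=a$. Because $\pi'$ is stationary, the continuation from any state at the start of a block is again $\pi'$. By the same first-hitting argument as in Step 1, the expected visits in that block are at most $W_d^{\pi'}(\II_{s,a},P,\II_s)$. This gives
\[
W_\gamma^{\pi'}\le \sum_{j\ge0}\ee^{-j}\,W_d^{\pi'}=\frac{\ee}{\ee-1}\,W_d^{\pi'}.
\]

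Combining Steps 1--3 produces the constant $k\cdot\ee\cdot\frac{\ee}{\ee-1}=\frac{\ee^2}{\ee-1}\,k\approx4.3\,k\le 6k$, which is the claimed bound.
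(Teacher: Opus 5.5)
Your proof is correct. The paper gives no argument of its own for this lemma; it is imported verbatim from Lemma~3 of \cite{zhang2022horizon}, so there is no in-paper route to compare against. Your three steps do form a complete, self-contained proof. They are built from the same ingredients the paper uses nearby: the first-hitting reduction to start state $s$ (as in the proof of \Cref{lemma:exp1}), and the $\gamma=1-\frac1d$ comparison between finite-horizon and discounted occupancies that underlies \Cref{lemma:eff_single}.

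Two bookkeeping remarks. First, in Step~2 a horizon-$d$ policy $\pi\in\Pi$ must be extended arbitrarily past step $d$ before $W_\gamma^\pi$ is evaluated. The chain then reads $W_d^\pi\le \ee\,W_\gamma^{\tilde\pi}\le \ee\,W_\gamma^{\pi'}$, where the second inequality uses that the stationary discounted maximizer dominates all non-stationary policies; this is the standard fact you invoke. Second, your direct block computation in Step~3, with factor $\frac{\ee}{\ee-1}$, is what keeps the constant below $6$. Plugging in the factor $3$ from \Cref{lemma:eff_single} instead would only give $3\ee k$, which exceeds $6k$.
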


\subsection{Proof of \Cref{lemma:bdEk}}\label{sec:exps4}

Fix the deterministic loop order and tie-breaking specified in
\Cref{alg:explore}.  Let
\begin{align*}
\cW_{\aux}
:=\{k\in[K]:E^k\text{ occurs and }\trigger^k=\false\}.
\end{align*}
For each $k\in\cW_{\aux}$, let
$\alpha^k=(s_{\aux}^k,a_{\aux}^k)$ be the unique pair
selected at \Cref{line:condition}, and let $\pi_2^k,\pi_3^k$ be the
reaching and sampling policies selected with it.
These objects are fixed before the generated suffix begins.

Let $1=\kappa_0<\kappa_1<\cdots<\kappa_R\leq K$ consist of the initial
boundary and all episodes $k>1$ for which $\cK^k\neq\cK^{k-1}$, and define
$\kappa_{R+1}:=K+1$.
We have $R \le S^2 A$.
For episode $k$, let $I_k$ be the number of strict changes of
the known set before episode $k$; equivalently, $I_k=i$ exactly when
$\kappa_i\leq k<\kappa_{i+1}$.
For every $i\in\{0,\ldots,S^2A\}$, define
\begin{align*}
\cC_i(s,a)
:=\{k\in\cW_{\aux}:
\alpha^k=(s,a),\ I_k=i\}.
\end{align*}
$\{\cC_i (s, a)\}_{i, (s, a)}$ form a partition of $\cW_{\aux}$.

\begin{lemma}\label{lemma:v3}
With probability at least
$1-4S^3A^2\delta$, it holds that
\begin{align}
|\cW_{\aux}|
&\leq216000S\bigg(S^2A\left\lceil N_{\tref}\right\rceil
+SA(S^2A+1)
\left(58320S^2A N_{\tref}+1+\log \frac{1}{\delta} \right)\bigg),
\label{eq:v3-global-bound} \\
&\le O\left(S^8A^3\log\frac1\delta\right).
\end{align}
\end{lemma}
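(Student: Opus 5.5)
We bound $|\cW_{\aux}|$ by controlling each cell $\cC_i(s,a)$ of the partition separately and then summing over the at most $SA(S^2A+1)$ cells. Work on $\cE_{\tref}$, so that by \Cref{eq:exp-good-events} the model comparison \Cref{eq:exp-model-comparison} and the unknown-mass bound $\cM_k$ hold in every episode. Fix a cell $\cC_i(s,a)$. Throughout that cell the known set $\cK^k$ is constant, so $P^{\tref,k}$, $u^k(s)$, $v^k(s,a)$ and $\bar P^{\cut,k}$ are frozen. The only quantity in the test at \Cref{line:condition} that moves is $D^k(s,a)$.

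The key step is a per-call success lower bound. For $k\in\cC_i(s,a)$, the test gives $u(s)\ge \frac{1}{1200S}$. Transfer this reachability from $P^{\tref}$ to $\bar P^{\cut}$ via \Cref{eq:exp-model-comparison}, losing a factor $\ee$. Then pass to the true clipped model $\bar P^k$, using $\cM_k$ to show that the mass leaking to $z$ is small. Finally pass from discounted to finite horizon $H_2$ (with $H_2\gg H_3$), paying $\gamma$-versus-horizon constants as in \Cref{lemma:eff_single}. The conclusion is that, with conditional probability $\Omega(1/S)$, the reaching phase either hits $s$ or observes an unknown triple.

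If an unknown triple is observed, its raw count increases. Since each triple can be observed at most $\lceil N_{\tref}\rceil$ times before becoming known (\Cref{eq:exp-unknown-count}), the total number of such successes over all cells is at most $S^2A\lceil N_{\tref}\rceil$.

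If $s$ is reached, the sampling phase runs $\pi_3$ for $H_3$ steps. By the same model transfer and \Cref{lemma:exp-renewal}, with probability at least $\frac12$ it collects $\Omega(v(s,a))$ visits to $(s,a)$ whose successors lie in $\cK(s,a)$; the unknown mass is small by $\cM_k$. Each such success increases $D(s,a)$ by $\Omega(v(s,a))$. The test requires $D(s,a)\le 1620S^2AN_{\tref}u(s)v(s,a)\le 1620S^2AN_{\tref}v(s,a)$, so within one cell the number of such successes is at most $O(S^2AN_{\tref})$, which is the source of the $58320S^2AN_{\tref}$ term.

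To convert successes into a count of calls, apply \Cref{lemma:exp-martingale} within each cell. Take $J_j$ to be the indicator of the good event, and $B_j$ the success indicator with $p=\Omega(1/S)$. This bounds the number of calls in the cell by $O(S)\cdot(\#\text{successes}+\log\frac1\delta)$. The unknown-triple successes are charged globally, and summing over cells yields the stated form $216000S\big(S^2A\lceil N_{\tref}\rceil + SA(S^2A+1)(58320S^2AN_{\tref}+1+\log\frac1\delta)\big)$.

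For the failure probability, the $SA(S^2A+1)$ martingale events together with $\cE_{\tref}$ give at most $4S^3A\delta+SA(S^2A+1)\delta\le 4S^3A^2\delta$.

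The main obstacle is the reaching-probability transfer. Reachability is certified in discounted form under $P^{\tref}$ with horizon scale $H_3$, but the algorithm runs a finite horizon $H_2$ on the true $P$, and escapes to unknown triples must count as successes rather than failures. I would handle this by coupling the true process with $\bar P^k$, in which an escape means entering $z$ and is declared a success. The remaining work is a stationary-policy argument that the first-hit probability within $H_2\approx 20S\log S\cdot H_3$ steps is at least a constant times $X_\gamma$. This follows from a restart argument: repeat $O(S\log S)$ independent $H_3$-length attempts, each succeeding with probability $\Omega(u)$, which is exactly why $H_2/H_3=\Theta(S\log S)$.
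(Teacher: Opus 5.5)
Your architecture matches the paper's. You apply \Cref{lemma:exp-martingale} per cell with $p=\Omega(1/S)$, charge escapes to unknown triples to the global budget $S^2A\lceil N_{\tref}\rceil$, and cap sampling successes per cell via the test $D(s,a)\le 1620S^2AN_{\tref}v(s,a)$.

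The gap is in the sampling phase. You assert that with probability at least $\frac12$ the run collects $\Omega(v(s,a))$ visits to $(s,a)$ with successors in $\cK(s,a)$, because ``the unknown mass is small by $\cM_k$''. But $\cM_k$ only gives $\bar P^k_{x,b,z}\le 4SN_{\tref}/D^k(x,b)$, which is vacuous whenever $D^k(x,b)\le 4SN_{\tref}$. The test at \Cref{line:condition} supplies only an \emph{upper} bound on $D^k(s,a)$, and says nothing about the other rows $\pi_3$ traverses. Concretely, suppose $\cK^k(s,a)=\emptyset$. Then $P^{\tref,k}_{s,a}=\II_z$, $v(s,a)=1$ and $D^k(s,a)=1$, so the test passes whenever $u(s)\ge\frac{1}{1200S}$. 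Yet every true transition out of $(s,a)$ goes to an unknown successor, so the probability of collecting even one known-successor sample is zero. The same objection applies to your first route through the reaching phase (``using $\cM_k$ to show that the mass leaking to $z$ is small''). There your fallback of declaring an escape a success rescues the argument, but you never extend it to the sampling phase.

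The repair is the paper's. Take the per-call success to be $U_k\cup B_k$:
\begin{itemize}
\item $U_k$ is the event that an unknown triple is observed anywhere in \Cref{line:explore_reaching,line:explore_sampling};
\item $B_k$ is the event that none is observed, $s$ is reached, and at least $b=\max\{1,v/36\}$ samples of $(s,a)$ are collected.
\end{itemize}
Couple the true trajectory with a $\bar P^{\cut,k}$-trajectory that coincides with it until the first unknown transition. This gives $\bbP(U_k\cup B_k\mid\cF_k)\ge X_{H_2}^{\pi_2}(\{s\},\bar P^{\cut,k},\II_{\wt s})\cdot\frac12\ge\frac{u(s)}{60}$, using only $\cR_k$. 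So $\cM_k$ plays no role in this lemma, and sampling-phase escapes are charged to the same global budget.

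Two smaller points.
\begin{itemize}
\item Your restart argument for the horizon conversion is not valid as stated. After a failed $H_3$-block the chain is not back at $\wt s$, so the blocks are not independent $\Omega(u)$ trials. Use \Cref{lemma:eff} (not \Cref{lemma:eff_single}). Alternatively, note that under $P^{\tref,k}$ one has $X_\gamma\le X_{H_2}+\gamma^{H_2}$, with $\gamma^{H_2}\le\ee^{-(20S\log S-1)}\ll\frac{1}{2400S}\le\frac{u(s)}{2}$.
\item $u^k(s)$ is not frozen within a cell, since it depends on the target $(\wt s^k,\wt a^k)$. This is harmless, because you only use $\frac{1}{1200S}\le u^k(s)\le 1$ per call.
\end{itemize}
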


\begin{proof}[Proof of \Cref{lemma:v3}]
Recall the common events in \Cref{sec:exps-common}.  Fix
$(i,s,a)\in\{0,\ldots,S^2A\}\times\SA$ and suppose
$\cC_i(s,a)\neq\emptyset$.  Since $\cK^k$ and $P^{\tref,k}$ are fixed
within an epoch, $v^k(s,a)$ has the same value, denoted by $v$, for all
$k\in\cC_i(s,a)$.  Moreover, $v\geq1$ and $\pi_3^k(s)=a$, since a
stationary policy not choosing $a$ at $s$ has zero $\II_{s,a}$-return.
Set
\begin{align*}
b:=\max\left\{1,\frac{v}{36}\right\}.
\end{align*}
For every $k\in\cC_i(s,a)$, the test at \Cref{line:condition} gives
\begin{align*}
u^k(s)&\geq\frac1{1200S},
&
D^k(s,a)&\leq1620S^2AN_{\tref}u^k(s)v.
\end{align*}

Fix $k\in\cC_i(s,a)$ and suppose $\cR_k$ holds.  Since
$\frac{H_2}{H_3}=20S\log S-1\geq10(S+2)\log(S+2)$, \Cref{eq:exp-model-comparison} and \Cref{lemma:eff} give
\begin{align}
X_{H_2}^{\pi_2^k} (\{s\},\bar P^{\cut,k},\II_{\wt s^k})
\ge \frac{1}{3} X_{H_2}^{\pi_2^k} (\{s\},P^{\tref,k},\II_{\wt s^k})
\ge \frac{1}{30} X_\gamma^{\pi_2^k} (\{s\},P^{\tref,k},\II_{\wt s^k})
= \frac1{30}u^k(s).
\label{eq:v3-reaching-cut}
\end{align}
Let $\widetilde Z_k$ be the number of visits to $(s,a)$ when running $\pi_3^k$ for $H_3$ steps under $\bar P^{\cut,k}$.
By \Cref{eq:exp-model-comparison} and \Cref{lemma:eff_single},
\begin{align*}
\bbE[\widetilde Z_k]
= W_{H_3}^{\pi_3^k} (\II_{s,a},\bar P^{\cut,k},\II_s)
\ge \frac{1}{3} W_{H_3}^{\pi_3^k} (\II_{s,a}, P^{\tref,k},\II_s)
\ge \frac{1}{9} W_\gamma^{\pi_3^k} (\II_{s,a}, P^{\tref,k},\II_s)
= \frac19v.
\end{align*}
If $v\leq36$, then $\widetilde Z_k\geq1=b$.  Otherwise
$\bbE[\widetilde Z_k]>4$ and
$\frac{1}{4}\bbE[\widetilde Z_k]\geq \frac{v}{36}=b$, so
\Cref{lemma:exp-renewal} gives
\begin{align}
\bbP_{\bar P^{\cut,k},\pi_3^k}(\widetilde Z_k\geq b)
\geq\frac12.
\label{eq:v3-one-call-sampling}
\end{align}

Let $\cF_k$ be the history immediately before
\Cref{line:explore_reaching} of \Cref{alg:explore} is executed in episode $k$.
Let $U_k$ be the
event that an unknown triple is observed while executing
\Cref{line:explore_reaching,line:explore_sampling}, and let $B_k$ be the
event that no such triple is observed, $s$ is reached in
\Cref{line:explore_reaching}, and at least $b$ samples of $(s,a)$ are
collected in \Cref{line:explore_sampling}.  Coupling the cut trajectories
with the true trajectory until the first unknown transition, as in the proof
of \Cref{lemma:v2}, and using
\Cref{eq:v3-reaching-cut,eq:v3-one-call-sampling}, gives
\begin{align}
\bbP(U_k\cup B_k\mid\cF_k)
\geq\frac1{60}u^k(s)
\geq\frac1{72000S}
\qquad\text{on }\cR_k.
\label{eq:v3-one-call-useful}
\end{align}

Write $\cC_i(s,a)=\{k_1<\cdots<k_n\}$.  For $j\in[n]$, set
$\cG_{j-1}:=\cF_{k_j}$, and extend the filtration by the terminal history
after the last call.
In \Cref{alg:explore}, since membership in $\cC_i(s,a)$ is determined at
\Cref{line:condition}, before \Cref{line:explore_reaching}, these
sigma-fields form a filtration.
Set
\begin{align*}
J_j&:=\bbI\{\cR_{k_j}\},
&
Y_j&:=J_j\bbI[U_{k_j}\cup B_{k_j}].
\end{align*}
Then $J_j$ is $\cG_{j-1}$-measurable, $Y_j$ is $\cG_j$-measurable, and
\Cref{eq:v3-one-call-useful} gives
\begin{align*}
\bbE[Y_j\mid\cG_{j-1}]
\geq\frac{J_j}{72000S}.
\end{align*}
By \Cref{eq:exp-good-events}, $J_j=1$ on $\cE_{\tref}$.
Thus \Cref{lemma:exp-martingale} with $p=\frac{1}{72000S}$ implies that conditioning on $\cE_{\tref}$, with probability at least $1 - \delta$,
\begin{align}
|\cC_i(s,a)|
&\le \sum_{j=1}^K J_j \notag \\
&\leq216000S\bigl(Q_i(s,a)+G_i(s,a)\bigr)+144000S\log\frac1\delta\notag\\
&\leq216000S\left(
Q_i(s,a)+G_i(s,a)+\log\frac1\delta
\right),
\label{eq:v3-martingale-cell}
\end{align}
where
\begin{align*}
Q_i(s,a)&:=\sum_{k\in\cC_i(s,a)}\bbI[U_k],
&
G_i(s,a)&:=\sum_{k\in\cC_i(s,a)}\bbI[B_k].
\end{align*}

Every $B_{k_j}$ with $j<n$ contributes at least $b$ known-successor
samples to $D^{k_n}(s,a)$.  Hence, using the test at
\Cref{line:condition} and $u^{k_n}(s)\leq1$,
\begin{align*}
b\bigl(G_i(s,a)-1\bigr)_+
&\leq D^{k_n}(s,a)\\
&\leq1620S^2AN_{\tref}u^{k_n}(s)v\\
&\leq1620S^2AN_{\tref}v.
\end{align*}
Since $\frac{v}{b}\leq36$,
\begin{align}
G_i(s,a)\leq1+58320S^2AN_{\tref}.
\label{eq:v3-sample-cap}
\end{align}

Assign every episode counted by some $Q_i(s,a)$ to the first unknown
triple observed in that episode.  By \Cref{eq:exp-unknown-count}, each
triple can be assigned at most $\lceil N_{\tref}\rceil$ times.  Therefore
\begin{align}
\sum_{(s,a)\in\SA}\sum_{i=0}^{S^2A}Q_i(s,a)
\leq S^2A\left\lceil N_{\tref}\right\rceil.
\label{eq:v3-unknown-cap}
\end{align}
Summing
\Cref{eq:v3-martingale-cell} and using
\Cref{eq:v3-sample-cap,eq:v3-unknown-cap} proves
\Cref{eq:v3-global-bound}.

Finally, \Cref{eq:exp-reference-event} fails with probability at most
$2S^3A\delta$, and the martingale event is used once for each of the at
most $SA(S^2A+1)$ potential cells.  A union bound gives total failure
probability at most $4S^3A^2\delta$.
\end{proof}

\restatableBdEk*

\begin{proof}
Define $
\cW := \{k \in [K] : E^k \text{ occurs}\}$, 
and let $\cW_{\aux}$ denote the set defined in \Cref{lemma:v3}.  By the definition of $E^k$ and
\Cref{alg:new}, $\cW$ is exactly the set of episodes in which
\Cref{alg:explore} is called.  Every false return selects exactly one
auxiliary pair, so \Cref{lemma:v3} bounds
$|\cW_{\aux}|$ by the right-hand side of
\Cref{eq:v3-global-bound}.

Summing \Cref{eq:exp-effective-call-cap} over all target pairs gives
\begin{align*}
|\cW\setminus\cW_{\aux}|
\leq SA\left\lceil N_{\known}\right\rceil.
\end{align*}
Combining the two parts gives the explicit bound in the statement.
The only random event used here is the event of \Cref{lemma:v3}.
\end{proof}

\subsubsection{Adapted Lemmas from \cite{zhang2022horizon}}

\begin{lemma}[Adapted from Lemma 18 in \cite{zhang2022horizon}]\label{lemma:eff_single}
Let $d$ be a positive integer, $\gamma = 1-\frac{1}{d}$, $p$ be any transition model, $(s,a)$ be a state-action pair, and $\pi$ be a stationary policy.
Then we have that
\begin{align*}
   \frac{1}{3} W^{\pi}_{d}(\II_{s,a},p,\II_{s}) &\leq  W^{\pi}_{\gamma}(\II_{s,a},p,\II_{s}) \leq 3  W^{\pi}_{d}(\II_{s,a},p,\II_{s}).
\end{align*}
\end{lemma}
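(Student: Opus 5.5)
The plan is to compare the two occupancy measures through the per-step visit probabilities. Define $f(t):=\bbP_{p,\pi}[(s_t,a_t)=(s,a)\mid s_1=s]$ for $t\ge1$. Then
\begin{align*}
W^{\pi}_{d}(\II_{s,a},p,\II_s)=\sum_{t=1}^{d}f(t),
\qquad
W^{\pi}_{\gamma}(\II_{s,a},p,\II_s)=\sum_{t\ge1}\gamma^{t-1}f(t),
\end{align*}
with the convention $0^0=1$, so that for $d=1$ both quantities equal $f(1)$. If $\pi(s)\neq a$, both sides vanish, since $\pi$ is stationary and $(s,a)$ is then never played. We may therefore assume $\pi(s)=a$.

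\textbf{Lower bound on $W_\gamma$.} Keep only the first $d$ terms of the discounted sum and use $\gamma^{t-1}\ge\gamma^{d-1}$ for $t\le d$:
\begin{align*}
W^{\pi}_{\gamma}(\II_{s,a},p,\II_s)\ \ge\ \gamma^{d-1}\sum_{t=1}^{d}f(t)\ =\ \left(1-\tfrac1d\right)^{d-1}W^{\pi}_{d}(\II_{s,a},p,\II_s).
\end{align*}
Since $(1-\frac1d)^{d-1}\ge \ee^{-1}\ge\frac13$, this gives the left inequality.

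\textbf{Upper bound on $W_\gamma$.} Split time into blocks $\{jd+1,\ldots,(j+1)d\}$ for $j\ge0$. The key claim is that, for every $j$,
\begin{align*}
\sum_{t=jd+1}^{(j+1)d}f(t)\ \le\ W^{\pi}_{d}(\II_{s,a},p,\II_s).
\end{align*}
To prove it, let $\tau$ be the first time in the block at which the chain is at $s$. Before $\tau$ there are no visits to $(s,a)$, because such a visit requires being at state $s$. After $\tau$, stationarity of $\pi$ and the strong Markov property say the chain restarts from $s$ under the same policy. It then has at most $d$ remaining steps in the block, and the expected visit count with fewer steps is at most $W^\pi_d$ by monotonicity in the horizon. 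Multiplying by $\bbP[\tau\text{ exists}]\le1$ proves the claim.

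Now bound $\gamma^{t-1}\le\gamma^{jd}$ on block $j$ and sum the geometric series:
\begin{align*}
W^{\pi}_{\gamma}(\II_{s,a},p,\II_s)\ \le\ \sum_{j\ge0}\gamma^{jd}\,W^{\pi}_{d}(\II_{s,a},p,\II_s)\ =\ \frac{W^{\pi}_{d}(\II_{s,a},p,\II_s)}{1-\gamma^{d}}.
\end{align*}
Since $\gamma^d=(1-\frac1d)^d\le \ee^{-1}$, the factor $\frac1{1-\gamma^d}$ is at most $\frac{1}{1-\ee^{-1}}<3$, which gives the right inequality.

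The only step needing care is the block decomposition: the strong-Markov restart at the first hitting time of $s$ inside each block. This is exactly where stationarity of $\pi$ and the choice of starting state $s$ are used. Everything else is elementary estimates on $(1-\frac1d)^{d}$.
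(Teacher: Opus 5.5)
Your proof is correct. The truncation bound $W^\pi_\gamma\ge(1-\frac1d)^{d-1}W^\pi_d\ge \ee^{-1}W^\pi_d$ is sound. So is the block decomposition with a strong-Markov restart at the first visit to $s$ in each length-$d$ window, which gives $W^\pi_\gamma\le W^\pi_d/(1-\gamma^d)\le\frac{\ee}{\ee-1}W^\pi_d$. You also correctly identify why the restart works: $\pi$ is stationary, and the reward $\II_{s,a}$ is supported at the start state $s$.

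The paper gives no proof of its own here; it imports the bound from Lemma~18 of \cite{zhang2022horizon}. Your argument is therefore a self-contained version of the standard proof, with constants slightly sharper than $3$.
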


\begin{lemma}[Adapted from Lemma 18 in \cite{zhang2022horizon}]\label{lemma:eff}
Let $d_1,d_2$ be positive integers such that $d_1\geq 10S\log(S)d_2$.
Let $\gamma = 1-\frac{1}{d_2}$.
Let $p$ be any transition model, $(s,a)$ a state-action pair, $r$ a non-negative reward, and $\pi$ a stationary policy.
Then we have that
\begin{align*}
   W^{\pi}_{\gamma}(r,p,\II_{s}) \leq 10W^{\pi}_{d_1}(r,p,\II_{s}).
\end{align*}
Moreover, for any target set $\cX$ and any initial distribution $\mu$, the corresponding reaching probabilities satisfy
\begin{align*}
   X^{\pi}_{\gamma}(\cX,p,\mu) \leq 10 X^{\pi}_{d_1}(\cX,p,\mu).
\end{align*}
\end{lemma}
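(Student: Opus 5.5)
The plan is a block decomposition at scale $d_2$, reducing both claims to a growth bound on hitting probabilities. The discount over one block is $\gamma^{d_2}=(1-\frac1{d_2})^{d_2}\le \ee^{-1}$. Since $\pi$ is stationary, $\pi$ together with $p$ induces a time-homogeneous Markov chain on $\cS$ with kernel $P_\pi$; let $Q:=P_\pi^{d_2}$ be its $d_2$-step kernel. Put $f(x):=W^{\pi}_{d_2}(r,p,\II_x)\ge0$ and $g_j:=\II_s^\top Q^{j}f$, the expected reward collected in block $j+1$. Then
\begin{align*}
W^\pi_\gamma(r,p,\II_s)\le \sum_{j\ge0}\ee^{-j}g_j,
\qquad
W^\pi_{d_1}(r,p,\II_s)\ge \sum_{j=0}^{L-1} g_j,
\qquad L:=\lfloor d_1/d_2\rfloor\ge 10S\log S .
\end{align*}
The first $L$ terms of the discounted sum are trivially at most $\sum_{j<L}g_j$. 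So everything reduces to showing that the tail $\sum_{j\ge L}\ee^{-j}g_j$ is at most a constant multiple of $\sum_{j<L}g_j$.

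For the tail, write $g_j=\sum_x q_j(x)f(x)$ with $q_j(x):=\bbP(s_{jd_2+1}=x)$. It then suffices to prove, for every state $x$, a polynomial-growth bound of the form
\[
q_j(x)\le \bigl(c\,j/L\bigr)^{S}\sum_{i<L}q_i(x)
\qquad\text{for } j\ge L,
\]
or the analogous inequality with $q_j$ replaced by hitting probabilities. Given such a bound, the tail is at most $\sum_{j\ge L}\ee^{-j}(cj/L)^S\sum_{i<L}g_i$. With $L\ge 10S\log S$, the factor $\ee^{-j}(cj/L)^S$ is at most about $\ee^{-j/2}$ for $j\ge L$, so the tail sum is tiny. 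The constant $10$ then absorbs everything, including the loss from bounding $\gamma^{t-1}$ by $\ee^{-j}$ inside a block and from counting the first block twice.

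The growth bound is the main obstacle. I would first try a path-compression argument on the finite chain $Q$. Any path reaching $x$ at block $j$ can be decomposed by its first visits to new states, and at most $S$ such visits occur. The probability of reaching $x$ within $j$ blocks can then be written as a sum over "skeletons" of at most $S$ first-entry times. Each inter-entry waiting segment is a sojourn within an already-visited set, and its probability mass spread over a length-$n$ window grows at most linearly in $n$. The number of ways to place at most $S$ entry times in $[0,j]$, compared with placing them in $[0,L]$, then gives a ratio of order $\binom{j}{S}/\binom{L}{S}\le (ej/L)^S$.

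If this direct argument is messy, the fallback is the corresponding lemma of \cite{li2021settling} or \cite{zhang2022horizon} on polynomial growth of the reaching probability of stationary policies, which I would cite as the adapted source. The reaching statement for $X_\gamma$ follows identically once $f$ is replaced by the first-hitting indicator of $\cX$ and $\II_s$ by a general initial distribution $\mu$. Here the growth bound applies directly to cumulative hitting probabilities, which are monotone in $j$, so this case is actually cleaner.
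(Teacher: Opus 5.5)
\textbf{Gap: the polynomial growth bound is not established.} Your block decomposition is sound: $W^\pi_\gamma\le\sum_{j\ge0}\ee^{-j}g_j$ and $W^\pi_{d_1}\ge\sum_{j<L}g_j$ both hold, and a degree-$S$ growth bound at scale $L\ge 10S\log S$ would make the tail negligible. However, that growth bound is the whole content of the lemma, and the proposal does not prove it.

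The path-compression sketch rests on the claim that each sojourn inside an already-visited set has exit mass growing at most linearly in the window length. This is false. Leaving the visited set $A$ may first require several rare transitions inside $A$. For example, suppose you sit at $y_3$ with heavy self-loops, the only exit from $A$ is at $y_1$, and the moves $y_3\to y_2$ and $y_2\to y_1$ each have probability $\epsilon$. Then the probability of exiting within $n$ steps grows like $n^2$, and in general like $n^{|A|}$. Multiplying segment-wise bounds gives total degree up to $\sum_k k\approx S^2/2$, and $\sum_{j\ge L}\ee^{-j}(cj/L)^{S^2/2}$ is no longer small when $L$ is only $10S\log S$. So the skeleton count $\binom{j}{S}/\binom{L}{S}$ needs a global argument that you do not give.

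The fallback of citing ``the corresponding lemma'' is either vague or amounts to citing Lemma~18 of \cite{zhang2022horizon} itself. That is exactly what the paper does for the $W$ inequality; it gives no independent proof of that part. If you cite it, your decomposition becomes unnecessary.

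\textbf{How to close the gap.} The missing step can be supplied from inside the paper. Apply \Cref{lemma:cutnew} to the single-action MDP induced by $\pi$, with transitions $p_{x,\pi(x)}$ and rewards $r(x,\pi(x))$, so that $f(t)=W^\pi_t(r,p,\II_s)$, and use macro-step $\ell=d_2$. Its proof uses only nonnegativity of rewards. Telescoping $\prod_{t=L+1}^{j}\frac{t}{t-S}$ gives $W^\pi_{jd_2}\le\binom{j}{S}\binom{L}{S}^{-1}W^\pi_{Ld_2}$ for $j\ge L>S$. This bounds the reward in block $j$ directly, so you never need the marginals $q_j$. The tail is then at most a constant times $\ee^{-L}W^\pi_{d_1}$.

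\textbf{The reaching statement.} Replacing $f$ by the first-hitting indicator does not work verbatim, because that indicator depends on the past trajectory, so the expected first-hitting mass in block $j$ is not $\mu Q^jf$. You must pass to the killed chain: every state in $\cX$ leads to an absorbing state $z_\cX$, and the reward is $\bbI[x\in\cX]$. This is exactly the paper's reduction, after which $X=W$ on the modified chain and linearity in $\mu$ finishes the argument. Your cumulative-hitting remark is the same construction in disguise. The growth bound is then needed on $S+1$ states, which the $\ee^{-L}$ factor absorbs easily.
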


\begin{proof} [Proof of \Cref{lemma:eff}]
The inequality involving $W^\pi$ follows from Lemma~18 of \cite{zhang2022horizon}.
It remains to prove the inequality involving $X^\pi$.

Fix a target set $\cX$ and an initial distribution $\mu$.
Introduce an absorbing state $z_{\cX}$, extend $\pi$ arbitrarily to
$z_{\cX}$, and define the auxiliary reward and transition model by
\begin{align*}
r_{\cX}(x,a)
:=
\bbI[x\in\cX],
\qquad p^{\cX}_{x,a}
:=
\begin{cases}
\II_{z_{\cX}},
    &x\in\cX\cup\{z_{\cX}\},\\
p_{x,a},
    &x\notin\cX.
\end{cases}
\end{align*}
Here $p_{x,a}$ is naturally regarded as assigning zero probability to $z_{\cX}$.

Consequently, for every positive integer $d$,
\begin{align}
X_d^\pi(\cX,p,\mu)
&=
W_d^\pi(r_{\cX},p^{\cX},\mu),
\nonumber
\\X_\gamma^\pi(\cX,p,\mu)
&=
W_\gamma^\pi(r_{\cX},p^{\cX},\mu).\nonumber
\end{align}

The inequality for $W^\pi$ extends to an arbitrary $\mu$ by linearity in the initial distribution.
Hence
\begin{align*}
X_\gamma^\pi(\cX,p,\mu)
&=
W_\gamma^\pi(r_{\cX},p^{\cX},\mu)
\leq
10W_{d_1}^\pi(r_{\cX},p^{\cX},\mu)
=
10X_{d_1}^\pi(\cX,p,\mu),
\end{align*}
completing the proof.
\end{proof}

\begin{lemma}[Adapted from Lemma 5 in~\cite{zhang2022horizon}]\label{lemma:approx}
Let $\cT$ be a finite state space, and let $P',P''$ be transition models on $\cT$ with the same action space such that
\begin{align}
\ee^{-\epsilon}P''_{x,a,y}
\leq
P'_{x,a,y}
\leq
\ee^{\epsilon}P''_{x,a,y}
\qquad
\forall (x,a,y)\in\cT\times\cA\times\cT \label{eq:eps_close}
\end{align}
for some $\epsilon\geq0$.
Fix a stationary deterministic Markov policy $\pi$, and let
\begin{align*}
m_\pi
:=
\left|
\left\{
 x\in\cT:
 P'_{x,\pi(x),\cdot}\neq P''_{x,\pi(x),\cdot}
\right\}
\right|.
\end{align*}
Then, for every initial distribution $\mu$ on $\cT$, $d \in \bbN^+$, non-negative reward $r$, $\cX \subseteq \cT$, and $\gamma \in [0, 1)$,
\begin{align}
\ee^{-4m_\pi\epsilon}W_d^\pi(r,P',\mu)
&\leq
W_d^\pi(r,P'',\mu)
\leq
\ee^{4m_\pi\epsilon}W_d^\pi(r,P',\mu); \label{eq:approx-finite-w} \\
\ee^{-4m_\pi\epsilon}X_d^\pi(\cX,P',\mu)
&\leq
X_d^\pi(\cX,P'',\mu)
\leq
\ee^{4m_\pi\epsilon}X_d^\pi(\cX,P',\mu); \label{eq:approx-finite-x} \\
\ee^{-4m_\pi\epsilon}W_\gamma^\pi(r,P',\mu)
&\leq
W_\gamma^\pi(r,P'',\mu)
\leq
\ee^{4m_\pi\epsilon}W_\gamma^\pi(r,P',\mu); \label{eq:approx-discount-w} \\
\ee^{-4m_\pi\epsilon}X_\gamma^\pi(\cX,P',\mu)
&\leq
X_\gamma^\pi(\cX,P'',\mu)
\leq
\ee^{4m_\pi\epsilon}X_\gamma^\pi(\cX,P',\mu). \label{eq:approx-discount-x}
\end{align}
\end{lemma}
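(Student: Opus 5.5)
We will first reduce every statement to the case $m_\pi=1$ by a hybrid argument. Enumerate the states $x_1,\ldots,x_{m_\pi}$ at which $P'_{x,\pi(x),\cdot}\neq P''_{x,\pi(x),\cdot}$. Define intermediate models $P^{(0)}=P',P^{(1)},\ldots,P^{(m_\pi)}$, where $P^{(j)}$ uses the $P''$-rows at $x_1,\ldots,x_j$ and the $P'$-rows elsewhere. Consecutive models differ, along $\pi$, in a single row, and that row satisfies \Cref{eq:eps_close}. So it suffices to show that changing one row changes each of the four quantities by a factor in $[\ee^{-4\epsilon},\ee^{4\epsilon}]$; telescoping the ratios then gives the factor $\ee^{\pm4m_\pi\epsilon}$.

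The reaching quantities reduce to occupancy quantities exactly as in the proof of \Cref{lemma:eff}. Make every state of $\cX$ absorbing into a fresh sink $z_\cX$ and put reward $\bbI[x\in\cX]$. Then $X^\pi=W^\pi$ for the auxiliary models. The rows at $\cX\cup\{z_\cX\}$ coincide in the two auxiliary models, so the number of differing rows does not increase. Hence \Cref{eq:approx-finite-x,eq:approx-discount-x} follow from \Cref{eq:approx-finite-w,eq:approx-discount-w}.

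For the discounted occupancy with a single differing state $x$, we use a renewal decomposition at $x$. Starting from $\mu$, the reward collected and the discounted probability of hitting $x$ before the first visit to $x$ do not use row $x$, so they are identical under both models. It remains to compare the discounted value $W(x)$ started at $x$. Let $A(y)$ be the discounted reward collected from $y$ before returning to $x$, and let $B(y)$ be the discounted return weight $\bbE[\gamma^{\tau_x}]$; neither uses row $x$. With $p$ the row of $x$ under $\pi$,
\begin{align*}
W(x)=\frac{r(x,\pi(x))+\gamma\, pA}{1-\gamma\, pB}
=\frac{r(x,\pi(x))+\gamma\sum_y p_y A(y)}{(1-\gamma)+\gamma\sum_y p_y(1-B(y))}.
\end{align*}
The second form uses $\sum_y p_y=1$. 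Both the numerator and the denominator are nonnegative combinations of the entries $p_y$ plus a row-independent term. Changing $p$ by a factor in $[\ee^{-\epsilon},\ee^{\epsilon}]$ therefore changes each of them by at most that factor, and changes $W(x)$ by at most $\ee^{\pm2\epsilon}$. This proves \Cref{eq:approx-discount-w}.

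The main obstacle is the finite-horizon bound \Cref{eq:approx-finite-w}. There the excursion quantities depend on the remaining time, and a naive path-by-path comparison loses a factor $\ee^{k\epsilon}$ over $k$ returns to $x$. The first thing I would try is to reduce to the renewal identity above by augmenting time into a horizon clock that kills the chain after $d$ steps. I would write $W^t(x)=N^t+\sum_{j\ge1}c_jW^{t-j}(x)$, where $c_j=\sum_y p_y B_j(y)$ and $B_j(y)$ is the probability of first returning to $x$ at time $j$. I would then use the normalization $\sum_j c_j+(\text{no return within the horizon})=1$ to express the recursion with nonnegative coefficients in $p$ in both the numerator and the complementary mass, as in the discounted case. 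Next I would prove by induction on $t$ that $W^t(x)$ under $P''$ is within $\ee^{\pm2\epsilon}$ of its value under $P'$; the budget allows $\ee^{\pm4\epsilon}$. If the induction does not close directly, the fallback is to dominate the finite-horizon occupancy between two discounted occupancies. With $\gamma=1-1/d$, \Cref{lemma:eff_single} sandwiches $W_d$ and $W_\gamma$ only up to constants, so this fallback would only yield a constant-factor version of the bound. For the factor claimed in the statement, the horizon-clock renewal induction is the route I would pursue.
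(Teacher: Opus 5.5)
\textbf{The gap is the finite-horizon single-row comparison.} This is the step behind \Cref{eq:approx-finite-w}, and hence also behind \Cref{eq:approx-finite-x}, which you reduce to it. Your proposal does not supply this step, and the induction you sketch does not close.

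In $W^t(x)=N^t+\sum_j c_jW^{t-j}(x)$ the row $p$ enters through $N^t$ and through every $c_j$. The unknowns $W^{t-j}(x)$ are different for different $j$. So, unlike the discounted case, there is no single unknown to solve for, and no ratio of two nonnegative linear forms in $p$. Plugging the hypothesis $\widetilde W^{s}(x)\le \ee^{2\epsilon}W^{s}(x)$ for $s<t$, together with $\widetilde N^t\le\ee^{\epsilon}N^t$ and $\widetilde c_j\le\ee^{\epsilon}c_j$, into the recursion only yields $\widetilde W^{t}(x)\le\ee^{3\epsilon}W^{t}(x)$. The constant therefore degrades by $\epsilon$ per level, which is exactly the horizon-dependent loss you wanted to avoid.

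Using the normalization does not help either. It only rewrites the recursion as $q_tW^t(x)+\sum_j c_j\bigl(W^t(x)-W^{t-j}(x)\bigr)=N^t$, and the increments $W^t(x)-W^{t-j}(x)$ are not controlled multiplicatively by the hypothesis.

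Your fallback is weaker than you suggest. \Cref{lemma:eff_single} only covers the reward $\II_{s,a}$ started from $s$. For general $r$ and $\mu$, $W_\gamma\le C\,W_d$ fails when reward arrives after time $d$. The finite-horizon versions are genuinely needed downstream, e.g.\ $X_{H_2}$ and $W_{H_3}$ in \Cref{lemma:v3}.

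The paper does not derive this step from scratch. It imports the single-row bound $\ee^{\pm4\epsilon}$, valid for every horizon $d$, from the proof of Lemma~5 of \cite{zhang2022horizon}. It then telescopes over rows as you do and gets the discounted bounds by averaging, $W_\gamma^\pi=(1-\gamma)\sum_{d\ge1}\gamma^{d-1}W_d^\pi$.

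\textbf{What is correct.} The hybrid reduction and the sink reduction for $X$ coincide with the paper's. Your renewal-ratio argument for the discounted case is a correct, self-contained alternative: it does not depend on the finite-horizon result and even gives $\ee^{\pm2m_\pi\epsilon}$. So \Cref{eq:approx-discount-w} and \Cref{eq:approx-discount-x} are fully established by your proposal.

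\textbf{One way to close the gap yourself.} You need a horizon-uniform renewal estimate.

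Starting from $x$, write $W^t(x)=\sum_{j=1}^{t}\Delta_j\,U(t-j)$, where:
\begin{itemize}
\item $\Delta_j\ge0$ is the expected reward at the $j$-th step of an excursion from $x$. It is a constant or a nonnegative linear form in $p$, so it moves by at most $\ee^{\pm\epsilon}$.
\item $U(m)$ is the expected number of visits to $x$ at relative times $0,\dots,m$, i.e.\ the renewal function of the return-time law $f$ on $\{1,2,\dots\}\cup\{\infty\}$. The likelihood ratio of $f$ also lies in $[\ee^{-\epsilon},\ee^{\epsilon}]$.
\end{itemize}

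Now differentiate the renewal equation along the log-linear path $f_\lambda\propto(f')^{1-\lambda}(f'')^{\lambda}$. Let $u_n$ be the probability of being at $x$ at relative time $n$, and set $V(k)=\sum_{n\le k}(u*u)_n$, with $V(k)=0$ for $k<0$. This gives $\partial_\lambda U_\lambda(m)=\sum_\ell \dot f_\ell\bigl(V(m-\ell)-V(m)\bigr)$. The weights $f_\ell\bigl(V(m)-V(m-\ell)\bigr)\ge0$ sum exactly to $U(m)$, and $|\dot f_\ell/f_\ell|\le2\epsilon$. Hence $|\partial_\lambda\log U_\lambda(m)|\le2\epsilon$.

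So $W^t(x)$ changes by at most $\ee^{\pm3\epsilon}$, uniformly in $t$. The pre-hitting reduction from $\mu$ then gives the single-row finite-horizon bound within the $\ee^{\pm4\epsilon}$ budget.
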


\begin{proof} [Proof of \Cref{lemma:approx}]
Let $\pi$ be fixed, and write
\begin{align*}
\mathcal D_\pi
:=
\left\{
x\in\cT:
P'_{x,\pi(x),\cdot}\neq P''_{x,\pi(x),\cdot}
\right\}
=
\{x_1,\ldots,x_{m_\pi}\}.
\end{align*}
The proof of Lemma~5 in~\cite{zhang2022horizon} shows that, if two transition models $Q$ and $\widetilde Q$ differ only at one pair $(s^*, a^* = \pi(s^*))$, and this pair satisfies \Cref{eq:eps_close}, then for every $d\in\bbN^+$,
\begin{align*}
\ee^{-4\epsilon}W_d^\pi(r,Q,\mu)
\leq
W_d^\pi(r,\widetilde Q,\mu)
\leq
\ee^{4\epsilon}W_d^\pi(r,Q,\mu).
\end{align*}

Set $P^{(0)}=P'$, and, for each $j\in[m_\pi]$, let $P^{(j)}$ be obtained
from $P^{(j-1)}$ by replacing the row selected by $\pi$ at $x_j$ with the
corresponding row of $P''$. Applying the preceding comparison successively
gives
\begin{align*}
\ee^{-4m_\pi\epsilon}W_d^\pi(r,P',\mu)
\leq
W_d^\pi(r,P^{(m_\pi)},\mu)
\leq
\ee^{4m_\pi\epsilon}W_d^\pi(r,P',\mu).
\end{align*}
The models $P^{(m_\pi)}$ and $P''$ agree on every row selected by $\pi$ and
therefore induce the same trajectory distribution under $\pi$. This proves
\Cref{eq:approx-finite-w}.

For any transition model $Q$, nonnegativity of $r$ and a change in the order
of summation give
\begin{align*}
W_\gamma^\pi(r,Q,\mu)
=
(1-\gamma)\sum_{d\geq1}\gamma^{d-1}W_d^\pi(r,Q,\mu).
\end{align*}
Multiplying the finite-horizon comparison by
$(1-\gamma)\gamma^{d-1}$ and summing over $d\geq1$ proves \Cref{eq:approx-discount-w}.

Finally, \Cref{eq:approx-finite-x,eq:approx-discount-x} follow from the reduction in the
proof of \Cref{lemma:eff}, by noting that the reduced transition models satisfy \Cref{eq:eps_close}.
\end{proof}

\begin{lemma}[Lemma 30 in~\cite{zhang2022horizon}]\label{lemma:v1}
Recall $\cut (p)$ in \Cref{eq:def_cut}.
Let $d$ be a positive integer, let $\pi$ be a stationary deterministic Markov policy, extended arbitrarily to $z,z'$, and let $p$ be a transition kernel on $\bar{\cS}$ satisfying $p_{z,a'}=\II_{z'}$ and $p_{z',a'}=\II_{z'}$ for every action $a'$.
For any $(s,a)\in\SA$ such that $\pi(s)=a$, it holds that
\begin{align*}
\left(1-W_d^\pi(\II_z,p,\II_s)\right)
W_d^\pi(\II_{s,a},\cut(p),\II_s)
\leq
W_d^\pi(\II_{s,a},p,\II_s)
\leq
W_d^\pi(\II_{s,a},\cut(p),\II_s).
\notag
\end{align*}
\end{lemma}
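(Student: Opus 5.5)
The plan is a first-step/regeneration coupling between the original kernel $p$ and its cut version $\cut(p)$. Both kernels send $z\to z'$ and $z'\to z'$. Under $\cut(p)$, every row with $p_{x,a,z}<1$ is renormalized away from $z$, and rows with $p_{x,a,z}=1$ are identical in the two models. So $\cut(p)$ is exactly $p$ conditioned on never transitioning into $z$ at that step.

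For the upper bound $W_d^\pi(\II_{s,a},p,\II_s)\le W_d^\pi(\II_{s,a},\cut(p),\II_s)$, I would argue by backward induction on the remaining horizon. Let $f_t(x)$ and $g_t(x)$ denote the $t$-step expected visit counts to $(s,a)$ from $x$ under $p$ and $\cut(p)$, respectively. From $z$ and $z'$ both counts are $0$, since $z,z'\neq s$. At an ordinary $x$ with $p_{x,\pi(x),z}<1$ and $q:=p_{x,\pi(x),z}$, I would write
\begin{align*}
f_t(x)=\bbI[(x,\pi(x))=(s,a)]+\sum_{y\neq z}p_{x,\pi(x),y}f_{t-1}(y)
\le \bbI[(x,\pi(x))=(s,a)]+(1-q)\sum_{y\neq z}\cut(p)_{x,\pi(x),y}\,g_{t-1}(y)\le g_t(x).
\end{align*}
This uses the induction hypothesis, $1-q\le1$, and nonnegativity. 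If $q=1$, the two rows coincide and the inequality is immediate.

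For the lower bound, I would couple the $p$-trajectory with the $\cut(p)$-trajectory started from $s$. At each step, with probability $1-q$ the $p$-chain follows a $\cut(p)$-distributed next state, and otherwise it goes to $z$. The two chains agree until the first time $T$ the $p$-chain enters $z$, after which the $p$-chain collects no further reward. Writing $N$ for the $\cut(p)$ visit count in $d$ steps, the $p$-count equals the number of visits before $T$. The deficit is then $\bbE[\bbI[T\le d]\cdot(\text{visits of the cut chain after }T)]$.

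Now I use stationarity of $\pi$ together with the regeneration structure at $s$. Visits to $(s,a)$ always happen at state $s$, and $\pi(s)=a$. Conditioned on the cut chain's state at time $T$, the future visits in the remaining window are at most those starting fresh from $s$ with horizon $d$. So the deficit is at most $\bbP(T\le d)\cdot\sup W_d^\pi(\II_{s,a},\cut(p),\II_s)$. Since $z$ is entered at most once and then left forever, $\bbP(T\le d)=W_d^\pi(\II_z,p,\II_s)$. Rearranging gives $W_d^\pi(\II_{s,a},p,\II_s)\ge(1-W_d^\pi(\II_z,p,\II_s))\,W_d^\pi(\II_{s,a},\cut(p),\II_s)$.

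The main obstacle is the last bound on the post-$T$ cut visits. The remaining window after $T$ is shorter than $d$, and the cut chain is not at $s$ at time $T$. I would handle this by a strong Markov argument at the cut chain's first visit to $s$ after $T$, using the fact that the $d$-horizon value from $s$ is monotone in the horizon.

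Equivalently, and more cleanly, I would compare the two models directly on the cut chain. From the cut chain's expected visits $\sum_t \bbP(\text{visit at }t)$, the $p$-chain keeps each visit at time $t$ with probability $\bbP(T>t)$. I expect this stopping/regeneration inequality to be the step needing the most care.
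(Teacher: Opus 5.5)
Your argument is correct. The paper gives no proof of this statement; it imports it verbatim as Lemma~30 of \cite{zhang2022horizon}, so there is no in-paper route to match.

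The closest in-paper analogue is the right-hand inequality of \Cref{lemma:v2}. Its pathwise domination $p_{x,a,y}\le\cut(p)_{x,a,y}$ on ordinary states is exactly what your backward induction for the upper bound encodes.

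Your lower-bound coupling is sound. Drawing the diversion to $z$ with a coin independent of the cut chain's next-state draw makes $T$ a stopping time for a filtration in which the cut chain is still Markov with kernel $\cut(p)$. The deficit is then the expected number of cut-chain visits to $(s,a)$ at times in $[T,d]$; note that $T$ itself is included, not only times after $T$. This equals $\bbE[\bbI[T\le d]\,g_{d-T+1}(y_T)]$, where $y_T$ is the cut chain's state at time $T$. Your strong Markov step at the first hit of $s$ uses stationarity ($\pi(s)=a$ at every time) and monotonicity in the horizon to give $g_t(y)\le g_t(s)\le g_d(s)$. That step is exactly where stationarity is genuinely needed, and you use it in the right place. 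Finally, $\bbP(T\le d)=W_d^\pi(\II_z,p,\II_s)$ because $z$ is visited at most once.

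One caution: discard the closing ``equivalently, more cleanly'' reformulation, because it is not equivalent and is false as stated. The $p$-chain keeps a cut-chain visit at time $t$ with the conditional probability that $T>t$ given the cut chain is at $s$ at time $t$, not with $\bbP(T>t)$.

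A small example shows the two differ. Let $s$ move w.p.\ $\frac12$ to a safe absorbing state and w.p.\ $\frac12$ to a state $x$ that goes to $z$ or back to $s$ w.p.\ $\frac12$ each. At $t=3$ the cut chain is at $s$ w.p.\ $\frac12$ and the $p$-chain w.p.\ $\frac14$, while $\bbP(T>3)=\frac34$. So even the per-term inequality with factor $\bbP(T>d)$ fails at $t=3$.

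The lemma holds only after summing over $t$, and that summation is precisely what your regeneration argument delivers.
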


\begin{lemma}[Adapted from Lemma 31 in~\cite{zhang2022horizon}]\label{lemma:v2}
Recall $\cut (p)$ in \Cref{eq:def_cut}.
Let $\mu$ be a distribution on $\bar{\cS}$ supported on $\cS$, let $s\in\cS$, let $d$ be a positive integer, and let $\pi=(\pi_h)_{h=1}^d$ be a deterministic Markov policy, extended arbitrarily to $z,z'$.
Suppose $p_{z,a}=\II_{z'}$ and $p_{z',a}=\II_{z'}$ for every action $a$.
Then
\begin{align*}
X_d^\pi(\{s\},\cut(p),\mu)-W_d^\pi(\II_z,p,\mu)
\leq
X_d^\pi(\{s\},p,\mu)
\leq
X_d^\pi(\{s\},\cut(p),\mu).
\notag
\end{align*}
\end{lemma}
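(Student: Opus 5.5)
The plan is to prove both inequalities with a single pathwise coupling between a trajectory under $p$ and a trajectory under $\cut(p)$, both driven by the same deterministic Markov policy $\pi$ and the same initial state $s_1\sim\mu$. Throughout, $X_d^\pi(\{s\},q,\mu)$ is the probability that the chain under $q$ visits $s$ at some position $t\le d$. Likewise, $W_d^\pi(\II_z,p,\mu)$ is the expected number of visits to $z$ in positions $1,\ldots,d$. Because $z$ moves to $z'$ deterministically and $z'$ is absorbing, $z$ is visited at most once, so $W_d^\pi(\II_z,p,\mu)$ equals the probability that the $p$-chain visits $z$ at some position $t\le d$.

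\textbf{Coupling construction.} Build the $\cut(p)$-chain by rejection sampling on top of the $p$-chain. At position $t$, in state $x$ with action $a=\pi_t(x)$, draw $Y\sim p_{x,a}$. The $p$-chain always moves to $Y$. For the $\cut(p)$-chain there are two cases.
\begin{itemize}
\item If $p_{x,a,z}<1$, the $\cut(p)$-chain also moves to $Y$ when $Y\neq z$. When $Y=z$, it keeps redrawing independently from $p_{x,a}$ until it gets an outcome different from $z$. This produces exactly the law $p_{x,a,y}/(1-p_{x,a,z})$ on $y\neq z$, which is the row of $\cut(p)$.
\item If $p_{x,a,z}=1$, both chains move to $z$, matching the definition of $\cut(p)$.
\end{itemize}
From $z$ and $z'$ the two kernels agree, since $\cut(p)$ leaves those rows unchanged: $z$ has no mass on itself. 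Hence each marginal has the correct law. Moreover, the two trajectories coincide at every position up to and including the first time $\sigma$ at which the $p$-chain enters $z$; set $\sigma=\infty$ if this never happens.

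\textbf{Upper bound.} Suppose the $p$-chain hits $s$ at some position $t\le d$. Since $s\in\cS$ is ordinary and $\{z,z'\}$ is closed, this forces $t<\sigma$. At such $t$ the chains agree, so the $\cut(p)$-chain also hits $s$ by position $d$. This gives a pathwise inequality of indicators, and taking expectations yields $X_d^\pi(\{s\},p,\mu)\le X_d^\pi(\{s\},\cut(p),\mu)$.

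\textbf{Lower bound.} Suppose the $\cut(p)$-chain hits $s$ by position $d$ but the $p$-chain does not. The trajectories agree on positions $1,\ldots,\min\{\sigma,d\}$. Of those positions, the $p$-chain can differ from the $\cut(p)$-chain's hit only if the hit occurs after $\sigma$, which requires $\sigma$ to lie within the window, i.e.\ $\sigma\le d$. Hence
\[
X_d^\pi(\{s\},\cut(p),\mu)-X_d^\pi(\{s\},p,\mu)\le \bbP[\sigma\le d]=W_d^\pi(\II_z,p,\mu).
\]
Here we use that $\mu$ is supported on $\cS$, so $\sigma\ge2$, although only $\sigma\le d$ matters.

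The only delicate step is verifying that the rejection construction reproduces $\cut(p)$ exactly, including the degenerate rows with $p_{x,a,z}=1$ and the virtual states. It must also remain valid for a time-dependent Markov policy, which it does because the redraws use the same action $\pi_t(x)$. Everything else is a direct pathwise comparison.
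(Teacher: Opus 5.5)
Your proof is correct, but it argues differently from the paper. The paper proves only the right-hand inequality directly, by path domination. Every first-hit path to $s$ stays inside $\cS$, and on ordinary transitions $p_{x,a,y}\le\cut(p)_{x,a,y}$, so summing the product comparison over first-hit paths gives $X_d^\pi(\{s\},p,\mu)\le X_d^\pi(\{s\},\cut(p),\mu)$. The left-hand inequality is delegated to the argument of Lemma~31 in \cite{zhang2022horizon}.

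Your rejection-sampling coupling yields both inequalities from one construction. The upper bound becomes a pathwise inclusion of hitting events, and the lower bound becomes the inclusion of the discrepancy event in $\{\sigma\le d\}$. This makes the lemma self-contained and makes the role of the correction term $W_d^\pi(\II_z,p,\mu)$ transparent. The cost is that you must check the coupling's marginals. You do this, including the degenerate rows with $p_{x,a,z}=1$ and the rows at $z,z'$, which $\cut$ leaves unchanged. The paper's upper-bound argument needs no joint probability space, but by itself it gives only one direction.

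There is one slip to fix. The two trajectories do \emph{not} coincide at position $\sigma$ when the row used at step $\sigma-1$ has $p_{x,a,z}<1$. At that position the $p$-chain is at $z$, while the $\cut(p)$-chain has been redrawn to a state other than $z$. So they agree only on positions $1,\ldots,\sigma-1$, not ``up to and including'' $\sigma$. This does not damage either conclusion:
\begin{itemize}
\item a $p$-hit of $s$ at some $t\le d$ still forces $t<\sigma$, where the chains agree;
\item a $\cut(p)$-hit at $t\le d$ that the $p$-chain does not match forces $t\ge\sigma$, hence $\sigma\le d$, exactly as you conclude.
\end{itemize}
You should also state explicitly that after divergence the $\cut(p)$-chain continues with fresh independent draws from $\cut(p)$ under $\pi_t$. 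Your step description presupposes that both chains occupy the same state.
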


\begin{proof} [Proof of \Cref{lemma:v2}]
The left-hand inequality follows from the argument of Lemma~31 in~\cite{zhang2022horizon}.
We prove only the additional right-hand inequality.

Since $\mu$ is supported on $\cS$ and, under both $p$ and $\cut(p)$, a trajectory cannot return to $\cS$ after entering either $z$ or $z'$, every trajectory that reaches $s$ stays in $\cS$ up to its first visit to $s$. Moreover, for every $x,y\in\cS$ and $a\in\cA$, \Cref{eq:def_cut} gives $
p_{x,a,y}
\leq
\cut(p)_{x,a,y}.$

Therefore, for every $t\in[d]$ and every $x_1,\ldots,x_t\in\cS$ such that $x_t=s$ and $x_h\neq s$ for all $h<t$,
\begin{align*}
\mu(x_1)
\prod_{h=1}^{t-1}
p_{x_h,\pi_h(x_h),x_{h+1}}
\leq
\mu(x_1)
\prod_{h=1}^{t-1}
\cut(p)_{x_h,\pi_h(x_h),x_{h+1}}.
\end{align*}
Summing over all such first-hit paths and over $t\in[d]$ yields
\begin{align*}
X_d^\pi(\{s\},p,\mu)
\leq
X_d^\pi(\{s\},\cut(p),\mu),
\end{align*}
as claimed.
\end{proof}

\begin{lemma}[Discounted performance-difference identity, adapted from
Lemma~32 in~\cite{zhang2022horizon}]
\label{lemma:discounted-performance-difference}
Let $p,p'$ be two transition kernels on a finite state space $\cT$,
let $r$ be a bounded reward function, let $\pi$ be a stationary policy,
and let $\mu$ be an initial distribution.  Define the discounted
state-action occupancy under $p$ by
\begin{align*}
d_p^\pi(x,a;\mu)
:=
\sum_{t=1}^{\infty}
\gamma^{t-1}
\bbP_{p,\pi}\bigl((s_t,a_t)=(x,a)\mid s_1\sim\mu\bigr)
=
W_\gamma^\pi(\II_{x,a},p,\mu).
\end{align*}
Then
\begin{align}
W_\gamma^\pi(r,p,\mu)-W_\gamma^\pi(r,p',\mu)=
\gamma
\sum_{x,a}
d_p^\pi(x,a;\mu)
\sum_{y\in\cT}
\bigl(p_{x,a,y}-p'_{x,a,y}\bigr)
W_\gamma^\pi(r,p',\II_y).
\label{eq:discounted-performance-difference}
\end{align}
\end{lemma}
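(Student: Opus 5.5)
We prove the identity by the standard one-step Bellman expansion for discounted values under a stationary policy, and then telescope.

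First, write $V:=W_\gamma^\pi(r,p,\cdot)$ and $V':=W_\gamma^\pi(r,p',\cdot)$ as functions on $\cT$, so that $W_\gamma^\pi(r,p,\mu)=\sum_x\mu(x)V(x)$. Boundedness of $r$ and $\gamma<1$ make all series absolutely convergent. Conditioning on the first transition then gives the Bellman identities
\begin{align*}
V(x)&=r(x,\pi(x))+\gamma\sum_y p_{x,\pi(x),y}V(y),\\
V'(x)&=r(x,\pi(x))+\gamma\sum_y p'_{x,\pi(x),y}V'(y).
\end{align*}
A randomized stationary policy is handled identically, with the action summed out.

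Second, subtract the two identities and add and subtract $\gamma\sum_y p_{x,\pi(x),y}V'(y)$. This yields the recursion
\begin{align*}
\Delta(x):=V(x)-V'(x)=g(x)+\gamma\sum_y p_{x,\pi(x),y}\Delta(y),
\end{align*}
where the forcing term is
\begin{align*}
g(x):=\gamma\sum_y\bigl(p_{x,\pi(x),y}-p'_{x,\pi(x),y}\bigr)V'(y).
\end{align*}
So $\Delta$ is the discounted value of the bounded reward $g(x,a):=\gamma\sum_y(p_{x,a,y}-p'_{x,a,y})V'(y)$ under the model $p$ and the policy $\pi$.

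Third, unroll the recursion $t$ times starting from $\mu$:
\begin{align*}
\sum_x\mu(x)\Delta(x)=\sum_{t=1}^{T}\gamma^{t-1}\bbE_{p,\pi}\bigl[g(s_t,a_t)\bigr]+\gamma^{T}\,\bbE_{p,\pi}\bigl[\Delta(s_{T+1})\bigr].
\end{align*}
Here $a_t=\pi(s_t)$. The remainder term vanishes as $T\to\infty$ because $\Delta$ is bounded by $2\|r\|_\infty/(1-\gamma)$. Regrouping the limit by state-action pairs turns it into $\sum_{x,a}d_p^\pi(x,a;\mu)\,g(x,a)$, which is exactly \eqref{eq:discounted-performance-difference}. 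The interchange of sums is justified by absolute convergence, since $|g|\le 2\gamma\|V'\|_\infty$.

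The only step needing care is the limit and interchange in the third step, which is routine given boundedness. Everything else is algebraic.
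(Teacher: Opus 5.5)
Your proof is correct, but it takes a different route from the paper's. You stay in the discounted setting throughout. Subtracting the two Bellman fixed-point equations shows that $V-V'$ is itself the discounted value, under $p$ and $\pi$, of the bounded signed reward $g(x,a)=\gamma\sum_y(p_{x,a,y}-p'_{x,a,y})V'(y)$. Unrolling then identifies this value with $\sum_{x,a}d_p^\pi(x,a;\mu)\,g(x,a)$, the remainder $\gamma^T\bbE_{p,\pi}[\Delta(s_{T+1})]$ vanishing as $T\to\infty$.

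The paper never uses the discounted Bellman equation. It imports the finite-horizon performance-difference identity (Lemma~32 of \cite{zhang2022horizon}) for every horizon $d$, multiplies it by $(1-\gamma)\gamma^{d-1}$, and sums over $d$. It then uses the geometric-mixture representation $W_\gamma^\pi=(1-\gamma)\sum_{d\ge1}\gamma^{d-1}W_d^\pi$ together with the reindexing $(1-\gamma)\sum_{d>h}\gamma^{d-1}W_{d-h}^\pi(r,p',\II_y)=\gamma^hW_\gamma^\pi(r,p',\II_y)$.

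What each buys:
\begin{itemize}
\item The paper's route matches how it transfers other finite-horizon facts to the discounted case (e.g.\ in \Cref{lemma:approx}), and it is only a few lines given the cited lemma. It does leave implicit the interchange of a double infinite sum with signed terms; this is harmless by absolute convergence, since $|W_d^\pi|\le d\|r\|_\infty$ and $\gamma<1$.
\item Your argument is self-contained and needs no external lemma. Its single limiting step is made explicit, and it covers randomized stationary policies at no extra cost.
\end{itemize}
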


\begin{proof}
Lemma~32 of \cite{zhang2022horizon} states that for every positive integer $d$,
\begin{align*}
&W_d^\pi(r,p,\mu)-W_d^\pi(r,p',\mu)\\
&\qquad=
\sum_{h=1}^{d}
\sum_{x,a}
\bbP_{p,\pi}\bigl((s_h,a_h)=(x,a)\mid s_1\sim\mu\bigr)
\sum_y
\bigl(p_{x,a,y}-p'_{x,a,y}\bigr)
W_{d-h}^\pi(r,p',\II_y).
\end{align*}
Multiply this identity by $(1-\gamma)\gamma^{d-1}$ and sum over
$d\geq1$.  The left-hand side becomes $
W_\gamma^\pi(r,p,\mu)-W_\gamma^\pi(r,p',\mu).$
For every fixed $h$ and $y$,
\begin{align*}
&(1-\gamma)
\sum_{d=h+1}^{\infty}
\gamma^{d-1}
W_{d-h}^\pi(r,p',\II_y)=
\gamma^h
(1-\gamma)
\sum_{\ell=1}^{\infty}
\gamma^{\ell-1}
W_\ell^\pi(r,p',\II_y)
=
\gamma^h
W_\gamma^\pi(r,p',\II_y).
\end{align*}
Interchanging the sums therefore gives
\begin{align*}
\sum_{h=1}^{\infty}\gamma^h
\bbP_{p,\pi}\bigl((s_h,a_h)=(x,a)\mid s_1\sim\mu\bigr) = \gamma d_p^\pi(x,a;\mu),
\end{align*}
which proves \eqref{eq:discounted-performance-difference}.
\end{proof}

\subsubsection{Auxiliary Lemmas for the Proof of \Cref{lemma:sr}}

\begin{lemma}\label{lemma:sr-small-z}
Fix an episode $k$ for which
$(\widetilde s^k,\widetilde a^k)\neq\perp$.  On $\cR_k\cap\cM_k$, if
$\trigger^k=\true$, then every $\pi\in\Pi_{\sta}$ satisfying
$\pi(\widetilde s^k)=\widetilde a^k$ obeys
\begin{align*}
W_\gamma^\pi(
\II_z,\bar P^k,\II_{\widetilde s^k})
<\frac1{40}.
\end{align*}
\end{lemma}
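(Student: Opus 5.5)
The plan is to decompose the discounted $z$-hitting value $W_\gamma^\pi(\II_z,\bar P^k,\II_{\widetilde s^k})$ according to the state-action pair from which the trajectory jumps into $z$, and then bound each contribution using the failed test at \Cref{line:condition}. Fix a stationary $\pi$ with $\pi(\widetilde s^k)=\widetilde a^k$. Under $\bar P^k$ the state $z$ can only be entered from a pair $(s,a)$ with $\bar P^k_{s,a,z}>0$, that is, with some unknown triple $(s,a,s')\notin\cK^k$. It is entered at most once, since $z\to z'$ is absorbing. Therefore
\begin{align*}
W_\gamma^\pi(\II_z,\bar P^k,\II_{\widetilde s^k})
\le \sum_{s}\sum_{a:\,\exists s',(s,a,s')\notin\cK^k}
W_\gamma^\pi(\II_{s,a},\bar P^k,\II_{\widetilde s^k})\,\bar P^k_{s,a,z}.
\end{align*}
Splitting at the first hitting time of $s$ and using $\gamma^{t-1}\le\gamma^{t_1-1}\gamma^{t-t_1}$, each occupancy factorizes as
\begin{align*}
W_\gamma^\pi(\II_{s,a},\bar P^k,\II_{\widetilde s^k})\le X_\gamma^\pi(\{s\},\bar P^k,\II_{\widetilde s^k})\,W_\gamma^\pi(\II_{s,a},\bar P^k,\II_s).
\end{align*}

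Next I transfer both factors to the reference model. First I pass from $\bar P^k$ to the cut model $\bar P^{\cut,k}=\cut(\bar P^k)$. The reaching factor does not decrease under $\cut$, by the right-hand inequality of \Cref{lemma:v2}, extended from finite $d$ to the discounted version by the $(1-\gamma)\gamma^{d-1}$ averaging used in \Cref{lemma:approx}. The visit factor does not decrease under $\cut$ either, by the right-hand inequality of \Cref{lemma:v1}, averaged over $d$ in the same way. On $\cR_k$, \Cref{eq:exp-model-comparison} then gives
\begin{align*}
X_\gamma^\pi(\{s\},\bar P^{\cut,k},\II_{\widetilde s^k})&\le \ee\,u^k(s),\\
W_\gamma^\pi(\II_{s,a},\bar P^{\cut,k},\II_s)&\le \ee\,v^k(s,a).
\end{align*}
For the first bound I use that $\pi$ is feasible in the maximization defining $u^k(s)$, which is where the constraint $\pi(\widetilde s^k)=\widetilde a^k$ is needed.

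Since $\trigger^k=\true$, every candidate $(s,a)$ fails the test. Call a state $s$ \emph{case (i)} if $u^k(s)<\frac1{1200S}$; otherwise every candidate pair $(s,a)$ at $s$ is \emph{case (ii)}, meaning $D^k(s,a)>1620S^2AN_{\tref}u^k(s)v^k(s,a)$.

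For a case-(i) state I do not factor through the visit count. The total $z$-entering mass collected after first hitting $s$ is at most $1$, because $z$ is entered at most once. So all pairs at $s$ together contribute at most $X_\gamma^\pi(\{s\},\bar P^k,\II_{\widetilde s^k})\le \ee\,u^k(s)<\frac{\ee}{1200S}$. Summing over states gives at most $\frac{\ee}{1200}\approx0.0023$.

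For a case-(ii) pair I use $\cM_k$, which gives $\bar P^k_{s,a,z}\le \frac{4SN_{\tref}}{D^k(s,a)}$. Its contribution is then at most
\begin{align*}
\ee^2u^k(s)v^k(s,a)\cdot\frac{4SN_{\tref}}{D^k(s,a)}<\frac{4\ee^2}{1620SA}.
\end{align*}
Summing over at most $SA$ pairs gives at most $\frac{4\ee^2}{1620}\approx0.0183$. The two cases together give less than $0.021<\frac1{40}$.

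The main obstacle is justifying the comparisons between $\bar P^k$ and $\cut(\bar P^k)$ in the discounted setting, together with the first-hit factorization under the discount. I would handle both at finite horizon $d$ first and then average over $d$, exactly as in the proof of \Cref{lemma:approx}. I also need to make sure the case-(i) bound is charged per state rather than per pair, so that no extra factor $A$ appears.
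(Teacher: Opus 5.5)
Your proposal is correct and follows the paper's proof essentially step for step. You decompose by the row used immediately before entering $z$, and you charge low-reachability states (those with $u^k(x)<\frac{1}{1200S}$) per state via $X_\gamma^\pi(\{x\},\bar P^k,\II_{\widetilde s^k})$. For the remaining rows you combine the first-hit factorization, the upper inequalities of \Cref{lemma:v1,lemma:v2}, the model comparison \Cref{eq:exp-model-comparison}, the failed test and $\cM_k$. The only difference is that you keep the factor $\ee$ where the paper rounds up to $3$, giving $\frac{\ee}{1200}+\frac{4\ee^2}{1620}$ instead of $\frac{1}{400}+\frac{1}{45}$; both are below $\frac{1}{40}$.
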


\begin{proof} [Proof of \Cref{lemma:sr-small-z}]
Since $\trigger^k=\true$, no row with an unknown successor satisfies the
condition on \Cref{line:condition} of \Cref{alg:explore}.  Thus, for every $(x,b)$ such that
$\cU^k(x,b)\neq\emptyset$,
\begin{align}
u^k(x)<\frac1{1200S}
\quad\text{or}\quad
D^k(x,b)>
1620S^2AN_{\tref}u^k(x)v^k(x,b). \label{eq:unkown_condition}
\end{align}
Let
\begin{align*}
\cS_0^k
:=
\left\{
x\in\cS:
u^k(x)<\frac1{1200S}
\right\},
\end{align*}
and fix $\pi\in\Pi_{\sta}$ satisfying
$\pi(\widetilde s^k)=\widetilde a^k$.

Multiplying the upper inequalities in \Cref{lemma:v1,lemma:v2} by
$(1-\gamma)\gamma^{d-1}$ and summing over $d\geq1$ gives, whenever
$\pi(x)=b$,
\begin{align*}
X_\gamma^\pi(
\{x\},\bar P^k,\II_{\widetilde s^k})
&\leq
X_\gamma^\pi(
\{x\},\bar P^{\cut,k},\II_{\widetilde s^k}),\\
W_\gamma^\pi(
\II_{x,b},\bar P^k,\II_x)
&\leq
W_\gamma^\pi(
\II_{x,b},\bar P^{\cut,k},\II_x).
\end{align*}
By \Cref{eq:exp-model-comparison} and $\ee<3$,
\begin{align*}
X_\gamma^\pi(
\{x\},\bar P^k,\II_{\widetilde s^k})
&\leq
3u^k(x),\\
W_\gamma^\pi(
\II_{x,b},\bar P^k,\II_x)
&\leq
3v^k(x,b).
\end{align*}

By definition of $W_\gamma$,
\begin{align*}
W_\gamma^\pi(
\II_{x,b},\bar P^k,\II_{\widetilde s^k})
&=
X_\gamma^\pi(
\{x\},\bar P^k,\II_{\widetilde s^k})
W_\gamma^\pi(
\II_{x,b},\bar P^k,\II_x)
\le 9u^k(x)v^k(x,b).
\end{align*}

Now fix $x\notin\cS_0^k$ and put $b=\pi(x)$.  If
$\cU^k(x,b)\neq\emptyset$, the second alternative in \Cref{eq:unkown_condition} applies.
Moreover, $v^k(x,b)\geq1$, since a stationary policy choosing $b$ at
the initial state $x$ receives one unit of reward immediately.  Therefore,
\begin{align*}
&W_\gamma^\pi(
\II_{x,b},\bar P^k,\II_{\widetilde s^k})
\bar P^k_{x,b,z}
\le 9u^k(x)v^k(x,b)
\frac{4SN_{\tref}}{D^k(x,b)}
<\frac1{45SA}.
\end{align*}

Let $\tau_z:=\inf\{t\geq1:s_t=z\}$.  Since $z$ transitions immediately
to $z'$ and $z'$ is absorbing,
\begin{align*}
W_\gamma^\pi(
\II_z,\bar P^k,\II_{\widetilde s^k})
=
\bbE_{\bar P^k,\pi}\left[
\gamma^{\tau_z-1}\II\{\tau_z<\infty\}
\,\middle|\,
s_1=\widetilde s^k
\right].
\end{align*}
Partition the event $\{\tau_z<\infty\}$ according to the ordinary row
used immediately before entering $z$.

If this predecessor state is $x\in\cS_0^k$, then the process must first
reach $x$, and entering $z$ occurs at least one step later.  Hence its
contribution is at most
\begin{align*}
X_\gamma^\pi(
\{x\},\bar P^k,\II_{\widetilde s^k})
<
\frac1{400S}.
\end{align*}
For a predecessor row $(x,b)$ with $x\notin\cS_0^k$, its contribution
is at most
\begin{align*}
W_\gamma^\pi(
\II_{x,b},\bar P^k,\II_{\widetilde s^k})
\bar P^k_{x,b,z}
<
\frac1{45SA}.
\end{align*}
There are at most $S$ low-score states and at most $SA$ remaining rows.
Consequently,
\begin{align*}
W_\gamma^\pi(
\II_z,\bar P^k,\II_{\widetilde s^k})
<
\frac1{400}+\frac1{45}
<
\frac1{40},
\end{align*}
finishing the proof.
\end{proof}

\begin{lemma}\label{lemma:sr-model-transfer}
Fix an episode $k$ and
$(\widetilde s,\widetilde a)\in\cS\times\cA$, and suppose $\cR_k$ holds.
Recall \Cref{line:explore_pi} in \Cref{alg:explore} computes
\begin{align*}
\pi^k
\in
\argmax_{\pi\in\Pi_{\sta}}
W_\gamma^\pi(
\II_{\widetilde s,\widetilde a},
P^{\tref,k},
\II_{\widetilde s}).
\end{align*}
If, for every $\pi\in\Pi_{\sta}$ satisfying
$\pi(\widetilde s)=\widetilde a$,
\begin{align*}
W_\gamma^\pi(
\II_z,\bar P^k,\II_{\widetilde s})
<\frac1{40},
\end{align*}
then
\begin{align*}
W_\gamma^{\pi^k}(
\II_{\widetilde s,\widetilde a},
P,\II_{\widetilde s})
\geq
\frac1{16}
\max_{\pi\in\Pi_{\sta}}
W_\gamma^\pi(
\II_{\widetilde s,\widetilde a},
P,\II_{\widetilde s}).
\end{align*}
\end{lemma}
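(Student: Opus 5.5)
Write $W^\pi(\cdot):=W_\gamma^\pi(\II_{\widetilde s,\widetilde a},\cdot,\II_{\widetilde s})$ for brevity. The plan is to compare the target occupancy through a chain of models: true $P$, clipped $\bar P^k$, cut $\bar P^{\cut,k}$, and reference $P^{\tref,k}$. Each link costs only a constant factor. First, $W^\pi(P)=W^\pi(\bar P^k)$ for every stationary $\pi$: the mass that $\bar P^k$ redirects to $z$ only removes future visits to $(\widetilde s,\widetilde a)$, so we get $W^\pi(\bar P^k)\le W^\pi(P)$. The remaining slack has to come from the smallness of $z$-mass. So I would instead work directly with $\bar P^k$ versus $P$. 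Coupling trajectories until the first unknown transition gives
\[
W^\pi(P)\le W^\pi(\bar P^k)+\bbE_{\bar P^k,\pi}\big[\gamma^{\tau_z-1}\II\{\tau_z<\infty\}\big]\cdot \max_{\pi'} W_\gamma^{\pi'}(\II_{\widetilde s,\widetilde a},P,\II_{\cdot}).
\]
A cleaner route is the discounted version of \Cref{lemma:v1}: multiply by $(1-\gamma)\gamma^{d-1}$ and sum, which yields $(1-W^\pi_\gamma(\II_z,\bar P^k,\II_{\widetilde s}))\,W^\pi(\bar P^{\cut,k})\lesssim W^\pi(\bar P^k)\le W^\pi(\bar P^{\cut,k})$. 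This summation is only approximate, because $W_d(\II_z)$ is not constant in $d$. Since $W_d^\pi(\II_z,\cdot)$ is nondecreasing in $d$ and bounded by a discounted quantity up to a constant, I would first bound $W_d(\II_z)$ for $d\approx H_3$ and truncate.

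Second, relate $P$ to $\bar P^k$ for the policies we care about. For the lower bound on $\pi^k$ we have $W^{\pi^k}(P)\ge W^{\pi^k}(\bar P^k)$, because redirecting to $z$ only loses visits (the pathwise coupling used in \Cref{lemma:v2}). For the upper bound, let $\pi^\star$ maximize $W^\pi(P)$. Then $W^{\pi^\star}(P)\le W^{\pi^\star}(\bar P^{\cut,k})$ is not immediate, because the cut model renormalizes known rows upward while the true model can also gain visits through unknown successors. I would handle this by decomposing at the first unknown transition: $W^{\pi^\star}(P)\le W^{\pi^\star}(\bar P^k)+W_\gamma^{\pi^\star}(\II_z,\bar P^k,\II_{\widetilde s})\cdot\max_\pi W^\pi(P)$. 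The second term is at most $\tfrac1{40}\max W(P)$, since $\pi^\star(\widetilde s)=\widetilde a$ can be assumed. This gives $\max W(P)\le\tfrac{40}{39}W^{\pi^\star}(\bar P^k)\le\tfrac{40}{39}W^{\pi^\star}(\bar P^{\cut,k})$ by the upper half of \Cref{lemma:v1}.

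Third, pass through the reference model using \Cref{eq:exp-model-comparison} on $\cR_k$, which gives factors of $\ee$. Optimality of $\pi^k$ for $P^{\tref,k}$ yields
\[
W^{\pi^\star}(\bar P^{\cut,k})\le \ee\,W^{\pi^\star}(P^{\tref,k})\le \ee\,W^{\pi^k}(P^{\tref,k})\le \ee^2\,W^{\pi^k}(\bar P^{\cut,k}).
\]
Finally, go back down using the discounted lower half of \Cref{lemma:v1} with $\pi^k(\widetilde s)=\widetilde a$: $W^{\pi^k}(\bar P^k)\ge(1-\tfrac1{40})W^{\pi^k}(\bar P^{\cut,k})$ up to the discounting subtlety. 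Combining, $\max W(P)\le \tfrac{40}{39}\cdot\ee^2\cdot\tfrac{40}{39}\,W^{\pi^k}(P)\approx 7.8\,W^{\pi^k}(P)$, which is comfortably below $16$.

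The main obstacle is the discounted analogue of \Cref{lemma:v1}'s lower bound, because $(1-W_d(\II_z))$ depends on $d$. I would prove it directly in the discounted setting. Starting from $\widetilde s$ under $\bar P^k$, each excursion of the cut chain is a renewal, and the discounted visit count under $\bar P^k$ equals that under $\bar P^{\cut,k}$ thinned by survival probabilities. A renewal argument gives $W^\pi(\bar P^k)\ge (1-q)W^\pi(\bar P^{\cut,k})$, where $q$ is the discounted probability of ever hitting $z$, here below $\tfrac1{40}$. If that fails, the $16$ leaves room to fall back on a crude bound through $W_{H_3}$ via \Cref{lemma:eff_single}, at the cost of a factor of $9$.
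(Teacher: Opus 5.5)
Your chain $P\to\bar P^k\to\bar P^{\cut,k}\to P^{\tref,k}$ is exactly the paper's proof: the one-sided losses $W^{\pi^\star}(P)\le W^{\pi^\star}(\bar P^k)/(1-q)$ and $W^{\pi^k}(\bar P^k)\ge(1-q)W^{\pi^k}(\bar P^{\cut,k})$, the $\ee$-comparison on $\cR_k$, and optimality of $\pi^k$ under $P^{\tref,k}$ give the same ratio $(40/39)^2\ee^2\approx 7.8$, which the paper rounds up to $16$. The step you flag as the main obstacle is one line in the paper, via the discounted performance-difference identity (\Cref{lemma:discounted-performance-difference}) with occupancy under $\bar P^k$ and $W_\gamma^{\pi}(\II_{\widetilde s,\widetilde a},\bar P^{\cut,k},\II_y)\le W_\gamma^{\pi}(\II_{\widetilde s,\widetilde a},\bar P^{\cut,k},\II_{\widetilde s})$, so no finite-horizon detour is needed; your killing/renewal argument proves the same inequality, but the $W_{H_3}$ fallback would not fit under $16$.
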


\begin{proof} [Proof of \Cref{lemma:sr-model-transfer}]
Extend $P$ to $\bar{\cS}$ using the common rows
$z\to z'$ and $z'\to z'$.  Fix
$\pi\in\Pi_{\sta}$ satisfying
$\pi(\widetilde s)=\widetilde a$, and write $
V_p(x)
:=
W_\gamma^\pi(
\II_{\widetilde s,\widetilde a},p,\II_x)$ and $
p\in
\{P,\bar P^k,\bar P^{\cut,k},P^{\tref,k}\}.$
Let $
\tau_{\widetilde s}
:=
\inf\{t\geq1:s_t=\widetilde s\}.$ 
No reward is collected before $\tau_{\widetilde s}$.  By the Markov
property, for every ordinary state $x$,
\begin{align*}
V_P(x)
&=
V_P(\widetilde s)
\bbE_{P,\pi}\left[
\gamma^{\tau_{\widetilde s}-1}
\II\{\tau_{\widetilde s}<\infty\}
\,\middle|\,s_1=x
\right]
\leq
V_P(\widetilde s).
\end{align*}
The same argument gives $
V_{\bar P^{\cut,k}}(x)
\leq
V_{\bar P^{\cut,k}}(\widetilde s).$ 
Moreover, the values at $z$ and $z'$ are zero because neither virtual
state can reach $\widetilde s$.

For $(x,b)\in\cS\times\cA$, define
\begin{align*}
d_{\bar P^k}^\pi(x,b)
:=
\sum_{t=1}^{\infty}
\gamma^{t-1}
\bbP_{\bar P^k,\pi}\left[
(s_t,a_t)=(x,b)
\,\middle|\,s_1=\widetilde s
\right].
\end{align*}
Since every visit to $z$ occurs one step after an ordinary row is used,
\begin{align}
q_k^\pi
:=
W_\gamma^\pi(
\II_z,\bar P^k,\II_{\widetilde s})
=
\gamma
\sum_{x,b}
d_{\bar P^k}^\pi(x,b)
\bar P^k_{x,b,z}.
\label{eq:sr-model-transfer-q}
\end{align}

Applying
\Cref{lemma:discounted-performance-difference} with occupancy under
$\bar P^k$ gives
\begin{align*}
V_P(\widetilde s)-V_{\bar P^k}(\widetilde s)=
\gamma
\sum_{x,b}
d_{\bar P^k}^\pi(x,b)
\sum_{y\in\bar{\cS}}
\left(
P_{x,b,y}-\bar P^k_{x,b,y}
\right)V_P(y).
\end{align*}
The two models agree on known successors, while $\bar P^k$ redirects
the unknown-successor mass to $z$.  Since $V_P(z)=0$,
\begin{align*}
\sum_{y\in\bar{\cS}}
\left(
P_{x,b,y}-\bar P^k_{x,b,y}
\right)V_P(y)
=
\sum_{y\in\cU^k(x,b)}
P_{x,b,y}V_P(y)
\leq
\bar P^k_{x,b,z}V_P(\widetilde s).
\end{align*}
All terms are nonnegative.  Therefore,
\begin{align}
0
\leq
V_P(\widetilde s)-V_{\bar P^k}(\widetilde s)
\leq
q_k^\pi V_P(\widetilde s).
\label{eq:sr-model-transfer-true}
\end{align}

Similarly,
\begin{align*}
&V_{\bar P^{\cut,k}}(\widetilde s)
-
V_{\bar P^k}(\widetilde s)
=
\gamma
\sum_{x,b}
d_{\bar P^k}^\pi(x,b)
\sum_{y\in\bar{\cS}}
\left(
\bar P^{\cut,k}_{x,b,y}
-
\bar P^k_{x,b,y}
\right)
V_{\bar P^{\cut,k}}(y).
\end{align*}
Fix $(x,b)$.
If $\bar P^k_{x,b,z}=1$,
the corresponding rows of $\bar P^k$ and $\bar P^{\cut,k}$ are both
concentrated on $z$.
If $\bar P^k_{x,b,z}<1$, then
\begin{align*}
\bar P^k_{x,b,y}
=
(1-\bar P^k_{x,b,z})\bar P^{\cut,k}_{x,b,y},
\qquad y\neq z.
\end{align*}
Since $V_{\bar P^{\cut,k}}(z)=0$,
\begin{align*}
\sum_{y\in\bar{\cS}}
\left(
\bar P^{\cut,k}_{x,b,y}
-
\bar P^k_{x,b,y}
\right)
V_{\bar P^{\cut,k}}(y)
=
\bar P^k_{x,b,z}
\sum_{y\neq z}
\bar P^{\cut,k}_{x,b,y}
V_{\bar P^{\cut,k}}(y)
\le \bar P^k_{x,b,z} V_{\bar P^{\cut,k}}(\widetilde s).
\end{align*}
Consequently,
\begin{align}
0
\leq
V_{\bar P^{\cut,k}}(\widetilde s)
-
V_{\bar P^k}(\widetilde s)
\leq
q_k^\pi
V_{\bar P^{\cut,k}}(\widetilde s).
\label{eq:sr-model-transfer-cut}
\end{align}

By \Cref{eq:exp-model-comparison},
\begin{align}
\frac1\ee
V_{\bar P^{\cut,k}}(\widetilde s)
\leq
V_{P^{\tref,k}}(\widetilde s)
\leq
\ee
V_{\bar P^{\cut,k}}(\widetilde s).
\label{eq:sr-model-transfer-reference}
\end{align}
Since $q_k^\pi<\frac{1}{40}$, \Cref{eq:sr-model-transfer-cut} gives
\begin{align*}
V_{\bar P^k}(\widetilde s)
>
\frac{39}{40}
V_{\bar P^{\cut,k}}(\widetilde s)
>
\frac14
V_{P^{\tref,k}}(\widetilde s).
\end{align*}
Likewise,
\Cref{eq:sr-model-transfer-true,eq:sr-model-transfer-cut} gives
\begin{align*}
V_P(\widetilde s)
<
\frac{40}{39}
V_{\bar P^k}(\widetilde s)
\leq
\frac{40}{39}
V_{\bar P^{\cut,k}}(\widetilde s)
<
4V_{P^{\tref,k}}(\widetilde s).
\end{align*}

Every optimizer $\pi$ under $P^{\tref,k}$ satisfies
$\pi (\widetilde s)=\widetilde a$.  Indeed, a stationary policy choosing
another action at $\widetilde s$ never receives the target reward, whereas
choosing $\widetilde a$ receives one unit immediately.  The same argument
allows the true-model maximum to be restricted to policies satisfying
$\pi(\widetilde s)=\widetilde a$.

Applying the preceding lower comparison to $\pi^k$, the upper comparison
to a true-model optimizer, and then using the optimality of $\pi^k$ under
$P^{\tref,k}$ gives
\begin{align*}
W_\gamma^{\pi^k}(
\II_{\widetilde s,\widetilde a},
P,\II_{\widetilde s})
&\geq
W_\gamma^{\pi^k}(
\II_{\widetilde s,\widetilde a},
\bar P^k,\II_{\widetilde s})\\
&>
\frac14
W_\gamma^{\pi^k}(
\II_{\widetilde s,\widetilde a},
P^{\tref,k},\II_{\widetilde s})\\
&=
\frac14
\max_{\pi\in\Pi_{\sta}}
W_\gamma^\pi(
\II_{\widetilde s,\widetilde a},
P^{\tref,k},\II_{\widetilde s})\\
&>
\frac1{16}
\max_{\pi\in\Pi_{\sta}}
W_\gamma^\pi(
\II_{\widetilde s,\widetilde a},
P,\II_{\widetilde s}).
\end{align*}
This completes the proof.
\end{proof}

\end{document}